\theoremstyle{definition}  %Sets style of subsequent newtheorems to 'definition'
\newtheorem{lemma}{Lemma}
\newtheorem{corollary}{Corollary}
\newtheorem{assumption}{Assumption}
\theoremstyle{plain}
\newtheorem{proposition}{Proposition}
\newtheorem{remark}{Remark}
\newtheorem{theorem}{Theorem}
\newtheorem{definition}{Definition}
\newtheorem*{definition*}{Definition}
\xpatchcmd{\proof}{\itshape}{\normalfont\proofnameformat}{}{}
\newcommand{\proofnameformat}{\bfseries}
\newcommand{\pref}[1]{\prettyref{#1}}
\newcommand{\pfref}[1]{Proof of \prettyref{#1}}
\newcommand{\savehyperref}[2]{\texorpdfstring{\hyperref[#1]{#2}}{#2}}
\DeclarePairedDelimiter{\abs}{\lvert}{\rvert} %
\DeclarePairedDelimiter{\brk}{[}{]}
\DeclarePairedDelimiter{\crl}{\{}{\}}
\DeclarePairedDelimiter{\prn}{(}{)}
\DeclarePairedDelimiter{\nrm}{\|}{\|}
\DeclarePairedDelimiter{\tri}{\langle}{\rangle}
\let\Pr\undefined
\DeclareMathOperator{\En}{\mathbb{E}}
\DeclareMathOperator{\Pr}{Pr}
\DeclareMathOperator*{\argmin}{arg\,min} % * Places subscript directly under operator
\DeclareMathOperator*{\argmax}{arg\,max}
\newcommand{\ls}{\ell}
\newcommand{\ind}{\mathbf{1}}%{\mathbbm{1}}    %Indicator
\newcommand{\eps}{\epsilon}
\newcommand{\veps}{\varepsilon}
\newcommand{\ldef}{\vcentcolon=}
\newcommand{\wt}[1]{\widetilde{#1}}
\def\ddefloop#1{\ifx\ddefloop#1\else\ddef{#1}\expandafter\ddefloop\fi}
\def\ddef#1{\expandafter\def\csname bb#1\endcsname{\ensuremath{\mathbb{#1}}}}
\def\ddefloop#1{\ifx\ddefloop#1\else\ddef{#1}\expandafter\ddefloop\fi}
\def\ddef#1{\expandafter\def\csname b#1\endcsname{\ensuremath{\mathbf{#1}}}}
\def\ddef#1{\expandafter\def\csname c#1\endcsname{\ensuremath{\mathcal{#1}}}}
\def\ddef#1{\expandafter\def\csname h#1\endcsname{\ensuremath{\widehat{#1}}}}
\def\ddef#1{\expandafter\def\csname hc#1\endcsname{\ensuremath{\widehat{\mathcal{#1}}}}}
\def\ddef#1{\expandafter\def\csname t#1\endcsname{\ensuremath{\widetilde{#1}}}}
\def\ddef#1{\expandafter\def\csname tc#1\endcsname{\ensuremath{\widetilde{\mathcal{#1}}}}}
\newcommand{\Unif}[1]{\text{\upshape\mdseries Unif}\Parens{#1}}
\newcommand{\Low}{\text{\scshape\mdseries Low}\xspace}
\newcommand{\High}{\text{\scshape\mdseries High}\xspace}
\newcommand{\Wid}{W}%\text{\scshape\mdseries Wid}\xspace}
\newcommand{\consL}{L}
\newcommand{\one}{\mathbf{1}}
\newcommand{\reg}{{\text{\rm Reg}}}
\newcommand{\tLow}{z_{\Low}}%\,\,\widetilde{\!\!\text{\scshape\mdseries Low}\!\!}\,\,\xspace}
\newcommand{\tHigh}{z_{\High}}%\,\,\,\widetilde{\!\!\!\text{\scshape\mdseries High}\!\!\!}\,\,\,\xspace}
\newcommand{\R}{\mathbb{R}}
\newcommand{\vv}{\mathbf{v}}
\newcommand{\whi}{w_\text{\mdseries H}}
\newcommand{\wlo}{w_\text{\mdseries L}}
\newcommand{\fhi}{f_\text{\mdseries H}}
\newcommand{\flo}{f_\text{\mdseries L}}
\newcommand{\zhi}{z_\text{\mdseries H}}
\newcommand{\zlo}{z_\text{\mdseries L}}
\newcommand{\Rmin}{R_\text{\mdseries min}}
\newcommand{\fmin}{f_\text{\mdseries min}}
\newcommand{\Alg}[1]{\pref{alg:#1}}
\newcommand{\Eq}[1]{Eq.~\eqref{eq:#1}}
\newcommand{\set}[1]{\{#1\}}
\newcommand{\bigSet}[1]{\bigl\{#1\bigr\}}
\newcommand{\BigBraces}[1]{\Bigl\{#1\Bigr\}}
\newcommand{\bigBracks}[1]{\bigl[#1\bigr]}
\newcommand{\BigBracks}[1]{\Bigl[#1\Bigr]}
\newcommand{\Bracks}[1]{\left[#1\right]}
\newcommand{\Parens}[1]{\left(#1\right)}
\newcommand{\bigParens}[1]{\bigl(#1\bigr)}
\newcommand{\given}{\mathbin{\vert}}
\newcommand{\bigGiven}{\mathbin{\bigm\vert}}
\newcommand{\BigGiven}{\mathbin{\Bigm\vert}}
\newcommand{\Abs}[1]{\left\lvert#1\right\rvert}
\newcommand{\bigAbs}[1]{\bigl\lvert#1\bigr\rvert}
\newcommand{\expfour}{\textsf{Exp4}\xspace}
\newcommand{\ucb}{\textsf{UCB}\xspace}
\newcommand{\linucb}{\textsf{LinUCB}\xspace}
\newcommand{\egreedy}{\textsf{$\eps$-Greedy}\xspace}
\newcommand{\minimonster}{\textsf{ILOVETOCONBANDITS}\xspace}
\newcommand{\iltcb}{\textsf{ILTCB}\xspace}
\newcommand{\bootstrap}{\textsf{Bootstrap}\xspace}
\newcommand{\rucb}{\textsf{RegCB}\xspace}
\newcommand{\concG}{{C_\delta}}
\newcommand{\concF}{{C_\delta'}}
\newcommand{\surprise}{surprise bound\xspace}
\newcommand{\Surprise}{Surprise bound\xspace}
\newcommand{\implicit}{implicit exploration coefficient\xspace}
\newcommand{\Implicit}{Implicit exploration coefficient\xspace}
\newcommand{\iec}{IEC\xspace}
\icmltitlerunning{Practical Contextual Bandits with Regression Oracles}
\begin{document}

\twocolumn[
\icmltitle{Practical Contextual Bandits with Regression Oracles}

\icmlsetsymbol{equal}{*}

\begin{icmlauthorlist}
\icmlauthor{Dylan J. Foster}{co}
\icmlauthor{Alekh Agarwal}{microsoft}
\icmlauthor{Miroslav Dud\'ik}{microsoft}
\icmlauthor{Haipeng Luo}{usc}
\icmlauthor{Robert E. Schapire}{microsoft}
\end{icmlauthorlist}

\icmlaffiliation{co}{Cornell University. Work performed while the author was an intern at Microsoft Research.}
\icmlaffiliation{microsoft}{Microsoft Research}
\icmlaffiliation{usc}{University of Southern California}

\icmlcorrespondingauthor{Dylan J. Foster}{djf244@cornell.edu}

\vskip 0.3in
]

\printAffiliationsAndNotice{\icmlEqualContribution} 

\begin{abstract}

A major challenge in contextual bandits is to design general-purpose algorithms that are both practically useful and theoretically well-founded. We present a new
technique that has the empirical and computational advantages of
realizability-based approaches combined with the flexibility of agnostic
methods. Our algorithms leverage the availability of a regression
oracle for the value-function class, a more realistic and reasonable
oracle than the classification oracles over policies typically
assumed by agnostic methods. Our approach generalizes both \ucb{} and \linucb{} to far more
expressive possible model classes and achieves low regret under certain
distributional assumptions. In an extensive empirical
evaluation, compared to both realizability-based and agnostic baselines, we
find that our approach typically gives comparable or superior results.

\end{abstract}

\section{Introduction}
We study the design of practically useful, theoretically well-founded,
general-purpose algorithms for the contextual bandits problem.  In
this setting, the learner repeatedly receives \emph{context}, then
selects an \emph{action}, resulting in a received \emph{reward}.  The
aim is to learn a \emph{policy}, a rule for choosing actions based on
context, so as to maximize the long-term cumulative reward.  For instance,
a news portal must repeatedly choose articles to present to each user
to maximize clicks.  Here, the context is information about the user,
the actions represent the choice of articles, and the reward indicates
if there was a click.  We refer the reader to a recent ICML 2017 tutorial
(\url{http://hunch.net/~rwil/}) for further examples and motivation.

Approaches to contextual bandit learning can broadly be put into two
groups.  Some methods~\cite{langford2008epoch,agarwal2014taming} are
\emph{agnostic} in the sense that they are provably effective for
\emph{any} given policy class and data distribution.  In contrast,
\emph{realizability-based} approaches such as \linucb{} and
variants~\cite{chu2011contextual,li2017provable,filippi2010parametric}
or Thompson Sampling~\cite{thompson1933likelihood} assume the data is
generated from a particular parametrized family of models.
Computationally tractable realizability-based algorithms are
only known for specific model families, such as when the conditional
reward distributions come from a generalized linear model.

The two groups of approaches seem to have different advantages and disadvantages.
Empirically, in the contextual
semibandit setting, \citet{krishnamurthy2016contextual} found
that the realizability-based \linucb{} approach outperforms all
agnostic baselines using a linear policy class. However, the
agnostic approaches were able to overcome this shortcoming by using a more
powerful policy class.
Computationally, previous realizability-based approaches have been limited
by their reliance on either closed-form confidence bounds (as in
\linucb{} variants), or the ability to efficiently sample from and
frequently update the posterior (as in Thompson sampling).
Agnostic approaches, on the other hand, typically assume an oracle
for cost-sensitive classification, which is in general
computationally intractable, though often practically feasible
for many natural policy classes.

In this paper, we aim to develop techniques that combine what is best
about both of these approaches. To this end, in \pref{sec:algs}, we propose computationally efficient and
practical realizability-based algorithms for arbitrary model classes.  As is
often done in agnostic approaches, we assume the availability of an
oracle which reduces to a standard learning setting and knows how to
efficiently leverage the structure of the model class. Specifically,
we require access to a squared regression oracle over the model class
that we use for predicting rewards, given contexts.  Since regression can
often be solved efficiently, the availability of such an oracle is a rather mild assumption, far more reasonable than the kind
of cost-sensitive classification oracle more commonly assumed, which
typically must solve NP-hard problems. In fact, for this reason, even the classification oracles are typically approximated by regression oracles in practice (see, e.g., \citealp{beygelzimer2009offset}).
Our main
algorithmic components here are motivated by and adapted from a recent
work of \citet{krishnamurthy2017active} on cost-sensitive active
learning.

In \pref{sec:regret}, we prove that our algorithms are
effective in the sense of achieving low regret under certain favorable
distributional assumptions. Specifically, we show that our methods
enjoy low regret so long as certain quantities like the
\emph{disagreement
  coefficient}~\citep{hanneke2014theory,krishnamurthy2017active} are
bounded. We also present a second set of bounds in terms of certain
distributional coefficients which generalize the exploration
parameters introduced by \citet{bastani2015online} from linear to
general function classes. As a special consequence, we obtain nearly
dimension-free results for sparse linear bandits in high dimensions.

Finally, in \pref{sec:experiments}, we conduct a very extensive
empirical evaluation of our algorithms on a number of datasets and
against both realizability-based and agnostic baselines.  In this test of
practical effectiveness, we find that our approach gives comparable or
superior results in nearly all cases, and we also validate the
distributional assumptions required for low-regret guarantees
on these datasets.

\section{Preliminaries}

We consider the following contextual bandit protocol.
Contexts are drawn from an arbitrary space, $x\in\cX$, actions are from a finite set, $a\in\cA\ldef\{1, \ldots, K\}$, for some fixed $K$,
and reward vectors are from a bounded set, $r \in[0,1]^K$, with component $r(a)$ denoting the reward for action $a \in \cA$.

We consider a stochastic setting where there is a fixed and unknown
distribution $D$ over the context-reward pairs $(x,r)$.  Its marginal
distribution over $\cX$ is denoted by $D_{\cX}$.  The learning
protocol proceeds in rounds $t=1,\ldots, T$. In each round $t$, nature
samples $(x_t,r_t)$ according to $D$ and reveals $x_t$ to the
learner. The learner chooses an action $a_t \in \cA$ and observes the
reward $r_t(a_t)$.  The goal of the learner is to maximize the reward
and do well compared with any strategy that models the
expected reward \mbox{$\En[r(a)\given x,a]$} via a function
$f:\cX\times\cA\to[0,1]$. These mappings $f$ are drawn from a given class of predictors $\cF$,
such as the class of linear predictors or regression trees.

The main assumption this paper follows is that the class $\cF$ is rich enough
to contain a predictor that perfectly predicts the expected reward of any action under any context,
that is:
\begin{assumption}[Realizability]
\label{ass:realizable}
There is a predictor $f^{\star}\in\cF$ such that
\[
\En\brk*{r(a)\mid{}x,a} = f^{\star}(x, a)
\quad
\text{for all $x \in \cX$ and $a\in \cA$.}
\]
\end{assumption}

Given a predictor $f\in\cF$, the associated optimal strategy $\pi_f:\cX\to\cA$, called a \emph{policy}, always
picks the action with the highest predicted reward, i.e., $\pi_{f}(x)\ldef\argmax_{a\in\cA}f(x,a)$
(breaking ties arbitrarily).
We use the abbreviation $\pi^{\star}\ldef\pi_{f^{\star}}$ for the underlying optimal policy.
The formal goal of the learner is then to minimize the regret
\begin{equation*}
\reg_T = \sum_{t=1}^{T}r_{t}(\pi^\star(x_t))- \sum_{t=1}^{T}r_{t}(a_t),
\end{equation*}
which compares the accumulated rewards between the optimal policy and the learner's strategy.
The classic {\expfour} algorithm~\cite{auer2002nonstochastic} achieves an optimal regret bound of order
$O(\sqrt{TK\ln|\cF|})$ (for any finite \cF), but the computational complexity is unfortunately linear in $|\cF|$.

\paragraph{Regression Oracle}
To overcome the computational obstacle,
our algorithms reduce the contextual bandit problem to weighted least-squares regression.
Abstracting the computational complexity, we assume access to a \emph{weighted least-squares regression oracle} over the predictor class $\cF$,
which takes any set $H$ of weighted examples $(w, x, a, y) \in {\mathbb{R}_+ \times \cX \times \cA \times [0,1]}$ as input,
and outputs the predictor with the smallest weighted squared loss:
\[
\textsc{Oracle}(H) = \argmin_{f\in\cF}\sum_{(w, x, a, y) \in H} w(f(x, a) - y)^{2}.
\]
As mentioned, such regression tasks are very common in machine learning practice
and the availability of such oracle is thus a very mild assumption.

\section{Algorithms}
\label{sec:algs}

The high-level idea of our algorithms is the following.
As data is collected, we maintain a subset of $\cF$, which we refer to as the \emph{version space},
that only contains predictors with small squared loss on observed data.
When a new example arrives, we construct upper and lower confidence bounds on the reward of each action
based on the predictors in the version space.
Finally, with these confidence bounds, we either optimistically pick the action with the highest upper bound,
similar to {\ucb} and {\linucb}, or randomize among all actions that are potentially the best.

The challenge here is to maintain such version spaces and confidence bounds efficiently,
and we show that this can be done using a simple binary search together with a small number of regression oracle calls.

We now describe our algorithms more formally.
First, we define the upper and lower reward bounds with respect to a a subset $\cF'\subseteq\cF$ as
\begin{align*}
    \High_{\cF'}(x,a)&= \max_{f\in\cF'}f(x,a)
\\[-2pt]
    \Low_{\cF'}(x,a) &=\min_{f\in\cF'}f(x,a)
    .
\end{align*}
Our algorithms will induce the confidence bounds by instantiating these quantities using the version space as $\cF'$.
To reduce computational costs, our algorithms update according to a doubling epoch schedule.
Epoch $m$ will begin at time $\tau_{m} = 2^{m-1}$, and $M = O(\log T)$ is the total number of epochs.
At each epoch $m$ our algorithms (implicitly) construct a version space $\cF_{m}\subseteq{}\cF$,
and then select an action based on the reward ranges defined by $\High_{\cF_m}\!(x,a)$ and $\Low_{\cF_m}\!(x,a)$
for each time $t$ that falls into epoch $m$.
Specifically, we consider two algorithm variants:
the first one uniformly at random picks from actions that are plausible to be the best, that is,
\[
a_t \sim \Unif{\BigBraces{a \BigGiven \High_{\cF_m}(x_t,a) \geq{}\max_{a'\in\cA}\Low_{\cF_m}(x_t,a')}},
\]
where $\Unif{S}$ denotes the uniform distribution over a set~$S$;
the second one simply behaves optimistically and picks the action with the highest upper reward bound, that is,
\[
a_t = \argmax_{a\in\cA} \High_{\cF_m}(x_t,a)
\]
(ties are broken arbitrarily).
For technical reasons,
the optimistic variant also spends the first few epochs doing pure exploration
as a warm start for the algorithm.

To construct these version spaces,
we further introduce the following least-squares notation for any $m \geq 2$:
\begin{itemize}
\item $\hR_m(f) = \frac{1}{\tau_m-1}\sum_{s<\tau_m}\bigParens{f(x_s, a_s) - r_s(a_s)}^{2}$,
\item $\hcF_m(\beta)=\bigSet{f\in\cF \bigGiven \hR_m(f) - \min_{f\in\cF}\hR_m(f) \leq \beta}$,
\end{itemize}
and also let $\hcF_1(\beta) = \cF$ for any $\beta$.
With this notation $\cF_m$ is simply set to $\hcF(\beta_m)$ for some tolerance parameter $\beta_m$.

\paragraph{Product Classes}
Sometimes it is desirable to have a product predictor class, that is,  $\cF=\cG^{\cA}$,
where $\cG:\cX\to\brk*{0,1}$ is a ``base class'' and each $f\in\cF$, described by a $K$-tuple $(g_a)_{a\in\cA}$ where $g_a\in\cG$,
predicts according to $f(x,a)=g_a(x)$.
Similar to the general case, we introduce the following notation for $m \geq 2$:
\begin{itemize}[leftmargin=*]
\item $\hcR_m(g, a) = \frac{1}{\tau_m-1}\sum_{s<\tau_m}(g(x_s) - r_s(a_s))^{2}\ind\crl*{a_s =a}$,
\item $\hcG_m(\beta, a)=\crl*{g\in\cG\mid{}\hcR_m(g, a) - \min_{g\in\cG}\hcR_m(g, a) \leq \beta}$,
\end{itemize}
and let $\hcG_1(\beta, a) = \cG$ for any $\beta$.
In this case we construct $\cF_m$ as $\prod_{a\in\cA}\hcG_{m}(\beta_m, a)$ for some tolerance parameter $\beta_m$.

Our two procedures are described in detail in \pref{alg:regression_ucb_elim} and \pref{alg:regression_ucb_optimistic}.

\begin{algorithm}[t]
\caption{\textsc{RegCB.Elimination}}
\label{alg:regression_ucb_elim}
\begin{algorithmic}[1]
\State{\textbf{Input}}: square-loss tolerance $\beta_m$
\For{epoch $m=1,\ldots,M$}
\State{
$\cF_{m} \gets \begin{cases}
\prod_{a\in\cA}\hcG_{m}(\beta_m, a)  &(\textsc{Option I})\\
\hcF_m(\beta_m)  &(\textsc{Option II})\\
\end{cases}$
}
\For{time $t=\tau_{m},\ldots,\tau_{m+1}-1$}
\State{Receive $x_{t}$.}
\State $A_{t}\gets{} \{a: \High_{\cF_m}(x_t, a) \geq$
\Statex \hspace{10em} $\max_{a'\in\cA}\Low_{\cF_m}(x_t, a')\}$.
\State{Sample $a_t\sim{} \Unif{A_t}$ and receive $r_t(a_t)$.}
\EndFor
\EndFor
\end{algorithmic}
\end{algorithm}

\begin{algorithm}[t]
\caption{\textsc{RegCB.Optimistic}}
\label{alg:regression_ucb_optimistic}
\begin{algorithmic}[1]
\State{\textbf{Input}:~square-loss tolerance $\beta_m$}
\Statex{~\hphantom{\textbf{Input}:}number of warm-start epochs $M_0$}
\For{time $t=1,\ldots,\tau_{M_0}-1$}
\State{Receive $x_{t}$, play $a_t \sim \Unif{\cA}$, and receive $r_t(a_t)$.}
\EndFor
\For{epoch $m=M_0,\ldots,M$}
\State{$\cF_{m} \gets \hcF_m(\beta_m)$.}
\For{time $t=\tau_{m},\ldots,\tau_{m+1}-1$}
\State{Receive $x_{t}$.}
\State{Select $a_t=\argmax_{a\in\cA}\High_{\cF_m}(x_t, a)$.}
\State{Receive $r_t(a_t)$.}
\EndFor
\EndFor
\end{algorithmic}
\end{algorithm}

\subsection{Efficient Reward-Range Computation}
\label{subsec:computation}

Both Algorithms~\ref{alg:regression_ucb_elim} and~\ref{alg:regression_ucb_optimistic} hinge on the computation of the reward bounds $\High_{\cF_m}$ and $\Low_{\cF_m}$.
It turns out that this can be carried out efficiently via a small number of calls to the regression oracle.

Specifically, to calculate the confidence bounds for a given $x$, $a$, we augment the data set $H_m$ with a single example $(x,a,r)$ with a weight $w$, and set
its reward $r$ beyond the reward range. For the upper confidence bound, we use $r=2$; for the lower confidence bound $r=-1$. By increasing the weight $w$,
we force the regression oracle to perform better on this single example. For the upper confidence bound it means to predict higher rewards
as $w$ increases, while getting worse performance on the remaining examples. The binary search over $w$ then identifies, up to a given
precision, the weight~$w$ and the corresponding predicted value at~$x$ and~$a$, at which the performance on the previous examples suffers by exactly the desired tolerance $\beta$.
See \Alg{binary_search_01} for details, including the choice of the initial weight.

In \pref{app:binary_search} we show that this strategy indeed works as intended
and in $O(\log(1/\alpha))$ iterations computes the confidence bounds up to a precision of $\alpha$. The guarantee is formalized in the following theorem:

\begin{theorem}
\label{thm:binary_search}
Let $H_m = \{(x_s, a_s, r_s(a_s))\}_{s=1}^{\tau_m-1}$.
If the function class $\cF$ is convex and closed under pointwise convergence, then the calls
\begin{align*}
\tHigh &\gets{} \textsc{BinSearch}(\textsc{High},(x,a), H_m, \beta, \alpha)\\
\tLow &\gets{} \textsc{BinSearch}(\textsc{Low},(x,a), H_m, \beta, \alpha)
\end{align*}
terminate after $O(\log(1/\alpha))$ oracle invocations and the returned values satisfy
\begin{align*}
\abs*{\High_{\hcF_m(\beta)}(x,a) - \tHigh} &\leq{}\alpha \\
\abs*{\Low_{\hcF_m(\beta)}(x,a) - \tLow}   &\leq{}\alpha.
\end{align*}
\end{theorem}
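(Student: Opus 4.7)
The plan is to prove the result for $\tHigh$ and note that the argument for $\tLow$ is symmetric (obtained by replacing $r=2$ with $r=-1$ and reversing the direction of monotonicity). Write $L(f)\ldef\sum_{s<\tau_m}(f(x_s,a_s)-r_s(a_s))^2$ for the unnormalized squared loss on $H_m$, $\hL\ldef\min_{f\in\cF}L(f)$, and $\widetilde{\beta}\ldef(\tau_m-1)\beta$, so that the version space is $\hcF_m(\beta)=\{f\in\cF:L(f)\le\hL+\widetilde{\beta}\}$. For each weight $w\ge 0$, let $\hat f_w$ denote a minimizer of the augmented objective
\[
\Phi_w(f)\ldef w\bigl(f(x,a)-2\bigr)^2 + L(f),
\]
which is exactly what one oracle call on the augmented dataset returns. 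The entire argument then reduces to analyzing the two scalar maps $w\mapsto\hat f_w(x,a)$ and $w\mapsto L(\hat f_w)$.

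The first step is to establish that both maps are nondecreasing and continuous on $[0,\infty)$. Summing the two optimality inequalities $\Phi_{w_1}(\hat f_{w_1})\le \Phi_{w_1}(\hat f_{w_2})$ and $\Phi_{w_2}(\hat f_{w_2})\le \Phi_{w_2}(\hat f_{w_1})$ for $w_1<w_2$ yields $(\hat f_{w_2}(x,a)-2)^2\le(\hat f_{w_1}(x,a)-2)^2$ and $L(\hat f_{w_1})\le L(\hat f_{w_2})$; since $f(x,a)\in[0,1]$, the former gives monotonicity of $\hat f_w(x,a)$ in $w$. For continuity I would invoke convexity of $\cF$ to argue that minimizer values are essentially unique, then use closure under pointwise convergence to pass to the limit along any sequence $w_n\to w$, ensuring the limiting function still lies in $\cF$ and minimizes $\Phi_w$.

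The second step is a duality-style argument linking the penalty parameter to the constrained problem: I would show that any $w^\star$ with $L(\hat f_{w^\star})=\hL+\widetilde{\beta}$ satisfies $\hat f_{w^\star}(x,a)=\High_{\hcF_m(\beta)}(x,a)$. The ``$\le$'' direction is immediate because $\hat f_{w^\star}$ is itself feasible. For the ``$\ge$'' direction, take any feasible $f$; optimality of $\hat f_{w^\star}$ under $\Phi_{w^\star}$ gives $w^\star\bigl[(\hat f_{w^\star}(x,a)-2)^2-(f(x,a)-2)^2\bigr]\le L(f)-L(\hat f_{w^\star})\le 0$, hence $\hat f_{w^\star}(x,a)\ge f(x,a)$ since both values are at most $2$.

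Finally, because $w\mapsto L(\hat f_w)$ is monotone and continuous, I would bisect on an interval $[0,w_{\max}]$ using the sign of $L(\hat f_w)-\hL-\widetilde{\beta}$, invoking the oracle once per iteration. Combined with a Lipschitz-type estimate that bounds how fast $\hat f_w(x,a)$ can vary when $w$ is halved on the localized interval, $O(\log(1/\alpha))$ iterations suffice to return a value within $\alpha$ of the true confidence bound. The most delicate point, which I expect to be the main obstacle, is the choice of the upper endpoint $w_{\max}$: it must be large enough that $L(\hat f_{w_{\max}})\ge \hL+\widetilde{\beta}$ whenever a nontrivial bound exists, and otherwise the algorithm must correctly return $\sup_{f\in\cF}f(x,a)$ up to accuracy $\alpha$. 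Verifying that the initial-weight prescription in \Alg{binary_search_01} achieves this in both regimes is the technical heart of the argument.
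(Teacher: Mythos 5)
Your skeleton---oracle calls on the weighted objective $\Phi_w$, monotonicity of $w\mapsto\hat f_w(x,a)$ and $w\mapsto L(\hat f_w)$ via the exchange argument, a KKT-style identity showing that a weight at which the excess loss equals $(\tau_m-1)\beta$ yields exactly $\High_{\hcF_m(\beta)}(x,a)$, and bisection over $w$---is sound and parallels the paper's setup. But the two steps you defer (``a Lipschitz-type estimate'' for transferring $w$-precision to $z$-precision, and ``verifying that the initial-weight prescription achieves this'') are not peripheral: they \emph{are} the content of the theorem, and nothing in your outline supplies them. The paper gets both from a single device you never introduce: the scalar value function $\phi(z)\ldef\min\{R(f): f\in\cF,\ f(x,a)=z\}$, which is well-defined, convex and lower semicontinuous precisely because $\cF$ is convex and closed under pointwise convergence (this is where those hypotheses enter; in your sketch they only appear in a vague appeal to ``essential uniqueness'' and limits). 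With $\phi$ in hand, first-order optimality gives $w=\phi'(z_w)/(2-z_w)$, and convexity of $\phi$ yields the two quantitative facts: (i) at the initial weight $w=\beta/\alpha$, if $z_w\le 1$ then $\phi'(z_w)\ge\beta/\alpha$, so any $z>z_w+\alpha$ has loss exceeding $R_{\min}+\beta$, i.e.\ the top of the search interval is already $\alpha$-close to $z^\star$ even when the constraint is never ``hit''; and (ii) $w^\star\ge\beta/(2-z_0)^2$, which converts the stopping threshold $\Delta=\alpha\beta/(2-z_0)^3$ on the weight interval into the bound $z_{w^\star+\Delta}-z^\star\le\alpha$. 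Without (ii) you cannot justify why bisecting $w$ down to width $\Delta$ (rather than to some precision depending on an unknown modulus of continuity) suffices, and without (i) you cannot handle the regime where $L(\hat f_w)$ never reaches $\hL+(\tau_m-1)\beta$ on $[0,\beta/\alpha]$; together these are what make the iteration count $O(\log(1/\alpha))$ rather than merely ``finitely many steps of a convergent bisection.''

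A secondary gap: your existence argument for $w^\star$ goes through continuity of $w\mapsto L(\hat f_w)$ plus the intermediate value theorem, but that level may simply not be attained (e.g.\ when the constraint is slack at the unconstrained maximizer, or when $\phi$ jumps to $+\infty$ just above $z_0$ so that $z_w\equiv z_0$), and proving the continuity you invoke is itself nontrivial. The paper avoids this entirely: it defines $w^\star$ through the subgradient relation $w^\star=\phi'(z^\star)/(2-z^\star)$ and disposes of the degenerate cases ($z_0\ge 1$, $z^\star=z_0$, $R(\flo)=R(\fhi)$) separately, and its accuracy argument rests on a loop invariant ($\min\{z^\star,1\}\in[z_{\wlo},z_{\whi}+\alpha]$) rather than on continuity. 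So while your route could likely be completed, as written it both leans on an unproved (and in places false-in-general) continuity claim and omits the quantitative convexity estimates that constitute the actual proof.
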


\begin{algorithm}[t]
\caption{\textsc{BinSearch}}
\label{alg:binary_search_01}
\begin{algorithmic}[1]
\State{\textbf{Input}:~bound type${}\in\set{\textsc{Low},\textsc{High}}$, target pair $(x,a)$}
\Statex{~\hphantom{\textbf{Input}:}history $H$, radius $\beta>0$, precision $\alpha>0$}
\State{Based on bound type: $r=2$ if \textsc{High} and $r=-1$ if \textsc{Low}}
\State{Let $R(f)\coloneqq\sum_{(x', a', r')\in{}H}(f(x',a')-r')^{2}$.}
\State{Let $\wt{R}(f, w)\coloneqq R(f) + \frac{w}{2}(f(x,a)-r)^{2}$}
\State{$\wlo\gets 0,\;\whi\gets\beta/\alpha$}
\smallskip
\Statex{\textbf{\texttt{// Invoke oracle twice}}}
\State{$\flo\gets\argmin_{f\in\cF}\wt{R}(f, \wlo),\;\zlo\gets\flo(x,a)$}
\State{$\fhi\gets\argmin_{f\in\cF}\wt{R}(f, \whi),\;\zhi\gets\fhi(x,a)$}
\State{$\Rmin\gets R(\flo)$}
\State{$\Delta\gets\alpha\beta/(r-\zlo)^3$}
\While{$\abs{\zhi-\zlo} > \alpha$ and $\abs{\whi-\wlo} > \Delta$}
\State{${w}\gets(\whi+\wlo)/2$}
\smallskip
\Statex{\hspace{\algorithmicindent}\textbf{\texttt{// Invoke oracle.}}}
\State{${f}\gets\argmin_{\tilde{f}\in\cF}\wt{R}(\tilde{f}, w),\;{z}\gets{f}(x,a)$}
\If{$R({f})\ge\Rmin+\beta$}
\State{$\whi\gets {w},\;\zhi\gets {z}$}
\Else
\State{$\wlo\gets {w},\;\zlo\gets {z}$}
\EndIf
\EndWhile
\State{\textbf{return} $\zhi$.}
\end{algorithmic}
\end{algorithm}
Compared to the procedure used by \citet{krishnamurthy2017active},
\pref{alg:binary_search_01} is much simpler and achieves an exponential improvement in terms of oracle
calls, namely, $O(\log(1/\alpha))$ as opposed to $O(1/\alpha)$, when $\cF$ is convex.
Compared to oracles used in cost-sensitive classification, convexity is not a strong assumption for regression oracles.
Nonetheless, when $\cF$ is not convex, the reward ranges can be computed with $O(1/\alpha)$ oracle calls using the techniques of \citet{krishnamurthy2017active}.

\section{Regret Guarantees}
\label{sec:regret}
% !TEX root = paper.tex

In this section we provide regret guarantees for \rucb{} (\pref{alg:regression_ucb_elim} and \pref{alg:regression_ucb_optimistic}). Note that \rucb{} is not minimax optimal: while it can obtain $O\bigParens{\sqrt{KT\log\abs{\cF}}}$ regret or even logarithmic regret under certain distributional assumptions, which we describe shortly, for some instances it can make as many as $\abs*{\cF}$ mistakes, which is suboptimal:
\begin{proposition}[Bad instance for confidence-based strategies]
\label{prop:lb_optimistic}
For every $\eps\in(0, 1]$ and $N\in\bbN$ there exists a class of
  reward predictors with $|\cF| = N+1$ and a distribution for which both
  Algorithms~\ref{alg:regression_ucb_elim}
  and~\ref{alg:regression_ucb_optimistic} have regret across $T$
  rounds lower bounded by $(1-\eps)\cdot\min\crl*{N, \wt{\Omega}(T)}$.
\end{proposition}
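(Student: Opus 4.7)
The plan is to exhibit a family $\cF = \{f^{\star}, f_1, \ldots, f_N\}$ in which each ``bad'' predictor $f_i$ disagrees with $f^{\star}$ only on a single signature context-action pair, so that the only way to evict $f_i$ from the version space is to sample that pair, and each such sample must cost $\Theta(1-\eps)$ in instantaneous regret.

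Concretely, take $\cA = \{1,2\}$, $\cX = \{x_1, \ldots, x_N\}$ with $D_{\cX}$ uniform, and deterministic rewards $r(1) = 1-\eps/2$, $r(2) = \eps/2$ at every $x_i$. Let $f^{\star}(x_i,1) = 1-\eps/2$, $f^{\star}(x_i,2) = \eps/2$, and for each $i$ let $f_i$ agree with $f^{\star}$ except at $(x_i,2)$, where $f_i(x_i,2) = 1-\eps/2$. Then realizability holds, $|\cF| = N+1$, and every sample at $(x_i,2)$ contributes $(1-\eps)^2$ to $(\tau_m-1)[\hR_m(f_i) - \hR_m(f^{\star})]$ while samples at any other pair contribute $0$, so at least one sample at $(x_i, 2)$ is required to remove $f_i$ from $\cF_m$ for \emph{any} sensible choice of $\beta_m$.

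While $f_i \in \cF_m$, the bounds at $x_i$ satisfy $\High_{\cF_m}(x_i,1) = \High_{\cF_m}(x_i,2) = \max_{a'}\Low_{\cF_m}(x_i,a') = 1-\eps/2$. Hence for \pref{alg:regression_ucb_optimistic} adversarial tie-breaking selects action $2$ and incurs instantaneous regret $1-\eps$, while for \pref{alg:regression_ucb_elim} we have $A_t = \{1,2\}$ and action $2$ is sampled with probability $1/2$, for expected regret $(1-\eps)/2$ per visit. Once $f_i$ is eliminated, $\High_{\cF_m}(x_i,2) = \eps/2 < 1-\eps/2 = \High_{\cF_m}(x_i,1)$ and no further mistakes are made at $x_i$, so all regret attributable to $f_i$ is incurred during its survival.

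To finish, when $T \ge \wt{\Omega}(N)$ each $x_i$ is visited often enough for $f_i$ to be eliminated, and the $\ge 1$ required signature samples per $i$ contribute $\ge (1-\eps)$ expected regret apiece (absorbing the factor-$1/2$ exploration probability of \pref{alg:regression_ucb_elim} into the implicit $\wt\Omega$), yielding total regret at least $(1-\eps)N$. When $T < \wt{\Omega}(N)$, a Chernoff/coupon-collector argument shows that $\wt{\Omega}(T)$ distinct indices $i$ are visited, and the vast majority cannot have been eliminated since their signature pair has been sampled only $O(T/N)$ times in expectation; each such visit contributes $\Omega(1-\eps)$ regret, for a total of $(1-\eps)\wt{\Omega}(T)$. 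Combining the two regimes gives $(1-\eps)\min\{N,\wt\Omega(T)\}$; the warm-start exploration of \pref{alg:regression_ucb_optimistic} is absorbed since $\tau_{M_0}$ is polylogarithmic in $T$. The trickiest part is cleanly combining the deterministic tie-breaking of the optimistic variant with the randomized action choice of the elimination variant under a \emph{single} construction and extracting the sharp constant $(1-\eps)$ in both, which is what drives the choice of gap $\eps/2$ between the two reward levels.
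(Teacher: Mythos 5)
Your construction follows essentially the same route as the paper's proof: one ``needle'' predictor per context that agrees with $f^\star$ everywhere except at its signature pair, a uniform context distribution, the observation that such a predictor can only be evicted from the version space by sampling its signature, and a count of first visits to distinct contexts to get the $\min\{N,\wt{\Omega}(T)\}$ scaling in the two regimes. The elimination-algorithm half of your argument is sound (and, as in the paper, only gives expected regret $(1-\eps)/2$ per first visit; note that this factor cannot literally be ``absorbed into $\wt{\Omega}$'' in the regime where the minimum equals $N$, though the paper's own proof is equally loose there).

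The genuine problem is your treatment of \pref{alg:regression_ucb_optimistic}. Because you set $f_i(x_i,2)=1-\eps/2$, at a fresh context you only obtain a tie, $\High_{\cF_m}(x_i,2)=\High_{\cF_m}(x_i,1)=1-\eps/2$, and you must invoke \emph{adversarial} tie-breaking to make the optimistic algorithm play action $2$. The proposition, however, claims a regret lower bound for the algorithm whose tie-breaking is simply unspecified; under a benign rule (e.g., break ties toward the first action, or toward the action with the larger lower bound) the optimistic algorithm on your instance always plays action $1$ and suffers zero regret, so the statement is not established for \pref{alg:regression_ucb_optimistic} as written. The paper sidesteps this entirely by making the spurious prediction \emph{strictly} dominate: the true optimal reward is $1-\eps$ while $f_i$ predicts reward $1$ for the bad action, so $\High_{\cF_m}(x^i,a_b)=1>1-\eps=\High_{\cF_m}(x^i,a_g)$ and the mistake is forced for every tie-breaking rule. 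Your construction admits the same easy repair: set $f_i(x_i,2)$ to any value strictly above $1-\eps/2$ (say $1$). Realizability constrains only $f^\star$, elimination of $f_i$ still requires a sample at $(x_i,2)$, and the per-mistake regret is determined by the true rewards ($1-\eps/2$ versus $\eps/2$), not by the predictions, so the sharp $(1-\eps)$ constant is preserved.
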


\pref{prop:lb_optimistic} is proved in \pref{app:lb_optimistic}. The proof
is build on a well known, albeit rather pathological instance.
In contrast, our
strong empirical results in the following section show that such
instances are not encountered in practice. In order to understand the
typical behavior of such algorithms, prior works have considered
structural assumptions such as finite eluder
dimension~\cite{russo2013eluder} or disagreement
coefficients~\cite{hanneke2014theory,krishnamurthy2017active}. In the
next two subsections, we use similar ideas to analyze the
regret incurred by our algorithm. For simplicity, we assume that
$\High_{\cF_m}$ and $\Low_{\cF_m}$ are computed exactly.

\subsection{Disagreement-based Analysis}
\label{sec:disagreement}

Disagreement coefficients come from the active learning
literature~\cite{hanneke2014theory}, and roughly assume that given a
set of functions which fit the historical data well, the probability that
these functions make differing predictions on a new example is
small. This rules out the bad case of
\pref{prop:lb_optimistic}, where a near-optimal predictor
significantly disagrees from the others on each context. Our development in this subsection largely
follows~\citet{krishnamurthy2017active}, with
appropriate modifications to translate from active learning to
contextual bandits. We start by recalling some formal definitions, leading up to the definition of the
disagreement coefficient.

\begin{definition}
For any $\veps > 0$, the \emph{policy-regret ball} of radius $\veps$ for $\cF$ is defined as
\begin{align*}
\cF(\veps)=\BigBraces{f\in\cF:\: \En[r(\pi_f(x))] \geq
 \En[r(\pi^\star(x))] & - \veps
}.
\end{align*}
\end{definition}

\begin{definition}[Reward width] For any predictor class $\cF$, context $x$, and action $a$,
the reward width is defined as
\[
\Wid_{\cF}(x,a) =  \High_{\cF}(x,a) - \Low_{\cF}(x,a).
\]
\end{definition}

\begin{definition}[Disagreement Region]
For any predictor class $\cF$, the disagreement region $\mathrm{Dis}(\cF)$ is defined as\footnote{When the maximizing action $\argmax_{a \in \cA} f(x,a)$ is not unique, the ``$\neq$'' in the disagreement set definition checks that the two argmax sets are identical.}
\begin{align*}
&\mathrm{Dis}(\cF)
\\&= \BigBraces{x\;\BigGiven\;
    \exists{}f,f'\in\cF: \argmax_{a\in \cA}f(x, a)\neq{}\argmax_{a\in \cA}f'(x,a)}.
\end{align*}
\end{definition}

\begin{definition}[Disagreement set]
\label{def:confused_set}
 For a predictor class~$\cF$ and a context~$x$, the \emph{disagreement set} at $x$ is defined as
\[
A_{\cF}(x) = \bigcup_{f\in\cF}\argmax_{a\in \cA}f(x, a).
\]
\end{definition}

With these preliminaries, the disagreement coefficient is defined as follows.

\begin{definition}[Disagreement Coefficient]
\label{def:disagreement}
The disagreement coefficient for $\cF$
(with respect to $D_\cX$)
is defined as
\begin{align*}
\theta_0\ldef\sup_{\delta>0, \veps>0}\;
\frac{\delta}{\veps}\Pr_{D_{\cX}}
  &\BigBracks{
      x\in\mathrm{Dis}(\cF(\veps))\textnormal{ and}
\\[-6pt]
  &\quad
      \exists{}a\in{}A_{\cF(\veps)}(x) : \Wid_{\cF(\veps)}(x, a) > \delta
  }
.
\end{align*}
\end{definition}

Informally, the disagreement
coefficient is small if on most contexts either all functions in
$\cF(\veps)$ choose the same action according to their greedy
policies or all actions chosen by those policies have a low range of predicted rewards.

The following theorem provides regret bounds in terms of the disagreement coefficient. In this theorem and subsequent theorems we use $\tO$ to suppress polynomial dependence on $\log{}T$, $\log{}K$, and $\log(1/\delta)$, where $\delta$ is the failure probability.
Moreover, all results can be improved to bounds that are logarithmic (in $T$) under the standard Massart noise condition
(see the appendix for the definition and the complete theorem statements under this condition).
\begin{theorem}
\label{thm:disagreement}
With $\beta_m = \frac{(M-m+1)\concG}{\tau_m - 1}$ and $\concG= 16\log\left(\frac{2|\cG|KT^2}{\delta}\right)$,
\pref{alg:regression_ucb_elim} with \textrm{Option I} ensures that with probability at least $1- \delta$,
\[
\reg_T = \tO\left(T^{\frac{3}{4}}\prn*{\log\Abs{\cG}}^{\frac{1}{4}}\sqrt{\theta_0 K}\right).
\]
\end{theorem}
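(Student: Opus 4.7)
The plan is to combine a martingale concentration argument (to control the version space $\cF_m$) with a squared-loss-to-policy-regret conversion that places $\cF_m$ inside a policy-regret ball $\cF(\veps_m)$, and then to apply the disagreement coefficient to bound the expected reward width per round. I would first apply Freedman's inequality to the martingale differences $((g(x_s) - r_s(a_s))^2 - (g^\star_a(x_s) - r_s(a_s))^2)\ind\{a_s=a\}$ uniformly over $g \in \cG$, $a \in \cA$, and epochs $m \le M = O(\log T)$, with the choice of $\concG$ and $\beta_m$ tuned exactly so that this union bound succeeds. This yields, with probability at least $1-\delta$, that for every $m$ and $a$: (a) the optimal base predictor survives, $g^\star_a \in \hcG_m(\beta_m, a)$; and (b) every surviving $g\in \hcG_m(\beta_m,a)$ satisfies the empirical prediction-deviation bound
\[
\sum_{s<\tau_m}(g(x_s) - g^\star_a(x_s))^2\ind\{a_s=a\} = O(\tau_m \beta_m).
\]

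Next I would translate this empirical bound into a policy-regret bound for every $f = (g_a)_{a\in\cA} \in \cF_m$. Under Option~I, $a_s \sim \Unif{A_s}$, so $\Pr[a_s=a \mid x_s,\mathrm{hist}] \ge 1/K$ whenever $a\in A_s$; combined with (b) and a second round of concentration (moving from the pathwise empirical sum to a population expectation), this yields a bound of the form
\[
\En_{x\sim D_{\cX}}\brk*{(g_a(x)-g^\star_a(x))^2\cdot \ind\{a\in A_{\cF_m}(x)\}} = \tO(K\beta_m).
\]
Because $f^\star\in\cF_m$ implies $\pi^\star(x),\pi_f(x)\in A_{\cF_m}(x)$, the standard inequality $f^\star(x,\pi^\star(x))-f^\star(x,\pi_f(x)) \le |f(x,\pi_f(x))-f^\star(x,\pi_f(x))| + |f(x,\pi^\star(x))-f^\star(x,\pi^\star(x))|$, followed by Cauchy--Schwarz and summation over $a$, gives $\cF_m \subseteq \cF(\veps_m)$ with $\veps_m = \tO\bigParens{K\sqrt{\log|\cG|/\tau_m}}$.

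With this embedding in hand, I would bound the per-round regret via the disagreement coefficient. For round $t$ in epoch $m$, since $\pi^\star(x_t)\in A_t$ and any $a\in A_t$ satisfies
\[
f^\star(x_t,\pi^\star(x_t))-f^\star(x_t,a) \le \Wid_{\cF_m}(x_t,\pi^\star(x_t))+\Wid_{\cF_m}(x_t,a),
\]
the conditional regret is at most $2\max_{a\in A_t}\Wid_{\cF_m}(x_t,a)\cdot\ind\{x_t\in\mathrm{Dis}(\cF_m)\}$. Splitting this at a threshold $\Delta$ and enlarging $\cF_m$ to $\cF(\veps_m)$, \pref{def:disagreement} bounds the corresponding tail probability by $\theta_0\veps_m/\Delta$, so $\En[\reg_t]\le 2\Delta+2\theta_0\veps_m/\Delta$; optimizing $\Delta=\sqrt{\theta_0\veps_m}$ gives $\En[\reg_t]=O(\sqrt{\theta_0\veps_m})$. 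Summing yields $\reg_T \lesssim \sum_m (\tau_{m+1}-\tau_m)\sqrt{\theta_0\veps_m} \lesssim \sqrt{\theta_0 K}(\log|\cG|)^{1/4}\sum_m \tau_m^{3/4} = \tO(T^{3/4}(\log|\cG|)^{1/4}\sqrt{\theta_0 K})$, matching the claim.

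The main technical obstacle is the translation in the second paragraph: the concentration in the first step controls $g_a$ only on examples where $a$ was actually played, while the policy-regret ball is defined with respect to the full marginal $D_{\cX}$ and all actions simultaneously. The product structure of Option~I decouples the analysis across actions and uniform exploration inside $A_s$ provides the needed coverage, but tracking the $K$-dependence carefully through both the Freedman step and the Cauchy--Schwarz step is what yields the final $\sqrt{\theta_0 K}$ factor (as opposed to, say, $\sqrt{\theta_0}\cdot K^{1/4}$). The remaining pieces---initial concentration, disagreement-coefficient manipulation, and geometric summation over the doubling epochs---are essentially standard.
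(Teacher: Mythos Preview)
Your proposal is correct and follows essentially the same route as the paper: Freedman concentration to control $\hcG_m$, a squared-loss-to-policy-regret conversion placing $\cF_m$ inside $\cF(\veps_m)$ with $\veps_m=\tO\bigParens{K\sqrt{\concG/\tau_m}}$, and a disagreement-coefficient bound on the per-round width. The one point you leave implicit is the nesting $\cF_m\subseteq\cF_k$ for $k\le m$ (guaranteed by the specific form of $\beta_m$), which is what allows past uniform exploration over $A_s=A_{\cF_{k(s)}}(x_s)$ to cover the current $A_{\cF_m}(x)$ in your population bound; the paper also replaces your direct Cauchy--Schwarz with an $\eta P_\eta$ split, but only to enable the Massart-noise fast rate, which the stated theorem does not require.
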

We state the theorem above for finite classes for simplicity. See \pref{thm:disagreement_with_massart} in \pref{app:disagreement} for the full version of this theorem, which applies to infinite classes and additionally obtains faster rates under the Massart noise condition.

\paragraph{Discussion} \pref{thm:disagreement} critically uses the
product class structure, specifically, the fact that the set $A_{t}$ computed by the algorithm coincides with the disagreement set $A_{\cF_m}(x_t)$ for $t\in\crl*{\tau_m,\ldots,\tau_{m+1}-1}$. This is true for product classes, but not necessarily for general (non-product) predictor classes.
Computing the disagreement set
efficiently for non-product classes is a challenge for future work.

While bounding the disagreement coefficients \textit{a priori} often requires
strong assumptions on the model class and the distribution, the size
of disagreement set can be easily checked empirically under the
product class assumption, and we include this diagnostic in our experimental
results.

Finally, it is not obvious how to use the disagreement coefficient to
analyze \pref{alg:regression_ucb_optimistic}. Our analysis
crucially requires that any plausibly optimal action $a$
be chosen with a reasonable probability, something which the
optimistic algorithm fails to ensure.

\subsection{Moment-based Analysis}
\label{sec:moment}

The disagreement-based analysis of \pref{thm:disagreement} is not
entirely satisfying because, even for simple linear predictors such as
in \linucb{} \citep{chu2011contextual} it is known that fairly strong assumptions
on the context distribution $D_{\cX}$ such as log-concavity
are required to bound the disagreement coefficient $\theta_0$
\citep{hanneke2014theory}. In order to capture and extend the linear
setting with distributional assumptions on the contexts, prior work
has used the notion of eluder dimension~\cite{russo2013eluder}. It
remains challenging, however, to show examples with a small eluder
dimension beyond linearly parameterized functions. In addition, taking
the worst-case over all histories, as in the definition of eluder dimension,
is overly pessimistic in the stochastic contextual-bandit setting.

To address the shortcomings of both the disagreement-based
analysis as well as eluder dimension for i.i.d.\ settings, we next
define a couple of distributional properties which we then use to
analyze the regret of our both algorithms.

\begin{definition}[\Surprise]
\label{def:infinity_l2}
The \surprise $\consL_1 > 0$ is the smallest constant such that for all $f\in\cF$, $x\in\cX$, and $a\in\cA$,
\begin{align*}
&
  \bigParens{f(x,a)-f^\star(x,a)}^2
\\
&\quad{}
\le
  \consL_1\En_{x'\sim D_\cX} \En_{a'\sim\Unif{\cA}}\BigBracks{\bigParens{f(x',a')-f^\star(x',a')}^2}
\enspace.
\end{align*}
\end{definition}
\comment{
        Let $\cD' = \cD_{\cX}\tens{}\textrm{Uniform}(\brk*{K})$.
        There exists constant $\consL_1$ such that for all $h\in\cF-f^{\star}$,
        \begin{equation}
        \nrm*{h}_{L_{\infty}(\cD')} \leq{} \consL_1\cdot\nrm*{h}_{L_2(\cD')}.
        \end{equation}
        \begin{remark}
        The $L_{\infty}$ norm can be relaxed to the $\psi_{2}$ Orlicz norm at the cost of an extra concentration argument.
        \end{remark}
}
The \surprise is small if functions with a small expected squared
error to $f^\star$ (under a uniform choice of actions) do not
encounter a much larger squared error on any single context-action
pair. 

The second quantity which we call the {\it \implicit} (\iec for short) relates
the expected regression error under actions chosen by the optimal
policy to the worst-case error on any other context-action pair.
Specifically, for any $\lambda\in [0, 1]$, first define $U_{\lambda}(a)$ to be the set of contexts where $a$ is the best action by a margin of $\lambda$:
\[
U_{\lambda}(a)\ldef\BigBraces{x\;\BigGiven\; f^{\star}(x,a)\ge f^{\star}(x,a') + \lambda\text{ for all $a'\ne a$}}.
\]

\begin{definition}[\Implicit---\iec]
\label{def:l2norm_ua}
For any $\lambda \in [0,1]$, the \implicit $\consL_{2, \lambda} > 0$
is the smallest constant such that for all $f\in\cF$, $x\in\cX$, and $a\in\cA$,
\begin{align}
&\notag
                 \bigParens{f(x,a)-f^\star(x,a)}^2
\\[-2pt]
&\label{eq:iec}
\le
     \consL_{2, \lambda}\En_{x'\sim D_\cX}\En_{a'\sim{}\Unif{\cA}}\Bigl[\ind\bigSet{x'\in U_{\lambda}(a')}
\\[-2pt]
&\hspace{1.6in}\notag
     {}\cdot\bigParens{f(x',a')-f^\star(x',a')}^2\Bigr].
\end{align}
\end{definition}
We next make a couple remarks about these definitions and their impact on the performance of \pref{alg:regression_ucb_elim} and \pref{alg:regression_ucb_optimistic}, and then spell them
out more precisely in \pref{thm:moment} and \pref{thm:moment_optimistic}.

\begin{itemize}
  \item By definition, $\consL_{2,\lambda}$ is non-decreasing in
$\lambda$. For \pref{alg:regression_ucb_elim} we can simply use
$\lambda=0$, for which it is sufficient to replace right-hand side of \pref{eq:iec} with
\[\frac{\consL_{2, 0}}{K}\En_{x\sim
  D_\cX}\brk{\prn{f(x,\pi^{\star}(x))-f^\star(x,\pi^{\star}(x))}^2}.\] The
analysis of \pref{alg:regression_ucb_optimistic} requires $\lambda >
0$, and this $\lambda$ must be used to tune the algorithm's warm-start period.
\item We always have $L_{1}\leq{}L_{2,0}$, but $L_{1}$ may be much smaller. Only \pref{alg:regression_ucb_optimistic} has a regret bound depending on $L_{1}$ directly, whereas the regret of \pref{alg:regression_ucb_elim} is independent of this constant.
\end{itemize}

With this in mind, we proceed to state the regret bound for \pref{alg:regression_ucb_elim} with a general predictor class $\cF$:
\begin{theorem}
\label{thm:moment}
With $\beta_m = \frac{(M-m+1)\concF}{\tau_m - 1}$ where $\concF= 16\log\left(\frac{2|\cF|T^2}{\delta}\right)$,
\pref{alg:regression_ucb_elim} with \textrm{Option II} ensures that with probability at least $1- \delta$,
\[
\reg_T = \tO\left(\sqrt{T\consL_{2,0}\log\Abs{\cF}}\right).
\]
\end{theorem}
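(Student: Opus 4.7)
The plan is three-stage: use squared-loss martingale concentration to turn the version-space inclusion $f\in\cF_m$ into a bound on the conditional squared error $\En_{s-1}\bigBracks{(f-f^\star)^2(x_s,a_s)}$; exploit that uniform sampling over $A_s$ places probability at least $1/K$ on $\pi^\star(x_s)$ to convert this into a bound on $\En_{x}\bigBracks{(f-f^\star)^2(x,\pi^\star(x))}$; and apply \pref{def:l2norm_ua} at $\lambda=0$ to upgrade that expected bound to a pointwise $L_\infty$ bound, which controls widths and hence regret.

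For the concentration stage, the bias-variance identity $\En_{s-1}\bigBracks{(f(x_s,a_s)-r_s(a_s))^2 - (f^\star(x_s,a_s)-r_s(a_s))^2} = \En_{s-1}\bigBracks{(f-f^\star)^2(x_s,a_s)}$ under \pref{ass:realizable} lets me apply a Freedman/Bernstein inequality, unioned over $\cF$ and over the $M$ epochs, to show that with probability $1-\delta$, for every $m$ and every $f\in\cF$,
\[
\sum_{s<\tau_m}\En_{s-1}\bigBracks{(f-f^\star)^2(x_s,a_s)} \lesssim \bigParens{\hR_m(f)-\hR_m(f^\star)}(\tau_m-1) + \concF,
\]
together with the matching lower bound. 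Taking $f=f^\star$ gives $\hR_m(f^\star)-\min_g\hR_m(g)\leq\beta_m$, so inductively $f^\star\in\cF_m$ for every $m$. Applied to any $f\in\cF_m$, combined with $\hR_m(f)\leq\hR_m(f^\star)+\beta_m$, this yields $\sum_{s<\tau_m}\En_{s-1}\bigBracks{(f-f^\star)^2(x_s,a_s)} = O(\beta_m(\tau_m-1))$.

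For the transfer stage, since $f^\star\in\cF_{m(s)}$ implies $\pi^\star(x_s)\in A_s$ and $|A_s|\leq K$, the action $a_s$ coincides with $\pi^\star(x_s)$ with conditional probability at least $1/K$, giving $\En_{s-1}\bigBracks{(f-f^\star)^2(x_s,a_s)} \geq \tfrac{1}{K}\En_{x}\bigBracks{(f-f^\star)^2(x,\pi^\star(x))}$. Averaging over $s<\tau_m$ turns the stage-1 bound into $\En_{x}\bigBracks{(f-f^\star)^2(x,\pi^\star(x))} = O(K\beta_m)$ for every $f\in\cF_m$, and plugging this into \pref{def:l2norm_ua} at $\lambda=0$ (in the $\pi^\star$-form from the discussion after the definition) yields $(f-f^\star)^2(x,a) \leq \tfrac{\consL_{2,0}}{K}\cdot O(K\beta_m) = O(\consL_{2,0}\,\beta_m)$ pointwise, and hence $\Wid_{\cF_m}(x,a) = O\bigParens{\sqrt{\consL_{2,0}\,\beta_m}}$ uniformly in $(x,a)$. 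For any round $t$ in epoch $m$, the membership $a_t,\pi^\star(x_t)\in A_t$ gives $\High_{\cF_m}(x_t,a_t)\geq \Low_{\cF_m}(x_t,\pi^\star(x_t))$, so using $f^\star\in\cF_m$,
\[
f^\star(x_t,\pi^\star(x_t))-f^\star(x_t,a_t) \leq \Wid_{\cF_m}(x_t,\pi^\star(x_t))+\Wid_{\cF_m}(x_t,a_t) = O\bigParens{\sqrt{\consL_{2,0}\,\beta_m}},
\]
and the epoch-$m$ contribution $(\tau_{m+1}-\tau_m)\cdot O(\sqrt{\consL_{2,0}\beta_m}) = O\bigParens{\sqrt{2^{m-1}(M-m+1)\concF\,\consL_{2,0}}}$ sums to $O(\sqrt{T\concF\,\consL_{2,0}})$, dominated by the final epoch, yielding the claimed $\tO\bigParens{\sqrt{T\consL_{2,0}\log\abs{\cF}}}$ bound after a standard Azuma conversion from expected gap to realized regret.

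The main technical hurdle is the sharp squared-loss concentration in stage 1: an Azuma-type additive slack of order $\sqrt{\log\abs{\cF}/\tau_m}$ would not fit inside $\beta_m\asymp\log\abs{\cF}/\tau_m$, so a variance-aware Freedman/Bernstein argument is essential to achieve multiplicative absorption (and, for the infinite-class version in the appendix, it must be replaced by a localized $L_2$ chaining bound in squared loss). A related subtlety is that the transfer step relies on $f^\star\in\cF_{m(s)}$ for \emph{every} prior epoch, which must be propagated inductively as part of the same high-probability event.
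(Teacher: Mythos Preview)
Your argument is correct and lands on the same bound as the paper. The skeleton is the same as the paper's proof of \pref{thm:moment_with_massart}: Freedman-type concentration to show $f^\star\in\cF_m$ and to control $\sum_{s<\tau_m}\En_{s-1}[M_s(f)]$ for $f\in\cF_m$, then invoke the \iec $\consL_{2,0}$ to upgrade to a pointwise $L_\infty$ bound on $(f-f^\star)^2$, and finally convert the resulting width bound to regret.

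There are two tactical differences worth noting. First, the paper introduces the $\eta P_\eta$ splitting (bounding the instantaneous gap by $\eta P_\eta + \tfrac{1}{\eta}\En_t[(f^\star(x_t,\pi^\star(x_t))-f^\star(x_t,a_t))^2]$), whereas you bound the gap directly by the sum of two widths; your route is more elementary for the $\sqrt{T}$ statement, but the paper's $\eta$ trick is what allows the Massart fast-rate refinement in \pref{thm:moment_with_massart}. Second, to relate the \iec right-hand side to the algorithm's sampling, the paper uses only the \emph{previous} epoch---observing that $x\in U_0(a)$ implies $a\in A_{\tau_{m-1}}(x)$, hence the \iec expectation is dominated by $\En_{\tau_{m-1}}[M_{\tau_{m-1}}(f)]$, which is then bounded via the second claim of \pref{lem:disagreement_containment_F}. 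You instead average over \emph{all} prior rounds $s<\tau_m$ and use the $1/K$ lower bound on $\Pr(a_s=\pi^\star(x_s)\mid x_s)$; the factor of $K$ you pick up cancels against the $1/K$ in the $\pi^\star$-form of the \iec, so you arrive at the same pointwise bound $O(\consL_{2,0}\beta_m)$. Summing your per-epoch width bounds directly even saves a stray $\log T$ relative to the paper's global-$\eta$ optimization, though both are absorbed into $\tO$.
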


We now move on to describe the performance guarantee for
\pref{alg:regression_ucb_optimistic}. Because this optimistic strategy does not explore as
readily as the elimination-based strategy
\pref{alg:regression_ucb_elim}, the analysis requires both that (i) the
\iec $L_{2,\lambda}$ be invoked for some $\lambda>0$
and (ii) that the algorithm use a warm-start period whose size grows
as $1/\lambda^2$.

\begin{theorem}
\label{thm:moment_optimistic}
With $\beta_m = \frac{(M-m+1)\concF}{\tau_m - 1}$ where $\concF= 16\log\left(\frac{2|\cF|T^2}{\delta}\right)$ and
$M_0 = 2 + \left\lfloor \log_2\left(1 + \frac{(2M+3)\consL_1\concF}{\lambda^2}\right) \right\rfloor$ for any $\lambda \in (0,1)$,
\pref{alg:regression_ucb_optimistic} ensures that with probability at least $1- \delta$,
\[
\reg_T = \tO\left(\frac{\consL_1\log\Abs{\cF}}{\lambda^2} +  \sqrt{T\consL_{2,\lambda}\log\Abs{\cF}}\right).
\]
\end{theorem}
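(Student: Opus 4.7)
The plan is to split $\reg_T$ into a warm-start contribution ($t<\tau_{M_0}$) and a main-phase contribution ($t\ge\tau_{M_0}$). Warm-start regret is at most $\tau_{M_0}-1$, which the choice of $M_0$ makes $O\bigParens{M\,\consL_1\concF/\lambda^2}=\tO(\consL_1\log|\cF|/\lambda^2)$, giving the first term in the bound. Everything interesting happens in the main phase.

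For the main phase, the starting point is a standard uniform martingale concentration bound on the offset empirical risk $\hR_m(f)-\hR_m(f^\star)$. With probability at least $1-\delta$ this yields both (i)~$f^\star\in\cF_m$ for every~$m$, and (ii)~for every $f\in\cF_m$,
\[
\sum_{s<\tau_m}\En_s\bigBracks{(f(x_s,a_s)-f^\star(x_s,a_s))^2}\lesssim(M-m+1)\concF,
\]
where $\En_s$ is conditional on history and the algorithm's action distribution at round~$s$. Restricting the sum in~(ii) to warm-start rounds (where $a_s\sim\Unif{\cA}$) and invoking the $\surprise$ $\consL_1$ upgrades this to a pointwise bound $|f(x,a)-f^\star(x,a)|\le\lambda/\sqrt{2}$ uniformly over $f\in\cF_m$, $m\ge M_0$, and $(x,a)$; the calibration of $M_0$ is engineered precisely to make this inequality hold. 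Setting $U_\lambda\ldef\bigcup_a U_\lambda(a)$, the pointwise bound combined with the defining $\lambda$-gap of $U_\lambda$ forces $\pi_f(x)=\pi^\star(x)$ for every $f\in\cF_m$ and every $x\in U_\lambda$, and in particular $\argmax_a\High_{\cF_m}(x,a)=\pi^\star(x)$. Hence for $t\ge\tau_{M_0}$ with $x_t\in U_\lambda$ the optimistic algorithm plays $a_t=\pi^\star(x_t)$ and suffers no regret; on the remaining rounds, optimism plus $f^\star\in\cF_m$ give instantaneous regret at most $\Wid_{\cF_m}(x_t,a_t)\le 2\max_{f\in\cF_m}|f(x_t,a_t)-f^\star(x_t,a_t)|$.

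The final step is to bound $\sum_{t\ge\tau_{M_0}}\Wid_{\cF_m}(x_t,a_t)^2$ via the $\iec$. Applying \pref{def:l2norm_ua} to the $f_t\in\cF_m$ that maximizes the width and collapsing the inner expectation (which is supported on $a'=\pi^\star(x')$) gives
\[
(f_t(x_t,a_t)-f^\star(x_t,a_t))^2\le\frac{\consL_{2,\lambda}}{K}\En_{x'}\bigBracks{\ind\{x'\in U_\lambda\}(f_t(x',\pi^\star(x'))-f^\star(x',\pi^\star(x')))^2}.
\]
The key observation is that because $a_s=\pi^\star(x_s)$ on $U_\lambda$ throughout the main phase, the right-hand expectation is exactly $\En_s\bigBracks{\ind\{x_s\in U_\lambda\}(f_t(x_s,a_s)-f^\star(x_s,a_s))^2}$ for each $s\ge\tau_{M_0}$. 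Summing over such~$s$, invoking~(ii), and using $\tau_m-\tau_{M_0}\ge\tau_m/2$ for $m>M_0$ yields a bound of order $(M-m+1)\concF/\tau_m$ on that expectation. Summing over a whole epoch of length $\tau_{m+1}-\tau_m=\tau_m$ cancels the $\tau_m$ factor; summing over epochs gives $\sum_{t\ge\tau_{M_0}}\Wid_{\cF_m}(x_t,a_t)^2=\tO(\consL_{2,\lambda}\log|\cF|/K)$, and Cauchy--Schwarz over at most $T$ terms produces the $\tO(\sqrt{T\consL_{2,\lambda}\log|\cF|})$ term.

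The step I expect to be most delicate is the calibration in the second paragraph: the $\surprise$ argument must produce a pointwise deviation that is \emph{strictly} below half the $\lambda$-gap, so that the equality $\pi_f(x)=\pi^\star(x)$ genuinely holds on all of $U_\lambda$; this is precisely what pins down the $(2M+3)\consL_1\concF/\lambda^2$ threshold in the definition of $M_0$. The concentration inequality, the collapsing of the $\iec$ expectation onto $\pi^\star$, and the epoch-by-epoch telescoping are otherwise routine modulo careful bookkeeping of the $\log T$, $\log K$, and $\log(1/\delta)$ factors hidden in $\tO(\cdot)$.
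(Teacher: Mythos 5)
Your overall architecture matches the paper's: a warm-start contribution of order $\consL_1\concF/\lambda^2$, the concentration facts (i)--(ii), the observation that after the warm start the algorithm plays $\pi^\star(x_t)$ whenever $x_t\in U_\lambda(\pi^\star(x_t))$, and an \iec-based bound on the remaining rounds that collapses the population expectation onto the played actions on $U_\lambda$ and telescopes over epochs (your width-plus-Cauchy--Schwarz accounting replaces the paper's $\eta P_\eta$ peeling, but for the stated non-Massart bound the two are interchangeable). The genuine problem is in the step you yourself flag as delicate. With the $M_0$ fixed by the theorem, the warm-start calibration only yields the pointwise guarantee $(f(x,a)-f^\star(x,a))^2\le \consL_1(2M+3)\concF/(\tau_{M_0}-1)<\lambda^2$ for all $f\in\cF_m$, $m\ge M_0$, i.e.\ deviation strictly below $\lambda$; when $1+(2M+3)\consL_1\concF/\lambda^2$ lies just above a power of two, $\tau_{M_0}-1$ barely exceeds $(2M+3)\consL_1\concF/\lambda^2$ and the deviation can be arbitrarily close to $\lambda$, so your claimed $\lambda/\sqrt{2}$ is not available. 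More importantly, even $\lambda/\sqrt{2}$ would not suffice for the claim you actually invoke, namely $\pi_f(x)=\pi^\star(x)$ for every $f\in\cF_m$ and every $x\in U_\lambda$: that two-sided argument requires deviation strictly below $\lambda/2$ (as you acknowledge in your own closing paragraph), and $\lambda/\sqrt{2}>\lambda/2$. So, as written, the step establishing $a_t=\pi^\star(x_t)$ on $U_\lambda$ does not go through, and since $M_0$ is fixed by the statement you cannot simply enlarge the warm start to repair it.

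The fix --- and the paper's actual argument --- is asymmetric and needs only deviation $<\lambda$: for $x_t\in U_\lambda(a)$ and any $a'\ne a$, write $\High_{\cF_m}(x_t,a)-\High_{\cF_m}(x_t,a')=\bigParens{f^\star(x_t,a)-f^\star(x_t,a')}+\bigParens{\High_{\cF_m}(x_t,a)-f^\star(x_t,a)}+\bigParens{f^\star(x_t,a')-\High_{\cF_m}(x_t,a')}>\lambda+0-\lambda=0$, where the middle term is nonnegative because $f^\star\in\cF_m$ and only the last term uses the pointwise deviation bound. This controls only $\argmax_{a}\High_{\cF_m}(x_t,a)$, which is all the optimistic rule needs, rather than every $\pi_f$. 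With that substitution, the remainder of your argument --- collapsing the \iec{} expectation onto observed main-phase rounds via $a_s=\pi^\star(x_s)$ on $U_\lambda$, the $\tau_m-\tau_{M_0}\ge\tau_m/2$ telescoping, Cauchy--Schwarz, and a final Azuma step converting conditional expected regret to realized regret --- goes through and recovers the stated bound.
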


Because \pref{alg:regression_ucb_optimistic} requires warm start,
the regret bounds of \pref{thm:moment_optimistic} for
\pref{alg:regression_ucb_optimistic} are always worse than those of
\pref{thm:moment} for \pref{alg:regression_ucb_elim}. \pref{app:moment_proofs} contains full versions of these theorems, \pref{thm:moment_with_massart} and \pref{thm:moment_optimistic_with_massart}, which---as in the disagreement case---obtain faster rates under the Massart noise condition and apply to infinite classes.

We now bound the
regret of both algorithms for some special cases.
\paragraph{Linear classes}
Consider the linear setting, as for instance in \linucb, with a fixed feature map $\phi:\cX\times\cA\to\bbR^{d}$
and $\cF=\crl*{(x,a)\mapsto{}w^\top\phi(x,a)\given w\in\cW}$ for some $\cW\subseteq{}\bbR^{d}$.
\begin{proposition}~
\label{prop:linear_ex}
\begin{itemize}
\item If $\nrm*{\phi(x,a)}_2\leq{}1$ and $\nrm*{w}_2\leq{}1$ then $L_{2,\lambda}$ is bounded by
  \[
  \frac{K}{\lambda_{\min}\prn*{\sum_{a\in\cA}\En_{x}\bigBracks{\ind\crl*{x\in{}U_{\lambda}(a)}\,\phi(x,a)\phi(x,a)^{\top}}}},
  \]
  where $\lambda_{\min}(\cdot)$ is the smallest eigenvalue of a matrix, and $L_1$ is bounded by
  \[
  \frac{K}{\lambda_{\min}\prn*{\sum_{a\in\cA}\En_{x}\brk*{\phi(x,a)\phi(x,a)^{\top}}}}.
  \]
\item In the sparse high-dimensional setting with
  $\nrm*{\phi(x,a)}_{\infty}\leq{}1$, $\nrm*{w}_{\infty}\leq{}1$, and
  $\nrm*{w}_{0}\leq{}s$, then $L_{2,\lambda}$ is bounded by

  \[
    \frac{2Ks}{\psi_{\min}\prn*{\sum_{a\in\cA}\En_{x}\bigBracks{\ind\crl*{x\in{}U_{\lambda}(a)}\,\phi(x,a)\phi(x,a)^{\top}}}},
  \]
  where $\psi_{\min}(A)\coloneqq
  \min_{w\neq{}0:\:\nrm*{w}_{0}\leq{}2s}w^\top\!\!Aw\,/\,w^\top w$ is the
  minimum restricted
  eigenvalue for $2s$-sparse predictors \citep{raskutti2010restricted}. The coefficient $L_1$ is bounded by
    \[
    \frac{2Ks}{\psi_{\min}\prn*{\sum_{a\in\cA}\En_{x}\bigBracks{\phi(x,a)\phi(x,a)^{\top}}}}.
  \]
\end{itemize}
\end{proposition}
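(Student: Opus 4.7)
The plan is to reduce both bounds to an elementary linear-algebra calculation. For any $f \in \cF$ with parameter $w$, write $h \ldef w - w^\star \in \cW - \cW$, so that $f(x,a) - f^\star(x,a) = h^\top \phi(x,a)$ by linearity and $f^\star(x,a) = (w^\star)^\top \phi(x,a)$ (with $w^\star \in \cW$ by Realizability). Substituting into \pref{def:l2norm_ua}, the defining inequality for $\consL_{2,\lambda}$ becomes
\[
\bigParens{h^\top \phi(x,a)}^2 \;\le\; \frac{\consL_{2,\lambda}}{K}\, h^\top \Sigma_\lambda h, \qquad \Sigma_\lambda \ldef \sum_{a\in\cA}\En_x\bigBracks{\ind\{x\in U_\lambda(a)\}\,\phi(x,a)\phi(x,a)^\top},
\]
for all $x,a,h$, and the analogous inequality for $\consL_1$ uses $\Sigma_0 \ldef \sum_{a\in\cA}\En_x[\phi(x,a)\phi(x,a)^\top]$. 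Hence it suffices in each case to upper bound the left-hand side by a norm of $h$ and lower bound $h^\top \Sigma_\lambda h$ by the same norm of $h$; the ratio then yields a bound on $\consL_{2,\lambda}$ (resp.\ $\consL_1$) independent of $h$.

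For the Euclidean case, Cauchy--Schwarz with $\|\phi(x,a)\|_2 \le 1$ gives $(h^\top \phi(x,a))^2 \le \|h\|_2^2$, and $h^\top \Sigma_\lambda h \ge \lambda_{\min}(\Sigma_\lambda)\,\|h\|_2^2$ by definition of the smallest eigenvalue. Dividing yields $\consL_{2,\lambda} \le K/\lambda_{\min}(\Sigma_\lambda)$, and the $\consL_1$ bound follows identically with $\Sigma_0$ in place of $\Sigma_\lambda$.

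For the sparse high-dimensional case, the key observation is that since $\|w\|_0 \le s$ for all $w \in \cW$ and $w^\star \in \cW$ by Realizability, the difference $h = w - w^\star$ is supported on at most $2s$ coordinates. Then H\"older's inequality with $\|\phi(x,a)\|_\infty \le 1$ gives $(h^\top \phi(x,a))^2 \le \|h\|_1^2$, and Cauchy--Schwarz restricted to the $2s$-element support of $h$ yields $\|h\|_1^2 \le 2s\,\|h\|_2^2$. For the lower bound, $2s$-sparsity of $h$ makes it an admissible direction in the definition of $\psi_{\min}$, so $h^\top \Sigma_\lambda h \ge \psi_{\min}(\Sigma_\lambda)\,\|h\|_2^2$. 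Taking the ratio gives $\consL_{2,\lambda} \le 2sK/\psi_{\min}(\Sigma_\lambda)$, and the same argument applied to $\Sigma_0$ yields $\consL_1 \le 2sK/\psi_{\min}(\Sigma_0)$.

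There is no genuine obstacle here; the calculation is routine. The only conceptual point worth highlighting is the reliance on Realizability to conclude $w^\star \in \cW$ and hence $\|w^\star\|_0 \le s$, which is what makes $h$ genuinely $2s$-sparse and thereby places it in the regime where the restricted eigenvalue $\psi_{\min}$ provides a non-trivial lower bound in high dimensions.
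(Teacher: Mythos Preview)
Your proof is correct and follows essentially the same approach as the paper's: rewrite $f-f^\star$ as $h^\top\phi$, bound the numerator $(h^\top\phi(x,a))^2$ by $\|h\|_2^2$ (via Cauchy--Schwarz in the $\ell_2$ case, and via H\"older plus $\|h\|_1^2\le 2s\|h\|_2^2$ in the sparse case), bound the denominator below using $\lambda_{\min}$ or $\psi_{\min}$, and take the ratio. The paper's proof is the same computation with $w$ in place of your $h$; your explicit remark that Realizability is what guarantees $w^\star\in\cW$ and hence $\|h\|_0\le 2s$ is a nice clarification that the paper leaves implicit.
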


We emphasize again that \pref{alg:regression_ucb_elim} has a better regret bound than \pref{alg:regression_ucb_optimistic} due to the warm-start phase in \pref{alg:regression_ucb_optimistic}. This is most easily seen by noting that $L_{2,\lambda}$ is non-decreasing in $\lambda$, then observing that the regret of \pref{alg:regression_ucb_elim} depends on $L_{2,0}$ while the regret of \pref{alg:regression_ucb_optimistic} requires $\lambda>0$ due to warm start (recall also that  $L_1 \leq L_{2,0}$). For the linear example above, this can be observed more directly by noting that the moment matrices
$\sum_a\En_{x}\bigBracks{\ind\crl*{x\in{}U_{\lambda}(a)}\,\phi(x,a)\phi(x,a)^{\top}}$ that appear in $L_{2,\lambda}$ in
\pref{prop:linear_ex} are lower bounded by
$\En_{x}\bigBracks{\phi(x,\pi^{\star}(x))\phi(x,\pi^{\star}(x))^{\top}}$ in the Loewner order when $\lambda=0$.

\paragraph{Sparse bandits} For the sparse high-dimensional
setting above, we can apply \pref{thm:moment} by discretizing the set of
weights and invoking a standard covering argument to obtain
$\log\Abs{\cF}={}O\prn*{s\log d}$\footnote{This is made precise via \pref{lem:disagreement_conc_F} in the appendix.}. This yields a near
dimension-independent bound on $\reg_T$ of
\[
 \tO\Parens{
    s\sqrt{KT\log{}d\,\bigm/\,
           \psi_{\min}\prn*{\En_{x}\bigBracks{\phi(x,\pi^{\star}(x))\phi(x,\pi^{\star}(x))^\top}}
    }}
.
\]

This improves upon the moment matrix conditions of \citet{bastani2015online}, although our algorithm is only efficient in the oracle model.\footnote{Because the predictor class $\cF$ is non-convex, this would require the slower binary search algorithm of~\citet{krishnamurthy2017active}.} Furthermore, \pref{alg:regression_ucb_elim} does not require a warm start based on distributional parameters unlike their algorithm (or our \pref{alg:regression_ucb_optimistic}).
Note that without the
scaling with $K$ as in our result, a $\sqrt{d}$ dependence is
unavoidable~\cite{abbasi2012online}. The result highlights the
strengths of our analysis in the best case compared with eluder
dimension, which does not adapt to sparsity structures. On the other
hand, for the standard \linucb{} setting, our result is inferior
by at least a factor of~$K$.\looseness=-1

\paragraph{Discussion}

Our moment-based analysis is influenced by the results of \citet{bastani2015online} for the (high-dimensional) linear setting. Our analysis extends to general classes and, when applied to \pref{alg:regression_ucb_elim}, it makes weaker assumptions. Similar assumptions have been used to analyze purely greedy linear contextual bandits \citep{bastani2017exploiting,kannan2018smoothed}; our assumptions are strictly weaker.\looseness=-1

\section{Experiments}
\label{sec:experiments}

We compared our new algorithms with existing oracle-based alternatives. In addition to showing that \rucb\footnote{\rucb refers collectively to both Algorithms~\ref{alg:regression_ucb_elim} and~\ref{alg:regression_ucb_optimistic}.} has strong empirical performance, our experiments also provide a more extensive empirical study of oracle-based contextual bandit algorithms than any past works~(e.g., \citealp{agarwal2014taming}, \citealp{krishnamurthy2016contextual}). Detailed descriptions of the datasets, benchmark algorithms, and oracle configurations, as well as further experimental results are included in \pref{app:experiments}.

\paragraph{Datasets}
We begin with 10 datasets with full reward information and simulate bandit feedback by withholding the rewards for actions not selected by the algorithm. First, there are two large-scale learning-to-rank datasets, Microsoft MSLR-WEB30k (\texttt{mslr}) \citep{qin2010mslr} and Yahoo! Learning to Rank Challenge V2.0 (\texttt{yahoo}) \cite{chapelle2011yahoo}, that have previously been used to evaluate contextual semibandits \citep{krishnamurthy2016contextual}. Second, we use a collection of eight classification datasets from the UCI repository \cite{lichman2013uci}, summarized in \pref{tab:uci} of \pref{app:datasets}.

The ranking datasets have natural rewards (relevances), but the rewards for the classification datasets always have multiclass structure ($1$ for the correct action and $0$ for all others). Therefore, to ensure that we evaluate at the full generality of the contextual bandit setting, we create eight ``noisy'' UCI datasets by sampling new rewards for the datasets according to a noisy reward matrix model described in \pref{app:experiments}. This yields additional 8 datasets for the total of 18.

On each dataset we consider several replicates obtained by randomly permuting examples and, on noisy UCI, also randomly generating rewards. All the methods are evaluated on the same set of replicates.

\paragraph{Algorithms}
We evaluate both \pref{alg:regression_ucb_elim} and \pref{alg:regression_ucb_optimistic} against three baselines, all based on various optimization-oracle assumptions. First, we use the standard \egreedy{} strategy \citep{langford2008epoch}. 
Second, we use the minimax-optimal \minimonster{} (\iltcb) strategy of \citet{agarwal2014taming}.\footnote{We use an implementation available at \url{https://github.com/akshaykr/oracle_cb}, which was also used by \citet{krishnamurthy2016contextual}.}

The \egreedy{} and \iltcb{} strategies both assume cost-sensitive classification oracles and come equipped with theoretical guarantees. The last baseline we consider is a bootstrapping-based exploration strategy of \citet{dimakopoulou2017estimation} (henceforth \bootstrap{}), which works in the regression-oracle model as we consider here, but without the corresponding theoretical analysis.

Note that the \linucb{} algorithm \citep{chu2011contextual, abbasi2011improved}, which is a natural baseline as well, coincides with our \pref{alg:regression_ucb_optimistic} (with a linear oracle), so we only plot the performance of \rucb with a linear oracle.

All of the algorithms update on an epoch schedule with epoch lengths of $2^{i/2}$, which is a theoretically rigorous choice for each algorithm.

\paragraph{Oracles} We consider two baseline predictor classes $\cF$: $\ls_2$-regularized linear functions (\textsf{Linear}) and gradient-boosted depth-$5$ regression trees (\textsf{GB5}). For the regularized linear class, \pref{alg:regression_ucb_optimistic} is equivalent to \linucb{} on an epoch schedule.\footnote{More precisely, it is equivalent to the well-known OFUL variant of \linucb{} \citep{abbasi2011improved}.}

When running both \rucb variants with the \textsf{GB5} oracle, we use a simple heuristic to substantially speed up the computation. At the beginning of each epoch~$m$, we find the best regression tree ensemble on the dataset so far (i.e., with respect to $\hR_m$). Throughout the epoch, we keep the structure of the ensemble fixed and in each call to $\textsc{Oracle}(H)$ we only re-optimize the predictions in leaves. This can be solved in closed form, similar to \textsf{LinUCB}, so the full binary search procedure (\pref{alg:binary_search_01}) does not need to be run.

\paragraph{Parameter Tuning}
We evaluate each algorithm for eight exponentially spaced parameter values across five repetitions. For \egreedy{} we tune the constant $\eps$, and for \iltcb{} we tune a certain smoothing parameter (see \pref{app:experiments}). For \pref{alg:regression_ucb_elim} and \pref{alg:regression_ucb_optimistic} we set $\beta_{m}=\beta$ for all $m$ and tune $\beta$. For \pref{alg:regression_ucb_optimistic} we use a warm start of $0$. We tune a confidence parameter similar to $\beta$ for \bootstrap{}.

\paragraph{Evaluation}
Each dataset is split into ``training data'', for which algorithm receives one example at a time and must predict online, and a holdout validation set. Validation is performed by simulating the algorithm's predictions on examples from the holdout set without allowing the algorithm to incorporate these examples. 
We also plot the validation reward of a ``supervised'' baseline obtained by training the oracle (either \textsf{Linear} or \textsf{GB5}) on the entire training set at once (including rewards for all actions).

For Algorithms~\ref{alg:regression_ucb_elim} and \ref{alg:regression_ucb_optimistic} we show average reward at various numbers of training examples for the best fixed parameter value in each dataset. For the baselines, we take the \emph{pointwise maximum of the average validation reward across all parameter values} for each number of examples to be as generous as possible. Thus, the curves for our methods correspond to an actual run of the algorithm, while the baselines are an upper envelope aggregating multiple parameter values.\looseness=-1

\begin{figure*}[h]
  \begin{centering}
\includegraphics[width=0.33\textwidth]{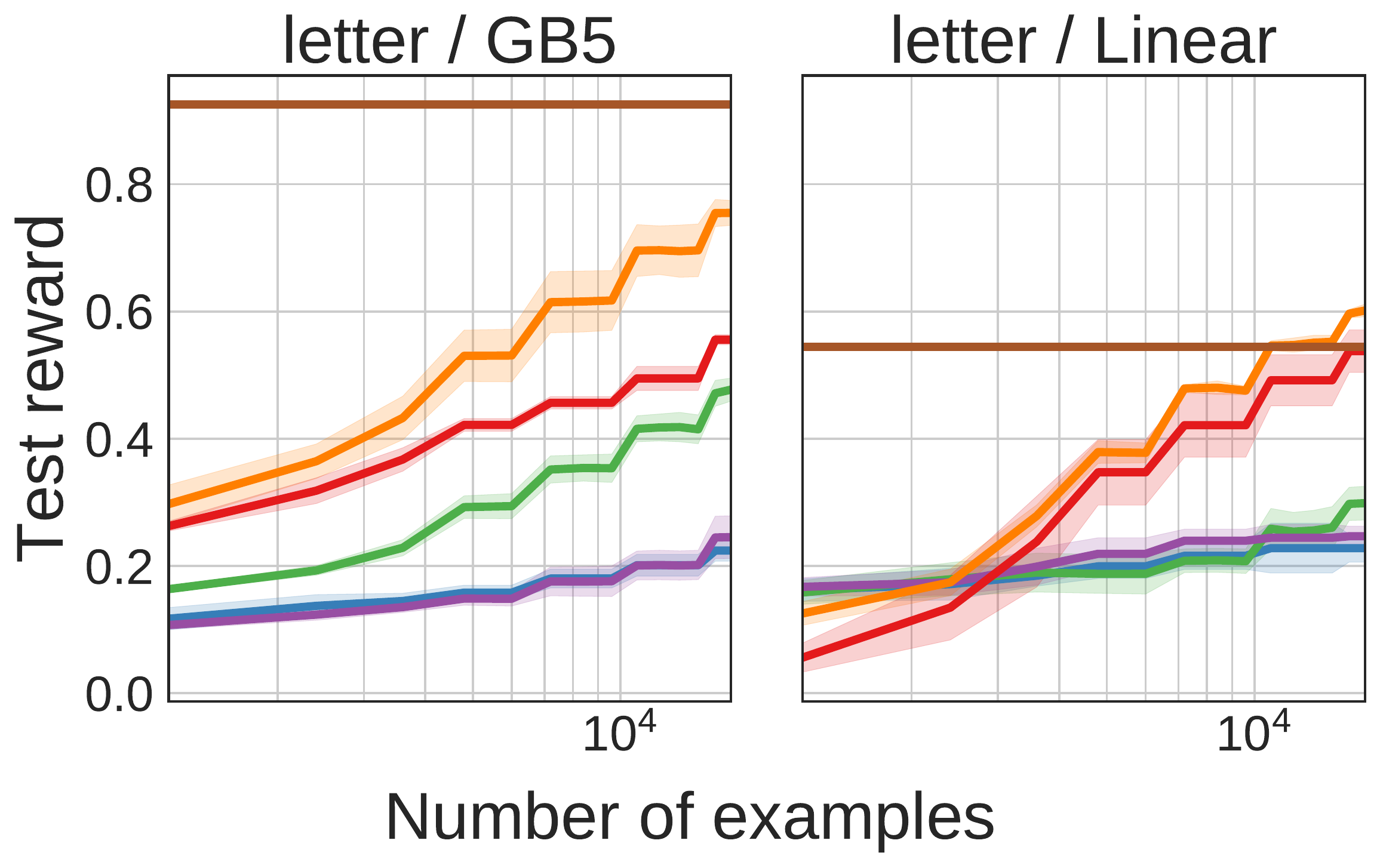}%
\includegraphics[width=0.33\textwidth]{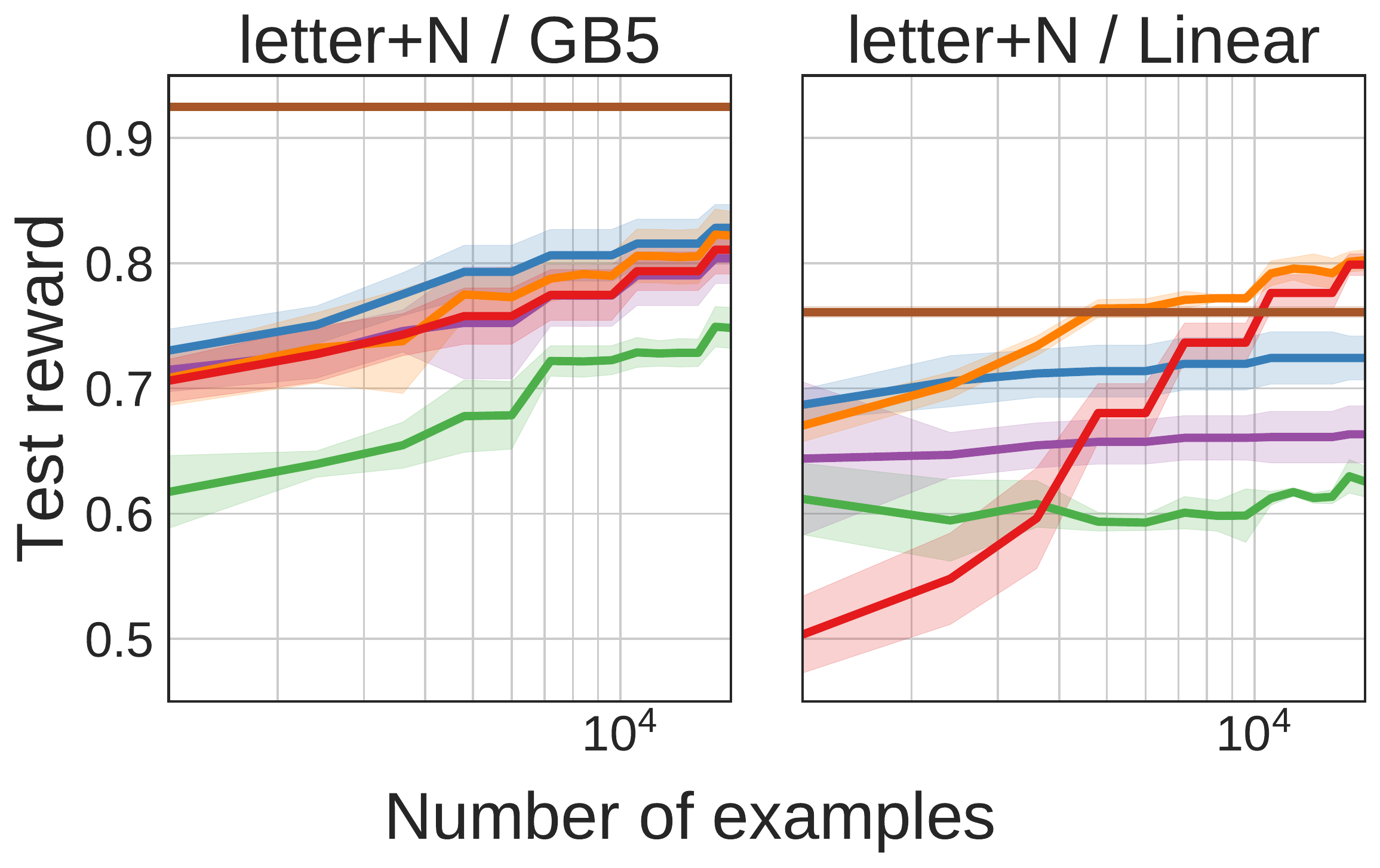}%
\includegraphics[width=0.33\textwidth]{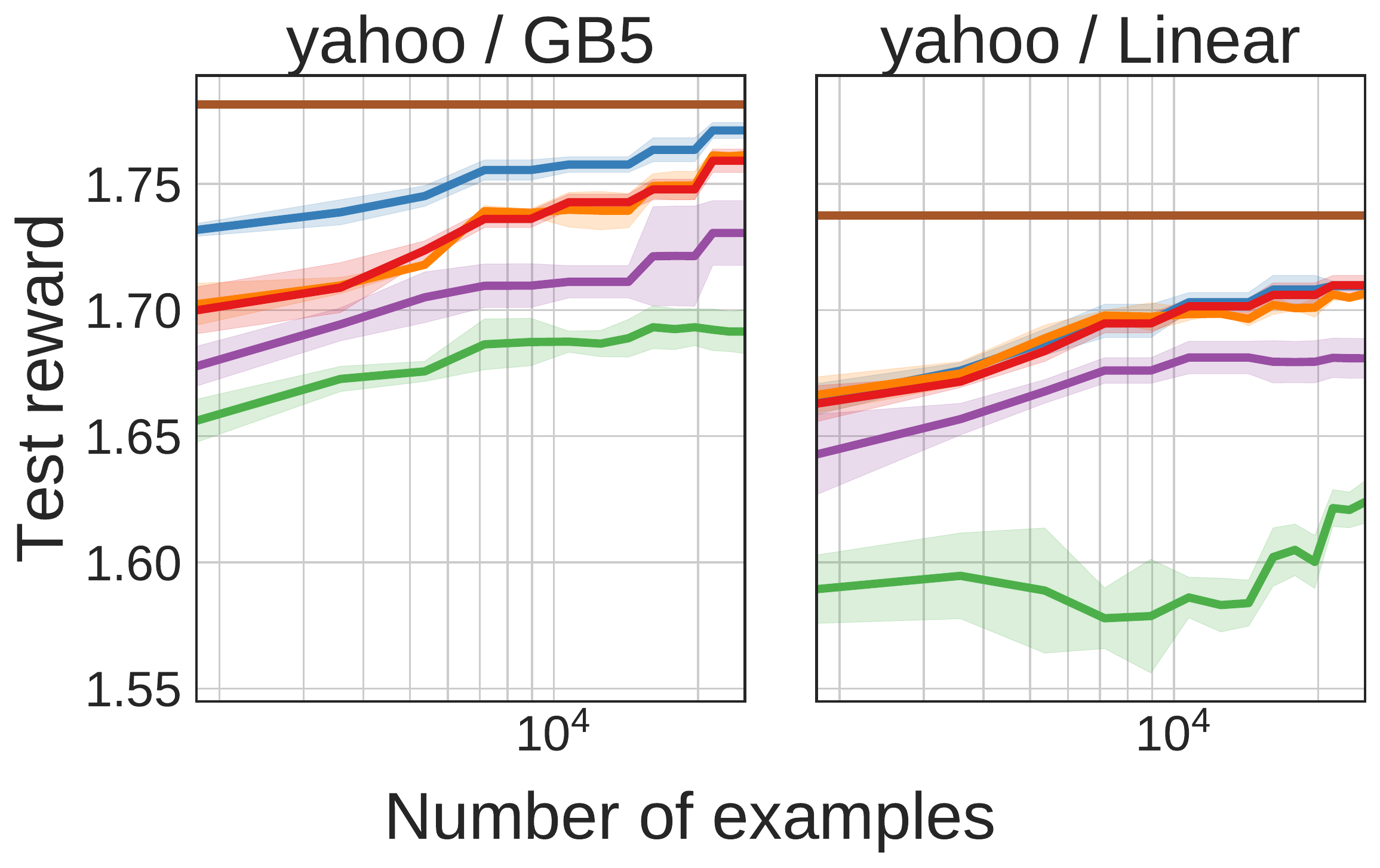}\\
\medskip
~\hfill\includegraphics[width=0.75\textwidth]{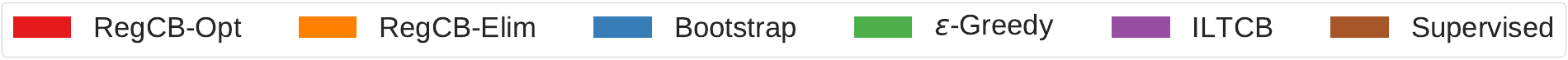}\hfill~
\caption{Validation performance for three representative datasets as a function of the number of rounds $t$ of interaction. The number of rounds $t$ is on a log scale. For each dataset, we show separately the performance with the \textsf{GB5} oracle and the \textsf{Linear} oracle.}
\label{fig:performance}
\end{centering}
\end{figure*}

\paragraph{Results: Performance}
\pref{fig:performance} shows average reward of each algorithm on a holdout validation set for three representative datasets, \texttt{letter} from UCI, \texttt{letter-noise} (the variant with simulated rewards), and \texttt{yahoo}.

\rucb (both Algorithms \ref{alg:regression_ucb_elim} and \ref{alg:regression_ucb_optimistic}) outperforms all baselines on the unmodified UCI datasets (e.g., \texttt{letter} in \pref{fig:performance}). On the noisy variants (e.g., \texttt{letter+N} in \pref{fig:performance}), the performance of the \iltcb{} and \bootstrap{} benchmarks improves significantly, with \bootstrap{} slightly edging out the rest of the algorithms. On the \texttt{yahoo} ranking dataset (\pref{fig:performance}, right), the ordering of the algorithms in performance is similar to noisy UCI datasets.

Validation performance plots for all datasets are in \pref{app:experiments}. Overall, we see that \rucb methods and \bootstrap generally dominate the field. While \bootstrap can outperform \rucb methods when using \textsf{GB5} models, the gap is typically quite small. For linear models, \rucb methods generally outperform \bootstrap. This hints that the stronger relative performance of \bootstrap under \textsf{GB5} models might be partly due to the approximation we make by only considering a fixed ensemble structure in each epoch.
We also observe that when \rucb methods outperform \bootstrap, the performance gap can often be quite large. We will see further evidence of this behavior in the next set of results.

\paragraph{Results: Aggregate Performance}

To rigorously draw conclusions about overall performance, \pref{fig:cdf} aggregates performance across all datasets. We compute ``normalized relative loss'' for each algorithm by rescaling the validation reward (computed as in \pref{fig:performance}) so that, at each round, the best performing algorithm has loss $0$ and the worst-performing has loss $1$. In each plot of \pref{fig:cdf} we consider normalized relative losses at a specific cutoff time ($1000$ examples in the left plot, and all examples in the center and right), and for each method we plot how often, i.e., on how many datasets, it achieves any given value of loss or better, as a function of the loss value. Thus, curves towards top left corner correspond to better methods, i.e., the methods that achieve lower relative loss on more datasets. The intercept at the relative loss $0$ shows the number of datasets where each algorithm is the best, and the intercept at
0.99 shows the number of datasets where the algorithm is not the worst (so the distance from top is the number of datasets where it is the worst).
Solid lines correspond to runs with the \textsf{GB5} oracle and dashed lines to the runs with the \textsf{Linear} oracle.

The aggregate performance with the \textsf{GB5} oracle across all datasets can be briefly summarized as follows: \rucb always beats \egreedy{} and \iltcb{}, but sometimes loses out to \bootstrap{}, and \bootstrap{} itself sometimes underperforms relative to the other baselines, especially on the UCI datasets. Even when \rucb is not the best, it is almost always within $20\%$ of the best. The elimination and optimistic variants of \rucb have comparable performance, with elimination performing slightly better in aggregate.

The \rucb algorithms with the \textsf{GB5} oracle also dominate the \egreedy{}, \iltcb{}, and \bootstrap{} baselines when they are equipped with \textsf{Linear} oracles (the dashed lines in \pref{fig:cdf}). When the \rucb algorithms use the \textsf{Linear} oracle they also dominate the baselines with the \textsf{Linear} oracle across all datasets, including \bootstrap{}. This suggests that the gap between \rucb and \bootstrap{} for the \textsf{GB5} oracle may be due to
the approximation we make by only considering a fixed ensemble structure in each epoch, as we noted earlier.\footnote{The aggregate plots for \rucb with the \textsf{Linear} oracle can be found in \pref{app:experiments} along with additional aggregate plots.}

\begin{figure*}
\begin{centering}
\includegraphics[width=0.33\textwidth]{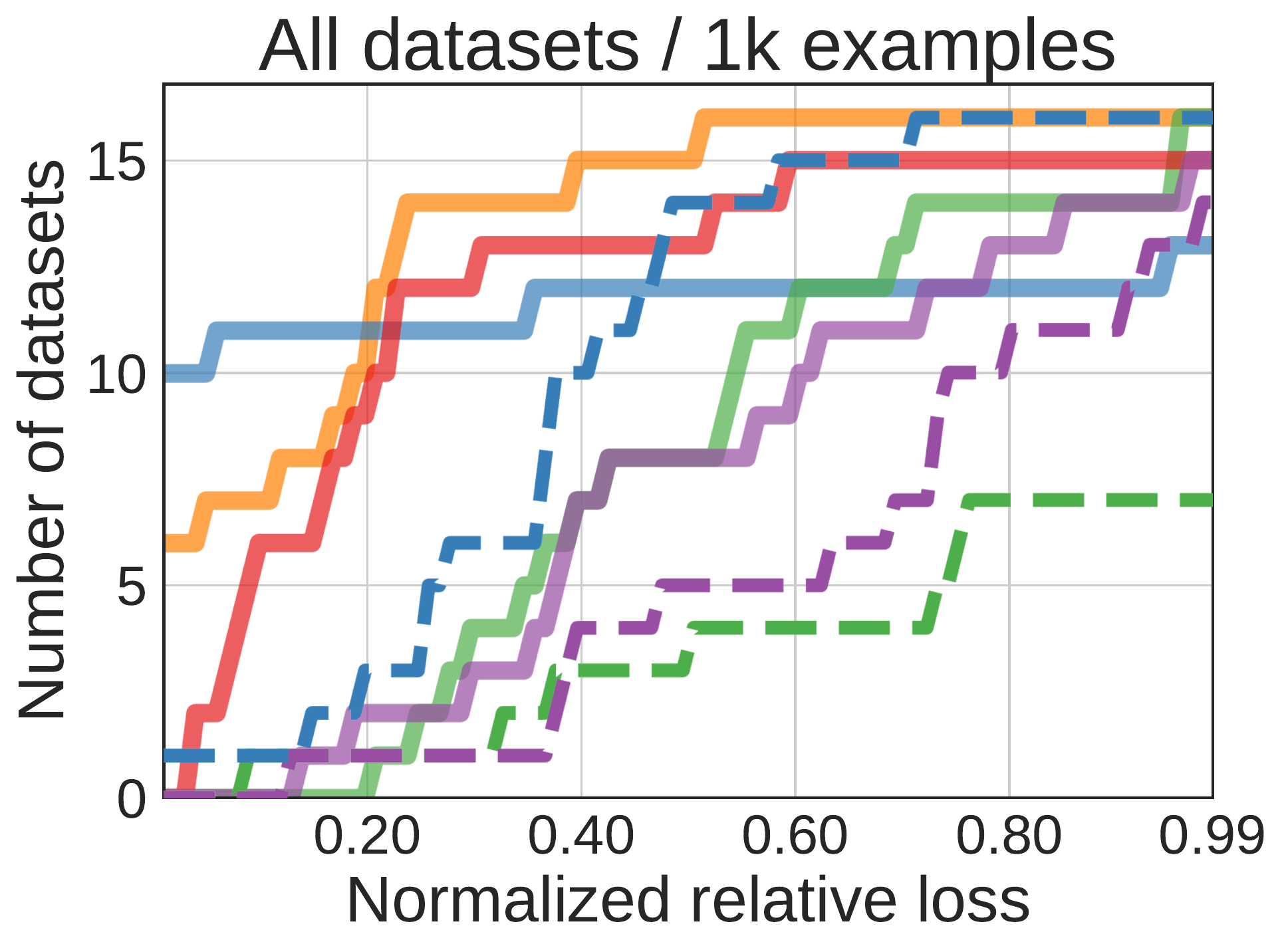}%
\includegraphics[width=0.33\textwidth]{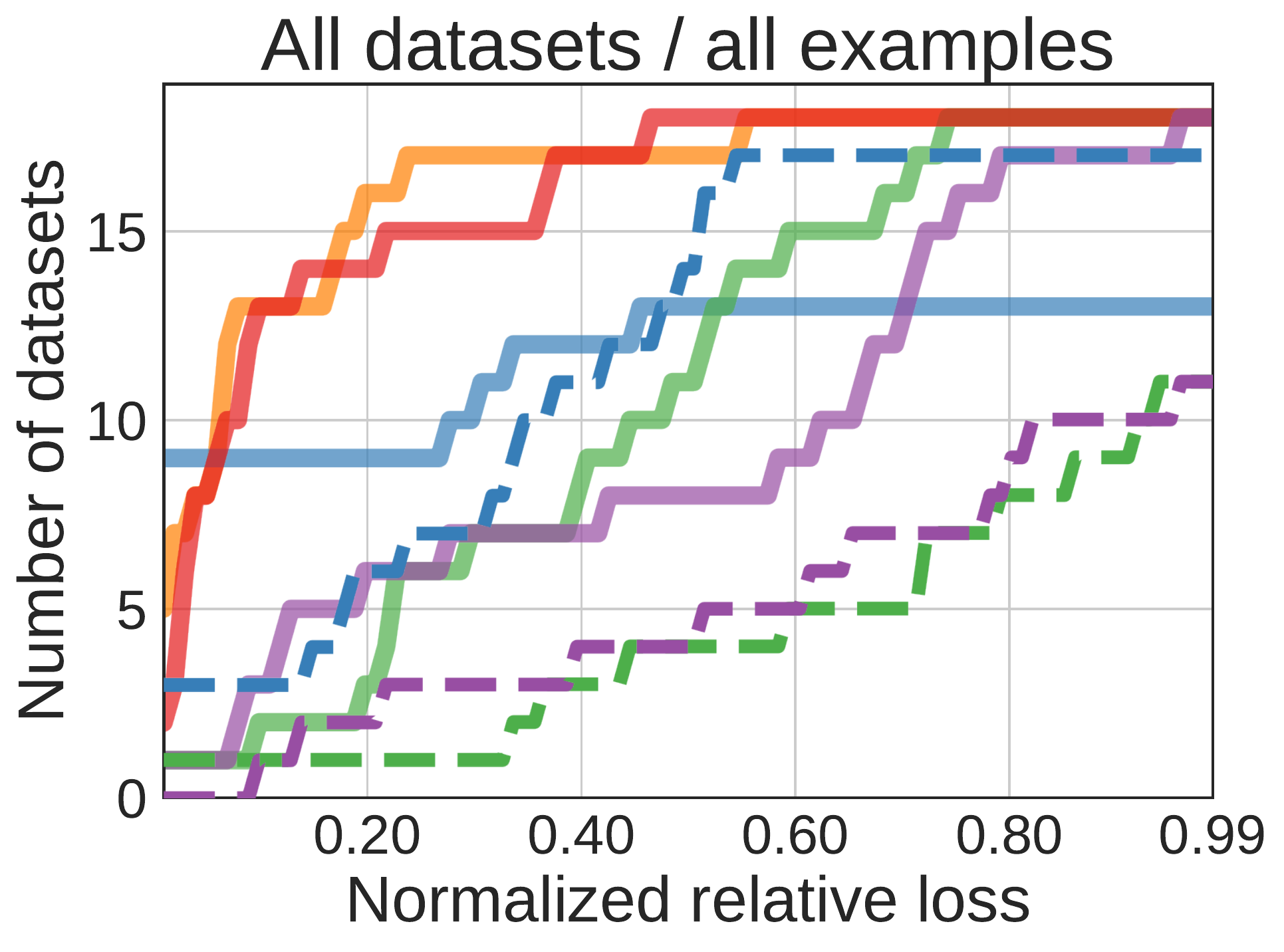}%
\includegraphics[width=0.33\textwidth]{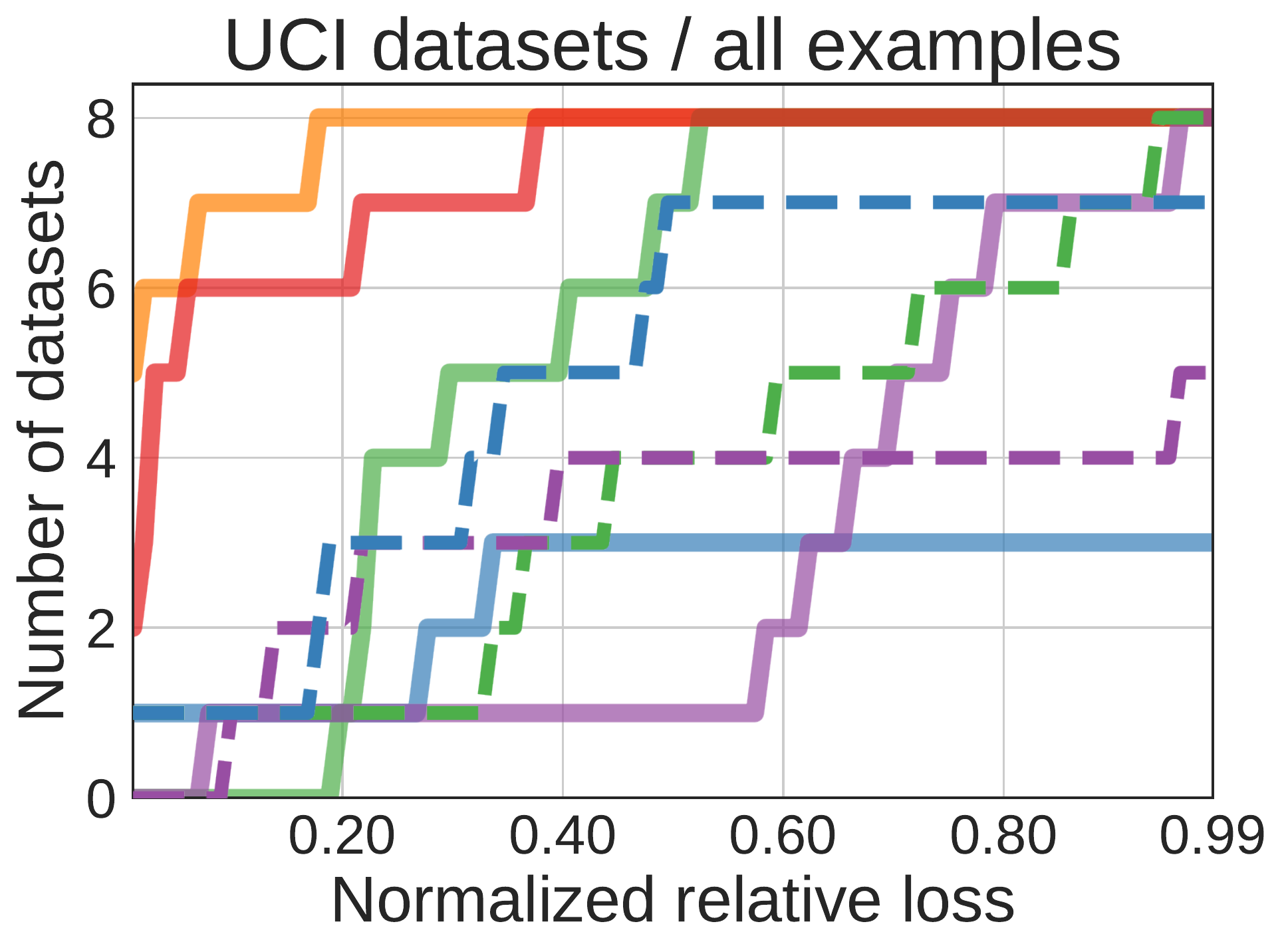}\\
\medskip
\hfill\includegraphics[width=0.66\textwidth]{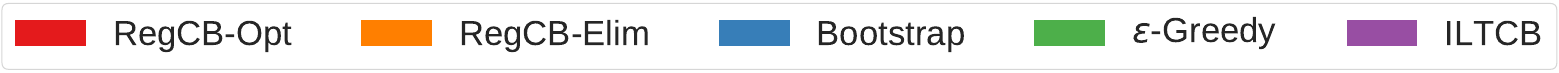}\hfill
\end{centering}
\caption{Aggregate performance across all datasets, at various sample sizes; solid lines --- \textsf{GB5} oracle; dashed lines --- \textsf{Linear} oracle. Left: All datasets (the UCI datasets, their noisy variants, and the Microsoft and Yahoo ranking datasets) at $1\,000$ examples (datasets with fewer examples dropped). Center: All datasets at their final round. Right: Unmodified UCI at their final round.}
\label{fig:cdf}
\end{figure*}

\paragraph{Results: Confidence Width}

The analysis of \rucb relies on distributional assumptions on $D$ (disagreement coefficient or moment parameters) that are not necessarily easy to verify. Note that the main role of these parameters is to control the rate at which confidence width $\Wid_{\cF_m}(x,a) =\High_{\cF_m}(x_t, a)-\Low_{\cF_m}(x_t, a)$ used in \rucb shrinks, since the small widths imply that the algorithm makes good decisions and thus has low regret.

To investigate whether the width $\Wid_{\cF_m}$ indeed shrinks empirically, we compute it on each dataset for \pref{alg:regression_ucb_optimistic}. We also compute an analogous width parameter for \bootstrap (see \pref{app:experiments}). Finally for both \pref{alg:regression_ucb_optimistic} and \bootstrap we compute the size of the ``disagreement set'' $A_{t}$, defined in \pref{alg:regression_ucb_elim},
which measures how many actions the algorithm thinks are plausibly best.\footnote{%
This set is well-defined for both \textsf{RegCB-Opt} and \bootstrap even through neither algorithm instantiates it explicitly. For the \texttt{yahoo} and \texttt{mslr} datasets this $\abs*{A_{t}}$ is technically a lower bound on the true disagreement set size $\abs*{A_{\cF_m}(x_t)}$ because our classes $\cF$ do not have product structure on these datasets---see discussion in \pref{sec:disagreement}.}

\pref{fig:width} shows width and disagreement for a representative sample of datasets under the \textsf{GB5} oracle; the remaining datasets are in \pref{app:experiments}. The figure suggests that our distributional assumptions are reasonable for real-world datasets. In particular, for our algorithm, the width decays roughly as $T^{-1/3}$ for \texttt{letter} and $T^{-1/2}$ for \texttt{letter+N} and \texttt{yahoo}. Interestingly, the best hyper-parameter setting for \bootstrap on \texttt{letter} yields low but essentially constant (i.e., not shrinking) width, which in our experiments is associated
with the poor validation reward. This suggests that while the \bootstrap confidence intervals are small, they may not be faithful in the sense of containing $f^{\star}(x,a)$.

\begin{figure*}[h]
  \begin{centering}
\includegraphics[width=0.33\textwidth]{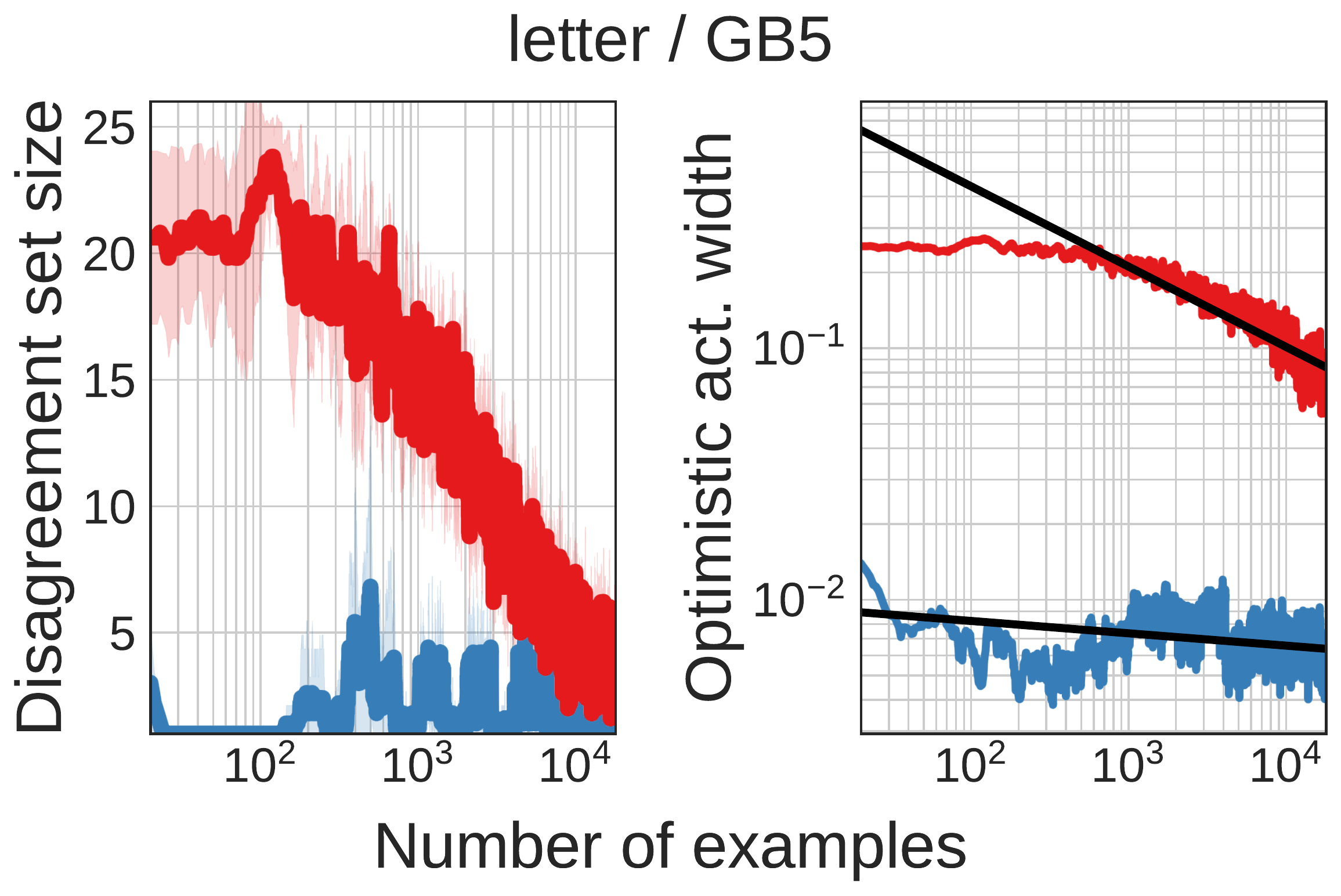}\hfill\includegraphics[width=0.33\textwidth]{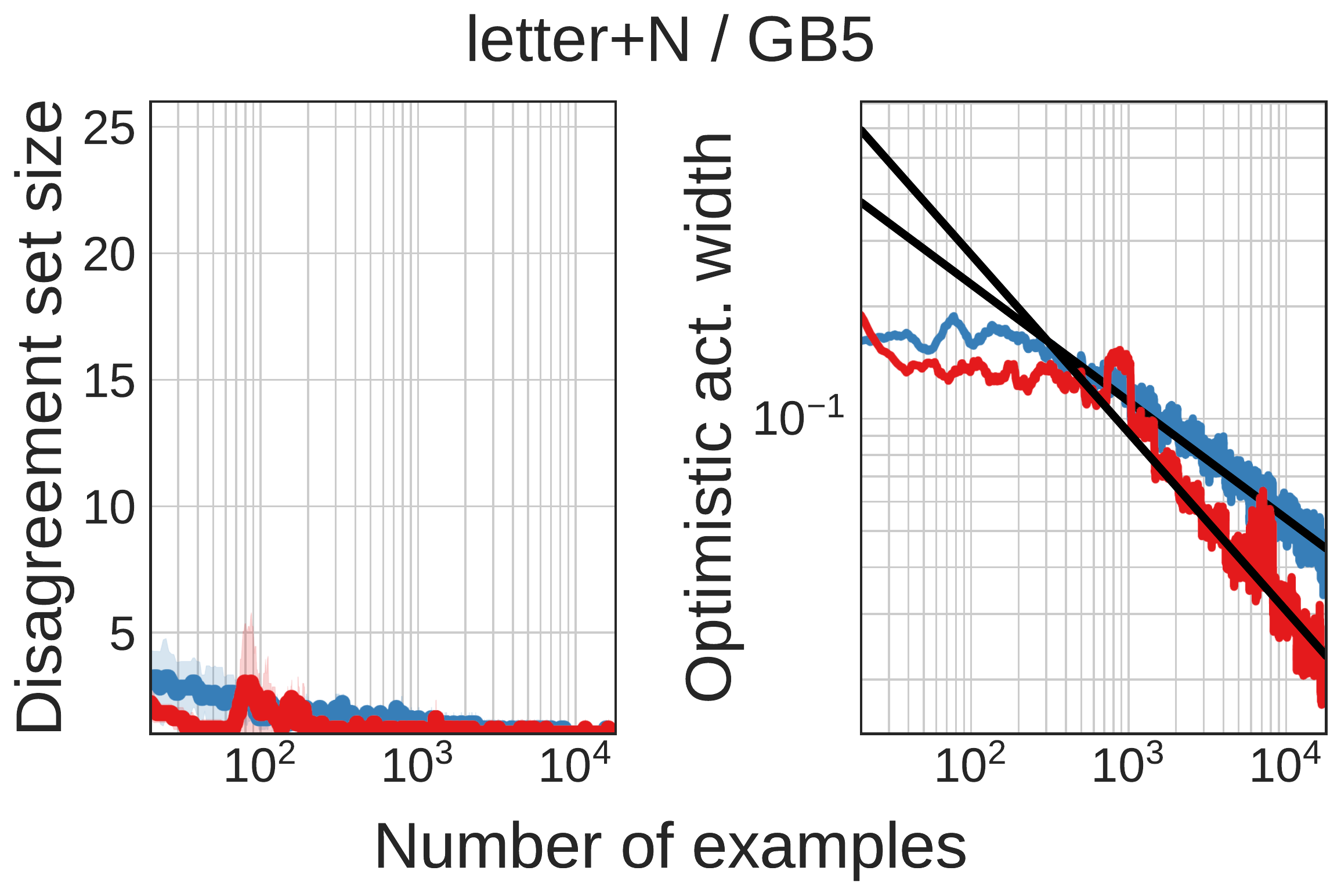}\hfill\includegraphics[width=0.33\textwidth]{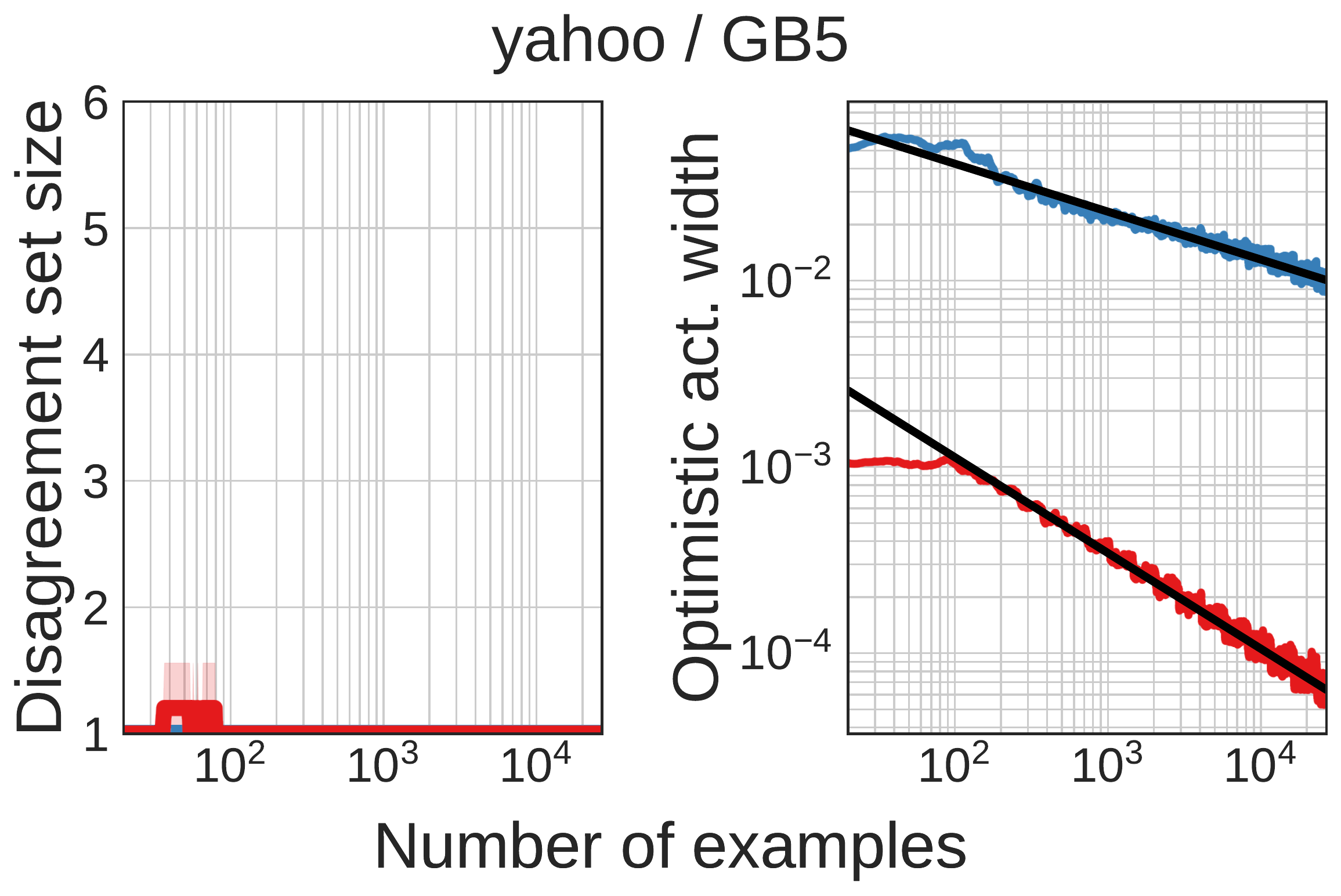}\\
    ~\hfill\includegraphics[width=0.28\textwidth]{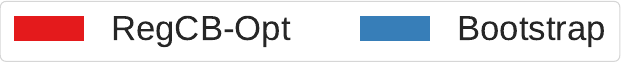}\hfill~
\caption{For each dataset, disagreement set size as a function of number of rounds $t$ (with $t$ on a log scale), and the log-log plot of the width of the optimistic action as a function of $t$; the optimistic action is the action chosen by \pref{alg:regression_ucb_optimistic}. All plots are averaged using a sliding window of length $20$.
Black lines on the width plots are best linear fits, whose slopes suggest the rate of the width decay as follows: letter/\bootstrap: $-0.05$, letter/\rucb: $-0.34$, letter-noise/\bootstrap: $-0.33$, letter-noise/\rucb: $-0.51$, yahoo/\bootstrap: $-0.26$, yahoo/\rucb: $-0.52$.}
\label{fig:width}
\end{centering}
\end{figure*}
\section{Conclusion and Discussion}
\label{sec:conclusion}
This work serves as a starting point for what we hope will be a fruitful line of research on oracle-efficient contextual bandit algorithms in realizability-based settings. We have shown that the $\rucb{}$ family of algorithms have strong empirical performance and enjoy nice theoretical properties. These results suggest some compelling directions for future work:
\begin{itemize}
\item Is there a regression oracle--based algorithm that achieves the optimal $\tO(\sqrt{KT\log\Abs{\cF}})$ regret? For example, can the regressor elimination strategy of \citet{agarwal2012contextual} be oraclized?
\item Given the competitive empirical performance of \bootstrap, are there reasonable distributional assumptions similar to those in \pref{sec:regret} under which it can be analyzed? There is some recent work in this direction for the special case of linear models \citep{lu2017ensemble}.
\item Randomizing uniformly or putting all the mass on the optimistic choice are two extreme cases of choosing amongst the plausibly optimal actions. Are there better randomization schemes amongst these actions that lead to stronger regret guarantees?
\end{itemize} 

\bibliography{refs}
\bibliographystyle{icml2018}

\appendix
\onecolumn

\section{Proofs}

\subsection{Proofs from \pref{subsec:computation}}
\label{app:binary_search}

\begin{algorithm}[t]
\caption{\textsc{BinSearch.Unbounded.High}}
\label{alg:binary_search_high}
\begin{algorithmic}[1]

\State{\textbf{Input}: context-action pair $(x,a)$, history $H$, radius $\beta>0$, and precision $\alpha>0$ }
\State{Let $R(f)\coloneqq\sum_{(x', a', r')\in{}H}(f(x',a')-r')^{2}$.}
\State{Let $\wt{R}(f, w)\coloneqq R(f) + \frac{w}{2}(f(x,a) - 2)^{2}$}
\State{$\wlo\gets 0,\;\whi\gets\beta/\alpha$}
\smallskip
\Statex{\textbf{\texttt{// Invoke oracle twice}}}
\State{$\flo\gets\argmin_{f\in\cF}\wt{R}(f, \wlo),\;\zlo\gets\flo(x,a)$}
\State{$\fhi\gets\argmin_{f\in\cF}\wt{R}(f, \whi),\;\zhi\gets\fhi(x,a)$}
\State{$\Rmin\gets R(\flo)$}
\State{\textbf{if} $\zlo\ge 1$ or $R(\flo)=R(\fhi)$ \textbf{then return} $1$}
\State{$\Delta\gets\alpha\beta/(2-\zlo)^3$}
\While{$\abs{\zhi-\zlo} > \alpha$ and $\abs{\whi-\wlo} > \Delta$}
\State{${w}\gets(\whi+\wlo)/2$}
\smallskip
\Statex{\hspace{\algorithmicindent}\textbf{\texttt{// Invoke oracle.}}}
\State{${f}\gets\argmin_{\tilde{f}\in\cF}\wt{R}(\tilde{f}, w),\;{z}\gets{f}(x,a)$}
\If{$R({f})\ge\Rmin+\beta$}
\State{$\whi\gets {w},\;\zhi\gets {z}$}
\Else
\State{$\wlo\gets {w},\;\zlo\gets {z}$}
\EndIf
\EndWhile
\State{\textbf{return} $\min\set{\zhi,1}$.}
\end{algorithmic}
\end{algorithm}

\begin{algorithm}[t]
\caption{\textsc{BinSearch.Unbounded.Low}}
\label{alg:binary_search_low}
\begin{algorithmic}[1]
\State{\textbf{Input}: context-action pair $(x,a)$, history $H$, radius $\beta>0$, and precision $\alpha>0$ }
\State{Let $R(f)\coloneqq\sum_{(x', a', r')\in{}H}(f(x',a')-r')^{2}$.}
\State{Let $\wt{R}(f, w)\coloneqq R(f) + \frac{w}{2}(f(x,a) + 1)^{2}$}
\State{$\wlo\gets 0,\;\whi\gets\beta/\alpha$}
\smallskip
\Statex{\textbf{\texttt{// Invoke oracle twice}}}
\State{$\flo\gets\argmin_{f\in\cF}\wt{R}(f, \wlo),\;\zlo\gets\flo(x,a)$}
\State{$\fhi\gets\argmin_{f\in\cF}\wt{R}(f, \whi),\;\zhi\gets\fhi(x,a)$}
\State{$\Rmin\gets R(\flo)$}
\State{\textbf{if} $\zlo\le 0$ or $R(\flo)=R(\fhi)$ \textbf{then return} $0$}
\State{$\Delta\gets\alpha\beta/(1+\zlo)^3$}
\While{$\abs{\zhi-\zlo} > \alpha$ and $\abs{\whi-\wlo} > \Delta$}
\State{${w}\gets(\whi+\wlo)/2$}
\smallskip
\Statex{\hspace{\algorithmicindent}\textbf{\texttt{// Invoke oracle.}}}
\State{${f}\gets\argmin_{\tilde{f}\in\cF}\wt{R}(\tilde{f}, w),\;{z}\gets{f}(x,a)$}
\If{$R({f})\ge\Rmin+\beta$}
\State{$\whi\gets {w},\;\zhi\gets {z}$}
\Else
\State{$\wlo\gets {w},\;\zlo\gets {z}$}
\EndIf
\EndWhile
\State{\textbf{return} $\max\set{\zhi,0}$.}
\end{algorithmic}
\end{algorithm}

We prove the statement of \pref{thm:binary_search} for \textsc{BinSearch.Unbounded.High} (\Alg{binary_search_high}), which
does not require the predictors in $\cF$ to be bounded in $[0,1]$. Note however that the actual rewards are still always bounded in $[0,1]$, so that $f^\star(x,a)$ is always bounded by the realizability assumption. Compared with \Alg{binary_search_01},
the algorithm includes some handling of special cases, which are automatically excluded
in \Alg{binary_search_01} by the assumption about boundedness.
The performance guarantee for \textsc{BinSearch.Unbounded.Low} (\Alg{binary_search_low}) is
analogous and therefore is omitted.

\begin{lemma}
Let $\cF$ be convex and closed under pointwise convergence. Consider a run of \Alg{binary_search_high}. Let $R(f)$ and $\Rmin$ be defined as in \pref{alg:binary_search_high}
and let
\[
   z^\star\coloneqq
   \max\bigSet{f(x,a):\: f\in\cF\text{ such that }R(f)\le\Rmin+\beta}
   \enspace.
\]
Then \pref{alg:binary_search_high} returns
$z$ such that $\bigAbs{z-\min\set{z^\star,1}}\le\alpha$
after at most $O\bigParens{\log(1/\alpha) + \log(\max\crl*{2-z_0, 1})}$ iterations, where $z_0=\fmin(x,a)$ and
$\fmin=\argmin_f R(f)$.
\end{lemma}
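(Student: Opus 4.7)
I will introduce $f_w := \argmin_{f \in \cF} \wt{R}(f,w)$ and $z(w) := f_w(x,a)$. Although $f_w$ itself may not be unique, convexity of $\cF$ together with the strict convexity of the penalty $(f(x,a)-2)^2$ in the scalar $f(x,a)$ make $z(w)$ unique, and closure under pointwise convergence yields existence of a minimizer. As a first step I will establish three structural facts via standard convex-analytic arguments: (i) $z(w)$ is non-decreasing in $w$; (ii) $R(f_w)$ is non-decreasing in $w$; and (iii) the value function $V(w) := \min_f \wt{R}(f,w)$ is concave with $V'(w) = (z(w)-2)^2/2$ by the envelope theorem. It is convenient to reparametrize via $\psi(z) := \inf\{R(f) : f \in \cF,\, f(x,a)=z\}$, a convex function whose minimum equals $\Rmin$. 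The first-order optimality condition then reads $w(2-z(w)) \in \partial\psi(z(w))$, and $z^\star = \max\{z : \psi(z) \leq \Rmin + \beta\}$.

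Correctness of the binary search hinges on the invariant $\wlo \leq w^{\star\star} \leq \whi$ (whenever the main loop has updated $\whi$), where $w^{\star\star} := \inf\{w : R(f_w) \geq \Rmin+\beta\} \in (0,+\infty]$. Two cases arise on termination. In \emph{Case A}, $w^{\star\star} \leq \beta/\alpha$, and the loop exits either with $\zhi - \zlo \leq \alpha$ (which immediately sandwiches $z^\star$) or with $\whi - \wlo \leq \Delta$, at which point a Lipschitz-type estimate is needed. In \emph{Case B}, $w^{\star\star} > \beta/\alpha$, so bisection never reaches the infeasible region and $z(\whi) \leq z^\star$; here the subgradient inequality together with $w(2-z(w)) \in \partial\psi(z(w))$ at $w = \whi = \beta/\alpha$ directly yields $(2-z(\whi))(z^\star - z(\whi)) \leq \alpha$, which gives $z^\star - z(\whi) \leq \alpha$ when $z(\whi) \leq 1$, and both target and output clip to $1$ otherwise.

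The heart of the proof is the Lipschitz estimate for Case A, which combines two ingredients. First, by convexity of $\psi$ and the fact that $\psi(z^\star) = \Rmin + \beta$ while $\psi(z^\dagger) = \Rmin$ (where $z^\dagger$ is the right endpoint of $\psi$'s minimizer set), I will derive $\psi'(z^\star) \geq \beta/(z^\star - z^\dagger)$; the AM-GM inequality $(z^\star - z^\dagger)(2-z^\star) \leq (2-z_0)^2/4$ then yields the key lower bound $w^{\star\star} \geq 4\beta/(2-z_0)^2$. Second, the choice $\Delta = \alpha\beta/(2-z_0)^3$ ensures $\Delta \leq w^{\star\star}/2$ for $\alpha \in (0,1]$ (using $z_0 < 1$, which holds whenever we reach the main loop), so $\wlo \geq w^{\star\star}/2 \geq 2\beta/(2-z_0)^2$ at termination. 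A short algebraic manipulation starting from monotonicity of subgradients, $\wlo(2-z(\wlo)) \leq \whi(2-z(\whi))$, yields the explicit estimate $z(\whi) - z(\wlo) \leq (\whi-\wlo)(2-z(\whi))/\wlo$, and combining with the two ingredients above gives $z(\whi) - z(\wlo) \leq \alpha$. The early-return branches reduce to these same arguments: $z_0 \geq 1$ trivially gives $z^\star \geq 1$, while $R(\flo) = R(\fhi)$ means $\fhi$ lies in the $R$-minimizer set, whereupon the Case-B style estimate at $\whi = \beta/\alpha$ applies.

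The iteration count follows because each round of the loop halves $\whi - \wlo$, so termination occurs in at most $\log_2((\beta/\alpha)/\Delta) = O(\log(1/\alpha) + \log(2-z_0))$ rounds. I expect the main obstacle to be the Lipschitz estimate in Case A; the particular cubic scaling $(2-z_0)^3$ in $\Delta$ is delicate, arising from combining one factor of $(2-z_0)$ from the numerator of the FOC with two more from the AM-GM lower bound on $w^{\star\star}$.
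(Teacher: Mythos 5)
Your argument is essentially the paper's own proof: your $\psi$ is exactly the paper's auxiliary function $\phi(z)=\min\{R(f):f(x,a)=z\}$, and the first-order condition $w=\phi'(z_w)/(2-z_w)$, the monotonicity of $z_w$ in $w$, the lower bound of order $\beta/(2-z_0)^2$ on the critical weight, the conversion of the weight tolerance $\Delta=\alpha\beta/(2-z_0)^3$ into an $\alpha$-accurate value, and the $\log_2\bigl((\beta/\alpha)/\Delta\bigr)$ iteration count all coincide, with only cosmetic differences (you bisect on the threshold weight $w^{\star\star}$ and use subgradient monotonicity plus AM--GM where the paper applies the optimality condition twice from $z_0$, which merely improves the constant). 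Your treatment of the early-return branch $R(\flo)=R(\fhi)$ is exactly as brief as the paper's (both implicitly take this case to mean the returned value $1$ is within $\alpha$ of $\min\{z^\star,1\}$), so it introduces no gap relative to the paper's own argument.
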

\begin{corollary}
    If $f(x,a)\in[0,1]$ for all $f\in\cF$, $x\in\cX$ and $a\in\cA$, then \pref{alg:binary_search_high} returns $z$ such that $\abs*{z-z^{\star}}\leq{}\alpha$ after at most $O\prn*{\log(1/\alpha)}$ iterations.
\end{corollary}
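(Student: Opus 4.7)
The plan is to reduce to a one-dimensional convex program in the scalar $z = f(x,a)$ and relate the shrinkage of the ``primal'' bracket $[\zlo, \zhi]$ to that of the ``dual'' weight bracket $[\wlo, \whi]$ via strong convexity of the quadratic penalty. Define $\tilde R(z) \ldef \min\{R(f) : f \in \cF,\ f(x,a) = z\}$; this is convex in $z$ by convexity of $\cF$ and of squared loss, and attained by closedness under pointwise convergence. The regression oracle then returns some $f_w$ with $f_w(x,a) = z(w) \ldef \argmin_z\bigl[\tilde R(z) + \frac{w}{2}(z-2)^2\bigr]$, where $z(w)$ is unique as a scalar even when $f_w$ is not. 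Writing $r(w) \ldef \tilde R(z(w))$, the target is $z^\star \ldef \sup\{z : \tilde R(z) \le R_{\min} + \beta\}$. Adding the two optimality inequalities at $w_1 < w_2$ shows both $z(w)$ and $r(w)$ are non-decreasing in $w$, which, combined with the bisection rule, yields the loop invariant $r(\wlo) \le R_{\min} + \beta \le r(\whi)$ and hence $\zlo \le z^\star \le \zhi$ throughout the loop.

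The core technical step is the Lipschitz-style inequality
\[
  \zhi - \zlo \;\le\; (2 - \zlo) \cdot \frac{\whi - \wlo}{\whi}.
\]
My plan is to derive this from strong convexity of the weighted objective $\tilde R(z) + \frac{w}{2}(z-2)^2$ with parameter $w$, which yields
\[
  \tilde R(z) + \tfrac{w}{2}(z-2)^2 \;\ge\; \tilde R(z(w)) + \tfrac{w}{2}(z(w)-2)^2 + \tfrac{w}{2}(z - z(w))^2.
\]
Applying this at $w = \wlo$ with $z = \zhi$ and simplifying gives $r(\whi) - r(\wlo) \ge \wlo(\zhi - \zlo)(2 - \zlo)$; applying it at $w = \whi$ with $z = \zlo$ gives $r(\whi) - r(\wlo) \le \whi(\zhi - \zlo)(2 - \zhi)$. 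Combining and dividing through by $\zhi - \zlo$ produces $\wlo(2 - \zlo) \le \whi(2 - \zhi)$, which rearranges to the displayed bound. I would then lower-bound $\whi$ by comparing $f_w$ against $\flo$ in the weighted objective, which gives $r(w) - R_{\min} \le \frac{w}{2}(2 - z_0)^2$, where $z_0$ denotes the initial value of $\zlo$; the invariant $r(\whi) \ge R_{\min} + \beta$ then forces $\whi \ge 2\beta / (2 - z_0)^2$. Substituting the algorithm's choice $\Delta = \alpha\beta / (2 - z_0)^3$, the Lipschitz bound gives $\zhi - \zlo \le \alpha/2$ whenever the termination condition $|\whi - \wlo| \le \Delta$ triggers.

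Combining these, we see that $|\zhi - \zlo| \le \alpha$ at loop exit under either termination condition, and since $\zlo \le z^\star \le \zhi$, the returned value $\min\{\zhi, 1\}$ lies within $\alpha$ of $\min\{z^\star, 1\}$ because clipping is $1$-Lipschitz. The iteration count $O\bigParens{\log(1/\alpha) + \log(\max\{2 - z_0, 1\})}$ follows from $\whi - \wlo$ halving each iteration from its initial value $\beta/\alpha$ until it crosses $\Delta$. The two early-return branches handle degenerate cases: $z_0 \ge 1$ immediately gives $\min\{z^\star, 1\} = 1$; the $R(\flo) = R(\fhi)$ branch corresponds to the penalty at the maximum weight $\whi = \beta/\alpha$ being insufficient to push any $R$-minimizer off $R_{\min}$, a case where reapplying the same strong-convexity argument at this weight bounds $z^\star - \zhi$ by $\alpha/(2 - \zhi)$. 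The main obstacle will be this Lipschitz step: translating strong convexity of the weighted objective into the displayed scalar bound in the general setting where $\tilde R$ is not necessarily differentiable and $\cF$ is potentially infinite-dimensional. The one-dimensional reduction via $\tilde R$ makes this a purely algebraic manipulation of two optimality inequalities that goes through under nothing more than convexity of $\cF$ and closedness under pointwise limits.
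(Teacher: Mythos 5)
Your reduction to the scalar function $\tilde{R}(z)$ and the two-point strong-convexity argument for the bound $\zhi-\zlo\le(2-\zlo)(\whi-\wlo)/\whi$ are correct, and they constitute a genuinely different route from the paper's: the paper works through the first-order condition $w=\phi'(z_w)/(2-z_w)$ and explicit subgradient manipulations (its Step 2), whereas your version needs only the optimality inequalities of the two weighted problems and handles non-differentiability of $\tilde R$ for free. The monotonicity claims, the bound $r(w)-\Rmin\le\frac{w}{2}(2-z_0)^2$, and the resulting $\zhi-\zlo\le\alpha/2$ at the $\Delta$-termination all check out \emph{given} your invariant.

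The gap is the invariant itself. You assert that $r(\wlo)\le\Rmin+\beta\le r(\whi)$ holds throughout the loop as a consequence of monotonicity plus the bisection rule, but the bisection rule only \emph{propagates} this condition; it does not establish it at initialization. Nothing guarantees $r(\beta/\alpha)\ge\Rmin+\beta$: the maximal weight may raise the constrained risk by a positive amount that is still smaller than $\beta$ (e.g.\ $\tilde R$ nearly flat up to some $z^\star<1$ and then rising steeply). In that case $R(\flo)\ne R(\fhi)$, so the early-return branch you point to does not fire, every bisection step takes the ``else'' branch, and $\zhi$ stays pinned at $z_{\beta/\alpha}$. Both uses of the invariant then break: you no longer have $z^\star\le\zhi$, and you no longer get $\whi\ge 2\beta/(2-z_0)^2$, so a priori the returned value could be off by as much as $1-z_{\beta/\alpha}$. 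This is exactly what the paper's ``Step 1'' supplies---a direct proof that $z^\star\le z_{\beta/\alpha}+\alpha$, which is why its stated invariant is the weaker $\min\{z^\star,1\}\in[z_{\wlo},z_{\whi}+\alpha]$. The good news is that your own inequality closes the hole: applying your strong-convexity bound at $w=\beta/\alpha$ to any $z>z_{\beta/\alpha}+\alpha$ gives $\tilde R(z)-\tilde R(z_{\beta/\alpha})\ge w\,(z-z_{\beta/\alpha})(2-z_{\beta/\alpha})>(\beta/\alpha)\cdot\alpha\cdot 1=\beta$ (using $z_{\beta/\alpha}\le 1$, automatic in the bounded setting of the corollary), hence $z>z^\star$. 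Add that case and carry the extra additive $\alpha$ through the conclusion, and the argument is complete.
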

\begin{proof}
The proof works by analyzing a univariate auxiliary function $\phi:\R\to\R\cup\set{\infty}$, which
maps $z\in\R$ to the smallest empirical error $R(f)$ among all functions that predict $f(x,a)=z$,
\begin{equation}
\label{eq:phi:z}
  \phi(z)\coloneqq
\begin{cases}
  \infty&\text{if $f(x,a)<z$ for all $f\in\cF$}
\\
  \min\bigSet{ R(f):\: f\in\cF \text{ and } f(x,a)=z }
        &\text{otherwise.}
\end{cases}
\end{equation}
note that we do not need to worry about the case when $f(x,a)$ might take values both larger and smaller than $z$ but not $z$ exactly due to the assumed convexity of $\cF$. We first show that this function is well-defined (i.e., the minimum in the definition is attained),
convex and lower semicontinuous.
We begin by embedding the least-squares optimization in a finite dimensional space. Let
$H=\set{(x_i,a_i,r_i)}_{i=1}^n$ and define $x_{n+1}\coloneqq x$ and $a_{n+1}\coloneqq a$.
We associate each $f$ with a vector $\vv^f\in\R^{n+1}$ with entries $v^f_i=f(x_i,a_i)$. Let
$\cV\coloneqq\set{\vv^f:\:f\in\cF}$. Since $\cF$ is closed under pointwise convergence and
convex, the set $\cV$ must also be closed and convex.

For $\vv\in\R^{n+1}$, let
\[
  \rho(\vv)\coloneqq\sum_{i=1}^n (v_i-r_i)^2
\enspace,
\]
where $r_i$ are the rewards from $H$. Thus,
\[
  R(f)=\sum_{i=1}^n (f(x_i,a_i)-r_i)^2=\rho(\vv^f)
\enspace,
\]
and therefore
\[
  \phi(z)
=
  \min\bigSet{ R(f):\: f\in\cF \text{ and } f(x)=z }
=
  \min\bigSet{ \rho(\vv):\: \vv\in\cV \text{ and } v_{n+1}=z }
\enspace,
\]
where we use the convention that the minimum of an empty set equals $\infty$.
The attainment of the minimum now follows by convexity and continuity of $\rho$ along the affine space $\set{v_{n+1}=z}$.
The convexity and lower semicontinuity of $\phi$ follows by Theorem 9.2 of \citet{rockafellar70}.

The upper confidence value $z^\star$ is then the largest $z$ for which $\phi(z)\le\Rmin+\beta$:
\[
  z^\star=\max\set{z:\:\phi(z)\le\Rmin+\beta}
\enspace.
\]
Furthermore, for any $w\ge 0$, define
\[
  z_w\coloneqq\argmin_{z\in\R}\Bracks{\phi(z)+\frac{w}{2}(2-z)^2}
\enspace.
\]
Thus, $z_w=f(x,a)$ where $f=\argmin_{\tilde{f}\in\cF}\wt{R}(\tilde{f},w)$ with $\wt{R}$ as defined in the algorithm. The algorithm
maintains the identities $\zlo=z_{\wlo}$ and $\zhi=z_{\whi}$, so it can be rewritten as follows:
\begin{algorithmic}[1]
\State{\textbf{if} $z_0\ge 1$ or $\phi(z_0) = \phi(z_{\beta/\alpha})$ \textbf{then return} $1$}
\State{$\wlo\gets 0,\;\whi\gets\beta/\alpha,\;\Delta\gets\alpha\beta/(2-z_0)^3$}
\While{$\abs{z_{\whi}-z_{\wlo}} > \alpha$ and $\abs{\whi-\wlo} > \Delta$}
\State{$w\gets(\whi+\wlo)/2$}
\If{$\phi(z_w)>\phi(z_0)+\beta$}
\State{$\whi\gets w$}
\Else
\State{$\wlo\gets w$}
\EndIf
\EndWhile
\State{\textbf{return} $\min\set{z_{\whi},1}$.}
\end{algorithmic}
Note that $z_0=\fmin(x,a)$ where $\fmin$ is the minimizer of $R$,
and therefore $\phi$ attains its minimum at $z_0$. If $z_0\ge 1$,
then the algorithm terminates and returns $1$. Since $z^\star\ge z_0\ge 1$,
in this case the lemma holds.

Also, note that if $z^\star=z_0<1$ then the algorithm immediately terminates with
$z_{\wlo}=z_{\whi}=z_0$. This is because of the fact that $z^\star=z_0$, given $\beta>0$, implies by
lower semicontinuity that $\phi(z)=\infty$ for all $z>z_0$ and thus $z_w=z_0$ for all $w>0$.

The final special case to consider is when $\phi(2)=\phi(z_0)$, i.e., there exist a minimizer $\tilde{f}_{\min}$ of $R$,
which satisfies $\tilde{f}_{\min}(x,a)=2$ and thus for any $w$, it also minimizes $\tilde{R}(f,w)$. This is exactly
the case when $R(\flo)=R(\fhi)$ in \pref{alg:binary_search_high} and in this case the algorithm returns $1$ and the lemma holds.

In the remainder of the proof we assume that $\phi(2)>\phi(z_0)$, $z_0<1$ and $z_0<z^\star$. By convexity of $\phi$, we know that $\phi$ is
non-decreasing on $[z_0,\infty)$, and we will argue that by performing the binary search over $w$,
the algorithm is also performing a binary search over $z_w$ to find the point $z^\star$.

We begin by characterizing $z_w$ and showing that $z_w<2$ for all $w$. For any $w>0$, by first-order optimality,
\begin{equation}
\label{eq:first:order}
   \phi'(z_w)-w(2-z_w)=0
\end{equation}
for some $\phi'(z_w)\in\partial\phi(z_w)$, where $\partial\phi$ denotes the subdifferential. First, note that $z_w\ge z_0$, because at any $z<z_0\le 1$, we have $w(2-z)>0$ while also $\phi'(z)\le 0$, because $\phi$ is convex and minimized
at $z_0$. Therefore, at $z<z_0$, we have $\phi'(z)-w(2-z)<0$, so \Eq{first:order} can only be satisfied by $z_w\ge z_0$. Rearranging, we obtain
\begin{equation}
\label{eq:w}
  w=\frac{\phi'(z_w)}{2-z_w}
\enspace.
\end{equation}
Since $z_w\ge z_0$, the convexity of $\phi$ implies that $\phi'(z_w)\ge 0$. Since $w>0$, we therefore must in fact have $\phi'(z_w)>0$
and
\begin{equation}
\label{eq:zw:bound}
   z_w<2
\text{ for all $w>0$.}
\end{equation}
\Eq{w} now implies that $z_w$ is non-decreasing as a function of $w$.

Let $w^\star$ be such that $z_{w^\star}=z^\star$ (this can be obtained by Eq.~\ref{eq:w}).
The remainder of the proof proceeds in two steps. The first step establishes
that our initial setting $\whi=\beta/\alpha$ is large enough to guarantee
that the initial interval $[z_{\wlo},z_{\whi}+\alpha] = \brk*{z_0, z_{\beta/\alpha}+\alpha}$ contains the solution $\min\set{z^\star,1}$.
The execution of the algorithm then continues to maintain this condition, i.e., $\min\set{z^\star,1}\in[z_{\wlo},z_{\whi}+\alpha]$, which we refer to as the \emph{invariant},
while halving $\abs{\whi-\wlo}$. That the invariant holds can be seen as follows: First, if $z_{0} \leq{} z^{\star} \leq{} z_{\beta/\alpha}$, then the update rule guarantees that $z_{\wlo} \leq{} z^{\star} \leq{} z_{\whi}$ for every iteration. On the other hand, if $z^{\star} > z_{\beta/\alpha}$, then $z_{\whi}=z_{\beta/\alpha}$ for every iteration, and so Step 1 below guarantees that $z^{\star}\in\brk*{z_{\beta/\alpha}, z_{\beta/\alpha}+\alpha} \supseteq \brk*{z_{\wlo}, z_{\whi}+\alpha}$.

The algorithm terminates after at most
\[
    \log_2\Parens{\frac{\beta/\alpha}{\Delta}}=\log_2\Parens{\frac{(2-z_0)^3}{\alpha^2}}=O\bigParens{\log(1/\alpha) + \log(2-z_0)}
\]
iterations. If the reason for termination is that $\abs{\zhi-\zlo}\le\alpha$ then the lemma follows, thanks to the invariant.
Otherwise, we must have $\abs{\whi-\wlo}\le\Delta$, so our invariant together with the monotonicity of $z_w$ in $w$ implies that
$\whi\le w^\star+\Delta$. Our second step below establishes that in this case we must also have $\zhi\le z^\star+\alpha$. Our invariant separately
also implies that $\min\set{z^\star,1}\le\zhi+\alpha$, so altogether we have $\min\set{\zhi,1}-\alpha\le\min\set{z^\star,1}\le\min\set{\zhi,1}+\alpha$,
proving the lemma. It remains to prove the two steps.

\paragraph{Step 1: $z_0\le \min\set{z^\star,1} \le z_{\beta/\alpha}+\alpha$.} The first inequality is immediate from the definition of $z^\star$ and the fact
that $z_0<1$. The second inequality holds if $z_{\beta/\alpha}\ge 1$, so it remains to consider $z_{\beta/\alpha}\le 1$. Let $w=\beta/\alpha$. Then by \Eq{w},
\[
  \frac{\beta}{\alpha}=w=\frac{\phi'(z_w)}{2-z_w}\le\phi'(z_w)
\enspace,
\]
where the last step follows because $z_w\le 1$. Now by convexity of $\phi$, for any $\tilde{\alpha}>\alpha$
\[
   \phi(z_w+\tilde{\alpha})
   \ge
   \phi(z_w)+\tilde{\alpha}\phi'(z_w)
   \ge
   \phi(z_w)+\tilde{\alpha}\cdot\frac{\beta}{\alpha}
   >
   \phi(z_0)+\beta
\enspace,
\]
where the last step follows because $\phi(z_w)\ge\phi(z_0)$ and $\tilde{\alpha}>\alpha$. This shows that $z^\star\le z_w+\alpha$ and completes Step 1.

\paragraph{Step 2: $z_{w^\star+\Delta}\le z^\star+\alpha$.}
Let $w=w^\star+\Delta$. Then by convexity
\[
   \phi(z_0)\ge\phi(z^\star)+(z_0-z^\star)\phi'(z^\star)
\enspace,
\]
and since $z^\star>z_0$, we can rearrange this inequality to give
\[
  \phi'(z^\star)\ge\frac{\phi(z^\star)-\phi(z_0)}{z^\star-z_0}=\frac{\beta}{z^\star-z_0}\ge\frac{\beta}{2-z_0}
\enspace,
\]
where the last inequality follows by \Eq{zw:bound}. By \Eq{w}, we also have
\[
  w^\star=\frac{\phi'(z^\star)}{2-z^\star}\ge\frac{\phi'(z^\star)}{2-z_0}
\]
because $z^\star>z_0$. Combining the two bounds yields
\begin{equation}
\label{eq:wstar}
  w^\star\ge\frac{\beta}{(2-z_0)^2}
\enspace.
\end{equation}
Applying now \Eq{w} twice, and also using the monotonicity of $\phi'$, we obtain
\[
w  =\frac{\phi'(z_w)}{2-z_w}
   \ge\frac{\phi'(z^\star)}{2-z_w}
   =w^\star\cdot\frac{2-z^\star}{2-z_w}
\enspace.
\]
Therefore,
\begin{align*}
   2-z_w
   &\ge\frac{w^\star}{w}\cdot(2-z^\star)
\\
   z^\star-z_w
   &\ge\frac{w^\star}{w}\cdot(2-z^\star) - (2-z^\star)
\enspace.
\end{align*}
Rearranging,
\[
   z_w-z^\star
   \le\frac{w-w^\star}{w}\cdot(2-z^\star)
   =\frac{\Delta}{w}\cdot(2-z^\star)
   \le\frac{\Delta}{w^\star}\cdot(2-z_0)
\enspace,
\]
where the final inequality uses the fact that $w\ge w^\star$ and $z^\star\ge z_0$. Finally, applying
the bound~\eqref{eq:wstar} and the definition of $\Delta$, we complete Step 2:
\[
   z_w-z^\star
   \le\frac{\Delta(2-z_0)^3}{\beta}=\alpha
\enspace.
\qedhere
\]
\end{proof}

% !TEX root = paper.tex

\subsection{Proof of Proposition~\ref{prop:lb_optimistic}}
\label{app:lb_optimistic}

\begin{proof}[\pfref{prop:lb_optimistic}]
Consider the following contextual bandit instance:
\begin{itemize}
\item Two actions $a_{g}$ and $a_{b}$, so $K=2$.
\item $r_{t}(a_g)=1-\eps$ and $r_{t}(a_b)=0$, regardless of context (there is no noise).
\item $N$ contexts $x^1, \ldots, x^N$. The context distribution $D_\cX$ is uniform over these $N$ contexts.
\item Regressor class $\cF$ contains the following $N+1$ predictors:
\begin{itemize}
\item Ground truth regressor $f^{\star}$ defined by $f^{\star}(x, a_g)=1-\eps,\;\forall{}x$ and $f^{\star}(x, a_b)=0,\;\forall{}x$.
\item For each $i\in\brk{N}$, $f_{i}$ satisfying $f_i(x^i, a_g)=0$, $f_i(x^i, a_b)=1$, and $f_i(x^j, a_g)=1-\eps$, $f_i(x^j, a_b)=0$ for all $j\neq{}i$.
\end{itemize}
\end{itemize}

We can see that $\pi_{f^{\star}}$ has population reward $1-\eps$ and each $\pi_{f_i}$ has population reward $(1-1/N)(1-\eps)$. Thus, each $f_i$ has expected regret of $(1-\eps)/N$.

Suppose $S$ is the set of contexts that have been observed by our algorithms at time $t$, and further assume $\beta_m=0$ (as it will be clear that larger $\beta_m$ can only make things worse), so that only regressors with zero square loss are considered.
Observe that $f^{\star}\in\cF_{m}$ and $f_i\in\cF_{m}$ only if $x^i\notin S$.

Let $x^i$ be the context observed at time $t$. If $x^i\in S$, then all regressors in $\cF_{m}$ agree on it, so $a_g$ will be played. Now, suppose $x^i\notin S$. Then we have $\High_{\cF_{m}}(x^i, a_g) = 1-\eps$ (obtained by $f^\star$), and $\Low_{\cF_{m}}(x^i, a_g) = 0$ (obtained by $f_i$). Likewise, $\High_{\cF_{m}}(x^i, a_b) = 1$ (from $f_i$) and $\Low_{\cF_{m}}(x^i, a_b) = 0$ (from $f^\star$).

We thus see that our algorithms will make a mistake and incur instantaneous regret of $(1-\eps)$ precisely at the time steps for which one of the $N$ contexts is encountered for the first time. The regret of the algorithm after $t$ steps can therefore be lower bounded as $\min\crl*{N, \wt{\Omega}(t)}$.
\end{proof}

\subsection{Proofs from \pref{sec:disagreement}}
\label{app:disagreement}

We recall our earlier definition of the disagreement coefficient
for the reader's convenience.

\begin{definition*}[Disagreement Coefficient]
The disagreement coefficient for $\cF$
(with respect to $D_\cX$)
is defined as
\begin{align*}
\theta_0\ldef\sup_{\delta>0, \veps>0}\;
\frac{\delta}{\veps}\Pr_{D_{\cX}}
  &\BigBracks{
      x\in\mathrm{Dis}(\cF(\veps))\textnormal{ and }
     \exists{}a\in{}A_{\cF(\veps)}(x) : \Wid_{\cF(\veps)}(x, a) > \delta
  }
.
\end{align*}
\end{definition*}

In addition, the following condition on $f^{\star}$ is important to
obtain fast rates, but it is not stated as an assumption because it
is not strictly necessary for any of our algorithms.
\begin{definition}[Massart noise condition]
\label{def:massart}
The distribution $D$ satisfies the Massart noise condition if there
exists $\gamma>0$, called a \emph{margin}, such that
\[
f^{\star}(x, \pi^\star(x))
\ge f^{\star}(x,a)  + \gamma
\quad
\text{for all $x$ and $a\ne\pi^\star(x)$.}
\]
\end{definition}

For all subsequent analyses we will use the following filtration:
\[
\cJ_t \ldef \sigma\prn*{
(x_1, a_1, r_1), \ldots,
(x_{t-1}, a_{t-1}, r_{t-1})
}.
\]
Let $\En_{t}\brk*{\cdot}\ldef\En\brk*{\cdot\mid{}\cJ_t}$ and $\mathrm{Var}_{t}\brk*{\cdot}\ldef\mathrm{Var}\brk*{\cdot\mid{}\cJ_t}$.

\begin{lemma}[Freedman-type inequality e.g. \cite{agarwal2014taming}]
\label{lem:freedman}
For any real-valued martingale difference sequence $(Z_t)_{t\leq{}T}$ with $\abs{Z_t}\leq{}R$ almost surely, it holds that with probability at least $1-\delta$,
\begin{equation}
\label{eq:freedman}
\sum_{t=1}^{T}Z_t \leq{} \eta(e-2)\sum_{t=1}^{T}\En_{t}\prn*{Z_t}^{2} + \frac{R\log(1/\delta)}{\eta}
\end{equation}
for all $\eta\in\brk*{0, 1/R}$.
\end{lemma}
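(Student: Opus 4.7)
The plan is to establish the bound via the standard exponential supermartingale technique. For $\eta \in [0, 1/R]$, I would define the process
\[
M_t = \exp\Parens{\eta \sum_{s=1}^t Z_s - \eta^2(e-2)\sum_{s=1}^t \En_s[Z_s^2]},
\]
with $M_0 = 1$, and show that it is a supermartingale with respect to the filtration $(\cJ_t)$. Once this is established, applying Markov's inequality (equivalently, Ville's inequality) gives $\Pr[M_T \geq 1/\delta] \leq \En[M_T] \leq 1$, so $\Pr[M_T \geq 1/\delta] \leq \delta$. Taking logarithms of the event $M_T < 1/\delta$ and rearranging yields exactly the claimed inequality~\eqref{eq:freedman}.

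The main calculation is to verify the supermartingale property, which reduces to showing
\[
\En_t\Bracks{\exp(\eta Z_t)} \leq \exp\Parens{\eta^2(e-2)\En_t[Z_t^2]}.
\]
This in turn follows from the elementary scalar inequality $e^x \leq 1 + x + (e-2)x^2$ valid for all $x \leq 1$. Since $|\eta Z_t| \leq \eta R \leq 1$ almost surely, I can substitute $x = \eta Z_t$ and take conditional expectations, using $\En_t[Z_t] = 0$ to kill the linear term, to get
\[
\En_t\Bracks{\exp(\eta Z_t)} \leq 1 + (e-2)\eta^2 \En_t[Z_t^2] \leq \exp\Parens{(e-2)\eta^2 \En_t[Z_t^2]},
\]
where the final step uses $1 + u \leq e^u$. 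Multiplying the telescoping factors $\exp(\eta Z_s - (e-2)\eta^2 \En_s[Z_s^2])$ and taking conditional expectation then confirms $\En_t[M_t] \leq M_{t-1}$.

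The only real obstacle is the scalar inequality $e^x \leq 1 + x + (e-2)x^2$ for $x \leq 1$, which is standard and can be verified by analyzing the auxiliary function $\phi(x) = (e^x - 1 - x)/x^2$: one shows $\phi$ is monotonically increasing on $\R$ (with the removable singularity $\phi(0) = 1/2$) and evaluates $\phi(1) = e - 2$, so $\phi(x) \leq e-2$ for $x \leq 1$, giving the desired inequality after multiplying through by $x^2$. Everything else is routine, and since the statement is attributed to \citet{agarwal2014taming}, I would expect the proof to be short and cite their argument verbatim if the paper permits.
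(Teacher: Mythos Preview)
Your proof is correct and follows the standard exponential-supermartingale route for Freedman-type inequalities. The paper itself does not prove this lemma at all: it is stated with the attribution ``e.g.\ \cite{agarwal2014taming}'' and then used as a black box in the proofs of \pref{lem:conc_g} and \pref{lem:disagreement_conc_F}, so there is nothing to compare against beyond noting that your sketch is exactly the argument one would find in the cited reference.
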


Recall that epoch schedule used by \pref{alg:regression_ucb_elim} and \pref{alg:regression_ucb_optimistic} is $\tau_m = 2^{m-1}$. Denote the length of epoch $m$ by $T_m = \tau_{m+1} - \tau_m = 2^{m-1}$.
In addition, we will use the notation $g^{\star}_{a}(x)\ldef f^{\star}(x,a)$ where $f^{\star}$ as in the main text is the predictor that realizes the mean reward function, and also
\[
M_{t}(g, a) = ((g(x_t) - r_{t}(a))^{2} - (g_a^{\star}(x_t) - r_{t}(a))^{2})\ind\crl*{a = a_t}.
\]
for any $g:\cX\to\brk*{0,1}$, and action $a\in\cA$.
When $f\in\cF=\cG^{\cA}$ we will overload this notation by writing $M_{t}(f,a)\ldef{}M_{t}(f(\cdot,a), a)$.
Also define the class
\[
\tcG_m(\beta,a) = \crl*{g\in\cG\mid{}\frac{1}{\tau_m-1}\sum_{t=1}^{\tau_m-1}\En_{t}\brk*{M_{t}(g, a)}\leq{}\beta}.
\]

To prove the theorem, we make use of following lemmas.
\begin{lemma}\label{lem:expectation_and_variance_of_M_t}
  For any $g:\cX\to\brk*{0,1}$ and $a\in\cA$ we have
\begin{align*}
\En_{t}\brk*{M_{t}(g,a)} &= \En_t\brk*{(g(x_t) - g_a^{\star}(x_t))^{2}\ind\crl*{a=a_t}}, \\
\mathrm{Var}_{t}\brk*{M_t(g, a)} &\leq{} 4\En_{t}\brk*{M_t(g, a)}.
\end{align*}
\end{lemma}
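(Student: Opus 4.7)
The plan is to expand $M_t(g,a)$ using the algebraic identity $(u-r)^2 - (v-r)^2 = (u-v)(u+v-2r)$ with $u=g(x_t)$, $v=g_a^{\star}(x_t)$, and $r=r_t(a)$, which after regrouping gives
\begin{align*}
M_t(g,a) = \Bigl[(g(x_t)-g_a^{\star}(x_t))^2 + 2\bigl(g(x_t)-g_a^{\star}(x_t)\bigr)\bigl(g_a^{\star}(x_t)-r_t(a)\bigr)\Bigr]\ind\{a_t=a\}.
\end{align*}
For the first identity, I will take the conditional expectation $\En_t[\cdot]$ term by term. The first summand produces exactly the desired $\En_t[(g(x_t)-g_a^{\star}(x_t))^2\ind\{a_t=a\}]$, so the whole matter reduces to showing that the cross term has mean zero.

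To kill the cross term, I will condition on $x_t$ and $a_t$ inside $\En_t$. Because $a_t$ is a function of $x_t$, $\cJ_t$, and independent algorithmic randomness, and because $(x_t,r_t)\sim D$ is independent of $\cJ_t$, the conditional expectation $\En\bracks{r_t(a)\given x_t,a_t,\cJ_t}$ equals $\En\bracks{r_t(a)\given x_t} = f^{\star}(x_t,a) = g_a^{\star}(x_t)$ by \pref{ass:realizable}. The factor $(g(x_t)-g_a^{\star}(x_t))\ind\{a_t=a\}$ is measurable with respect to $\sigma(\cJ_t,x_t,a_t)$, so it pulls out, and the remaining $\En\bracks{g_a^{\star}(x_t)-r_t(a)\given x_t,a_t,\cJ_t}=0$, completing the first claim.

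For the variance bound, I will use $\mathrm{Var}_t[M_t(g,a)] \le \En_t[M_t(g,a)^2]$ and square the factorization above to obtain
\begin{align*}
M_t(g,a)^2 = (g(x_t)-g_a^{\star}(x_t))^2\bigl(g(x_t)+g_a^{\star}(x_t)-2r_t(a)\bigr)^2\ind\{a_t=a\}.
\end{align*}
Since $g(x_t),\,g_a^{\star}(x_t),\,r_t(a)\in[0,1]$, the middle factor lies in $[-2,2]$ and is thus bounded in square by $4$. Hence $M_t(g,a)^2\le 4(g(x_t)-g_a^{\star}(x_t))^2\ind\{a_t=a\}$, and applying the first identity once more to the right-hand side yields $\En_t[M_t(g,a)^2]\le 4\En_t[M_t(g,a)]$, as desired.

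The only real subtlety is bookkeeping the filtration correctly so that $a_t$ can be pulled out of the expectation before integrating over $r_t$; everything else is a direct algebraic expansion together with the realizability assumption and the $[0,1]$ boundedness of predictors and rewards.
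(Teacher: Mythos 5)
Your proof is correct and follows essentially the same route as the paper: both expand $M_t(g,a)$ via the difference-of-squares factorization $(u-r)^2-(v-r)^2=(u-v)(u+v-2r)$, use the conditional independence of $a_t$ and $r_t$ given $x_t$ (so that $\En\bracks{r_t(a)\given x_t,a_t,\cJ_t}=g_a^{\star}(x_t)$ by realizability) to reduce the conditional expectation to $\En_t\bracks{(g(x_t)-g_a^{\star}(x_t))^2\ind\crl{a=a_t}}$, and bound the variance by $\En_t\bracks{M_t(g,a)^2}$ together with the pointwise bound $(g(x_t)+g_a^{\star}(x_t)-2r_t(a))^2\le 4$. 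Your explicit regrouping into a squared term plus a mean-zero cross term is just a slightly more spelled-out version of the paper's one-line computation, so there is nothing to add.
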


\begin{proof}
Note that $a_t$ and $r_t$ are conditionally independent given $x_t$ and also $\En_{r_t}[r_t(a) \mid{} x_t] = g_a^{\star}(x_t)$. We thus have
\[
\En_{t}\brk*{M_{t}(g,a)} = \En_t\brk*{(g(x_t) - g_a^{\star}(x_t))((g(x_t) + g_a^{\star}(x_t) - 2r_t(a))\ind\crl*{a=a_t}}
= \En_t\brk*{(g(x_t) - g_a^{\star}(x_t))^{2}\ind\crl*{a=a_t}}.
\]
Similarly, since $((g(x_t) + g_a^{\star}(x_t) - 2r_t(a))^2 \leq 4$ we have
\[
\mathrm{Var}_{t}\brk*{M_t(g, a)} \leq \En_{t}\brk*{M_{t}(g,a)^2} \leq 4\En_t\brk*{(g(x_t) - g_a^{\star}(x_t))^2\ind\crl*{a=a_t}} = 4 \En_{t}\brk*{M_t(g, a)}.
\]
\end{proof}

\begin{definition}[Covering number]
  For a class $\cG'\subseteq{}\crl*{g:\cX\to\brk*{0,1}}$, an empirical $L_{p}$-cover on a sequence $x_{1},\ldots,x_{T}$ at scale $\veps$ is a set $V\subseteq{}\bbR^{T}$ such that
  \[
    \forall{}g\in\cG'\;\exists{}v\in{}V\text{ s.t. } \prn*{\frac{1}{T}\sum_{t=1}^{T}(g(x_t) - v_t)^{p}}^{1/p} \leq{} \veps.
  \]
  We define the covering number $\cN_{p}(\cG', \veps, x_{1:T})$ to be the size of the smallest such cover.

\end{definition}

\begin{lemma}
  \label{lem:conc_g}
  For any fixed class $\cG'\subseteq{}\crl*{g:\cX\to\brk*{0,1}}$ and fixed $a\in\cA$, with probability at least $1-\delta$, it holds that
  \begin{equation}
    \label{eq:conc_finite}
    \sum_{t=\tau_1}^{\tau_2}\En_{t}\brk*{M_{t}(g,a)} \leq{} 2\sum_{t=\tau_1}^{\tau_2}M_{t}(g,a) + 
    16\log\left(\frac{|\cG'|T^2}{\delta}\right)
  \end{equation}
  for all $\tau_1\leq{}\tau_2$ and $g\in\cG'$ when $\cG'$ is finite and
  \begin{equation}
    \label{eq:conc_parametric}
    \sum_{t=\tau_1}^{\tau_2}\En_{t}\brk*{M_{t}(g,a)} \leq{} 2\sum_{t=\tau_1}^{\tau_2}M_{t}(g,a) +
    \inf_{\veps>0}\crl*{100\veps{}T + 320\log\prn*{\frac{4\En_{x_{1:T}}\cN_{1}(\cG', \veps, x_{1:T})T^{2}\log(T)}{\delta}}}
  \end{equation}
  for all $\tau_1\leq{}\tau_2$ and $g\in\cG'$ in the general case.
\end{lemma}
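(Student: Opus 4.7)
The plan is to apply \pref{lem:freedman} to the martingale difference sequence $Z_t \ldef \En_t[M_t(g,a)] - M_t(g,a)$, which by \pref{lem:expectation_and_variance_of_M_t} has conditional variance bounded by a constant multiple of its own conditional mean. This self-boundedness is the key ingredient that converts Freedman's Bernstein-type bound into an inequality of the desired form.

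Fix $g\in\cG'$, $a\in\cA$, and a pair $\tau_1\leq\tau_2$. Since $g, g^{\star}_a, r_t(a) \in [0,1]$, we have $|M_t(g,a)|\leq 2$ and so $|Z_t|\leq 4$; moreover $\mathrm{Var}_t(Z_t) = \mathrm{Var}_t(M_t(g,a)) \leq 4\En_t[M_t(g,a)]$. Applying \pref{lem:freedman} with $\eta$ chosen small enough that $4\eta(e-2) \leq 1/2$ yields
\[
\sum_{t=\tau_1}^{\tau_2}\bigParens{\En_t[M_t(g,a)] - M_t(g,a)} \leq \tfrac{1}{2}\sum_{t=\tau_1}^{\tau_2}\En_t[M_t(g,a)] + c\log(1/\delta)
\]
for an absolute constant $c$, and rearranging gives $\sum_t \En_t[M_t(g,a)] \leq 2\sum_t M_t(g,a) + 2c\log(1/\delta)$. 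For (\ref{eq:conc_finite}), I take a union bound over $g\in\cG'$ and over the $O(T^2)$ choices of $(\tau_1,\tau_2)$, replacing $\delta$ by $\delta/(|\cG'|T^2)$ and absorbing constants to arrive at the stated $16\log(|\cG'|T^2/\delta)$ term.

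For (\ref{eq:conc_parametric}), the plan is to replace the enumeration of $\cG'$ by an empirical $L_1$-cover argument. For a fixed scale $\veps>0$, take a minimal $L_1$-cover $V$ of $\cG'$ at scale $\veps$ on the realized sequence $x_{1:T}$ and apply the finite-case bound conditionally on this cover. For any $g\in\cG'$, there exists $\tilde{g}\in V$ with $\frac{1}{T}\sum_{t=1}^T|g(x_t) - \tilde{g}(x_t)|\leq\veps$. The pointwise estimate $|M_t(g,a) - M_t(\tilde g,a)| \leq 4|g(x_t) - \tilde g(x_t)|$, together with the analogous bound $|\En_t[M_t(g,a)] - \En_t[M_t(\tilde g, a)]| \leq 2\En_t[|g(x_t)-\tilde g(x_t)|]$ (where the $L_1\to L_2$ gap is absorbed because $g, \tilde g, g^{\star}_a \in [0,1]$), ensures that the approximation error introduced on both sides of the finite-class inequality is $O(\veps T)$. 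Taking the infimum over $\veps$ then gives (\ref{eq:conc_parametric}); the factor $\En_{x_{1:T}}\cN_1(\cG',\veps,x_{1:T})$ and the extra $\log(T)$ arise from Markov's inequality combined with a dyadic union bound, converting the data-dependent random cover size into a high-probability guarantee.

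The main obstacle is the covering step: because the $L_1$-cover is data-dependent, one cannot directly union-bound over a fixed family of predictors and must instead control $\cN_1(\cG',\veps,x_{1:T})$ in high probability via Markov's inequality without destroying the self-bounded variance structure used in the Freedman step. A secondary technicality is bridging the empirical $L_1$ approximation of $g$ to the conditional quantity $\En_t[M_t(g,a)]$ on the right-hand side of Freedman's inequality; this bridging crucially exploits the boundedness of $g, g^{\star}_a, r_t(a)$ to pass from an $L_1$ to an effectively $L_2$ control without requiring a stronger cover.
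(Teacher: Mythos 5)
Your finite-class argument matches the paper's: apply \pref{lem:freedman} to $Z_t = \En_t\brk*{M_t(g,a)} - M_t(g,a)$, use the self-bounding variance relation from \pref{lem:expectation_and_variance_of_M_t}, pick $\eta$ a small constant, rearrange, and union bound over $g\in\cG'$ and the $O(T^2)$ pairs $(\tau_1,\tau_2)$. (Your range bounds $\abs{M_t}\le 2$, $\abs{Z_t}\le 4$ are looser than necessary---in fact $\abs{M_t}\le 1$---and with your constants you would not recover exactly $16\log(\cdot)$, but that is cosmetic.)

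The infinite-class part has a genuine gap, and it is precisely the part the paper does \emph{not} prove from scratch: the paper invokes an intermediate uniform martingale concentration result from \citet{krishnamurthy2017active} (their Theorem 9 via Lemmas 7--10), which already delivers a high-probability bound of the form $\sup_{g\in\cG'}\bigl\{\sum_t \tfrac12\En_t\brk*{M_t(g,a)} - M_t(g,a)\bigr\} \lesssim \veps T + \log\bigl(\En_{x_{1:T}}\cN_1(\cG',\veps,x_{1:T})/\delta\bigr)$ for each fixed $\veps$, and then only adds a union bound over a geometric grid of $\veps$ values (the source of the extra $\log T$) and over $(\tau_1,\tau_2)$. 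Your plan---take an empirical $L_1$-cover on the realized $x_{1:T}$, apply the finite-class bound ``conditionally on this cover,'' and handle randomness of the cover by Markov---does not go through as stated, for two reasons. First, the cover is constructed from the entire realized sequence $x_{1:T}$, so its elements are random objects depending on future contexts; the increments $\En_t\brk*{M_t(\tilde g,a)} - M_t(\tilde g,a)$ for a cover element $\tilde g$ are then not a martingale difference sequence with respect to $\cJ_t$, and Freedman's inequality cannot be applied to them. Controlling the \emph{size} $\cN_1(\cG',\veps,x_{1:T})$ by Markov does nothing to restore adaptedness; this is exactly why one needs symmetrization/sequential-covering machinery of the kind in \citet{krishnamurthy2017active}. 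Second, your transfer step on the left-hand side is circular: the cover guarantees only $\frac1T\sum_{t}\abs{g(x_t)-\tilde g(x_t)}\le\veps$ on the realized sequence, whereas $\En_t\brk*{M_t(g,a)}$ is a conditional expectation over a \emph{fresh} draw of $(x_t,a_t)$, so the claimed bound $\abs{\En_t\brk*{M_t(g,a)}-\En_t\brk*{M_t(\tilde g,a)}}\le 2\En_t\abs{g(x_t)-\tilde g(x_t)}$ involves a population $L_1$ distance that the empirical cover does not control without a further uniform-convergence argument---the very thing being proved. So the proposal is sound and essentially identical to the paper for \pref{eq:conc_finite}, but \pref{eq:conc_parametric} requires either citing the Krishnamurthy et al.\ result as the paper does, or supplying a genuine sequential/symmetrization argument in place of the ``condition on the cover'' step.
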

\begin{remark}
  Equation \pref{eq:conc_parametric} implies \pref{eq:conc_finite}, but with weaker constants.
\end{remark}
\begin{corollary}
  \label{corr:disagreement_conc}
  Define
  \[
    \concG= \min\crl*{16\log\left(\frac{2|\cG|KT^2}{\delta}\right),
      \inf_{\veps>0}\crl*{100\veps{}T + 320\log\prn*{\frac{8\En_{x_{1:T}}\cN_{1}(\cG, \veps, x_{1:T})KT^{2}\log(T)}{\delta}}}
      }.
  \]
With probability at least $1-\delta/2$, it holds that
\begin{equation}
\label{eq:conc_lower}
\sum_{t=\tau_1}^{\tau_2}\En_{t}\brk*{M_{t}(g,a)} \leq{} 2\sum_{t=\tau_1}^{\tau_2}M_{t}(g,a) + \concG,
\end{equation}
for all $g\in\cG$, $a\in\cA$, and $\tau_1,\tau_2\in\brk*{T}$.

\end{corollary}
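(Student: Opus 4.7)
The proof proceeds by applying Freedman's inequality (\pref{lem:freedman}) to the martingale difference sequence $Z_t(g,a) \ldef \En_t[M_t(g,a)] - M_t(g,a)$ for each fixed $g$, $a$, then taking a union bound. First I would verify the two inputs Freedman requires. Since $g, g^\star_a \in [0,1]$ and $r_t(a)\in[0,1]$, the two squared terms defining $M_t(g,a)$ each lie in $[0,1]$, so $|M_t(g,a)|\le 1$ and hence $|Z_t(g,a)|\le 2$, i.e.\ $R=2$. For the conditional second moment I invoke \pref{lem:expectation_and_variance_of_M_t}, which yields
\[
\sum_{t=\tau_1}^{\tau_2}\En_t[Z_t(g,a)^2]\;\le\;\sum_{t=\tau_1}^{\tau_2}\mathrm{Var}_t[M_t(g,a)]\;\le\;4\sum_{t=\tau_1}^{\tau_2}\En_t[M_t(g,a)].
\]

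Plugging these into \pref{eq:freedman} with a small enough $\eta$ (concretely $\eta = 1/(8(e-2))$, or any choice that makes the resulting coefficient on $\sum_t \En_t[M_t(g,a)]$ at most $1/2$) and rearranging gives, for fixed $g,a,\tau_1,\tau_2$,
\[
\sum_{t=\tau_1}^{\tau_2}\En_t[M_t(g,a)] \;\le\; 2\sum_{t=\tau_1}^{\tau_2}M_t(g,a) + c\log(1/\delta)
\]
for an absolute constant $c$. For the finite-class bound \pref{eq:conc_finite}, I then union-bound over $g\in\cG'$ and over the $\binom{T}{2}+T\le T^2$ choices of $(\tau_1,\tau_2)$, inflating the failure probability by a factor $|\cG'|T^2$; tracking constants yields the stated $16\log(|\cG'|T^2/\delta)$ term.

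For the general case \pref{eq:conc_parametric}, I would discretize. Fix $\veps>0$ and let $V\subseteq\bbR^T$ be a minimal empirical $L_1$-cover of $\cG'$ on $x_{1:T}$ at scale $\veps$. Conditional on $x_{1:T}$, the finite-class argument applied to $V$ (viewed as functions on $\{x_1,\dots,x_T\}$) gives the desired bound with $|V|=\cN_1(\cG',\veps,x_{1:T})$ in place of $|\cG'|$. For arbitrary $g\in\cG'$, picking $v\in V$ with $\frac1T\sum_t|g(x_t)-v_t|\le\veps$ and using the boundedness of the factor $(g(x_t)+g^\star_a(x_t)-2r_t(a))\ind\{a_t=a\}\in[-2,2]$ in the telescoping identity $M_t(g,a)-M_t(v,a) = (g(x_t)-v_t)(g(x_t)+g^\star_a(x_t)-2r_t(a))\ind\{a_t=a\}$, one sees $|M_t(g,a)-M_t(v,a)|\le 2|g(x_t)-v_t|$ and similarly for the conditional expectations. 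Summing, the discretization error in both $\sum M_t$ and $\sum\En_t M_t$ is at most $2\veps T$, which gets absorbed into the $100\veps T$ slack. The final step is to remove the conditioning on $x_{1:T}$: since $\cN_1(\cG',\veps,x_{1:T})$ is random, I apply the high-probability bound with $\delta$ replaced by $\delta/(2\En_{x_{1:T}}\cN_1(\cG',\veps,x_{1:T}))$ and then use Markov (or a standard symmetrization) on the random cover size to get the stated dependence on $\En_{x_{1:T}}\cN_1$.

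The main obstacle is the last step, handling the randomness of the data-dependent cover. The clean route is to upper bound the probability of failure by conditioning on $x_{1:T}$ and then taking expectation, which naturally introduces $\En_{x_{1:T}}\cN_1$ via Markov's inequality applied to the random threshold; this produces the extra $\log(1/\delta) + \log\En_{x_{1:T}}\cN_1$ factor. Taking an infimum over $\veps>0$ to trade off the $\veps T$ slack against the covering complexity then yields \pref{eq:conc_parametric}, and \pref{corr:disagreement_conc} follows from the union bound over $a\in\cA$ (which contributes the extra factor of $K$ in the logarithm) and over both options in the minimum defining $\concG$.
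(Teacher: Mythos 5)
Your finite-class argument and the concluding union bound over actions and over the two options in the minimum are correct and essentially the paper's own: Freedman's inequality (\pref{lem:freedman}) applied to $Z_t=\En_t\brk*{M_t(g,a)}-M_t(g,a)$ together with the variance bound of \pref{lem:expectation_and_variance_of_M_t}, followed by a union bound over $g$, $a\in\cA$, and the at most $T^2$ pairs $(\tau_1,\tau_2)$; the constant-level differences (taking $R=2$ rather than $1$, the choice of $\eta$, the slip writing $g(x_t)+g^\star_a(x_t)-2r_t(a)$ where $g(x_t)+v_t-2r_t(a)$ is meant) are immaterial.

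The genuine gap is in the infinite-class bound \pref{eq:conc_parametric}. Your plan---take a minimal empirical $L_1$-cover $V\subseteq\bbR^T$ of the class on the realized $x_{1:T}$, apply the finite-class bound to $V$ conditionally on $x_{1:T}$, and absorb the approximation error, then handle the randomness of the cover by Markov---does not go through as stated, for two reasons. First, the cover elements are vectors determined by the entire sequence $x_{1:T}$, including future contexts, so $Z_t(v,a)$ is not a martingale difference sequence adapted to $\cJ_t$ and Freedman cannot be applied to the covered elements; conditioning on $x_{1:T}$ does not repair this, because it changes the quantity to be bounded: $\En_t\brk*{M_t(g,a)}$ integrates over a fresh $x_t\sim D_\cX$, whereas after conditioning you would only control $\En\brk*{M_t\given x_{1:T},\text{past}}$. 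Second, and relatedly, your claim that the discretization error in $\sum_t\En_t\brk*{M_t(g,a)}$ is at most $2\veps T$ is unjustified: empirical proximity $\frac1T\sum_s\abs*{g(x_s)-v_s}\le\veps$ says nothing about $\En_{x\sim D_\cX}\abs*{g(x)-v(x)}$ (indeed $v$ is only defined on $x_{1:T}$), so the population side of the deviation is not controlled by the empirical cover, and Markov's inequality over the random cover size does not address either issue. This data-dependent-cover-versus-martingale-comparator step is precisely the delicate part; the paper does not attempt it directly, but instead invokes the intermediate uniform concentration result underlying Theorem 9 of \citet{krishnamurthy2017active} (their Lemmas 7--10), which performs the required ghost-sample/sequential symmetrization and delivers a deviation bound in terms of $\En_{x_{1:T}}\cN_{1}(\cG,\veps,x_{1:T})$; the paper then only needs the grid over $\veps$ and union bounds over $(\tau_1,\tau_2)$ and $a$. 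To make your route rigorous you would have to carry out such a symmetrization (or work with sequential covering numbers), not merely a conditional union bound plus Markov.
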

\begin{proof}[\pfref{lem:conc_g}]
We first prove the inequality in the finite class case.

For any fixed $g\in\cG'$, $a\in\cA$, and $\tau_1,\tau_2\in\brk*{T}$,
since $Z_t = \En_{t}\brk*{M_{t}(g,a)} - M_{t}(g,a)$ forms a martingale different sequence with $|Z_t| \leq 1$,
applying \pref{lem:freedman} and \pref{lem:expectation_and_variance_of_M_t} we have with probability $1 - \delta$,
\[
\sum_{t=\tau_1}^{\tau_2} (\En_{t}\brk*{M_{t}(g,a)} - M_{t}(g,a)) \leq{} 4\eta(e-2)\sum_{t=\tau_1}^{\tau_2}\En_{t}\brk*{M_{t}(g,a)} + \frac{1}{\eta}\log\prn*{\frac{1}{\delta}}.
\]
This implies
\[
  \sum_{t=\tau_1}^{\tau_2}\En_{t}\brk*{M_{t}(g,a)} \leq{} 2\sum_{t=\tau_1}^{\tau_2}M_{t}(g,a) + 
  16\log\left(\frac{1}{\delta}\right)
\]
after setting $\eta=1/8$ and rearranging. Finally, we apply a union bound over all $g\in\cG'$ and $\tau\leq{}\tau_2\in\brk*{T}$ to get the result.

For the infinite class case, we appeal to Theorem 9 of \cite{krishnamurthy2017active} (see page 36 specifically --- we do not use the final theorem statement but rather an intermediate result that is the consequence of their Lemmas 7, 8, 9, and 10).

Let $\tau_{1}$ and $\tau_{2}$ be fixed. Then the result of \cite{krishnamurthy2017active} implies that for any class $\cG$, any fixed $\veps>0$, $\nu>0$ and $a\in\cA$, letting $c=1/8$,
\[
  \Pr\prn*{
    \sup_{g\in\cG}\crl*{\sum_{t=\tau_{1}}^{\tau_{2}}\frac{1}{2}\En_{t}\brk*{M_{t}(g,a)} - M_{t}(g,a)} > 4\nu + 16T(1+c)\veps
  }
  \leq{} 4\En_{x_{1:T}}\cN_{1}(\cG, \veps, x_{1:T})\exp\prn*{-\frac{2c}{\prn*{3+c}^{2}}\nu}.
\]
Rearranging, this implies that with probability at least $1-\delta$,
\begin{equation}
  \label{eq:l1_high_prob}
  \sup_{g\in\cG}\crl*{\sum_{t=\tau_{1}}^{\tau_{2}}\frac{1}{2}\En_{t}\brk*{M_{t}(g,a)} - M_{t}(g,a)}
  \leq{} 18\veps{}T + 160\log\prn*{4\En_{x_{1:T}}\cN_{1}(\cG, \veps, x_{1:T})/\delta}.
\end{equation}
Now consider a grid $\veps_{i}\ldef{}e^{i}/T$ for $i\in\brk{\log(T)}$. By union bound, \pref{eq:l1_high_prob} implies that with probability at least $1-\delta$,
\[
  \sup_{g\in\cG}\crl*{\sum_{t=\tau_{1}}^{\tau_{2}}\frac{1}{2}\En_{t}\brk*{M_{t}(g,a)} - M_{t}(g,a)}
  \leq{} 18\veps{}_iT + 160\log\prn*{4\En_{x_{1:T}}\cN_{1}(\cG, \veps_i, x_{1:T})\log(T)/\delta}\quad\forall{}i\in\brk*{\log(T)}.
\]
This implies that with probability at least $1-\delta$,
\[
  \sup_{g\in\cG}\crl*{\sum_{t=\tau_{1}}^{\tau_{2}}\frac{1}{2}\En_{t}\brk*{M_{t}(g,a)} - M_{t}(g,a)}
  \leq{} \inf_{\veps>0}\crl*{50\veps{}T + 160\log\prn*{4\En_{x_{1:T}}\cN_{1}(\cG, \veps, x_{1:T})\log(T)/\delta}}.
\]
To see that this inequality is implied by the preceeding inequality, first observe that the infimum over $\veps$ above may be restricted to $\brk*{1/T, 1}$ without loss of generality. This holds because $M_{t}$ lies in $\brk*{-1, 1}$ and $\cN_{1}(\cG, 1, x_{1:T})\leq{}1$, which both follow from the fact that the range of $\cG$ lies in $\brk*{0,1}$. Now let $\veps^{\star}$ obtain the infimum and let $i^{\star}=\min\crl*{i\mid{}\veps_{i}\geq{}\veps^{\star}}$. Then $\cN_{1}(\cG, \veps_{i^{\star}}, x_{1:T})\leq{}\cN_{1}(\cG, \veps^{\star}, x_{1:T})$ and $18\veps_{i^{\star}}T\leq{} 18e\veps^{\star}T \leq{} 50\veps^{\star}T$.

To conclude, we take a union bound over all $\tau_1<\tau_{2}\in\brk*{T}$.
\end{proof}

\begin{lemma}
\label{lem:disagreement_containment}
Conditioned on the event of \pref{corr:disagreement_conc}, it holds that
\begin{enumerate}
\item $g^{\star}_a\in\hcG_m\left(\frac{\concG}{2(\tau_{m}-1)}, a\right)$ for all $m\in\brk{M}$ and $a\in\cA$.

\item For all $\beta\geq{}0$, $m\in\brk{M}$, and $a\in\cA$,
\[
\hcG_m(\beta, a) \subseteq{}\tcG_m\left(2\beta + \frac{\concG}{\tau_m-1}, a\right).
\]

\item For all $\beta\geq{}0$, $m\in\brk{M}$, $k\in\brk{m}$, and $a\in\cA$,
\[
\hcG_m(\beta, a)\subseteq\hcG_{k}\left(\frac{\tau_m - 1}{\tau_k - 1}\beta + \frac{\concG}{\tau_k-1}, a\right).
\]

\item With $\beta_m = \frac{(M-m+1)\concG}{\tau_m - 1}$, we have for any $m\in [M]$, $f^\star \in \cF_m$
and also $\cF_m \subseteq{} \cF_{m-1} \subseteq{} \cdots \subseteq{} \cF_1$.
\end{enumerate}
\end{lemma}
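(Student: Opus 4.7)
The four parts are really a single concentration argument unrolled in different directions, using the quantity $M_t(g,a)=\bigParens{(g(x_t)-r_t(a))^2-(g^\star_a(x_t)-r_t(a))^2}\ind\{a_t=a\}$ and the identity $\sum_{t<\tau_m}M_t(g,a)=(\tau_m-1)\bigParens{\hcR_m(g,a)-\hcR_m(g^\star_a,a)}$. The conditioning event from \pref{corr:disagreement_conc} gives us the two-sided inequality $\sum_{t=\tau_1}^{\tau_2}\En_t[M_t(g,a)]\le 2\sum_{t=\tau_1}^{\tau_2}M_t(g,a)+\concG$, and the key qualitative input is $\En_t[M_t(g,a)]\ge 0$ from \pref{lem:expectation_and_variance_of_M_t}, so summed versions of $M_t$ are nonnegative ``up to $\concG/2$.''

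For Part 1, let $\hat{g}=\argmin_{g}\hcR_m(g,a)$ and apply the concentration bound with $\tau_1=1,\tau_2=\tau_m-1$ to $g=\hat{g}$. Nonnegativity of the left-hand side together with the identity above yields $\hcR_m(g^\star_a,a)-\hcR_m(\hat{g},a)\le \concG/(2(\tau_m-1))$, which is exactly $g^\star_a\in\hcG_m(\concG/(2(\tau_m-1)),a)$.

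For Part 2, any $g\in\hcG_m(\beta,a)$ satisfies $\hcR_m(g,a)-\hcR_m(g^\star_a,a)\le\beta$ since $\min_{g'}\hcR_m(g',a)\le\hcR_m(g^\star_a,a)$. Translating to $M_t$ gives $\sum_{t<\tau_m}M_t(g,a)\le(\tau_m-1)\beta$, and plugging into the concentration bound and dividing by $\tau_m-1$ produces the claimed membership in $\tcG_m(2\beta+\concG/(\tau_m-1),a)$.

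For Part 3, the main trick is to split $\sum_{t<\tau_m}M_t=\sum_{t<\tau_k}M_t+\sum_{t=\tau_k}^{\tau_m-1}M_t$. Applying concentration to the second block together with $\En_t[M_t]\ge 0$ gives $\sum_{t=\tau_k}^{\tau_m-1}M_t(g,a)\ge -\concG/2$, so $\sum_{t<\tau_k}M_t(g,a)\le (\tau_m-1)\beta+\concG/2$. Combining with the Part 1 bound on $\hcR_k(g^\star_a,a)-\min_{g'}\hcR_k(g',a)\le\concG/(2(\tau_k-1))$ and dividing by $\tau_k-1$ produces the claim. Part 4 is then algebra: by Part 1, $\concG/(2(\tau_m-1))\le\beta_m$, so $g^\star_a\in\hcG_m(\beta_m,a)$ for every $a$ and hence $f^\star\in\cF_m$; and by Part 3 with $k=m-1$ the containment $\hcG_m(\beta_m,a)\subseteq\hcG_{m-1}(\beta_{m-1},a)$ reduces to checking $\frac{\tau_m-1}{\tau_{m-1}-1}\beta_m+\frac{\concG}{\tau_{m-1}-1}\le \beta_{m-1}$, which holds with equality for the doubling schedule $\tau_m=2^{m-1}$ and the choice $\beta_m=(M-m+1)\concG/(\tau_m-1)$; the case $m=2$ is covered by the convention $\hcG_1(\beta,a)=\cG$.

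The only mildly delicate step is Part 3, where one must remember that $M_t(g,a)$ is not individually nonnegative (only its conditional expectation is), so the lower bound on its partial sums has to be extracted from the concentration inequality itself rather than from termwise nonnegativity. Once that observation is in place, everything is a matter of bookkeeping with the epoch schedule.
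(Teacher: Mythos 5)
Your proposal is correct and follows essentially the same route as the paper: Part 1 applies the concentration bound to the empirical minimizer using nonnegativity of $\En_t\brk*{M_t(g,a)}$, Part 2 translates the empirical-ball condition through the same inequality, Part 3 combines a lower bound on the block sum $\sum_{t=\tau_k}^{\tau_m-1}M_t(g,a)\ge -\concG/2$ with the Part 1 bound (the paper merely performs these two steps in the opposite order, arriving at the identical radius $\frac{\tau_m-1}{\tau_k-1}\beta+\frac{\concG}{\tau_k-1}$), and Part 4 is the same bookkeeping with $\beta_m$, where your handling of $k=1$ via the convention $\hcG_1(\beta,a)=\cG$ is appropriate. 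The only cosmetic quibble is that the equality in Part 4 follows from the choice of $\beta_m$ alone and does not actually require the doubling schedule.
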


\begin{proof}
Each claim in the lemma statement will be handled separately.~\\
\textbf{First claim.} From \pref{eq:conc_lower} and nonnegativity of $\En_{t}\brk*{M_t(g,a)}$, we have that
\[
\min_{g\in\cG}\crl*{2\sum_{t=1}^{\tau_m-1}M_{t}(g,a)} + \concG \geq{} 0.
\]
Expanding out $M_{t}(g,a)$ and rearranging, this gives $ \hcR_m(g^\star_a, a) - \min_{g\in\cG}\hcR_m(g, a) \leq \frac{\concG}{2(\tau_{m}-1)}$, which implies $g^{\star}_a\in\hcG_m\left(\frac{\concG}{2(\tau_{m}-1)}, a\right)$.

\textbf{Second claim.}
For any $g \in \hcG_m(\beta, a)$, we have by definition
\begin{equation}\label{eq:empirical_tcG_ball}
\frac{1}{\tau_m - 1}\sum_{t=1}^{\tau_m-1} M_{t}(g,a) = \hcR_m(g, a) - \hcR_m(g^\star_a, a) \leq \hcR_m(g, a) - \min_{g' \in \cG} \hcR_m(g', a) \leq \beta.
\end{equation}
Therefore applying \pref{eq:conc_lower} leads to
\[
\frac{1}{\tau_m - 1} \sum_{t=1}^{\tau_m-1} \En_t\brk*{M_{t}(g,a)} \leq \frac{2}{\tau_m - 1} \sum_{t=1}^{\tau_m-1}  M_{t}(g,a) + \frac{\concG}{\tau_m - 1}
\leq 2 \beta + \frac{\concG}{\tau_m - 1},
\]
which implies $g\in\tcG_m\left(2\beta + \frac{\concG}{\tau_m-1}, a\right)$.

\textbf{Third claim.}
For any $g \in \hcG_m(\beta, a)$, we have for any $k \in [m]$,
\begin{align*}
(\tau_{k}-1) \left(\hcR_k(g, a) - \min_{g'\in\cG} \hcR_k(g', a) \right)
&\leq (\tau_{k}-1) \left(\hcR_k(g, a) - \hcR_k(g^\star_a, a)  \right) + \concG/2 \tag{by the first claim} \\
&= \sum_{t=1}^{\tau_m-1} M_{t}(g,a)  - \sum_{t=\tau_k}^{\tau_m - 1} M_t(g,a) + \concG/2 \\
&\leq (\tau_m-1)\beta  - \frac{\sum_{t=\tau_k}^{\tau_m - 1} \En_t\brk*{M_t(g,a)}}{2} + \concG \tag{by \pref{eq:empirical_tcG_ball} and \pref{eq:conc_lower}} \\
&\leq (\tau_m-1)\beta + \concG \tag{by nonnegativity of $\En_t\brk*{M_t(g,a)}$},
\end{align*}
which implies $g\in\hcG_{k}\left(\frac{\tau_m - 1}{\tau_k - 1}\beta + \frac{\concG}{\tau_k-1}, a\right)$.

\textbf{Fourth claim.}
The value of $\beta_m$ ensures that
$\frac{\concG}{2(\tau_{m}-1)} \leq \beta_m$ for any $m \in [M]$, and also for any $k < m$,
\[
\frac{\tau_m - 1}{\tau_k - 1}\beta_m + \frac{\concG}{\tau_k - 1} = \frac{(M-m+2)\concG}{\tau_k - 1} \leq \beta_k.
\]
Therefore by the first and the third statement we have the claimed conclusions.
\end{proof}

\begin{proposition}
\label{prop:confused_set}
For any two classes $\cF, \cF'$ and any context $x$, $A_{\cF}(x)\subseteq{}A_{\cF'}(x)$.
\end{proposition}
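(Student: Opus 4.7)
The plan is essentially to just unfold the definition of the disagreement set $A_{\cF}(x) = \bigcup_{f \in \cF} \argmax_{a \in \cA} f(x,a)$ and observe that a union is monotone in the index set. So I expect the statement is intended to carry the hypothesis $\cF \subseteq \cF'$ (which is how it will be applied, e.g., in conjunction with the nesting $\cF_m \subseteq \cF_{m-1}$ established in \pref{lem:disagreement_containment}); without such a hypothesis, the containment is false in general (consider singletons $\cF,\cF'$ whose unique predictors pick different best actions at $x$).

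Assuming $\cF \subseteq \cF'$, the proof is a one-line chase. Pick any $a \in A_{\cF}(x)$. By definition, there exists $f \in \cF$ such that $a \in \argmax_{a' \in \cA} f(x,a')$. Since $\cF \subseteq \cF'$, this same $f$ lies in $\cF'$, so $a$ belongs to $\argmax_{a' \in \cA} f(x,a')$ for some $f \in \cF'$, which means $a \in A_{\cF'}(x)$.

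There is no obstacle here — the only thing to be careful about is the tie-breaking convention (the footnote to \pref{def:confused_set} clarifies that $\argmax$ is interpreted as the full set of maximizers, which is what makes $A_{\cF}(x)$ a subset of $\cA$ rather than a single action). Under that convention the argument above is literal; no appeal to the structure of $\cF$, to realizability, or to the distribution $D_{\cX}$ is needed. I would present it in two sentences inside a \textbf{Proof} environment closed with \qedhere.
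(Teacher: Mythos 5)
Your proof is correct and is exactly the argument the paper intends: the paper states \pref{prop:confused_set} without any proof, treating the union-monotonicity of $A_{\cF}(x)=\bigcup_{f\in\cF}\argmax_{a\in\cA}f(x,a)$ in the class $\cF$ as immediate. You are also right that the statement implicitly carries the hypothesis $\cF\subseteq\cF'$, which is precisely how it is used in the proof of \pref{thm:disagreement_with_massart}, where the inclusion $\cF_m\subseteq\cF(\veps_m)$ is supplied by \pref{lem:risk_query_rule}.
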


\begin{lemma}
\label{lem:disagreement_set}
\pref{alg:regression_ucb_elim} with \textsc{Option I} ensures that for any $m\in[M]$ and $t\in\{\tau_m,\ldots,\tau_{m+1}-1\}$,
\[
A_{t} = \cA_{\cF_m}(x_t) = \bigcup_{f\in\cF_m} \argmax_{a\in \cA} f(x_t, a).
\]
\end{lemma}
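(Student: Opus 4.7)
The plan is to prove two inclusions, exploiting the product structure $\cF_m = \prod_{a\in\cA}\hcG_m(\beta_m,a)$ under \textsc{Option I}. The crucial fact is that a predictor $f\in\cF_m$ is specified by a tuple $(g_a)_{a\in\cA}$ with each $g_a$ chosen \emph{independently} from $\hcG_m(\beta_m,a)$, so
\[
\High_{\cF_m}(x_t,a) = \max_{g\in\hcG_m(\beta_m,a)} g(x_t),
\qquad
\Low_{\cF_m}(x_t,a) = \min_{g\in\hcG_m(\beta_m,a)} g(x_t),
\]
and, for \emph{any} collection of choices $(g_a)_a$, the resulting $f$ still lies in $\cF_m$.

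For the inclusion $A_{\cF_m}(x_t)\subseteq A_t$, fix $a\in A_{\cF_m}(x_t)$ and a witness $f=(g_{a'})_{a'}\in\cF_m$ with $a\in\argmax_{a'}g_{a'}(x_t)$. Then for every $a'\in\cA$,
\[
\High_{\cF_m}(x_t,a) \;\ge\; g_a(x_t) \;\ge\; g_{a'}(x_t) \;\ge\; \Low_{\cF_m}(x_t,a'),
\]
so $\High_{\cF_m}(x_t,a)\ge \max_{a'}\Low_{\cF_m}(x_t,a')$, which is exactly the defining condition of $A_t$.

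For the reverse inclusion $A_t\subseteq A_{\cF_m}(x_t)$, fix $a\in A_t$ and construct an explicit witness $f\in\cF_m$. Choose $g_a\in\hcG_m(\beta_m,a)$ attaining $g_a(x_t)=\High_{\cF_m}(x_t,a)$, and for each $a'\ne a$ choose $g_{a'}\in\hcG_m(\beta_m,a')$ attaining $g_{a'}(x_t)=\Low_{\cF_m}(x_t,a')$. By the product structure, $f\coloneqq(g_{a'})_{a'}\in\cF_m$. Then $f(x_t,a)=\High_{\cF_m}(x_t,a)\ge\max_{a'}\Low_{\cF_m}(x_t,a')\ge \Low_{\cF_m}(x_t,a')=f(x_t,a')$ for every $a'$, so $a\in\argmax_{a'}f(x_t,a')\subseteq A_{\cF_m}(x_t)$.

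There is no real technical obstacle; the only subtlety is the mild regularity needed to ensure the $\max$/$\min$ defining $\High_{\cF_m}$ and $\Low_{\cF_m}$ are attained. Under the convexity-and-pointwise-closure assumption used for \pref{thm:binary_search} this holds; otherwise one may work with an arbitrarily small slack $\eta$ and send $\eta\to 0$, or break ties arbitrarily in $\argmax$ as in the definition of $\pi_f$, yielding the same set equality.
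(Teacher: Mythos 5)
Your proof is correct and follows essentially the same route as the paper's: the forward inclusion uses a witness $f\in\cF_m$ with $a$ as its argmax to sandwich $\High_{\cF_m}(x_t,a)$ above $\max_{a'}\Low_{\cF_m}(x_t,a')$, and the reverse inclusion constructs a witness by pairing a maximizing $g_a$ with minimizing $g_{a'}$ for $a'\neq a$, exploiting the product structure of $\cF_m$ under \textsc{Option I}. Your added remark about attainment of the max/min is a minor technicality the paper leaves implicit, but it does not change the argument.
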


\begin{proof}
For any $f \in \cF_m$ and any $a \in  \argmax_{a'\in \cA} f(x_t, a')$, we have by definitions
\[
\High_{\cF_m}(x_t, a) \geq f(x_t, a) = \max_{a'} f(x_t, a') \geq \max_{a'} \Low_{\cF_m}(x_t, a'),
\]
which implies $a \in A_t$.
On the other hand, for each $a \in A_t$, there exists $g_a \in \hcG(\beta_m, a)$ such that $g_a(x_t) \geq \max_{a'} \min_{g \in \hcG(\beta_m, a')} g(x_t)$,
which further implies that for any $a' \neq a$, there exists $g_{a'} \in \hcG(\beta_m, a')$ such that $g_a(x_t) \geq g_{a'}(x_t)$.
Therefore, we can construct an $f$ so that $f(\cdot, a) = g_a(\cdot)$ and $f(\cdot, a') = g_{a'}(\cdot)$ for all $a' \neq a$,
so that clearly $f \in \cF_m$ and $a \in \argmax_{a'\in \cA} f(x_t, a')$.
This proves the lemma.
\end{proof}

\begin{lemma}
\label{lem:risk_query_rule}
Conditioned on the event of \pref{corr:disagreement_conc}, \pref{alg:regression_ucb_elim} with \textsc{Option I} and $\beta_m = \frac{(M-m+1)\concG}{\tau_m - 1}$ ensures that for any $m\in [M]$,
we have $\cF_m \subseteq{} \cF(\veps_m)$ with
\[
\veps_m = \inf_{\eta>0}\crl*{\eta{}P_{\eta} +  \frac{4K^2}{\eta(\tau_m - 1)}(2M - 2m + 3)\concG},
\]
where $P_{\eta}=\Pr_{x}\prn*{ f^{\star}(x, \pi^{\star}(x)) - \max_{a\neq{}\pi^{\star}(x)}f^{\star}(x, a) < \eta}$.
\end{lemma}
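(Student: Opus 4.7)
The plan is to show that every $f\in\cF_m$ has policy regret at most $\veps_m$, by combining a gap-based decomposition of the regret with a moment bound coming from the algorithm's exploration.

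Fix $f\in\cF_m$ and let $\Delta(x)\coloneqq f^{\star}(x,\pi^\star(x))-f^{\star}(x,\pi_f(x))\ge 0$, so that $\En[\Delta(x)]$ is exactly the policy regret of $f$. Since $\pi_f$ maximizes $f$, the ``add-and-subtract $f$'' trick gives
\[
\Delta(x)\le \delta_{\pi^\star(x)}(x)+\delta_{\pi_f(x)}(x),
\quad\text{where } \delta_a(x)\coloneqq\bigAbs{f(x,a)-f^\star(x,a)}.
\]
Now split $\En[\Delta]=\En[\Delta\ind\{\Delta<\eta\}]+\En[\Delta\ind\{\Delta\ge\eta\}]$. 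On the first event, either $\pi_f(x)=\pi^\star(x)$ (so $\Delta(x)=0$) or else $x$ necessarily lies in the set $\{x:f^\star(x,\pi^\star(x))-\max_{a\ne\pi^\star(x)}f^\star(x,a)<\eta\}$, whose probability is $P_\eta$. This gives $\En[\Delta\ind\{\Delta<\eta\}]\le\eta P_\eta$. For the second event, the peeling inequality $\ind\{\Delta(x)\ge\eta\}\le(\delta_{\pi^\star(x)}+\delta_{\pi_f(x)})/\eta$ combined with the pointwise bound on $\Delta$ yields
\[
\En\bigBracks{\Delta(x)\ind\{\Delta(x)\ge\eta\}}
\le \frac{1}{\eta}\En\bigBracks{(\delta_{\pi^\star(x)}(x)+\delta_{\pi_f(x)}(x))^2}
\le \frac{2}{\eta}\En\bigBracks{\delta_{\pi^\star(x)}(x)^2 + \delta_{\pi_f(x)}(x)^2}.
\]

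The second step is to bound $\En[\delta_a(x)^2]$ along $a\in\{\pi^\star(x),\pi_f(x)\}$ using Lemma~\ref{lem:disagreement_containment}. For each $a\in\cA$, writing $g_a\coloneqq f(\cdot,a)\in\hcG_m(\beta_m,a)$, parts 2 and 4 of that lemma together with the choice of $\beta_m$ yield
\[
\sum_{t=1}^{\tau_m-1}\En_t\bigBracks{(g_a(x_t)-g_a^\star(x_t))^2\ind\{a_t=a\}}\le (2M-2m+3)\concG.
\]
On the other hand, for any $t$ in epoch $k\le m-1$ the Option~I sampling rule gives $\Pr(a_t=a\mid x_t)=\ind\{a\in A_{\cF_k}(x_t)\}/|A_{\cF_k}(x_t)|\ge\ind\{a\in A_{\cF_k}(x_t)\}/K$. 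Since $\cF_m\subseteq\cF_k$ implies $A_{\cF_m}(x_t)\subseteq A_{\cF_k}(x_t)$, and since $x_t\sim D_\cX$ independently of $\cJ_t$,
\[
\En_t\bigBracks{(g_a(x_t)-g_a^\star(x_t))^2\ind\{a_t=a\}}\ge \frac{1}{K}\En_x\bigBracks{(g_a(x)-g_a^\star(x))^2\ind\{a\in A_{\cF_m}(x)\}}.
\]
Summing this lower bound over $t=1,\dots,\tau_m-1$ and combining with the upper bound gives, for each $a\in\cA$,
\[
\En_x\bigBracks{(g_a(x)-g_a^\star(x))^2\ind\{a\in A_{\cF_m}(x)\}}\le \frac{K(2M-2m+3)\concG}{\tau_m-1}.
\]

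Finally, since $f^\star,f\in\cF_m$ we have both $\pi^\star(x)\in A_{\cF_m}(x)$ and $\pi_f(x)\in A_{\cF_m}(x)$, so $\ind\{\pi^\star(x)=a\}\le\ind\{a\in A_{\cF_m}(x)\}$ and similarly for $\pi_f$. Summing over $a\in\cA$,
\[
\En_x\bigBracks{\delta_{\pi^\star(x)}(x)^2+\delta_{\pi_f(x)}(x)^2}\le \frac{2K^2(2M-2m+3)\concG}{\tau_m-1}.
\]
Plugging this into the decomposition of the first paragraph and taking the infimum over $\eta>0$ gives $\En[\Delta(x)]\le\veps_m$, i.e., $f\in\cF(\veps_m)$. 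The main obstacle is the last step of paragraph two: translating the per-action empirical squared-error constraint (which involves the random, history-dependent sampling indicator $\ind\{a_t=a\}$) into a clean population-level bound restricted to $\ind\{a\in A_{\cF_m}(x)\}$. This relies on two non-obvious facts simultaneously---that the Option~I exploration puts mass at least $1/K$ on every plausibly optimal action, and the monotonicity $A_{\cF_m}\subseteq A_{\cF_k}$ which ensures we may replace the weaker disagreement sets of earlier epochs by the current one.
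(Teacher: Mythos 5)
Your proof is correct and follows essentially the same route as the paper's: the margin split contributing $\eta P_\eta$, the add--subtract step using $f(x,\pi_f(x))\ge f(x,\pi^\star(x))$, the $1/K$ lower bound on the sampling probability of plausibly optimal actions via the disagreement-set identity for product classes together with $\cF_m\subseteq\cF_k$, and the bound $(2M-2m+3)\concG$ from the containment lemma. The only difference is bookkeeping---you convert to a per-action population bound restricted to $\ind\{a\in A_{\cF_m}(x)\}$ before summing over actions, whereas the paper sums $\En_t\brk*{M_t(f,a)}$ over actions per round and then averages over $t$---and both yield the same constant $4K^2$.
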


\begin{proof}
We first prove that for any $t < \tau_m$ and $f \in \cF_m$, the following holds
\begin{equation}\label{eq:risk_before_averaging}
\En_{x, r}\brk*{r(\pi^{\star}(x)) - r(\pi_{f}(x))}
\leq{} \inf_{\eta>0}\crl*{\eta{}P_{\eta} +  \frac{4K}{\eta}\sum_{a\in\cA}\En_{t}\brk*{M_t(f,a)}}.
\end{equation}

Indeed, note that for any $\eta > 0$, with realizability we have 
\begin{align*}
&\En_{x, r}\brk*{r(\pi^{\star}(x)) - r(\pi_{f}(x))} \\
&= \En_x\brk*{f^{\star}(x, \pi^{\star}(x)) - f^{\star}(x, \pi_{f}(x))} \\
&\leq{} \eta{}\Pr_{x}\prn*{ f^{\star}(x, \pi^{\star}(x)) - f^{\star}(x, \pi_{f}(x)) < \eta \text{ and } \pi^{\star}(x) \neq \pi_f(x)}
  + \frac{1}{\eta}\En_x\prn*{f^{\star}(x, \pi^{\star}(x)) - f^{\star}(x, \pi_{f}(x))}^{2} \\
&\leq{} \eta{}P_{\eta}  + \frac{1}{\eta}\En_x\prn*{f^{\star}(x, \pi^{\star}(x)) - f^{\star}(x, \pi_{f}(x))}^{2}.
\end{align*}
By the definition of $\pi_f$ we also have for any $x$, $f(x, \pi_{f}(x)) - f(x, \pi^{\star}(x))\geq{}0$ and thus
\begin{align*}
\En_x\prn*{f^{\star}(x, \pi^{\star}(x)) - f^{\star}(x, \pi_{f}(x))}^{2}
&\leq{} \En_x\prn*{f^{\star}(x, \pi^{\star}(x)) - f^{\star}(x, \pi_{f}(x)) + f(x, \pi_{f}(x)) - f(x, \pi^{\star}(x))}^{2} \\
&\leq{} 2\En_x\prn*{f^{\star}(x, \pi^{\star}(x)) -f(x, \pi^{\star}(x))}^{2} + 2\En_x\prn*{ f(x, \pi_{f}(x)) - f^{\star}(x, \pi_{f}(x))}^{2}.
\end{align*}
Now suppose round $t$ is in epoch $k$.
Since both $f\in\cF_m \subseteq \cF_k$ and $f^{\star}\in\cF_k$, we have $\pi_{f}(x_t),\pi^{\star}(x_t)\in{}A_t$ by \pref{lem:disagreement_set}.
Therefore, the fact that $a_{t}$ is drawn uniformly from $A_{t}$ implies
\[
\En_x\prn*{f^{\star}(x, \pi^{\star}(x)) -f(x, \pi^{\star}(x))}^{2} \leq{} K\En_{x, a_t}\prn*{f^{\star}(x, a_t) -f(x, a_t)}^{2},
\]
and likewise
\[
\En_x\prn*{f^{\star}(x, \pi_f(x)) -f(x, \pi_{f}(x))}^{2} \leq{} K\En_{x, a_t}\prn*{f^{\star}(x, a_t) -f(x, a_t)}^{2}.
\]
Lastly, plugging the equality
\[
\En_{x, a_t}\prn*{f^{\star}(x_t, a_t) -f(x_t, a_t)}^{2} = \sum_{a\in\cA}\En_{t}\brk*{M_t(f,a)}
\]
proves Eq.~\pref{eq:risk_before_averaging}. Averaging over $t = 1, \ldots, \tau_m-1$ then gives
\[
\En_{x, r}\brk*{r(\pi^{\star}(x)) - r(\pi_{f}(x))}
\leq{} \inf_{\eta>0}\crl*{\eta{}P_{\eta} +  \frac{4K}{\eta(\tau_m - 1)}\sum_{a\in\cA}\sum_{t=1}^{\tau_m - 1}\En_{t}\brk*{M_t(f,a)}}.
\]
Using the second statement of \pref{lem:disagreement_containment} we have $\sum_{t=1}^{\tau_m - 1}\En_{t}\brk*{M_t(f,a)} \leq 2(\tau_m-1)\beta_m + \concG
= (2M - 2m + 3)\concG$
and thus
\[
\En_{x, r}\brk*{r(\pi^{\star}(x)) - r(\pi_{f}(x))}
\leq{} \inf_{\eta>0}\crl*{\eta{}P_{\eta} +  \frac{4K^2}{\eta(\tau_m - 1)}(2M - 2m + 3)\concG} = \veps_m,
\]
completing the proof by the definition of $\cF(\veps_m)$.
\end{proof}

We are now ready to prove \pref{thm:disagreement}, which is restated below with an extra result under the Massart condition.

\begin{theorem}[Full version of \pref{thm:disagreement}]
\label{thm:disagreement_with_massart}
With $\beta_m = \frac{(M-m+1)\concG}{\tau_m - 1}$ and $\concG$ as in \pref{corr:disagreement_conc}, \pref{alg:regression_ucb_elim} with \textrm{Option I} ensures that with probability at least $1- \delta$,
\begin{equation}
\label{eq:disagreement_slow}
\reg_T = O\left(T^{\frac{3}{4}}\concG^{\frac{1}{4}}\sqrt{\theta_0 K \log T} + \log(1/\delta)\right).
\end{equation}
In particular, for finite classes regret is bounded as $\tO\left(T^{\frac{3}{4}}\prn*{\log\abs*{\cG}}^{\frac{1}{4}}\sqrt{\theta_0 K}\right)$.\\
Furthermore, if the Massart noise condition (\pref{def:massart}) is satisfied with parameter $\gamma$, then \pref{alg:regression_ucb_elim} configured as above with $\delta=1/T$ enjoys an in-expectation regret bound of
\begin{equation}
\label{eq:disagreement_fast}
\En\brk*{\sum_{t=1}^{T}r_{t}(\pi^\star(x_t))- \sum_{t=1}^{T}r_{t}(a_t)} = O\left(\frac{\theta_0 K^2 C_{1/T} \log^2 T}{\gamma^2}\right),
\end{equation}
which for finite classes is upper bounded by $\tO\left(\frac{\theta_0 K^2\log\prn*{\abs*{\cG}T}}{\gamma^2}\right)$.
\end{theorem}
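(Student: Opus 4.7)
The plan is to convert the per-round regret of \pref{alg:regression_ucb_elim} into a product of a confidence width and an indicator that $x_t$ lies in the disagreement region, and then invoke the disagreement coefficient via a simple truncation. First I would condition on the high-probability event of \pref{corr:disagreement_conc}: by \pref{lem:disagreement_containment}, $f^\star \in \cF_m$ and $\cF_m \subseteq \cF_{m-1}$ throughout, and by \pref{lem:risk_query_rule}, $\cF_m \subseteq \cF(\veps_m)$ for the stated $\veps_m$.

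Next I would establish a width-based pointwise regret bound. For any $a \in A_t$, combining $f^\star \in \cF_m$ (so $\Low_{\cF_m}(x_t,a) \leq f^\star(x_t,a)$ and $f^\star(x_t,\pi^\star(x_t)) \leq \High_{\cF_m}(x_t,\pi^\star(x_t))$) with the definition of $A_t$ (so $\High_{\cF_m}(x_t,a) \geq \Low_{\cF_m}(x_t,\pi^\star(x_t))$) yields
\[
f^\star(x_t,\pi^\star(x_t)) - f^\star(x_t,a) \leq W_{\cF_m}(x_t,\pi^\star(x_t)) + W_{\cF_m}(x_t,a) \leq 2 W_t,
\]
where $W_t \ldef \max_{a' \in A_t} W_{\cF_m}(x_t, a')$. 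Because every $f \in \cF_m$ (including $f^\star$) shares a common argmax set at $x_t \notin \mathrm{Dis}(\cF_m)$, the instantaneous regret vanishes off the disagreement region, so $\En_t[r_t(\pi^\star(x_t)) - r_t(a_t)] \leq 2 W_t \ind\{x_t \in \mathrm{Dis}(\cF_m)\}$.

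Using $\cF_m \subseteq \cF(\veps_m)$ and monotonicity of widths and of disagreement regions, I would then bound the expected width on the disagreement region by a standard truncation: for any threshold $\delta_0 > 0$,
\[
\En_x[W \ind\{x \in \mathrm{Dis}(\cF(\veps_m))\}] \leq \delta_0 + \Pr[x \in \mathrm{Dis}(\cF(\veps_m)),\, W > \delta_0] \leq \delta_0 + \theta_0 \veps_m / \delta_0,
\]
minimized at $\delta_0 = \sqrt{\theta_0 \veps_m}$ to give $2\sqrt{\theta_0 \veps_m}$. Plugging $\eta = K\sqrt{M\concG/\tau_m}$ into \pref{lem:risk_query_rule} gives $\veps_m = O(K\sqrt{M\concG/\tau_m})$, and summing $T_m \cdot 2\sqrt{\theta_0 \veps_m}$ over the $M = O(\log T)$ geometrically growing epochs is dominated by the final epoch and produces the $T^{3/4}$ rate. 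A final Freedman step (\pref{lem:freedman}) on the martingale differences $[r_t(\pi^\star(x_t)) - r_t(a_t)] - \En_t[\cdot]$, using that second moments are bounded by first moments, converts the expected regret bound into the high-probability bound of \pref{eq:disagreement_slow} with an additive $\log(1/\delta)$ term.

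For the Massart version, the margin condition forces $P_\eta = 0$ for $\eta \leq \gamma$, so setting $\eta = \gamma$ in \pref{lem:risk_query_rule} sharpens the radius to $\veps_m = O(K^2 M \concG / (\gamma \tau_m))$. Massart also makes $\pi^\star$ the unique argmax at every $x$, so $|A_t| > 1$ forces both $x_t \in \mathrm{Dis}(\cF_m)$ and the existence of some $a \in A_t \setminus \{\pi^\star(x_t)\}$; combining $\High_{\cF_m}(x_t,a) \geq \Low_{\cF_m}(x_t,\pi^\star(x_t))$ with the margin inequality $f^\star(x_t,\pi^\star(x_t)) \geq f^\star(x_t,a) + \gamma$ yields $W_{\cF_m}(x_t,a) + W_{\cF_m}(x_t,\pi^\star(x_t)) \geq \gamma$, so $W_t \geq \gamma/2$. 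Invoking the disagreement coefficient at scale $\delta_0 = \gamma/2$ gives $\Pr[|A_t| > 1] \leq 2\theta_0 \veps_m / \gamma$; multiplying by the trivial bound of $1$ on the per-round regret and summing over epochs yields the $\log^2 T / \gamma^2$ rate, with the failure event of \pref{corr:disagreement_conc} contributing at most $\delta T = 1$ in expectation when $\delta = 1/T$. The hardest step will be deriving the pointwise width decomposition cleanly; a looser constant there propagates through the disagreement integral and the Massart argument alike.
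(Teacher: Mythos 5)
Your proposal is essentially the paper's own argument: you condition on the event of \pref{corr:disagreement_conc}, use \pref{lem:disagreement_set} to identify $A_t$ with $A_{\cF_m}(x_t)$, use \pref{lem:risk_query_rule} to place $\cF_m\subseteq\cF(\veps_m)$, bound the instantaneous regret through the decomposition $f^\star(x_t,\pi^\star(x_t))-f^\star(x_t,a_t)\le \Wid_{\cF_m}(x_t,\pi^\star(x_t))+\Wid_{\cF_m}(x_t,a_t)$ on the disagreement region, and then invoke the disagreement coefficient with monotonicity (\pref{prop:confused_set}), summing over the doubling epochs; the Massart part (unique margin-$\gamma$ argmax forces width at least $\gamma/2$ on $\mathrm{Dis}(\cF_m)$ whenever a suboptimal action survives, combined with $\veps_m=O(K^2M\concG/(\gamma\tau_m))$ and the law of total expectation at $\delta=1/T$) also matches. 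The only substantive difference is cosmetic: instead of the paper's two-threshold event decomposition ($E_1,E_2$ with parameters $\eta,\eta'$ and the extra $\eta' P_{\eta'}$ term), you integrate the width directly by truncating at $\delta_0=\sqrt{\theta_0\veps_m}$, which if anything yields a marginally better $\log T$ power and is within the stated bound. One justification to fix: in the final concentration step, the claim that the conditional second moment of $Z_t=\bigl(r_t(\pi^\star(x_t))-r_t(a_t)\bigr)-\En_t\brk*{r_t(\pi^\star(x_t))-r_t(a_t)}$ is controlled by the conditional \emph{first} moment of the regret is false in general---reward noise makes $\En_t\brk*{(r_t(\pi^\star(x_t))-r_t(a_t))^2}$ of constant order even when the expected instantaneous regret is zero---so the self-bounding Freedman trick does not apply to these differences (it applies to the $M_t$ quantities, not to raw reward differences). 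The repair is immediate and is what the paper does: apply Azuma--Hoeffding (or Freedman with the trivial variance bound), incurring an additive $O(\sqrt{T\log(1/\delta)})\le T^{3/4}+\log(1/\delta)$, which is absorbed into \pref{eq:disagreement_slow}; with that substitution your proof goes through.
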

\begin{remark}
This theorem and the subsequent regret bounds based on moment conditions (\pref{thm:moment_with_massart} and \pref{thm:moment_optimistic_with_massart}) give a high-probability empirical regret bound in the general case, but only give an in-expectation regret bound under the Massart condition. This is because one incurs an extra $O(\sqrt{T})$ factor in going from a (conditional) expected regret bound to an empirical regret bound, which is a low order term in the general case but may dominate in the Massart case.
\end{remark}

\begin{proof}
We will first provide a bound on
\[
\sum_{t=1}^T \En_t\brk*{r_t(\pi^\star(x_t)) - r_t(a_t)},
\]
then relate this quantity to the left-hand-side of \pref{eq:disagreement_slow} and \pref{eq:disagreement_fast} at the end.

This proof conditions on the above event and the events of \pref{corr:disagreement_conc}, which happen with probability at least $1-\delta/2$,
and bounds the conditional expected regret terms $\En_t\brk*{f^\star(x_t, \pi^\star(x_t)) - f^\star(x_t, a_t)}$ individually.

For any $\eta' > 0$, we recall the definition used in the proof of \pref{lem:risk_query_rule}:
$P_{\eta'}=\Pr_{x}\prn*{ f^{\star}(x, \pi^{\star}(x)) - \max_{a\neq{}\pi^{\star}(x)}f^{\star}(x, a) < \eta'}$.
Further define two events:
\begin{align*}
E_1 &= \{\exists a \in A_t : f^\star(x, a) < f^\star(x, \pi^\star(x))\} \\
E_2 &= \{\exists a \in A_t : f^{\star}(x_t, \pi^{\star}(x_t)) - f^{\star}(x_t, a) \geq \eta'\}.
\end{align*}
We then have
\begin{align*}
\En_t\brk*{f^\star(x_t, \pi^\star(x_t)) - f^\star(x_t, a_t)}
&= \En_t\brk*{f^\star(x_t, \pi^\star(x_t)) - f^\star(x_t, a_t) \mid{} E_1} \Pr_{x_t}(E_1)  \\
&= \En_t\brk*{f^\star(x_t, \pi^\star(x_t)) - f^\star(x_t, a_t) \mid{} E_1, \neg E_2} \Pr_{x_t}(E_1, \neg E_2)  \\
&\quad + \En_t\brk*{f^\star(x_t, \pi^\star(x_t)) - f^\star(x_t, a_t) \mid{} E_1, E_2} \Pr_{x_t}(E_1, E_2) \\
 &\leq \eta' \Pr_{x_t}(E_1, \neg E_2)  +  \Pr_{x_t}(E_1, E_2) \\
 &\leq \eta' P_\eta' +  \Pr_{x_t}(E_1, E_2).
\end{align*}
Next we argue two facts (suppose round $t$ is in epoch $m$): $E_1$ implies $x_t \in \mathrm{Dis}(\cF_m)$,
and $E_2$ implies that there exists $a' \in A_t$ such that $\Wid_{\cF_m}(x, a') > \eta'/2$.
Indeed, with $a$ being the action stated in event $E_1$,
we know that by \pref{lem:disagreement_set} there exists $f \in \cF_m$ such that $a \in \argmax_{a'} f(x_t, a)$.
However, clearly $a$ is not in $\argmax_{a'} f^\star(x_t, a)$, and thus by $f^\star \in \cF_m$ and the definition of disagreement region we have $x_t \in \mathrm{Dis}(\cF_m)$.
On the other hand, with $a$ being the action stated in event $E_2$, we have
\begin{align*}
\eta' &\leq f^{\star}(x_t, \pi^{\star}(x_t)) - f^{\star}(x_t, a) \\
&\leq \High_{\cF_m}(x_t,  \pi^{\star}(x_t)) - \Low_{\cF_m}(x_t, a) \\
&\leq \High_{\cF_m}(x_t,  \pi^{\star}(x_t)) - \Low_{\cF_m}(x_t, \pi^{\star}(x_t)) + \High_{\cF_m}(x_t, a) - \Low_{\cF_m}(x_t, a) \\
&= \Wid_{\cF_m}(x_t,  \pi^{\star}(x_t)) + \Wid_{\cF_m}(x_t, a)
\end{align*}
where the last inequality is by the fact $a \in A_t$ and the definition of $A_t$.
The last inequality thus implies that there exists $a' \in A_t$ such that $\Wid_{\cF_m}(x, a') > \eta'/2$.
We therefore continue with
\begin{align*}
\Pr_{x_t}(E_1, E_2) &\leq \Pr_{x_t}(x_t \in \mathrm{Dis}(\cF_m) \text{ and } \exists a \in A_t : \Wid_{\cF_m}(x, a) > \eta'/2) \\
&\leq \Pr_{x_t}(x_t \in \mathrm{Dis}(\cF_m) \text{ and } \exists a \in A_{\cF_m}(x_t) : \Wid_{\cF_m}(x, a) > \eta'/2) \\
&\leq \Pr_{x_t}(x_t \in \mathrm{Dis}(\cF(\veps_m)) \text{ and } \exists a \in A_{\cF(\veps_m)}(x_t) : \Wid_{\cF(\veps_m)}(x, a) > \eta'/2) \tag{by \pref{lem:risk_query_rule} and \pref{prop:confused_set}}\\
&\leq \frac{2\theta_0\veps_m }{\eta'}. \tag{by the definition of $\theta_0$}
\end{align*}
Combining everything we arrive at for any $\eta, \eta' > 0$,
\begin{align*}
\sum_{t=1}^T  \En_t\brk*{f^\star(x_t, \pi^\star(x_t)) - f^\star(x_t, a_t)}
&\leq  \eta' T P_\eta' + \frac{2\theta_0 }{\eta'}\left(\eta T P_\eta +  \frac{4K^2\concG}{\eta}\sum_{m=1}^M \frac{T_m(2M-2m+3)}{\tau_m - 1} \right)\\
&\leq  \eta' T P_\eta' + \frac{2\theta_0 }{\eta'}\left(\eta T P_\eta +  \frac{8K^2\concG}{\eta} (M^2 + 2M) \right). \\
\end{align*}
In the general case we simply bound $P_\eta$ and $P_{\eta'}$ by $1$ and choose the optimal $\eta$ and $\eta'$ to arrive at a regret bound of order
$O\left(T^{\frac{3}{4}}\concG^{\frac{1}{4}}\sqrt{\theta_0 K \log T} + \log(1/\delta)\right)$.
On the other hand, under the Massart condition (\pref{def:massart}) one can pick $\eta = \eta' = \gamma$ so that $P_\eta = P_{\eta'} = 0$ and obtain
a regret bound of order $O\left(\frac{\theta_0 K^2 \concG \log^2 T}{\gamma^2}\right)$.

Lastly, we relate the sum of conditional expected instantaneous regrets to the left-hand side of \pref{eq:disagreement_slow} and \pref{eq:disagreement_fast}. In the general case, since instantaneous regret lies in $\brk*{-1,1}$, Azuma-Hoeffding implies that
\begin{align*}
\sum_{t=1}^T r_t(\pi^\star(x_t)) - r_t(a_t)
&\leq \sum_{t=1}^T \En_t\brk*{r_t(\pi^\star(x_t)) - r_t(a_t)} + O(\sqrt{T\log(1/\delta)})
\end{align*}
with probability at least $1-\delta/2$. By union bound, the theorem statement holds with probability at least $1-\delta$.

In the Massart case, the law of total expectation implies
\[
\En\brk*{\sum_{t=1}^T r_t(\pi^\star(x_t)) - r_t(a_t)} \leq{} O\left(\frac{\theta_0 K^2 C_{1/T} \log^2 T}{\gamma^2}\right) + \frac{1}{T}\cdot{}T,
\]
where the second term uses boundedness of regret along with the fact that the events of \pref{corr:disagreement_conc} hold with probability at least $1-1/T$.
\end{proof}

\subsection{Proofs from \pref{sec:moment}}
\label{app:moment_proofs}
Similarly to the notation $M_t(g,a)$ for the case $\cF = \cG^\cA$, for a general predictor class $\cF$ we define for any $f \in \cF$
\[
M_{t}(f) = (f(x_t, a_t) - r_{t}(a_t))^{2} - (f^{\star}(x_t, a_t) - r_{t}(a_t))^{2}.
\]
and also the class
\[
\tcF_m(\beta) = \crl*{f\in\cF\mid{}\frac{1}{\tau_m-1}\sum_{t=1}^{\tau_m-1}\En_{t}\brk*{M_{t}(f)}\leq{}\beta}.
\]
Finally, for any $a\in\cA$ we define a class
\[
\cF|_a = \crl*{x\mapsto{}f(x,a)\mid{}f\in\cF}.
\]

We establish several lemmas similar to those in \pref{app:disagreement}.

\begin{lemma}\label{lem:expectation_and_variance_of_M_t(f)}
For any $f\in\cF$ we have
\begin{align*}
\En_{t}\brk*{M_{t}(f)} &= \En_t\brk*{(f(x_t, a_t) - f^{\star}(x_t, a_t))^{2}}, \\
\mathrm{Var}_{t}\brk*{M_t(f)} &\leq{} 4\En_{t}\brk*{M_t(f)}.
\end{align*}
\end{lemma}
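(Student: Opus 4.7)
The plan is to mirror the proof of the product-class analog \pref{lem:expectation_and_variance_of_M_t}, with the only change being the removal of the indicator $\ind\{a=a_t\}$ because here $a_t$ is baked directly into the definition of $M_t(f)$ rather than being selected by a separate action argument.

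First I would factor $M_t(f)$ as a difference of squares:
\[
M_t(f) = \bigParens{f(x_t,a_t) - f^\star(x_t,a_t)}\cdot\bigParens{f(x_t,a_t) + f^\star(x_t,a_t) - 2r_t(a_t)}.
\]
Next, conditioning on $(x_t,a_t)$, note that $f(x_t,a_t)-f^\star(x_t,a_t)$ is measurable and the only remaining randomness is in $r_t(a_t)$. Invoking realizability (\pref{ass:realizable}), we have $\En[r_t(a_t)\given x_t,a_t]=f^\star(x_t,a_t)$, so $\En[f(x_t,a_t)+f^\star(x_t,a_t)-2r_t(a_t)\given x_t,a_t]=f(x_t,a_t)-f^\star(x_t,a_t)$. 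Taking iterated expectations (first over $r_t$ conditioned on $(x_t,a_t)$, then over $(x_t,a_t)$ conditioned on $\cJ_t$) yields the first identity
\[
\En_t\brk*{M_t(f)} = \En_t\brk*{(f(x_t,a_t)-f^\star(x_t,a_t))^2}.
\]

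For the variance bound, I would start from $\mathrm{Var}_t[M_t(f)]\le\En_t[M_t(f)^2]$, then use the factored form together with the uniform bound $(f(x_t,a_t)+f^\star(x_t,a_t)-2r_t(a_t))^2\le 4$, which holds because all three quantities lie in $[0,1]$. This gives $\En_t[M_t(f)^2]\le 4\En_t[(f(x_t,a_t)-f^\star(x_t,a_t))^2]$, and the first identity then yields $\mathrm{Var}_t[M_t(f)]\le 4\En_t[M_t(f)]$.

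There is no real obstacle: both claims follow from a single algebraic factoring, realizability, and the boundedness of $f,f^\star,r_t$ in $[0,1]$. The only point worth being careful about is ordering the conditioning correctly so that $f(x_t,a_t)-f^\star(x_t,a_t)$ is treated as measurable when taking the inner expectation over $r_t$; everything else is a direct adaptation of the proof of \pref{lem:expectation_and_variance_of_M_t}.
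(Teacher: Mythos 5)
Your proposal is correct and follows essentially the same route the paper takes (it states this lemma without proof, as a direct analog of the product-class version, whose proof is exactly your difference-of-squares factoring plus realizability plus the bound $\bigParens{f+f^\star-2r_t}^2\le 4$). The only point to make explicit is that $\En\brk*{r_t(a_t)\given x_t,a_t}=f^\star(x_t,a_t)$ relies on $a_t$ and $r_t$ being conditionally independent given $x_t$ and the history (since $a_t$ is chosen from $x_t$, $\cJ_t$, and independent randomization), which is the same remark the paper makes in the proof of the product-class lemma.
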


\begin{lemma}
  \label{lem:disagreement_conc_F}
  Define
  \begin{equation}
    \label{eq:conc_F}
    \concF = \min\crl*{16\log\prn*{\frac{2\abs*{\cF}T^{2}}{\delta}}, 
    \inf_{\veps>0}\crl*{100\veps{}KT + 320\sum_{a\in\cA}\log\prn*{\frac{8\En_{x_{1:T}}\cN_{1}(\cF|_{a}, \veps, x_{1:T})KT^{2}\log(T)}{\delta}}}
      }.
  \end{equation}

With probability at least $1-\delta/2$, it holds that
\begin{equation*}
\sum_{t=\tau_1}^{\tau_2}\En_{t}\brk*{M_{t}(f)} \leq{} 2\sum_{t=\tau_1}^{\tau_2}M_{t}(f) + \concF,
\end{equation*}
for all $f\in\cF$ and $\tau_1,\tau_2\in\brk*{T}$.
\end{lemma}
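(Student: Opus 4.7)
The plan is to prove the two bounds in the definition of $\concF$ separately (for finite classes via Freedman's inequality and a union bound, for general classes via a covering argument applied action by action), then take the minimum. Both parts closely mirror \pref{lem:conc_g}, but now $\cF$ is a single predictor class over $\cX \times \cA$ rather than the product $\cG^{\cA}$.

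\textbf{Finite case.} For fixed $f \in \cF$ and fixed $\tau_1 \leq \tau_2 \in [T]$, the sequence $Z_t = \En_t\brk*{M_t(f)} - M_t(f)$ is a martingale difference adapted to $\cJ_t$ and bounded by $1$ in absolute value, with conditional second moment bounded by $4\En_t\brk*{M_t(f)}$ thanks to \pref{lem:expectation_and_variance_of_M_t(f)}. Applying \pref{lem:freedman} with $\eta = 1/8$ and rearranging yields
\[
\sum_{t=\tau_1}^{\tau_2}\En_t\brk*{M_t(f)} \leq 2 \sum_{t=\tau_1}^{\tau_2} M_t(f) + 16 \log(1/\delta').
\]
Union bounding over $f \in \cF$ and pairs $(\tau_1, \tau_2)$ and setting $\delta' = \delta/(2\abs*{\cF}T^2)$ recovers the first term in $\concF$; no factor of $K$ appears here because we union bound over $\cF$ directly rather than over $\cG \times \cA$ as in \pref{corr:disagreement_conc}.

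\textbf{Covering case.} The key observation is that $M_t(f)$ decomposes over actions: with the per-action notation from \pref{app:disagreement}, setting $M_t(f, a) \ldef M_t(f(\cdot, a), a) = \brk*{(f(x_t, a) - r_t(a))^2 - (f^{\star}(x_t, a) - r_t(a))^2}\ind\crl*{a_t = a}$, the indicator ensures $\sum_{a \in \cA} M_t(f, a) = M_t(f)$ and likewise $\sum_{a \in \cA} \En_t\brk*{M_t(f, a)} = \En_t\brk*{M_t(f)}$. Crucially, $M_t(f, a)$ depends on $f$ only through its restriction $f(\cdot, a) \in \cF|_a$, so the covering-based part of \pref{lem:conc_g}---which invokes the intermediate result of \citet{krishnamurthy2017active}---applies verbatim to each class $\cF|_a$ with failure probability $\delta/(2K)$ per action. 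For each $a$ this gives, uniformly over $f$ and $\tau_1 \leq \tau_2$,
\[
\sum_{t=\tau_1}^{\tau_2}\En_t\brk*{M_t(f, a)} \leq 2 \sum_{t=\tau_1}^{\tau_2} M_t(f, a) + \inf_{\veps > 0}\BigBraces{100 \veps T + 320 \log\Parens{\frac{8\En_{x_{1:T}} \cN_1(\cF|_a, \veps, x_{1:T}) K T^2 \log T}{\delta}}}.
\]
A union bound over the $K$ per-action events (each of failure probability $\delta/(2K)$) and summation over $a \in \cA$ using the two decomposition identities produces the second term in $\concF$: the $\veps T$ contributions aggregate to $\veps K T$, and the logarithms aggregate to $\sum_{a \in \cA} \log(\cdot)$.

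\textbf{Main obstacle.} The delicate step is verifying that the intermediate concentration from \citet{krishnamurthy2017active}, used for the covering part of \pref{lem:conc_g}, transfers cleanly to each $\cF|_a$. This works because the martingale $\En_t\brk*{M_t(f, a)} - M_t(f, a)$ depends on $f$ only through $f(\cdot, a)$, and the $\ind\crl*{a_t = a}$ factor plays exactly the same structural role in $M_t(f, a)$ that it did for $M_t(g, a)$ in the product-class proof. Once this observation is made, the remaining steps---rescaling $\delta$ by $1/K$ to accommodate the union bound over actions, summing the per-action inequalities, and taking the minimum with the finite-class bound---are essentially bookkeeping.
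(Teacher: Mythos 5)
Your proposal is correct and follows essentially the same route as the paper: Freedman plus a union bound over $f\in\cF$ and $(\tau_1,\tau_2)$ for the finite term, and an application of \pref{lem:conc_g} to each restricted class $\cF|_a$ combined with the decomposition $M_t(f)=\sum_{a\in\cA}M_t(f(\cdot,a),a)$ and a union bound over actions for the covering term. The only step you leave implicit is that aggregating the per-action infima into a single infimum (giving $100\veps KT$ and $\sum_a\log(\cdot)$) uses $\sum_a\inf_\veps(\cdot)\le\inf_\veps\sum_a(\cdot)$, which the paper calls superadditivity of the infimum; this is immediate.
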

\begin{proof}[\pfref{lem:disagreement_conc_F}]
  We first prove the inequality in the finite class case. For any fixed $f\in\cF$, and $\tau_1\leq\tau_2\in\brk*{T}$,
 $Z_t \ldef \En_{t}\brk*{M_{t}(f)} - M_{t}(f)$ forms a martingale different sequence with $|Z_t| \leq 1$. Applying \pref{lem:freedman} and \pref{lem:expectation_and_variance_of_M_t(f)} we have with probability $1 - \delta$,
\[
\sum_{t=\tau_1}^{\tau_2} (\En_{t}\brk*{M_{t}(f)} - M_{t}(f)) \leq{} 4\eta(e-2)\sum_{t=\tau_1}^{\tau_2}\En_{t}\brk*{M_{t}(f)} + \frac{1}{\eta}\log\prn*{\frac{1}{\delta}}, 
\]
which implies after setting $\eta=1/8$ and rearranging.
\[
  \sum_{t=\tau_1}^{\tau_2}\En_{t}\brk*{M_{t}(f)} \leq{} 2\sum_{t=\tau_1}^{\tau_2}M_{t}(f) + 
  16\log\left(\frac{1}{\delta}\right)
\]
We apply a union bound over all $f\in\cF$ and $\tau_1\leq{}\tau_2\in\brk*{T}$ to get the result.

To handle the infinite class case we invoke \pref{lem:conc_g}. In particular, for any fixed $a$, the lemma with $\cG'=\cF|_{a}$ implies that with probability at least $1-\delta$,
\[
    \sum_{t=\tau_1}^{\tau_2}\En_{t}\brk*{M_{t}(f(\cdot,a),a)} \leq{} 2\sum_{t=\tau_1}^{\tau_2}M_{t}(f(\cdot,a),a) +
    \inf_{\veps>0}\crl*{100\veps{}T + 320\log\prn*{\frac{4\En_{x_{1:T}}\cN_{1}(\cF|_{a}, \veps, x_{1:T})T^{2}\log(T)}{\delta}}}
\]
for all $f\in\cF$ and $\tau_1\leq{}\tau_2$. Observe that $M_{t}(f)=\sum_{a\in\cA}M_{t}(f(\cdot,a),a)$. Taking a union bound and then summing over all actions, the preceding statement therefore implies that with probability at least $1-\delta$,
\[
    \sum_{t=\tau_1}^{\tau_2}\En_{t}\brk*{M_{t}(f)} \leq{} 2\sum_{t=\tau_1}^{\tau_2}M_{t}(f) +
    \sum_{a\in\cA}\inf_{\veps>0}\crl*{100\veps{}T + 320\log\prn*{\frac{4\En_{x_{1:T}}\cN_{1}(\cF|_{a}, \veps, x_{1:T})KT^{2}\log(T)}{\delta}}}
\]
for all $f\in\cF$ and $\tau_1\leq{}\tau_2$.  The final result follows from superadditivity of the infimum.
\end{proof}

\begin{lemma}
\label{lem:disagreement_containment_F}
Conditioned on the event of \pref{lem:disagreement_conc_F}, it holds that
\begin{enumerate}
\item $f^{\star} \in\hcF_m\left(\frac{\concF}{2(\tau_{m}-1)}\right)$ for all $m\in\brk{M}$.

\item For all $\beta\geq{}0$ and $m\in\brk{M}$,
\[
\hcF_m(\beta) \subseteq{}\tcF_m\left(2\beta + \frac{\concF}{\tau_m-1}\right).
\]
Consequently, we have $\En_{\tau_{m-1}}\brk*{M_{\tau_{m-1}}(f)} \leq \frac{2\beta(\tau_m-1) + \concF}{T_{m-1}}$ for any $f \in \hcF_m(\beta)$.

\item For all $\beta\geq{}0$, $m\in\brk{M}$, and $k\in\brk{m}$,
\[
\hcF_m(\beta)\subseteq\hcF_{k}\left(\frac{\tau_m - 1}{\tau_k - 1}\beta + \frac{\concF}{\tau_k-1}\right).
\]

\item With $\beta_m = \frac{(M-m+1)\concG}{\tau_m - 1}$, we have for any $m\in [M]$, $f^\star \in \cF_m$
and also $\cF_m \subseteq{} \cF_{m-1} \subseteq{} \cdots \subseteq{} \cF_1$.
\end{enumerate}
\end{lemma}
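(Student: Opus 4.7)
The plan is to closely mirror the proof of \pref{lem:disagreement_containment} for the product-class case, with $\hcG_m(\beta,a)$ replaced by $\hcF_m(\beta)$, $M_t(g,a)$ by $M_t(f)$, $\hcR_m(\cdot,a)$ by $\hR_m(\cdot)$, and \pref{corr:disagreement_conc} by \pref{lem:disagreement_conc_F}. The four claims then translate essentially mechanically, except for the ``consequently'' half of claim~2, which has no analog in the product-class version and which I will treat separately.

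For claim~1, I would apply \pref{lem:disagreement_conc_F} to an arbitrary $f\in\cF$ with $\tau_1=1$ and $\tau_2=\tau_m-1$; since \pref{lem:expectation_and_variance_of_M_t(f)} gives $\En_t\brk*{M_t(f)}\geq 0$, the concentration bound implies $\sum_{t=1}^{\tau_m-1} M_t(f)\geq-\concF/2$, which rearranges to $\hR_m(f^\star)-\hR_m(f)\leq\frac{\concF}{2(\tau_m-1)}$; minimizing the right-hand side over $f$ yields the claim. For the first half of claim~2, I would start from $\hR_m(f)-\hR_m(f^\star)\leq\hR_m(f)-\min_{g}\hR_m(g)\leq\beta$, rewrite this as $\sum_{t=1}^{\tau_m-1} M_t(f)\leq(\tau_m-1)\beta$, and invoke \pref{lem:disagreement_conc_F} once more to get $\sum_{t=1}^{\tau_m-1}\En_t\brk*{M_t(f)}\leq 2(\tau_m-1)\beta+\concF$. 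The ``consequently'' bound is the only step I anticipate requiring real care: the crucial observation is that throughout epoch $m-1$ both \pref{alg:regression_ucb_elim} and \pref{alg:regression_ucb_optimistic} use an action-selection rule that depends only on $x_t$ and the frozen version space $\cF_{m-1}$, which is $\cJ_{\tau_{m-1}}$-measurable. Hence $\En_t\brk*{M_t(f)}$ is constant across $t\in\{\tau_{m-1},\ldots,\tau_m-1\}$ and equals $\En_{\tau_{m-1}}\brk*{M_{\tau_{m-1}}(f)}$; dropping the non-negative contributions from earlier epochs and summing then gives $T_{m-1}\En_{\tau_{m-1}}\brk*{M_{\tau_{m-1}}(f)}\leq 2(\tau_m-1)\beta+\concF$, which is the claimed single-round bound.

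Claim~3 I would reduce to the previous arguments by writing $\sum_{t=1}^{\tau_k-1} M_t(f)=\sum_{t=1}^{\tau_m-1} M_t(f)-\sum_{t=\tau_k}^{\tau_m-1} M_t(f)$, upper bounding the first term by $(\tau_m-1)\beta$ as in claim~2 and lower bounding the second by $-\concF/2$ via a fresh application of \pref{lem:disagreement_conc_F} on $[\tau_k,\tau_m-1]$ combined with non-negativity of $\En_t\brk*{M_t(f)}$; adding the $\concF/(2(\tau_k-1))$ slack from claim~1 applied at epoch $k$ then delivers the stated inclusion. Claim~4 is a short inductive arithmetic check of the schedule $\beta_m=(M-m+1)\concF/(\tau_m-1)$: one verifies $\frac{\concF}{2(\tau_m-1)}\leq\beta_m$ and $\frac{\tau_m-1}{\tau_k-1}\beta_m+\frac{\concF}{\tau_k-1}=\frac{(M-m+2)\concF}{\tau_k-1}\leq\beta_k$ for $k<m$, so realizability ($f^\star\in\cF_m$) and nesting ($\cF_m\subseteq\cF_{m-1}$) propagate down the epoch schedule.
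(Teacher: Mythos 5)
Your proposal is correct and follows essentially the same route as the paper: claims 1--3 are the mechanical translation of the product-class argument (nonnegativity of $\En_t\brk*{M_t(f)}$ plus the concentration bound of \pref{lem:disagreement_conc_F}, applied on $[1,\tau_m-1]$ and on $[\tau_k,\tau_m-1]$), claim 4 is the same arithmetic check of the $\beta_m$ schedule, and your handling of the ``consequently'' statement --- constancy of $\En_t\brk*{M_t(f)}$ within epoch $m-1$ because the selection rule is frozen and $\cJ_{\tau_{m-1}}$-measurable, then dropping the earlier nonnegative terms --- is exactly the paper's argument. No gaps.
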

\begin{proof}
The proof of this lemma is essentially the same as that of \pref{lem:disagreement_containment} in \pref{app:disagreement}.
The only new statement is the second statement of the second claim in \pref{lem:disagreement_containment_F}.
This holds because for any $f \in \hcF_m(\beta) \subseteq{}\tcF_m\left(2\beta + \frac{\concF}{\tau_m-1}\right)$,
we have
\[
\sum_{t=\tau_{m-1}}^{\tau_m-1}\En_{t}\brk*{M_{t}(f)}  \leq \sum_{t=1}^{\tau_m-1}\En_{t}\brk*{M_{t}(f)} \leq 2\beta(\tau_m - 1) + \concF,
\]
and also by the epoch schedule of the algorithm $\En_{t}\brk*{M_{t}(f)}$ remains the same for all $t \in \{\tau_{m-1}, \ldots, \tau_m - 1\}$
and thus $T_{m-1} \En_{\tau_{m-1}}\brk*{M_{\tau_{m-1}}(f)} = \sum_{t=\tau_{m-1}}^{\tau_m-1}\En_{t}\brk*{M_{t}(f)}  \leq 2\beta(\tau_m - 1) + \concF$,
proving the statement.
\end{proof}

We are now ready to prove the main theorems, which are again restated with extra results under the Massart condition.

\begin{theorem}[Full version of \pref{thm:moment}]
\label{thm:moment_with_massart}
With $\beta_m = \frac{(M-m+1)\concF}{\tau_m - 1}$ and  $\concF$ as in \pref{lem:disagreement_conc_F},
\pref{alg:regression_ucb_elim} with \textrm{Option II} ensures that with probability at least $1- \delta$,
\[
\reg_T = O\left(\sqrt{T\consL_{2,0}\concF}\log T + \log(1/\delta)\right).
\]
In particular, for finite classes regret is bounded as $\tO\left(\sqrt{T\consL_{2,0}\log\abs*{\cF}}\right)$.

Furthermore, if the Massart noise condition \pref{def:massart} is satisfied with parameter $\gamma$, then \pref{alg:regression_ucb_elim} configured as above with $\delta=1/T$ enjoys an in-expectation regret bound of
\[
\En\brk*{\sum_{t=1}^T r_t(\pi^\star(x_t)) - r_t(a_t)} = O\left(\frac{\consL_{2,0} C'_{1/T} \log^2 T}{\gamma}\right),
\]
which for finite classes is bounded as $\tO\left(\frac{\consL_{2,0} \log\abs*{\cF}}{\gamma}\right)$.
\end{theorem}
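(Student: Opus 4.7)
The plan is to (i)~condition on the high-probability event of \pref{lem:disagreement_conc_F}, (ii)~decompose the instantaneous conditional expected regret at each round into reward widths, (iii)~bound those widths uniformly via the \implicit, and (iv)~sum over epochs, adapting the final step to the Massart setting. On the event of \pref{lem:disagreement_conc_F}, $f^\star\in\cF_m$ for all~$m$ (fourth claim of \pref{lem:disagreement_containment_F}), so $\pi^\star(x_t)\in A_t$ at every round~$t$ in epoch~$m$. Realizability and $a_t\in A_t$ then yield the regret decomposition
\[
\En_t\bigBracks{r_t(\pi^\star(x_t))-r_t(a_t)}\le \Wid_{\cF_m}(x_t,\pi^\star(x_t))+\Wid_{\cF_m}(x_t,a_t)\le 2\max_a\Wid_{\cF_m}(x_t,a),
\]
since $\Low_{\cF_m}\le f^\star\le \High_{\cF_m}$ and $a_t\in A_t$ implies $\Low_{\cF_m}(x_t,\pi^\star(x_t))\le \High_{\cF_m}(x_t,a_t)$.

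The crux is a uniform bound on $\max_a\Wid_{\cF_m}(x_t,a)$. Realizability gives $\Wid_{\cF_m}(x_t,a)\le 2\sup_{f\in\cF_m}\abs{f(x_t,a)-f^\star(x_t,a)}$, and \pref{def:l2norm_ua} at $\lambda=0$ reduces to $(f(x,a)-f^\star(x,a))^2\le \frac{\consL_{2,0}}{K}\En_{x'}\bigBracks{(f(x',\pi^\star(x'))-f^\star(x',\pi^\star(x')))^2}$. For any round $t'$ in epoch $m-1$, since $f^\star\in\cF_{m-1}$ ensures $\pi^\star(x_{t'})\in A_{t'}$, the algorithm plays $\pi^\star(x_{t'})$ with probability at least $1/K$ given $x_{t'}$; hence $K\En_{t'}[M_{t'}(f)]\ge \En_{x'}[(f(x',\pi^\star(x'))-f^\star(x',\pi^\star(x')))^2]$, and combining gives $(f(x,a)-f^\star(x,a))^2\le \consL_{2,0}\En_{t'}[M_{t'}(f)]$ for all $x,a,f$. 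Taking the supremum over $f\in\cF_m$ and applying the second claim of \pref{lem:disagreement_containment_F} (with $\beta=\beta_m$) produces $\sup_{f\in\cF_m}\En_{t'}[M_{t'}(f)]\le (2M-2m+3)\concF/T_{m-1}$, giving the almost-sure bound $\max_a\Wid_{\cF_m}(x_t,a)^2\le 4\consL_{2,0}(2M-2m+3)\concF/T_{m-1}$.

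For the general case, summing $2T_m\max_a\Wid_{\cF_m}(x_t,a)$ over epochs (using $T_m=2T_{m-1}$ and $\sum_m\sqrt{T_m}=O(\sqrt{T})$) yields a conditional expected regret of order $\log T\cdot\sqrt{T\consL_{2,0}\concF}$; a standard Azuma-Hoeffding step converts this to the claimed high-probability empirical bound with the additional $\log(1/\delta)$ term and a union-bound cost of $\delta/2$. For the Massart variant, the margin $\gamma$ together with $a_t\in A_t$ and $f^\star\in \cF_m$ forces $\gamma\le 2\max_a\Wid_{\cF_m}(x_t,a)$ on any mistake round, so $\ind\{a_t\ne\pi^\star(x_t)\}\le (2/\gamma)\max_a\Wid_{\cF_m}(x_t,a)$. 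This enables the square-to-linear reduction $\En_t[r_t(\pi^\star(x_t))-r_t(a_t)]\le (4/\gamma)\En_t[\max_a\Wid_{\cF_m}(x_t,a)^2]$, summing to $O(\consL_{2,0}\concF\log^2 T/\gamma)$ expected regret on the good event, with a negligible contribution from the bad event when $\delta=1/T$.

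The main obstacle is the uniform width bound: linking the \iec (which is defined under uniform action sampling) to the algorithm's biased sampling distribution requires that $\pi^\star(x_{t'})$ be played with nontrivial probability during the previous epoch. This hinges on $f^\star\in\cF_{m-1}$, which is why the concentration event of \pref{lem:disagreement_conc_F} is essential throughout. The remaining ingredients---the width decomposition, the doubling-schedule arithmetic, and the square-to-linear reduction under Massart---are then routine once this pointwise inequality is in hand.
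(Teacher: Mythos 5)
Your proof is correct, and its core is the same as the paper's: condition on the event of \pref{lem:disagreement_conc_F}, use the \iec at $\lambda=0$ together with the fact that $f^\star\in\cF_{m-1}$ forces $\pi^\star(x)\in A_{t'}$ (hence is played with probability at least $1/K$) during epoch $m-1$, convert this into a uniform bound on $\sup_{f\in\cF_m}(f(x,a)-f^\star(x,a))^2$ via the second claim of \pref{lem:disagreement_containment_F}, and sum over the doubling epochs. The only real divergence is the outer regret decomposition. The paper uses a single $\eta$-split, $\En_t[\Delta_t]\le \eta P_\eta + \tfrac{1}{\eta}\En_t[\Delta_t^2]$, bounds the squared term by widths, and then recovers both regimes by optimizing $\eta$ (general case) or setting $\eta=\gamma$ so that $P_\eta=0$ (Massart). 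You instead handle the two regimes separately: a first-moment width bound $\En_t[\Delta_t]\le 2\max_a\Wid_{\cF_m}(x_t,a)$ for the general case, and the margin-indicator device $\ind\{a_t\ne\pi^\star(x_t)\}\le (2/\gamma)\max_a\Wid_{\cF_m}(x_t,a)$ to square the width under Massart. Both routes are sound and yield matching rates (your general-case sum actually comes out as $\sqrt{M}\cdot\sqrt{T\consL_{2,0}\concF M}$-type, i.e.\ no worse than the stated $\sqrt{T\consL_{2,0}\concF}\log T$); the paper's $\eta$-split buys a single unified computation, while yours makes the width-shrinkage mechanism slightly more transparent at the cost of two separate arguments.
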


\begin{proof}
Similar to the proof of \pref{thm:disagreement}, we condition on the events of \pref{lem:disagreement_conc_F}, which happen with probability at least $1-\delta/2$.

With $m$ denoting the epoch to which round $t$ belongs and $P_{\eta}=\Pr_{x}\prn*{ f^{\star}(x, \pi^{\star}(x)) - \max_{a\neq{}\pi^{\star}(x)}f^{\star}(x, a) < \eta}$
for any $\eta > 0$, we have
\begin{align*}
&\En_t\brk*{f^\star(x_t, \pi^\star(x_t)) - f^\star(x_t, a_t)} \\
&\leq \eta P_\eta + \frac{1}{\eta}\En_t\brk*{(f^\star(x_t, \pi^\star(x_t)) - f^\star(x_t, a_t))^2} \\
&\leq \eta P_\eta + \frac{1}{\eta}\En_t\brk*{(f^\star(x_t, \pi^\star(x_t))  - \Low_{\cF_m}(x_t, \pi^\star(x_t))
+ \High_{\cF_m}(x_t, a_t) - f^\star(x_t, a_t))^2)^2} \tag{$a_t \in A_t$} \\
&\leq \eta P_\eta + \frac{2}{\eta}\En_t\brk*{(f^\star(x_t, \pi^\star(x_t))  - \Low_{\cF_m}(x_t, \pi^\star(x_t)))^2}
+ \frac{2}{\eta}\En_t\brk*{(\High_{\cF_m}(x_t, a_t) - f^\star(x_t, a_t))^2} \\
&\leq \eta P_\eta + \frac{2}{\eta}\En_t\brk*{\sup_{f\in\cF_m}(f^\star(x_t, \pi^\star(x_t))  - f(x_t, \pi^\star(x_t)))^2}
                                                                                 + \frac{2}{\eta}\En_t\brk*{\sup_{f\in\cF_m}(f(x_t, a_t) - f^\star(x_t, a_t))^2} \\
  &\leq \eta P_\eta + \frac{4}{\eta}\sup_{x\in\cX,a\in\cA}\sup_{f\in\cF_m}\crl*{(f^\star(x, a)  - f(x, a))^2} \\
  &= \eta P_\eta + \frac{4}{\eta}\sup_{f\in\cF_m}\sup_{x\in\cX,a\in\cA}\crl*{(f^\star(x, a)  - f(x, a))^2} \\
&\leq \eta P_\eta + \frac{4\consL_{2,0}}{\eta}\sup_{f\in\cF_m}\En_{x\sim D_\cX}\En_{a\sim \Unif{\cA}}
    \brk*{\one\bigSet{x\in U_0(a)}(f^\star(x, a)  - f(x, a))^2} \tag{by \pref{def:l2norm_ua}} \\
    &\leq \eta P_\eta + \frac{4\consL_{2,0}}{\eta}\sup_{f\in\cF_m}\En_{x\sim D_\cX}\En_{a\sim \Unif{\cA}}
    \brk*{\one\bigSet{a \in A_{\tau_{m-1}}}(f^\star(x, a)  - f(x, a))^2} ,
\end{align*}
where the last step holds because $x \in U_0(a)$ along with the fact $f^\star \in \cF_{m-1}$ implies
\[
\High_{\cF_{m-1}}(x, a) \geq f^\star(x, a) = \max_{a'} f^\star(x, a') \geq \max_{a'} \Low_{\cF_{m-1}}(x, a'),
\]
and thus by definition $a \in A_{\tau_{m-1}}$.
We continue with
\begin{align*}
\En_t\brk*{f^\star(x_t, \pi^\star(x_t)) - f^\star(x_t, a_t)}
&\leq \eta P_\eta + \frac{4\consL_{2,0}}{\eta}\sup_{f\in\cF_m}\En_{x\sim D_\cX}
    \En_{a\sim \Unif{A_{\tau_{m-1}}}}\brk*{(f^\star(x, a)  - f(x, a))^2} \\
&= \eta P_\eta + \frac{4\consL_{2,0}}{\eta}\sup_{f\in\cF_m}\En_{\tau_{m-1}}\brk*{M_{\tau_{m-1}}(f)} \\
&\leq \eta P_\eta + \frac{4\consL_{2,0}}{\eta}\cdot\frac{2\beta_{m}(\tau_m-1) + \concF}{T_{m-1}}
            \tag{by the second claim of \pref{lem:disagreement_containment_F}} \\
&= \eta P_\eta + \frac{4\consL_{2,0}(2M - 2m + 3)\concF}{\eta T_{m-1}}.
\end{align*}
Summing over $t = 1, \ldots, T$, we arrive at
\[
\sum_{t=1}^T \En_t\brk*{f^\star(x_t, \pi^\star(x_t)) - f^\star(x_t, a_t)}
 = \eta TP_\eta + \sum_{m=1}^M T_m \frac{4\consL_{2,0}(2M - 2m + 3)\concF}{\eta T_{m-1}}
 = \eta TP_\eta + \frac{8\consL_{2,0}(M^2+2M)\concF}{\eta}.
\]
Finally in the general case we bound $P_\eta$ by $1$ and pick the optimal $\eta$ to arrive at a conditional expected regret bound of order
$O(\sqrt{T\consL_{2,0}\concF}\log T + \log(1/\delta))$,
while under the Massart condition (\pref{def:massart}) one can pick $\eta = \gamma$ so that $P_\eta = 0$ and obtain
a conditional expected regret bound of order $O\left(\frac{\consL_{2,0} \concF \log^2 T}{\gamma}\right)$.

As in the proof of \pref{thm:disagreement_with_massart}, we relate the sum of conditional expected instantaneous regrets back to the quantities in the theorem statement differently in the general case and the Massart case. In the general case we have
\begin{align*}
\sum_{t=1}^T r_t(\pi^\star(x_t)) - r_t(a_t)
&\leq \sum_{t=1}^T \En_t\brk*{r_t(\pi^\star(x_t)) - r_t(a_t)} + O(\sqrt{T\log(1/\delta)})
\end{align*}
with probability at least $1-\delta/2$.

In the Massart case, the law of total expectation implies
\[
\En\brk*{\sum_{t=1}^T r_t(\pi^\star(x_t)) - r_t(a_t)} \leq{} O\left(\frac{\consL_{2,0} C'_{1/T}\log^2 T}{\gamma}\right) + \frac{1}{T}\cdot{}T.
\]
\end{proof}

\begin{theorem}[Full version of \pref{thm:moment_optimistic}]
\label{thm:moment_optimistic_with_massart}
With $\beta_m = \frac{(M-m+1)\concF}{\tau_m - 1}$, where $\concF$ is as in \pref{lem:disagreement_conc_F}, and $M_0 = 2 + \left\lfloor \log_2\left(1 + \frac{(2M+3)\consL_1\concF}{\lambda^2}\right) \right\rfloor$ for any $\lambda \in (0,1)$,
\pref{alg:regression_ucb_optimistic} ensures that with probability at least $1- \delta$,
\[
\reg_T = O\left(\frac{\consL_1 \concF \log T}{\lambda^2} +  \sqrt{T\consL_{2,\lambda}\concF}\log T \right),
\]
which for finite classes is bounded by $\tO\left(\frac{\consL_1\log\abs*{\cF}}{\lambda^2} +  \sqrt{T\consL_{2,\lambda}\log\abs*{\cF}}\right)$.\\
Furthermore, if the Massart noise condition (\pref{def:massart}) is satisfied with parameter $\gamma$, then \pref{alg:regression_ucb_optimistic} configured as above with $\delta=1/T$ enjoys an expected regret bound of
\[
\En\brk*{\sum_{t=1}^T r_t(\pi^\star(x_t)) - r_t(a_t)} = O\left(\frac{\consL_1 C'_{1/T} \log T}{\lambda^2} + \frac{\consL_{2,\lambda} C'_{1/T} \log^2 T}{\gamma}\ind\{\lambda > \gamma\}\right),
\]
which for finite classes is bounded by $\tO\left(\frac{\consL_1 \log\abs*{\cF}}{\lambda^2} + \frac{\consL_{2,\lambda} \log\abs*{\cF}}{\gamma}\ind\{\lambda > \gamma\}\right)$.
\end{theorem}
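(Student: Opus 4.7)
The plan is to decompose the regret into the warm-start phase ($t<\tau_{M_0}$) and the optimistic phase ($t\ge\tau_{M_0}$), and to analyze each using \pref{lem:disagreement_containment_F} paired with one of the two moment conditions. The warm-start phase contributes at most $\tau_{M_0}-1$ to the regret, which by the choice of $M_0$ is $\tilde O(\consL_1\concF/\lambda^2)$; this matches the first term of the target bound.

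For the optimistic phase, I first use the surprise bound to establish width control. Since warm-start plays actions uniformly, $(\tau_{M_0}-1)\En_{x,a\sim\Unif{\cA}}[(f-f^\star)^2]=\sum_{t<\tau_{M_0}}\En_t[M_t(f)]$, which for any $f\in\cF_m$ with $m\ge M_0$ is bounded by $O(M\concF)$ via \pref{lem:disagreement_containment_F}. Applying \pref{def:infinity_l2} and plugging in the value of $M_0$ gives $\sup_{x,a}|f(x,a)-f^\star(x,a)|\le\lambda/2$ for every $f\in\cF_m$, so $\Wid_{\cF_m}(x,a)\le\lambda$. This guarantees that for every $x$ at which $\pi^\star(x)$ is optimal by margin at least $\lambda$, i.e., $x\in\bigcup_a U_\lambda(a)$, the optimistic action $\argmax_a\High_{\cF_m}(x,a)$ coincides with $\pi^\star(x)$, throughout all epochs $m\ge M_0$.

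Next, for $t$ in epoch $m\ge M_0$, optimism together with $f^\star\in\cF_m$ gives $f^\star(x_t,\pi^\star(x_t))-f^\star(x_t,a_t)\le\Wid_{\cF_m}(x_t,a_t)\le 2\sup_{f\in\cF_m}|f(x_t,a_t)-f^\star(x_t,a_t)|$. Invoking \pref{def:l2norm_ua} and using that the sets $U_\lambda(a)$ are disjoint with $\pi^\star(x)=a$ on $U_\lambda(a)$, the square of this supremum is controlled, up to a $\consL_{2,\lambda}/K$ factor, by $\En_x\bigBracks{\ind\{x\in\bigcup_a U_\lambda(a)\}(f(x,\pi^\star)-f^\star(x,\pi^\star))^2}$. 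Because the optimistic action $a_k(x)$ equals $\pi^\star(x)$ on $\bigcup_a U_\lambda(a)$ for every $k\ge M_0$, this quantity equals $(\tau_m-\tau_{M_0})^{-1}\sum_{\tau_{M_0}\le t'<\tau_m}\En_{t'}[\ind\{x_{t'}\in\bigcup_a U_\lambda(a)\}M_{t'}(f)]$, which is at most $(2M-2m+3)\concF/(\tau_m-\tau_{M_0})$ by \pref{lem:disagreement_containment_F}. Combining with the standard margin decomposition $\En_t[r_t(\pi^\star(x_t))-r_t(a_t)]\le \eta P_\eta+(1/\eta)\En_t[\Wid_{\cF_m}^2(x_t,a_t)]$ from the proof of \pref{thm:moment_with_massart} (with $m=M_0$ handled separately using the trivial bound $\Wid_{\cF_m}\le\lambda$), summing over epochs, and optimizing $\eta$ then yield the general $\sqrt{T\consL_{2,\lambda}\concF}$ term. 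The Massart case (take $\eta=\gamma$ so that $P_\eta=0$) and the conversion from expected to high-probability regret follow the templates of \pref{thm:disagreement_with_massart} and \pref{thm:moment_with_massart}.

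The main obstacle is the chicken-and-egg between width control and applying $\consL_{2,\lambda}$: invoking \pref{def:l2norm_ua} in an actionable way requires the optimistic data to look like samples at $(x,\pi^\star(x))$ on $\bigcup_a U_\lambda(a)$, which needs the algorithm to play $\pi^\star$ there, which in turn needs widths already at most $\lambda$. Warm-start and the $\consL_1$ bound break this deadlock by giving the required width bound at epoch $M_0$, and the nested version spaces $\cF_m\subseteq\cF_{M_0}$ propagate the bound to all later epochs. A secondary detail is that the first optimistic epoch has $\tau_m-\tau_{M_0}=0$, so its contribution must instead be handled by the trivial per-step bound $\Wid_{\cF_m}\le\lambda$, which is absorbed into the warm-start term.
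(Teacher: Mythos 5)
Your proposal is correct and follows essentially the same route as the paper's proof: warm start plus the surprise bound $\consL_1$ to get uniform width control at epoch $M_0$, hence the optimistic action equals $\pi^\star$ on $\bigcup_a U_\lambda(a)$; then the \iec{} $\consL_{2,\lambda}$ plus \pref{lem:disagreement_containment_F} to convert that into a per-epoch error bound, summed with the usual margin decomposition and Massart specialization. The only (immaterial) differences are cosmetic --- you average the on-policy error over the whole optimistic history rather than just the preceding epoch, and your intermediate constant should be $\lambda/\sqrt{2}$ rather than $\lambda/2$ with the stated $M_0$, which still yields the needed strict inequality $\High_{\cF_m}(x,a') < f^\star(x,a') + \lambda$.
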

\begin{proof}

We condition on the same events of \pref{lem:disagreement_conc_F}, which hold with probability at least $1-\delta/2$.
By the second claim of \pref{lem:disagreement_containment_F}, we have for any $f \in \cF_{M_0}$,
\[
\sum_{t=1}^{\tau_{M_0}-1}\En_{t}\brk*{M_{t}(f)} \leq 2\beta_{M_0}(\tau_{M_0} - 1) + \concF = 2\frac{(M-M_0+1)\concF}{\tau_{M_0}-1}(\tau_{M_0} - 1) + \concF \leq (2M+3)\concF.
\]
Since \pref{alg:regression_ucb_optimistic} performs pure exploration for any $t$ before epoch $M_0$, we conclude that
\[
\En_{t}\brk*{M_{t}(f)} = \En_{x\sim D}\En_{a\sim\Unif{\cA}}(f(x,a)-f^\star(x,a))^2 \leq \frac{(2M+3)\concF}{\tau_{M_0}-1},
\]
and therefore together with \pref{def:infinity_l2}, we have for any $f \in \cF_{M_0}$, $x\in \cX$, and $a\in\cA$,
\begin{equation}
\label{eq:bounded_width}
(f(x,a)-f^\star(x,a))^2 \leq \consL_1\En_{x'\sim D_\cX} \En_{a'\sim\Unif{\cA}}(f(x',a')-f^\star(x',a'))^2 \leq \frac{(2M+3)\consL_1\concF}{\tau_{M_0}-1}
< \lambda^2,
\end{equation}
where the last step holds by the choice of $M_0$.
Next we claim that for any $t \geq \tau_{M_0}$, if $x_t \in U_{\lambda}(a)$ for some $a$, then it must be the case $a_t = a = \pi^\star(x_t)$.
To begin, we have that $a = \pi^\star(x_t)$, which is by the definition of $U_{\lambda}(a)$. Moreover,
with $m$ being the epoch to which $t$ belongs and $a' = \argmax_{a\neq \pi^\star(x_t)}\High_{\cF_m}(x_t, a)$, we have
\begin{align*}
&\High_{\cF_m}(x_t, a) - \High_{\cF_m}(x_t, a') \\
&= \underbrace{f^\star(x_t, a) - f^\star(x_t, a')}_{\geq{}\lambda} + \underbrace{\High_{\cF_m}(x_t, a) - f^\star(x_t, a)}_{\geq{}0} + \underbrace{f^\star(x_t, a')  - \High_{\cF_m}(x_t, a')}_{>-\lambda} \\
&> \lambda  + 0 - \lambda = 0,
\end{align*}
where the inequality is by $x_t \in U_{\lambda}(a)$, $f^\star \in \High_{\cF_m}$, and Eq.~\pref{eq:bounded_width}.
By the optimistic strategy of \pref{alg:regression_ucb_optimistic}, this implies $a_t = a$.

Finally we proceed exactly the same as the proof of \pref{thm:moment} to arrive at for any $\eta > 0$, $\lambda \in (0,1)$, $m > M_0$, and $t$ in epoch $m$,
\begin{align*}
\En_t\brk*{f^\star(x_t, \pi^\star(x_t)) - f^\star(x_t, a_t)}
&\leq \eta P_\eta + \frac{4\consL_{2,\lambda}}{\eta}\sup_{f\in\cF_m}\En_{x\sim D_\cX}\En_{a\sim \Unif{\cA}}
    \brk*{\one\bigSet{x \in U_{\lambda}(a)}(f^\star(x, a)  - f(x, a))^2}.
\end{align*}
With the fact established above, since $\tau_{m-1} \geq \tau_{M_0}$ we continue with
\begin{align*}
\En_t\brk*{f^\star(x_t, \pi^\star(x_t)) - f^\star(x_t, a_t)}
&\leq \eta P_\eta + \frac{4\consL_{2,\lambda}}{K\eta}\sup_{f\in\cF_m}\En_{x_{\tau_{m-1}}}
    \brk*{(f^\star(x_{\tau_{m-1}}, a_{\tau_{m-1}})  - f(x_{\tau_{m-1}}, a_{\tau_{m-1}}))^2} \\
&= \eta P_\eta + \frac{4\consL_{2,\lambda}}{\eta}\sup_{f\in\cF_m}\En_{\tau_{m-1}}\brk*{M_{\tau_{m-1}}(f)} \\
&\leq \eta P_\eta + \frac{4\consL_{2,\lambda}}{\eta}\cdot\frac{2\beta_{m}(\tau_m-1) + \concF}{T_{m-1}}
            \tag{by the second claim of \pref{lem:disagreement_containment_F}} \\
&= \eta P_\eta + \frac{4\consL_{2,\lambda}(2M - 2m + 3)\concF}{\eta T_{m-1}}.
\end{align*}
Therefore, the regret bound is
\[
\reg_T \leq \tau_{M_0+1} + \eta T P_\eta + \sum_{m = M_0+1}^M T_m \frac{4\consL_{2,\lambda}(2M - 2m + 3)\concF}{\eta T_{m-1}}
\leq O\left(\frac{\consL_1 \concF \log T}{\lambda^2}\right) + \eta TP_\eta + \frac{8\consL_{2,\lambda}(M^2+2M)\concF}{\eta}.
\]
Again in general we bound $P_\eta$ by $1$ and pick the optimal $\eta$ to arrive at
\[
\sum_{t=1}^T \En_{t}\brk*{r_t(\pi^\star(x_t)) - r_t(a_t)} = O\left(\frac{\consL_1 \concF \log T}{\lambda^2} +  \sqrt{T\consL_{2,\lambda}\concF}\log T \right),
\]
while under the Massart condition we pick $\eta = \gamma$ so that $P_\eta = 0$ and
\[
\sum_{t=1}^T \En_{t}\brk*{r_t(\pi^\star(x_t)) - r_t(a_t)} = O\left(\frac{\consL_1 \concF \log T}{\lambda^2} + \frac{\consL_{2,\lambda} \concF \log^2 T}{\gamma}\right).
\]
Specifically, if we choose $\lambda \leq \gamma$, then every $x_t$ is in $U_{\lambda}(\pi^\star(x_t))$
and thus $a_t = \pi^\star(x_t)$ for all $t \geq \tau_{M_0}$ and the algorithm suffers no regret at all after the warm start,
that is, $\reg_T = O\left(\frac{\consL_1 \concF \log T}{\lambda^2}\right)$.

To conclude we proceed as in the proof of \pref{thm:moment_with_massart}: In the general case we have
\begin{align*}
\sum_{t=1}^T r_t(\pi^\star(x_t)) - r_t(a_t)
&\leq \sum_{t=1}^T \En_t\brk*{r_t(\pi^\star(x_t)) - r_t(a_t)} + O(\sqrt{T\log(1/\delta)})
\end{align*}
with probability at least $1-\delta/2$ by Azuma-Hoeffding.

In the Massart case, the law of total expectation implies
\[
\En\brk*{\sum_{t=1}^T r_t(\pi^\star(x_t)) - r_t(a_t)} \leq{} O\left(\frac{\consL_1 C'_{1/T} \log T}{\lambda^2} + \frac{\consL_{2,\lambda} C'_{1/n} \log^2 T}{\gamma}\right) + \frac{1}{T}\cdot{}T.
\]

\end{proof}

\begin{proof}[\pfref{prop:linear_ex}]
For this proof we will adopt the shorthand $\ind\crl*{U_{\lambda}(a)}\ldef\ind\crl*{x\in{}U_{\lambda}(a)}$.

We first consider the $\ls_2$ case. In this case (using $w\in{}\bbR^d$ as a stand-in for $f-f^{\star}$ and $\cW^{\star}\ldef{}\cW-w^{\star}\setminus{}\crl*{0}$) it is sufficient to take
\begin{align*}
  L_{2,\lambda} &\leq{} \sup_{w\in\cW^{\star}}\frac{\sup_{x\in\cX,a}\tri*{w,\phi(x,a)}^{2}}{\frac{1}{K}\sum_{a\in\cA}\En_{x}(\tri*{w,\phi(x,a)}\ind\crl*{U_{\lambda}(a)})^{2}} \\
                &\leq{} \sup_{w\in\cW^{\star}}\frac{\nrm*{w}_{2}^{2}}{\frac{1}{K}\sum_{a\in\cA}\En_{x}(\tri*{w,\phi(x,a)}\ind\crl*{U_{\lambda}(a)})^{2}} \\
                &\leq{} \sup_{w\in\cW^{\star}}\frac{\nrm*{w}_{2}^{2}}{\nrm*{w}_{2}^{2}\lambda_{\textrm{min}}\prn*{\frac{1}{K}\sum_{a\in\cA}\En_{x}\phi(x,a)\phi(x,a)^{\top}\ind\crl*{U_{\lambda}(a)}}}\\
                &= \frac{1}{\lambda_{\textrm{min}}\prn*{\frac{1}{K}\sum_{a\in\cA}\En_{x}\phi(x,a)\phi(x,a)^{\top}\ind\crl*{U_{\lambda}(a)}}}.
\end{align*}
In the sparse high-dimensional setting we have
\begin{align*}
  L_{2,\lambda} &\leq{} \sup_{w\in\cW^{\star}}\frac{\sup_{x\in\cX,a}\tri*{w,\phi(x,a)}^{2}}{\frac{1}{K}\sum_{a\in\cA}\En_{x}(\tri*{w,\phi(x,a)}\ind\crl*{U_{\lambda}(a)})^{2}} \\
                &\leq{} \sup_{w\in\cW^{\star}}\frac{2s\nrm*{w}_{2}^{2}}{\frac{1}{K}\sum_{a\in\cA}\En_{x}(\tri*{w,\phi(x,a)}\ind\crl*{U_{\lambda}(a)})^{2}} \\
                &\leq{} \sup_{w\in\cW^{\star}}\frac{2s\nrm*{w}_{2}^{2}}{\nrm*{w}_{2}^{2}\psi_{\textrm{min}}\prn*{\frac{1}{K}\sum_{a\in\cA}\En_{x}\phi(x,a)\phi(x,a)^{\top}\ind\crl*{U_{\lambda}(a)}}} \\
                &= \frac{2s}{\psi_{\textrm{min}}\prn*{\frac{1}{K}\sum_{a\in\cA}\En_{x}\phi(x,a)\phi(x,a)^{\top}\ind\crl*{U_{\lambda}(a)}}}.
\end{align*}

As remarked in the main body, in general it holds that $L_{1}\leq{}L_{2,0}$. Nonetheless, it is also possible to directly bound $L_{1}$ using similar reasoning to the proof above:
\[
L_{1} \leq{} \frac{1}{\lambda_{\textrm{min}}\prn*{\frac{1}{K}\sum_{a\in\cA}\En_{x}\phi(x,a)\phi(x,a)^{\top}}}
\]
for the $\ls_2$ example
and
\[
L_{1} \leq{} \frac{2s}{\psi_{\textrm{min}}\prn*{\frac{1}{K}\sum_{a\in\cA}\En_{x}\phi(x,a)\phi(x,a)^{\top}}},
\]
for the sparsity example.
\end{proof}

%%% Local Variables:
%%% mode: latex
%%% TeX-master: "paper"
%%% End:

\section{Experimental Details}
\label{app:experiments}
\subsection{Datasets}
\label{app:datasets}

We evaluated on datasets for learning-to-rank and for multiclass classification.

The learning-to-rank datasets, which were previously used for evaluating contextual semibandits in \cite{krishnamurthy2016contextual}, are as follows:
\begin{itemize}
\item
Microsoft Learning to Rank \citep{qin2010mslr}. We use the \texttt{MSLR-WEB30K} variant available at \url{https://www.microsoft.com/en-us/research/project/mslr/}. This dataset has $T=31278$, $d=136$. We limit the choices to $K=10$ documents (actions) per query. The MSLR repository comes partitioned into five segments, each with $T=31278$ queries and a varying number of documents. We use the first three segments for the documents in our training dataset and use documents from the fourth segment for validation.
\item
Yahoo! Learning to Rank Challenge V2.0 \cite{chapelle2011yahoo} (variant C14B at \url{https://webscope.sandbox.yahoo.com/catalog.php?datatype=c}). The dataset has $T=33850$, $d=415$, and $K=6$. We hold out 7000 examples for test.
\end{itemize}
Each learning-to-rank dataset contains over $30,000$ queries, with the number of documents varying.
In both datasets feedback each document is labeled with relevance score in $\crl*{0,\ldots, 4}$. We transform this to a contextual bandit problem by presenting $K$ documents as actions and their relevance scores as the rewards, so that the goal of the learner is to choose the document with the highest relevance each time it is presented with a query.

The multiclass classification datasets are taken from the UCI repository \citep{lichman2013uci} summarized in \pref{tab:uci}. This collection was previously used for evaluating contextual bandit learning in \cite{dudik2011doubly}. Each context is labeled with the index in $\brk{K}$ of the class to which the context belongs, and the reward for selecting a class is $1$ if correct, $0$ otherwise.

\paragraph{Validation}
Validation is performed by simulating the algorithm's predictions on examples from a holdout set without allowing the algorithm to incorporate these examples. The validation error at round $t$ therefore approaches the instantaneous expected reward $\En_{x_t,a_t}\brk*{f^{\star}(x_t, a_t)}$ at a rate determined by uniform convergence for the class $\cF$. We also plot the validation reward of a ``supervised'' baseline obtained by training the oracle (either \textsf{Linear} or \textsf{GB5}) on the entire training set at once (including rewards for all actions).

\paragraph{Noisy dataset variants}

For all of the multiclass datasets we also create an alternate version with real-valued costs by constructing a reward matrix $R_{t}\in\bbR^{K\times{}K}$ and returning $R_{t}(a,a^{\star})$ as the reward for selecting action $a$ when $a^{\star}$ is the correct label at time $t$. $R_t$ is constructed as a (possibly asymmetric) matrix with all ones on the diagonal ($R_{t}(a,a)=1$) and random values in the range $\brk*{0,1}$ for each off-diagonal entry. The off diagonal elements are generated through the following process: 1) For each off-diagonal pair $(a,a')$ draw a ``mean'' $\mu(a,a')\in\brk*{0,1}$ uniformly at random. This value of $\mu$ is held constant across all timesteps and all repetitions. 2) At time $t$, sample $R_{t}(a,a')$ as a Bernoulli random variable with bias $\mu(a,a')$. The reward matrices that were sampled are included in \pref{sec:reward} for reference.

\begin{table}[h!]
\caption{UCI datasets (before validation split).}
\label{tab:uci}
\begin{center}
\begin{tabular}{ | c | c | c | c |}
 \hline Dataset & $n$ & $d$ & $K$ \\
 \hline
letter & 20000 & 17 & 26 \\
\hline
optdigits & 5620 & 65 & 10 \\
  \hline
adult & 45222 & 105 & 2\\
\hline
page-blocks & 5473 & 11 & 5 \\
\hline
pendigits & 10992 & 17 & 10 \\
\hline
satimage & 6430 & 37 & 6\\
\hline
vehicle & 846 & 19 & 4\\
\hline
yeast  &1479 & 9 & 9  \\
\hline \end{tabular}
\end{center}
\end{table}

\subsection{Benchmark algorithms}
We compared with the following benchmark algorithms:
\begin{itemize}
\item \textsf{$\eps$-Greedy} \citep{langford2008epoch}. Policy is updated on a doubling schedule: Every $2^{i/2}$ rounds. We use an exploration probability of $\max\crl*{1/\sqrt{t}, \eps}$ at time $t$, then tune $\eps$ as described in the main paper.
\item \textsf{ILOVETOCONBANDITS} \citep{agarwal2014taming}: Updated every $2^{i/2}$ rounds. We tune the constant in front of the parameter $\mu_{m}$ described in Algorithm 1 in \cite{agarwal2014taming}.

\item \textsf{Bootstrap} \cite{dimakopoulou2017estimation}: At each epoch, the algorithm draws $N$ bootstrap replicates of the dataset so far, then fits a predictor in $\cF$ to each replicate, giving a collection of predictors $(f_{i})_{i\in\brk*{N}}$. To predict on a new context $x$ we compute the mean and variance of the predictions $f_{i}(x,\cdot)$, then pick the action that maximizes the upper confidence bound for a Normal distribution with this mean and variance. We tune the confidence $\beta$ on these predictions, so that the algorithm picks the action maximizing $\textrm{Mean}(a) + \sqrt{\beta\textrm{Var}(a)}$.
\item As discussed in the main body, we tune the parameter $\beta=\beta_m$ for both \rucb{} variants.

\end{itemize}

For each algorithm we tried 8 different values of the relevant parameter coming from a logarithmically spaced grid ranging from $10^{2}$ to $10^{-8}$ for the confidence interval-based algorithms (\rucb{} and \bootstrap{}) and range $10^{-1}$ to $10^{-8}$ for \egreedy{} and \iltcb{}.

Each algorithm must be supplied with a model class $\cF$ and an optimization oracle for this class. Both the model class $\cF$ and the oracle implementation are hyperparameters. How to choose the oracle once the class $\cF$ is been fixed is discussed below.

\subsection{Oracle implementation}
All of the oracle-based algorithms require optimization oracles, for either predictor classes or policy classes. We consider the following three types of basic oracles.

\begin{enumerate}
\item Weighted regression onto single action
\[
\argmin_{f\in\cF}\sum_{t=1}^{T}w_{t}(f(x_t, a_t) - r_t(a_t))^{2}.
\]
\item Weighted regression onto all actions
\[
\argmin_{f\in\cF}\sum_{t=1}^{T}\sum_{a\in\cA}w_{t,a}(f(x_t, a) - r_t(a))^{2}.
\]
\item Weighted multiclass classification
\[
\argmin_{f\in\cF}\sum_{t=1}^{T}w_{t}\ind\crl*{\pi_{f}(x_t) \neq{} a_t}.
\]
\end{enumerate}

\paragraph{Oracles for importance-weighted observations} One of the most common datasets one needs to optimize over to implement oracle-based contextual bandit algorithms is an importance weighted history of interactions. That is, $H_{T}=\crl*{(x_t, a_t, r_t(a_t), p_t(a_t))}_{t=1}^{T}$, where $x_{t}$ and $r_{t}(a_t)$ are the unmodified context and reward provided by nature, $a_t$ is the action selected by a randomized contextual bandit algorithm, and $p_{t}(a_t)$ is the (positive) probability that $a_t$ was selected. The core optimization problem that must be solved for such a dataset (e.g., in $\eps$-Greedy) is
\begin{equation}\label{eq:argmax_importance_weighted}
\argmax_{f\in\cF}\sum_{t=1}^{T}\frac{r_{t}(a_t)}{p_{t}(a_t)}\ind\crl*{\pi_{f}(x_t)=a_t}.
\end{equation} This problem most naturally reduces to weighted multiclass classification, but under the realizability assumption in \pref{ass:realizable} it can also be reduced to regression in a number of principled ways. The full list of possible reductions we consider is as follows:
\begin{itemize}
\item Unweighted regression
\begin{equation}
\label{eq:reduction_a}
\tag{A}
\argmin_{f\in\cF}\sum_{t=1}^{T}(f(x_t, a_t) - r_t(a_t))^{2}.
\end{equation}
\item Importance-weighted regression
\begin{equation}
\label{eq:reduction_b}
\tag{B}
\argmin_{f\in\cF}\sum_{t=1}^{T}\frac{1}{p_{t}(a_t)}(f(x_t, a_t) - r_t(a_t))^{2}.
\end{equation}
\item Regression with importance weighted targets
\begin{equation}
\label{eq:reduction_c}
\tag{C}
\argmin_{f\in\cF}\sum_{t=1}^{T}(f(x_t, a_t) - r_t(a_t)/p_{t}(a_t))^{2} + \sum_{a\neq{}a_t}(f(x_t,a))^{2}.
\end{equation}
\item Importance-weighted multiclass
\begin{equation}
\label{eq:reduction_d}
\tag{D}
\argmin_{\pi\in\Pi}\sum_{t=1}^{T}\frac{r_{t}(a_t)}{p_t(a_t)}\ind\crl*{\pi(x_t)\neq{}a_t}.
\end{equation}
Note that in this case the policy class $\Pi$ is not necessarily induced by a predictor class $\cF$, though when it is it may be possible to further reduce this optimization problem to one of the first three problems.
\end{itemize}

The minimizer of \pref{eq:reduction_d} corresponds to the maximizer of \pref{eq:argmax_importance_weighted}. Reductions \pref{eq:reduction_a}, \pref{eq:reduction_b}, and \pref{eq:reduction_c} all have the property that if the conditional expectation version of the loss (e.g. $\sum_{t=1}^{T}\En_{(x_t,r_t)\sim{}\cD}\En_{a_t\mid{}x_t, H_{t-1}}\brk*{(f(x_t, a_t) - r_t(a_t))^{2}}$ for \pref{eq:reduction_a}) is used, then the Bayes predictor $f^{\star}(x,a)=\En\brk*{r(a)\mid{}x}$ is the minimizer when $f^{\star}\in\cF$, which (via uniform convergence) justifies the use of the empirical versions.

\paragraph{Oracle choices for benchmark algorithms}
Depending on the needs of each benchmark algorithm, \pref{eq:reduction_a}, \pref{eq:reduction_b}, \pref{eq:reduction_c}, or \pref{eq:reduction_d} as well as other oracles may be possible to use or required. We discuss the choices for each benchmark
\begin{itemize}
\item \egreedy{}: This strategy only needs to solve an importance weighted argmax of the form \pref{eq:argmax_importance_weighted}, so all of \pref{eq:reduction_a}, \pref{eq:reduction_b}, \pref{eq:reduction_c}, and \pref{eq:reduction_d} can be used under realizability. Note that since actions are sampled uniformly in the non-greedy rounds, \pref{eq:reduction_a} and \pref{eq:reduction_b} are equivalent under this strategy. In experiments we use \pref{eq:reduction_b}.
\item \bootstrap{}: Like \rucb{}, this strategy is tailored to the realizable regression setting, so \pref{eq:reduction_a} suffices. While \pref{eq:reduction_b} and \pref{eq:reduction_c} could also be used, we expect them to have higher variance.
\item \minimonster{}: This algorithm requires two different oracles. First, it requires the optimization problem \pref{eq:argmax_importance_weighted} to be solved on the unmodified reward/context sequence. Second, it requires a bonafide \emph{cost-sensitive classification} optimization oracle of the form
\[
\argmax_{f\in\cF}\sum_{t=1}^{T}r_{t}(\pi_f(x_t))
\]
 for an artificial sequence of rewards which may not be realizable even when the rewards given by nature are. As in \egreedy{}, the first oracle can use \pref{eq:reduction_a}, \pref{eq:reduction_b}, \pref{eq:reduction_c}, and \pref{eq:reduction_d}. The second oracle is more complicated. Cost-sensitive classification is typically not implemented directly and instead is reduced to either weighted multiclass \pref{eq:reduction_d} or multi-output regression, for which \pref{eq:reduction_c} is a special case. Note that \pref{eq:reduction_d} can further be reduced to \pref{eq:reduction_a}, \pref{eq:reduction_b}, \pref{eq:reduction_c}, but because we do not expect realizability to hold it is more natural to use the direct reduction to \pref{eq:reduction_c} in this case. In experiments we used \pref{eq:reduction_b} for empirical regret minimizer and \pref{eq:reduction_c} for the cost-sensitive classifier to solve the optimization problem OP in \citet{agarwal2014taming}.
\end{itemize}

\paragraph{Label-dependent features} For different datasets we consider different instantiations of the general predictor class setup described in the main paper. We assume there is a base context space $\cZ$ and predictor class $\cG:\cZ\to\bbR$. Give such a class there are two natural ways to build a class of predictors over the joint context-action space depending on how the dataset is featurized.
\begin{itemize}
\item \textbf{Label-dependent features} For the MSLR and Yahoo datasets, each context comes with a distinct set of features for each action.
This is captured by our abstraction by defining a fixed feature map $\phi:\cX\times{}\cA\to\cZ$, then defining the class $\cF$ via $\cF=\crl*{(x,a)\mapsto{}g(\phi(x,a))\mid{}g\in\cG}$.
\item \textbf{Label-independent features} When the contexts do not have label-dependent features, we use one instance of the base real-valued predictor class $\cG$ for each action, i.e. we set $\cZ=\cX$ and take $\cF=\crl*{(x,a)\mapsto{}g_{a}(x)\mid{} g=(g_a)_{a\in\cA}\in\cG^{\cA}}$.
\end{itemize}

\paragraph{Predictor class and base oracle implementation}
We use real-valued predictors from the scikit-learn library \cite{pedregosa2011scikit}. The two predictor classes used were
\begin{itemize}
\item \texttt{sklearn.linear\_model.Ridge(alpha=1)}
\item \texttt{sklearn.tree\_model.GradientBoostingRegressor(max\_depth=5, n\_estimators=100)}.
\end{itemize}
Each of the scikit-learn predictor classes handles this real-valued output case directly, via the $\texttt{fit()}$ function for each class. In the label-dependent feature case we use a single oracle for $\cG$, and in the label-independent feature case we use the oracle for $\cG$, then take $\cF=\cG^{\cA}$, so that there are actually $\abs*{\cA}$ oracle instances.

\paragraph{Incremental implementation for \rucb{}}
As mentioned in the main body, we restrict the optimization for the gradient boosting oracle when used with \rucb{}. At the beginning of each epoch $m$, we find best regression tree ensemble on the dataset so far (with respect to $\hR_m$). For each round within the epoch, we keep this tree structure fixed for the call to $\textsc{Oracle}(H)$, so that only the ensemble and leaf weights need to be re-optimized.

\subsection{Holdouts and multiple trials}
Each dataset shuffled via random permutation, then presented to the learner in order.

Each (algorithm, parameter configuration, dataset) tuple was run for 5 repetitions. For a given trial we distinguish between two sources of randomness: Randomness from the dataset, which may come from the random ordering or from randomness in the labels as described in the dataset section, and randomness in the contextual bandit algorithm's decisions. We control for randomness in the dataset across different (algorithm, parameter) configurations by giving each repetition an index $k$ and using the same random seed to select the dataset randomness across all configurations. This means that when $k$ is fixed, all variance is induced by the algorithm's action distribution.

Validation reward was computed every $T/15$ steps.

\subsection{Full collection of plots}
\begin{figure*}
  \begin{centering}
    ~\hfill\includegraphics[width=0.66\textwidth]{legend_all.pdf}\hfill~\\
    \medskip

\includegraphics[width=0.33\textwidth]{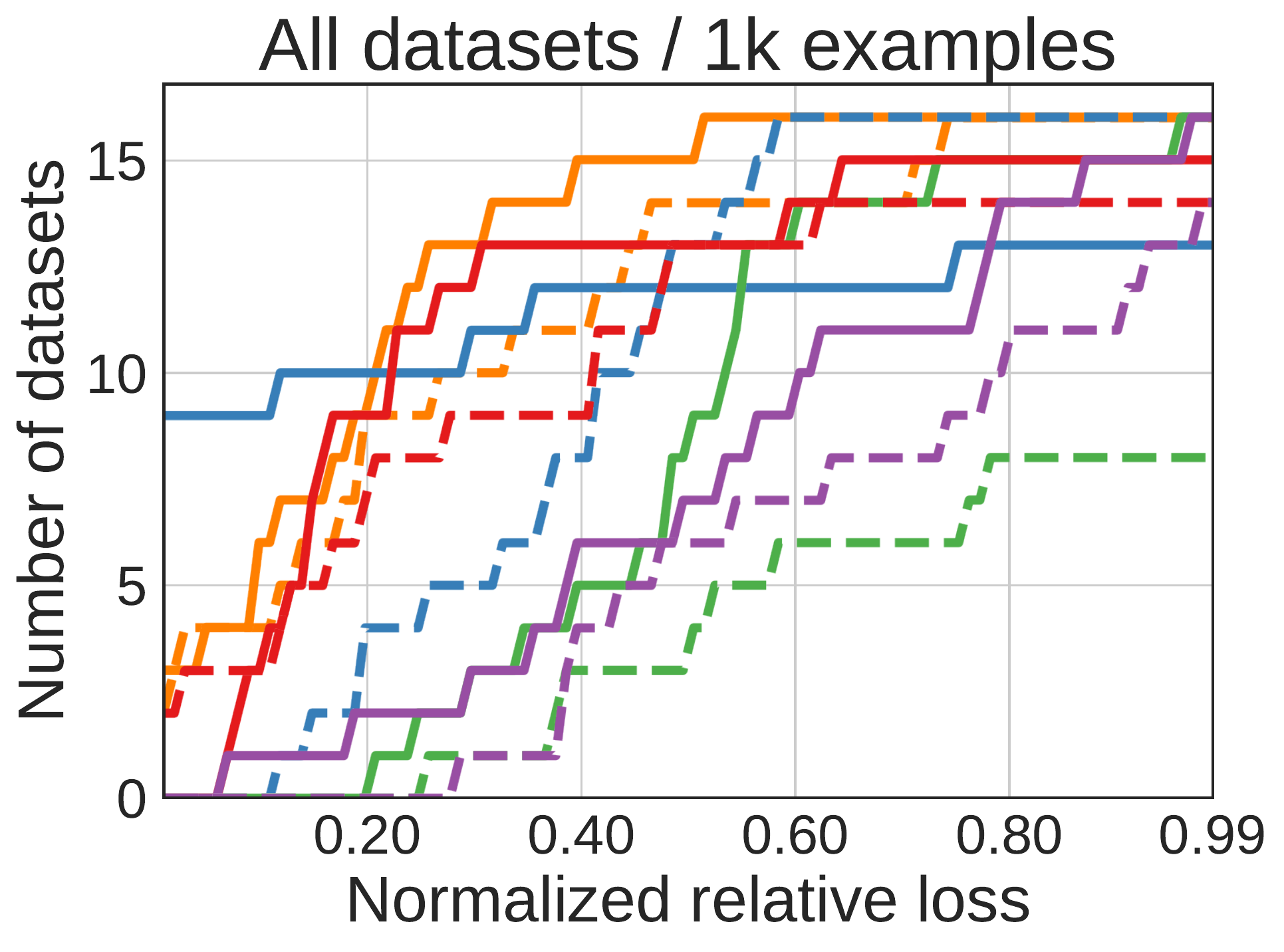}%
\includegraphics[width=0.33\textwidth]{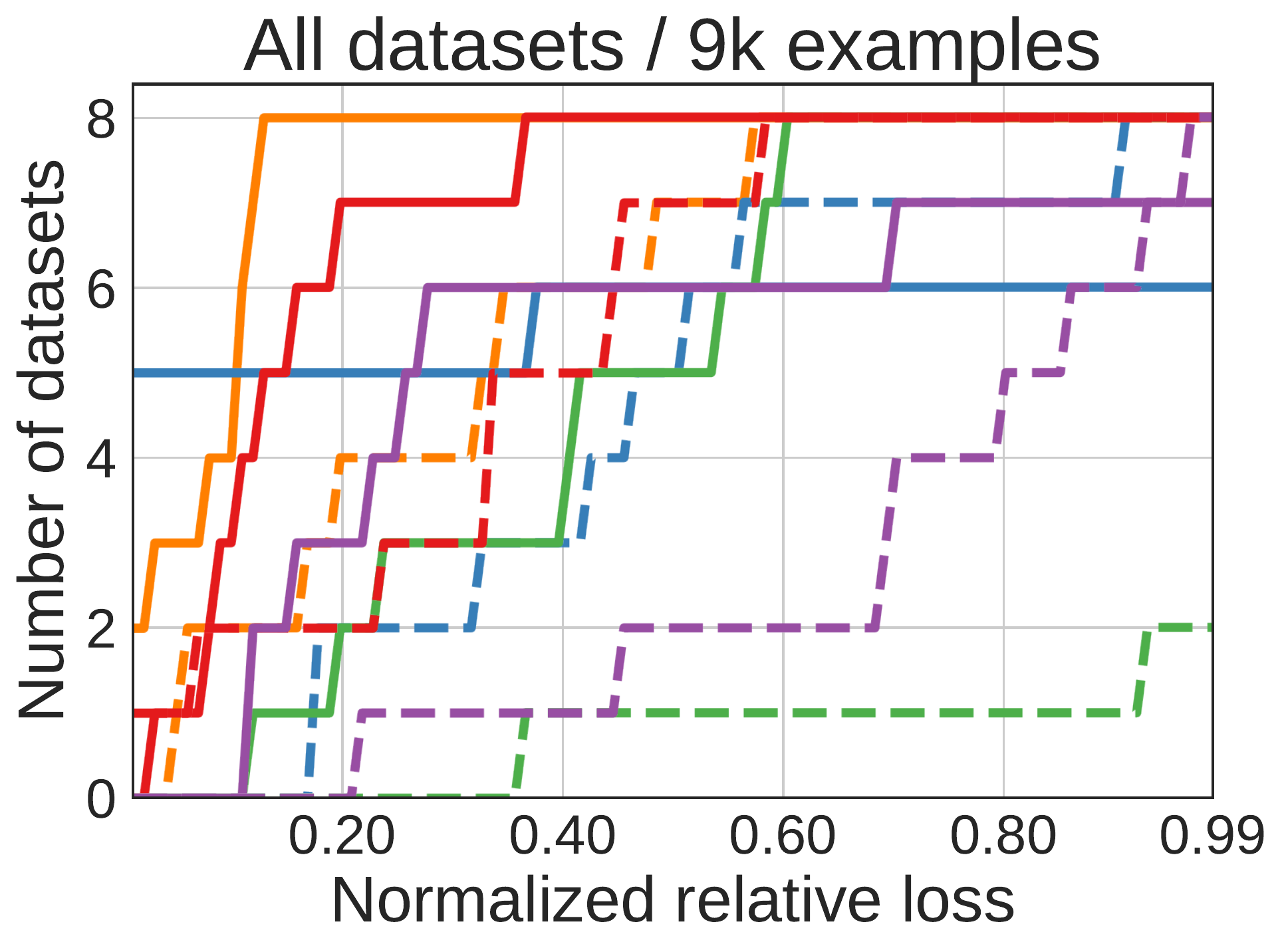}%
\includegraphics[width=0.33\textwidth]{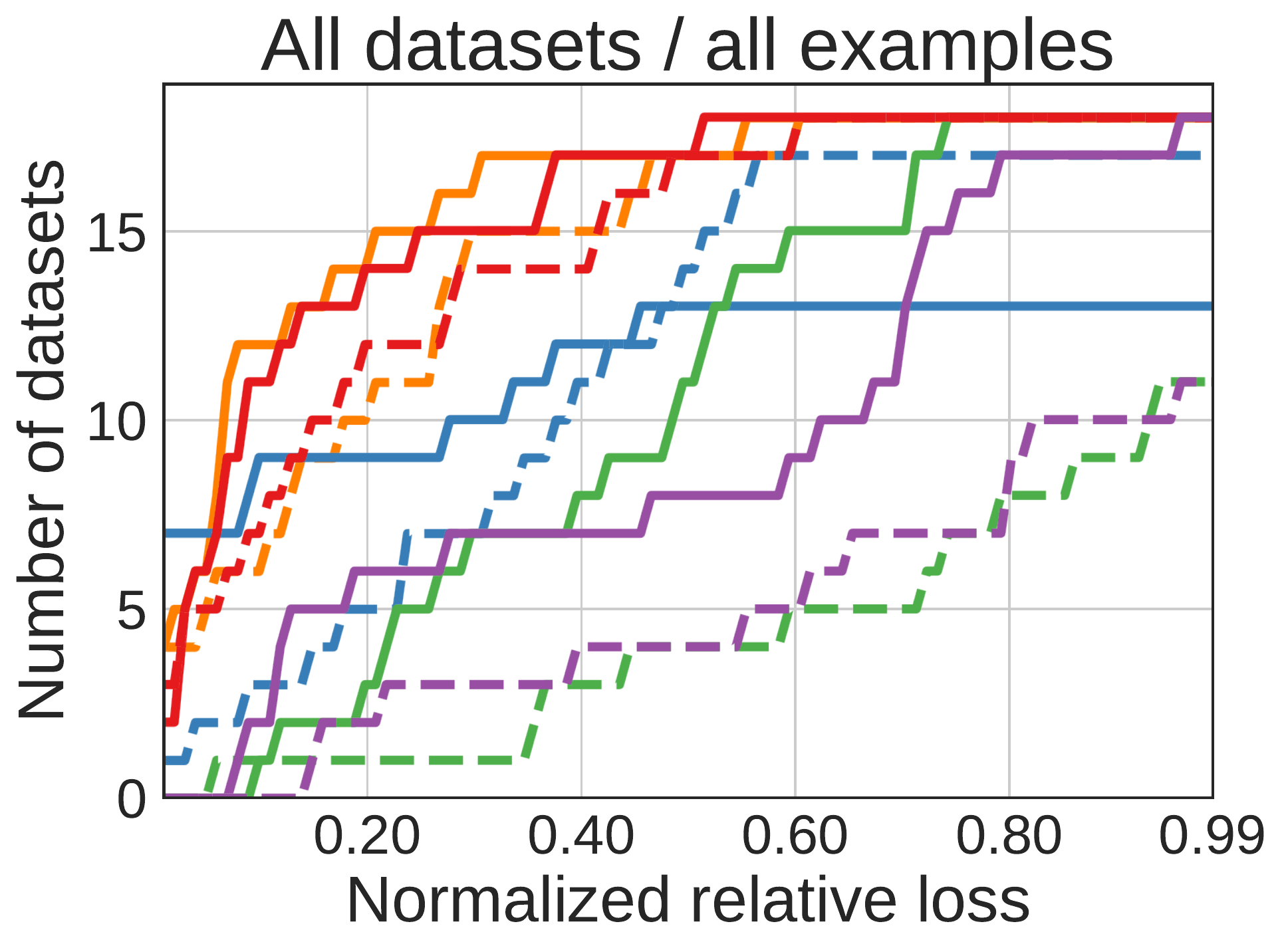}
\medskip

\includegraphics[width=0.33\textwidth]{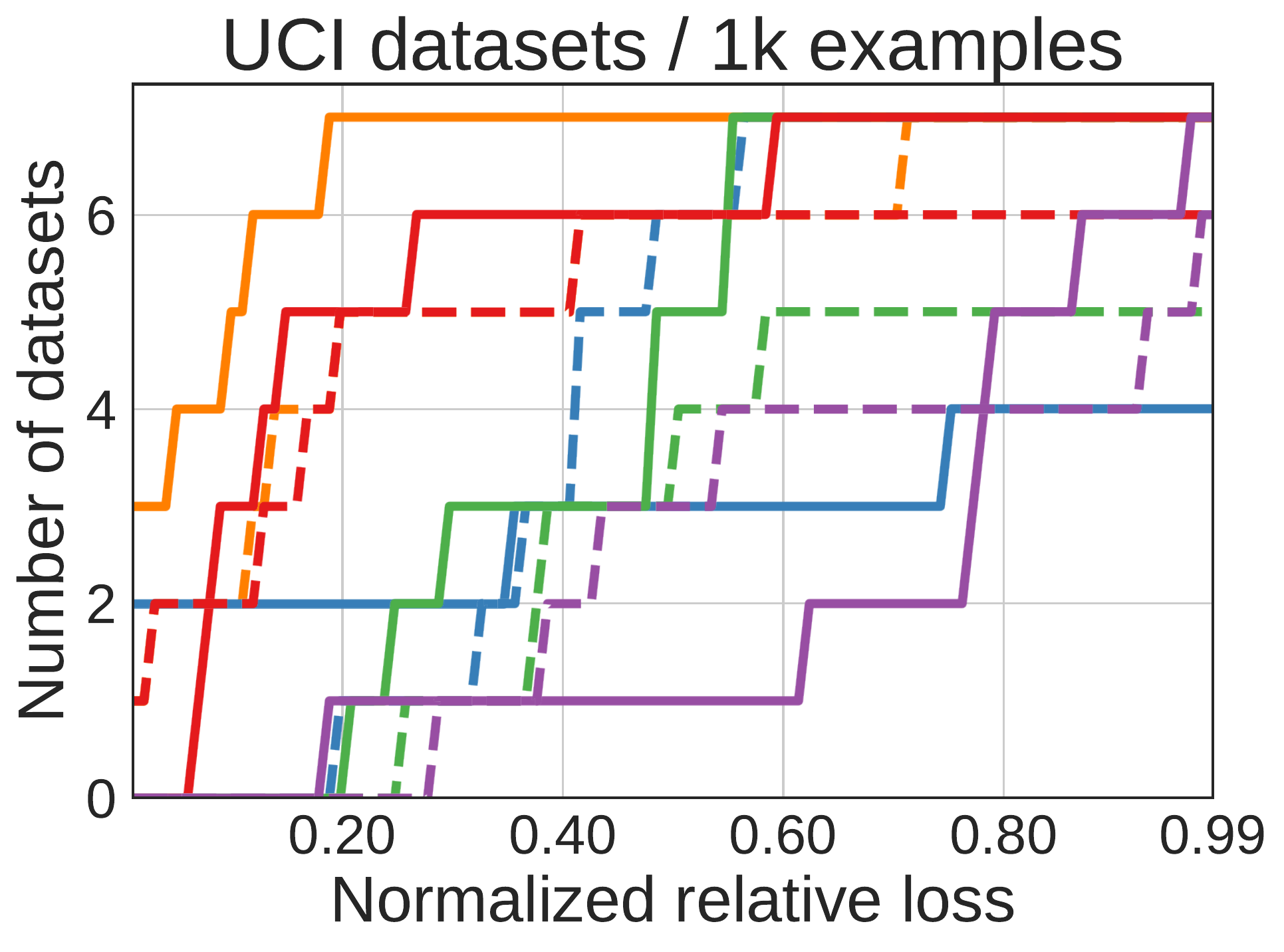}%
\includegraphics[width=0.33\textwidth]{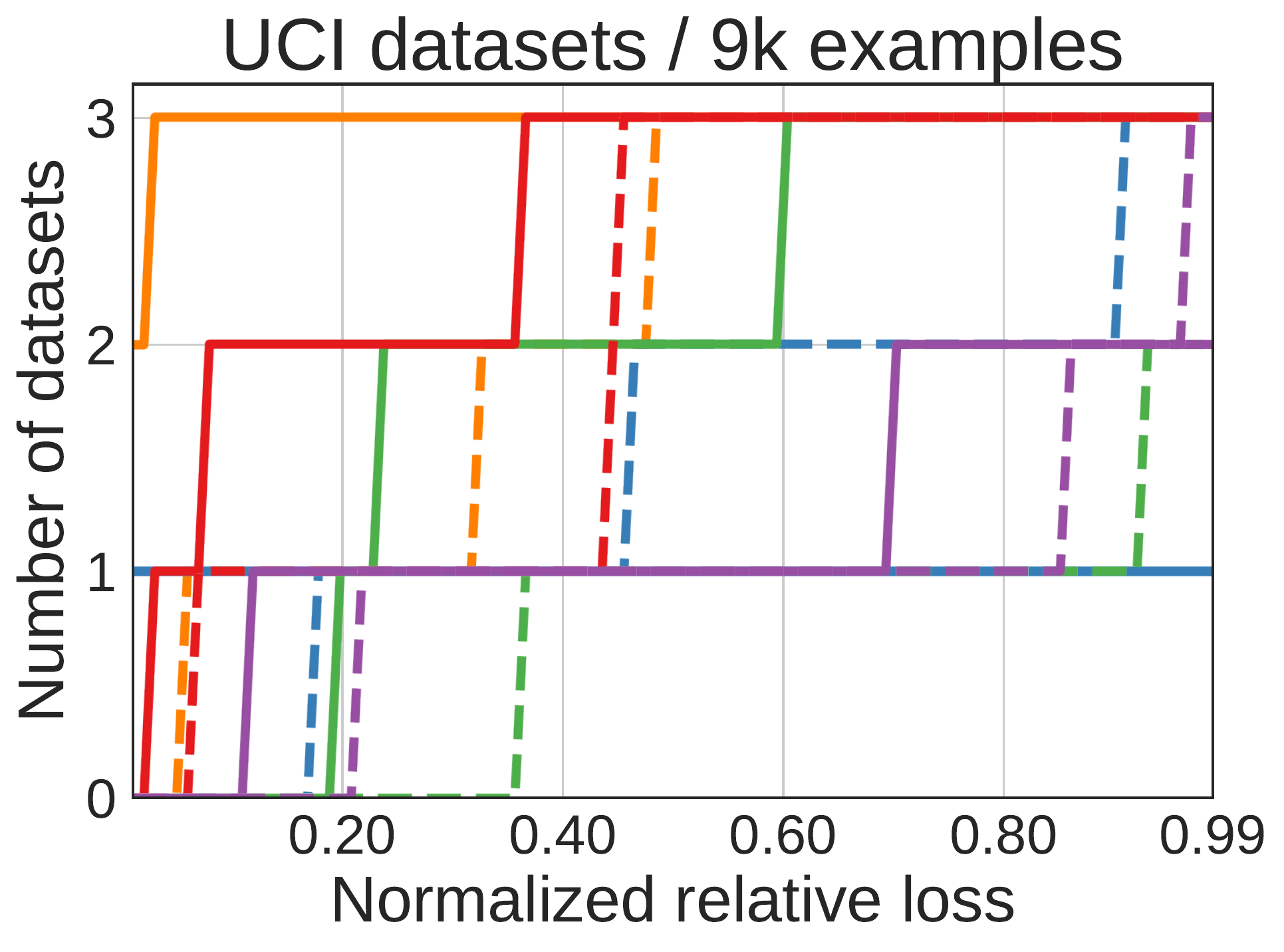}%
\includegraphics[width=0.33\textwidth]{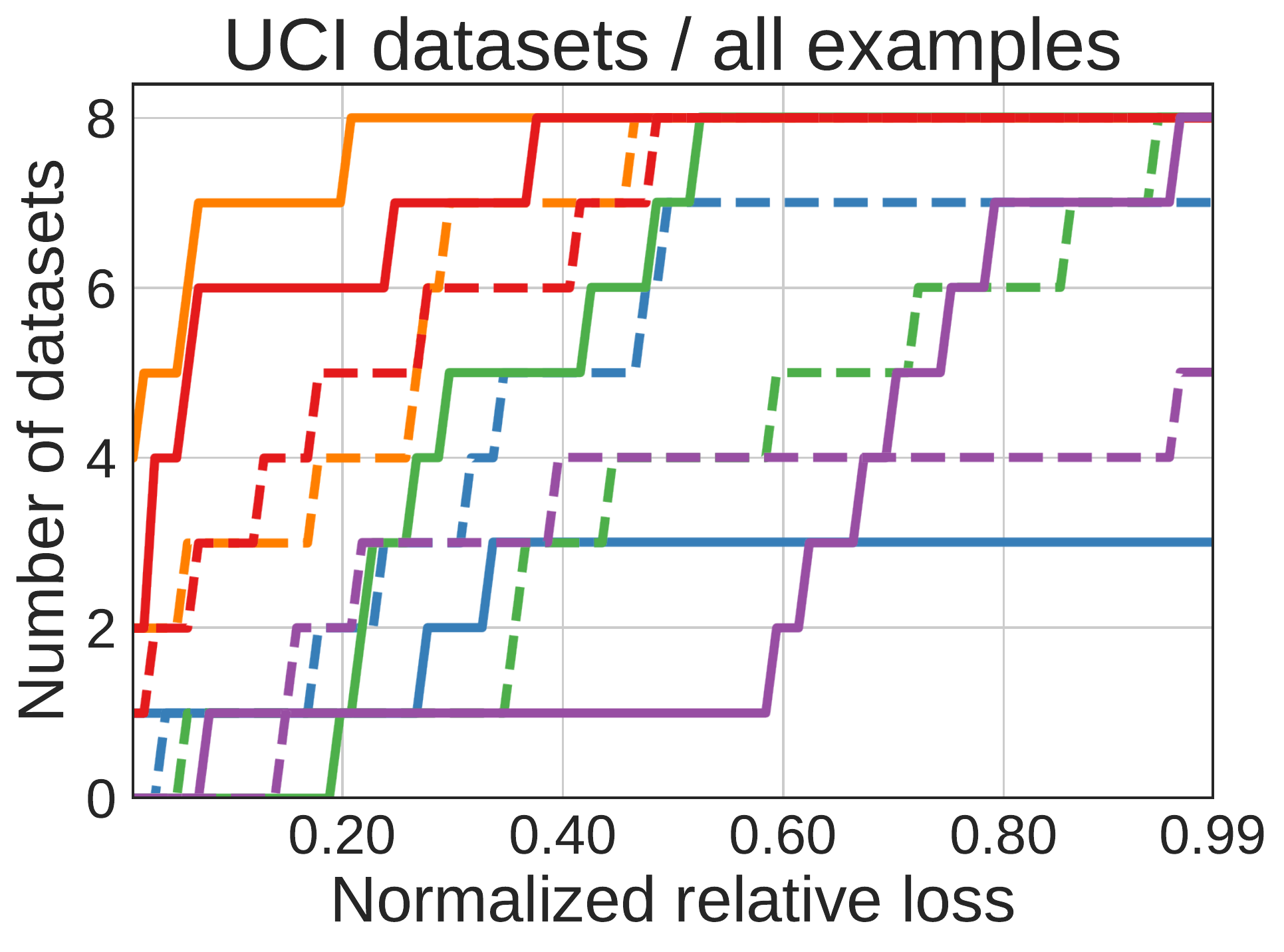}%
\medskip

\includegraphics[width=0.33\textwidth]{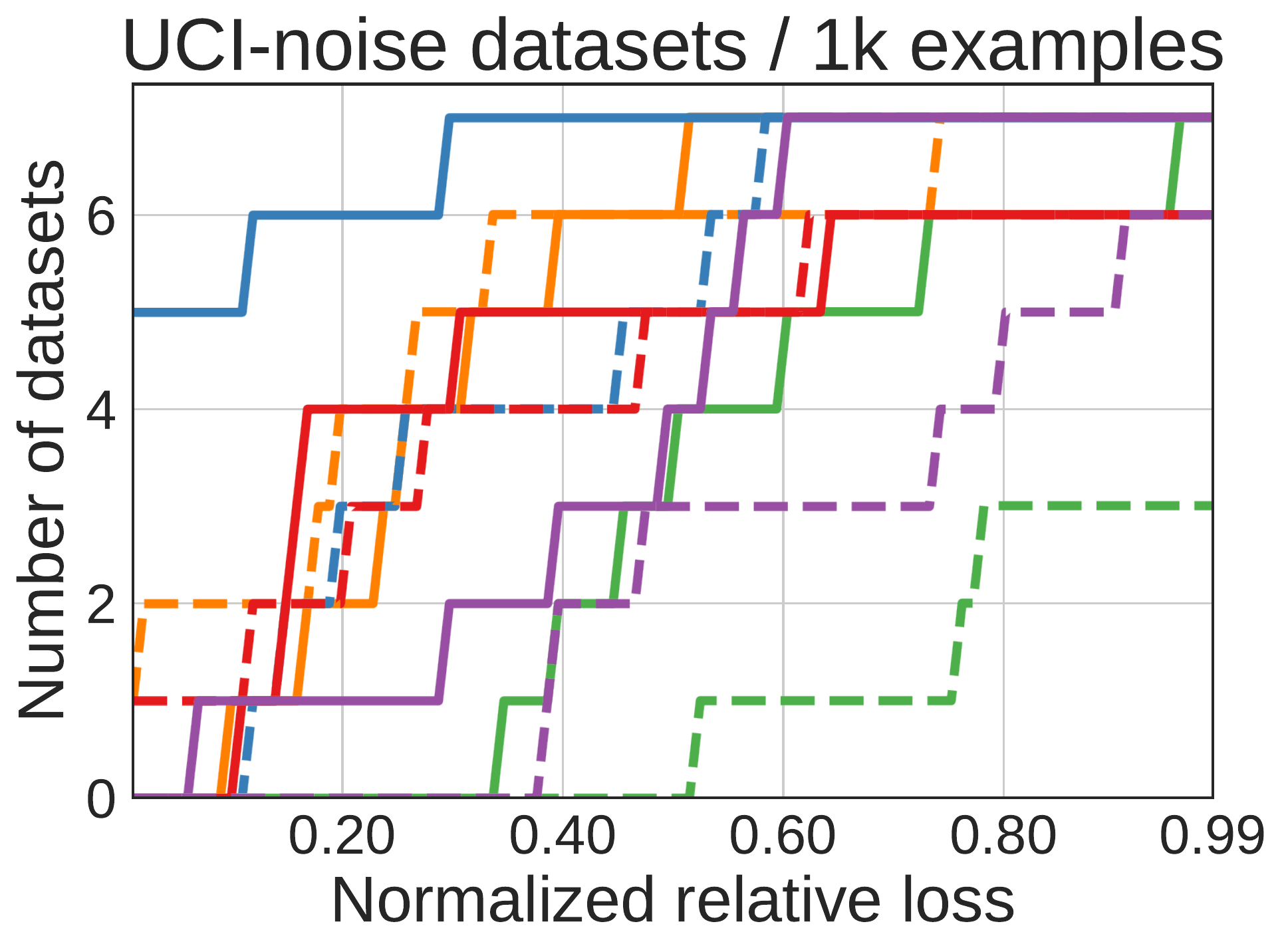}%
\includegraphics[width=0.33\textwidth]{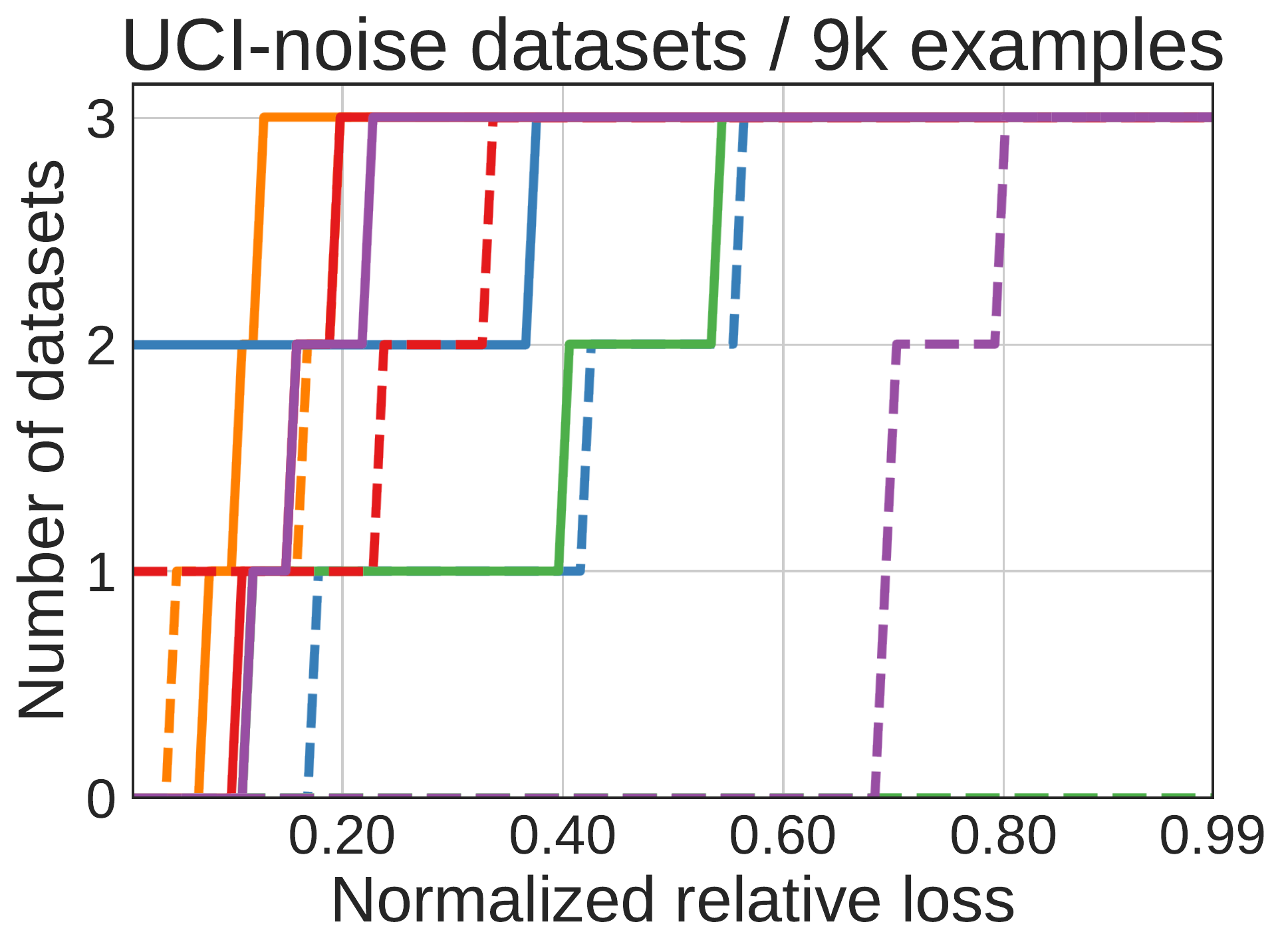}%
\includegraphics[width=0.33\textwidth]{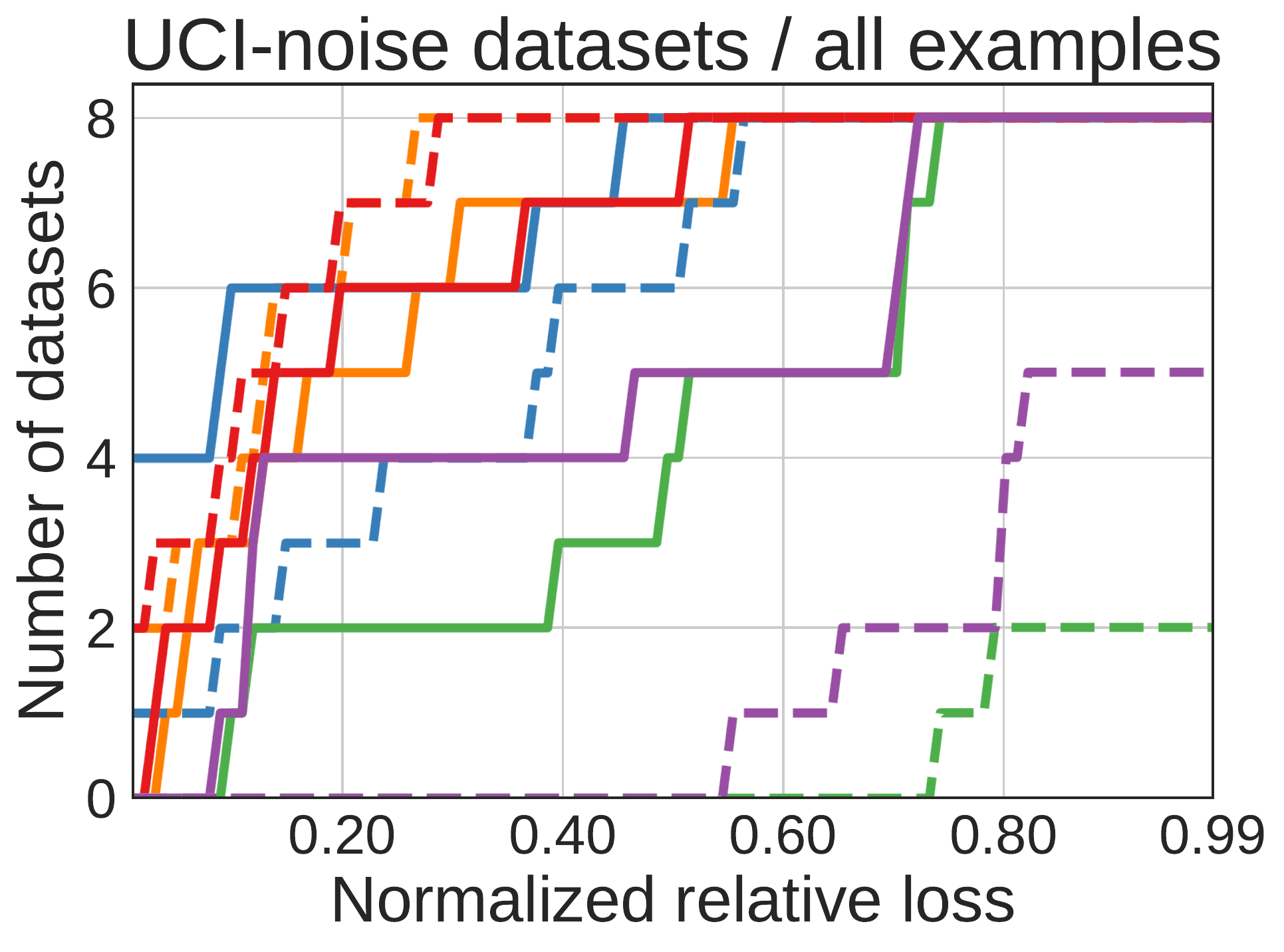}%
\end{centering}
\caption{Cumulative performance across all data sets at various sample sizes.}
\end{figure*} 

\begin{figure*}
  \begin{centering}
    ~\hfill\includegraphics[width=0.75\textwidth]{legend_supervised.pdf}\hfill~\\
\includegraphics[width=0.33\textwidth]{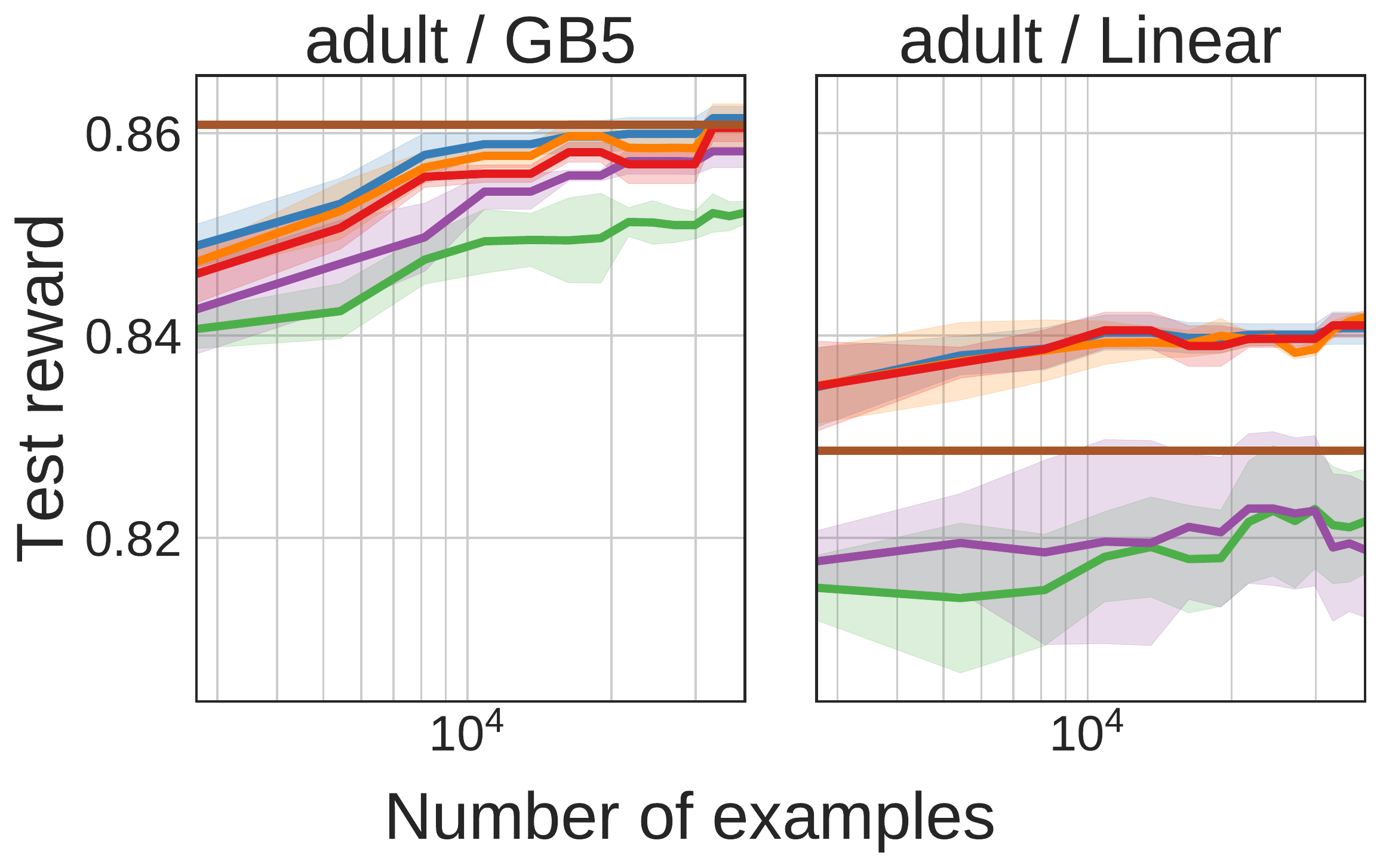}\hfill
\includegraphics[width=0.33\textwidth]{paper_plots/performance/plot_best_letter_validation.pdf}\hfill
\includegraphics[width=0.33\textwidth]{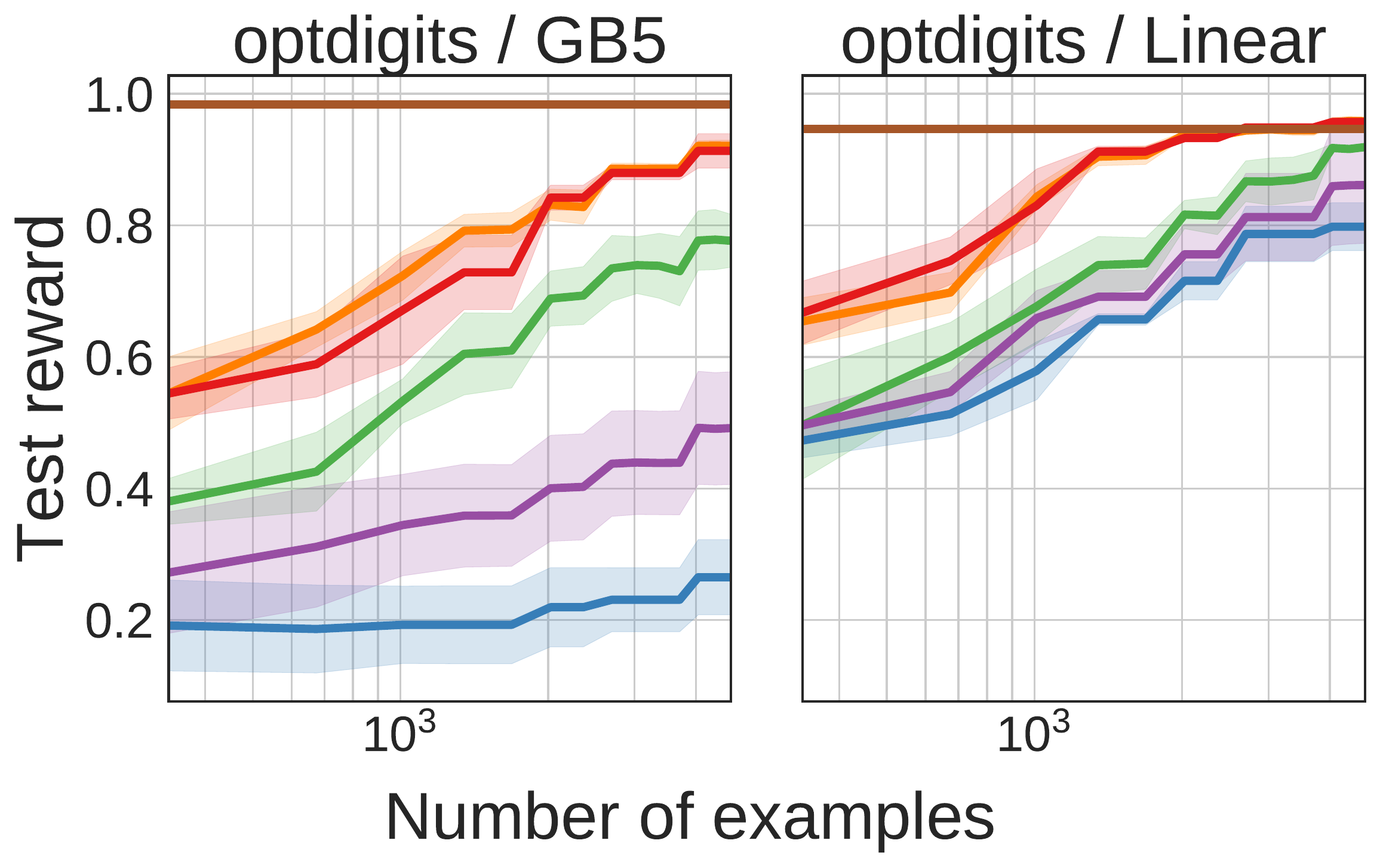}

\includegraphics[width=0.33\textwidth]{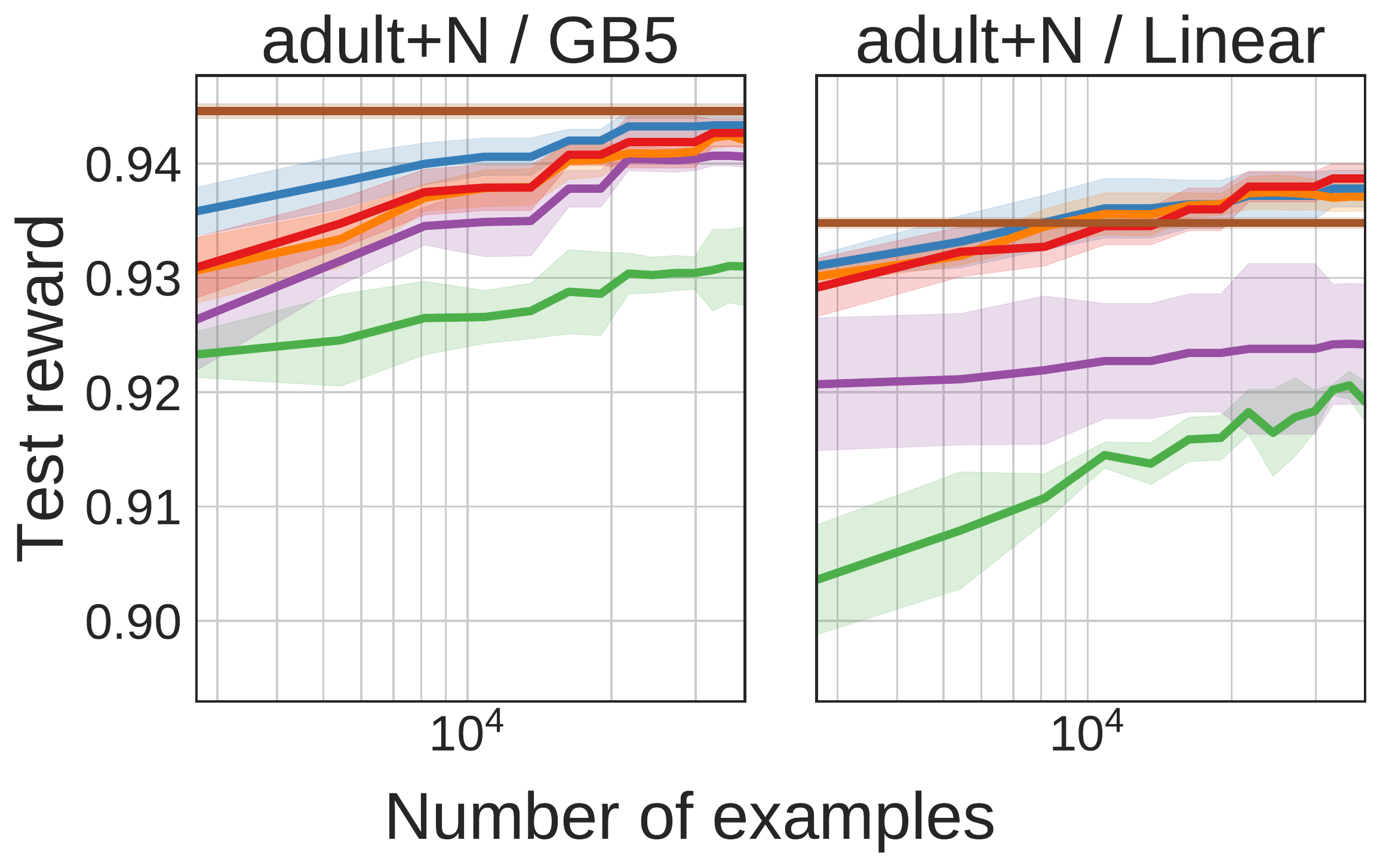}\hfill
\includegraphics[width=0.33\textwidth]{paper_plots/performance/plot_best_letter-noise_validation.pdf}\hfill
\includegraphics[width=0.33\textwidth]{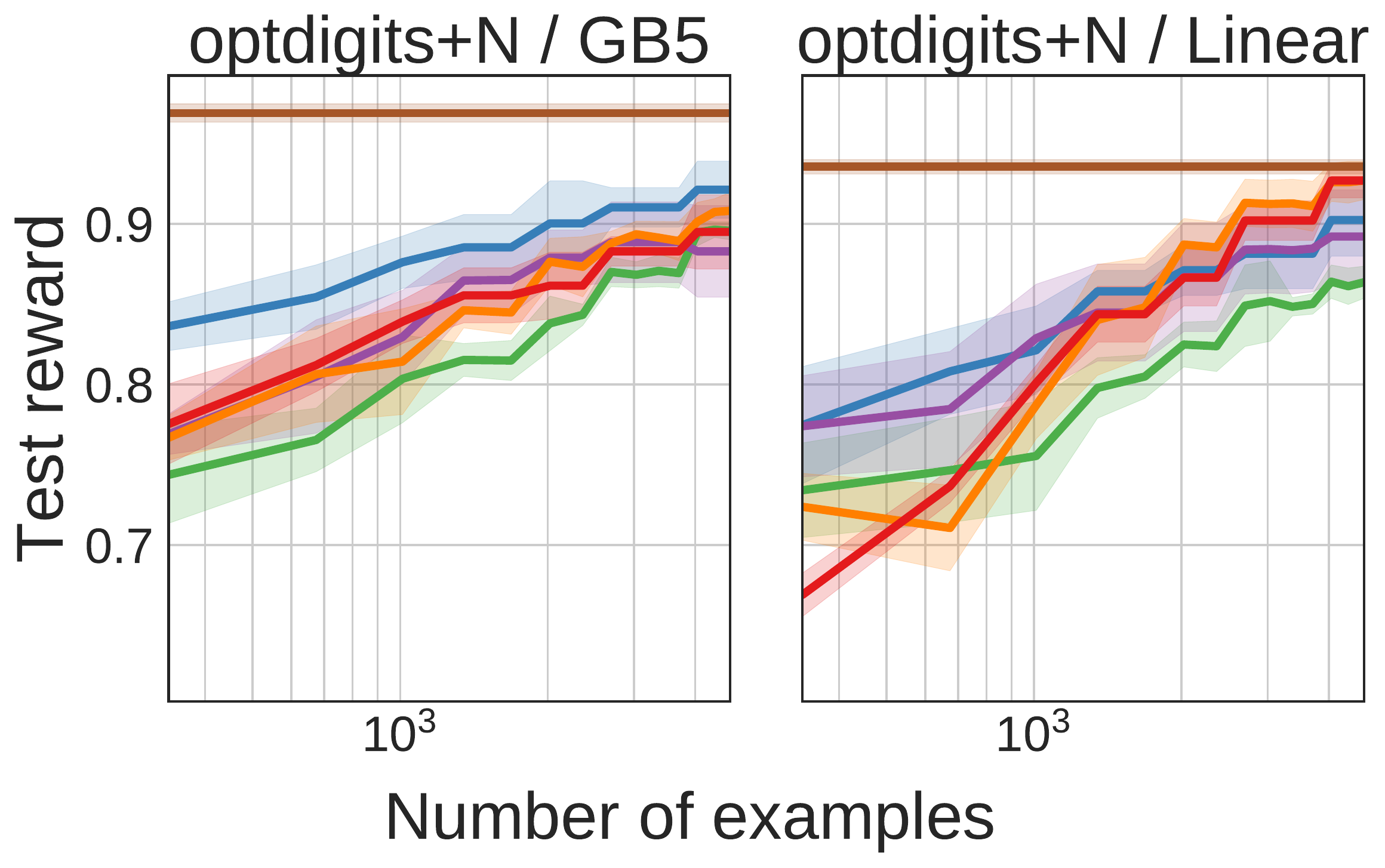}

\includegraphics[width=0.33\textwidth]{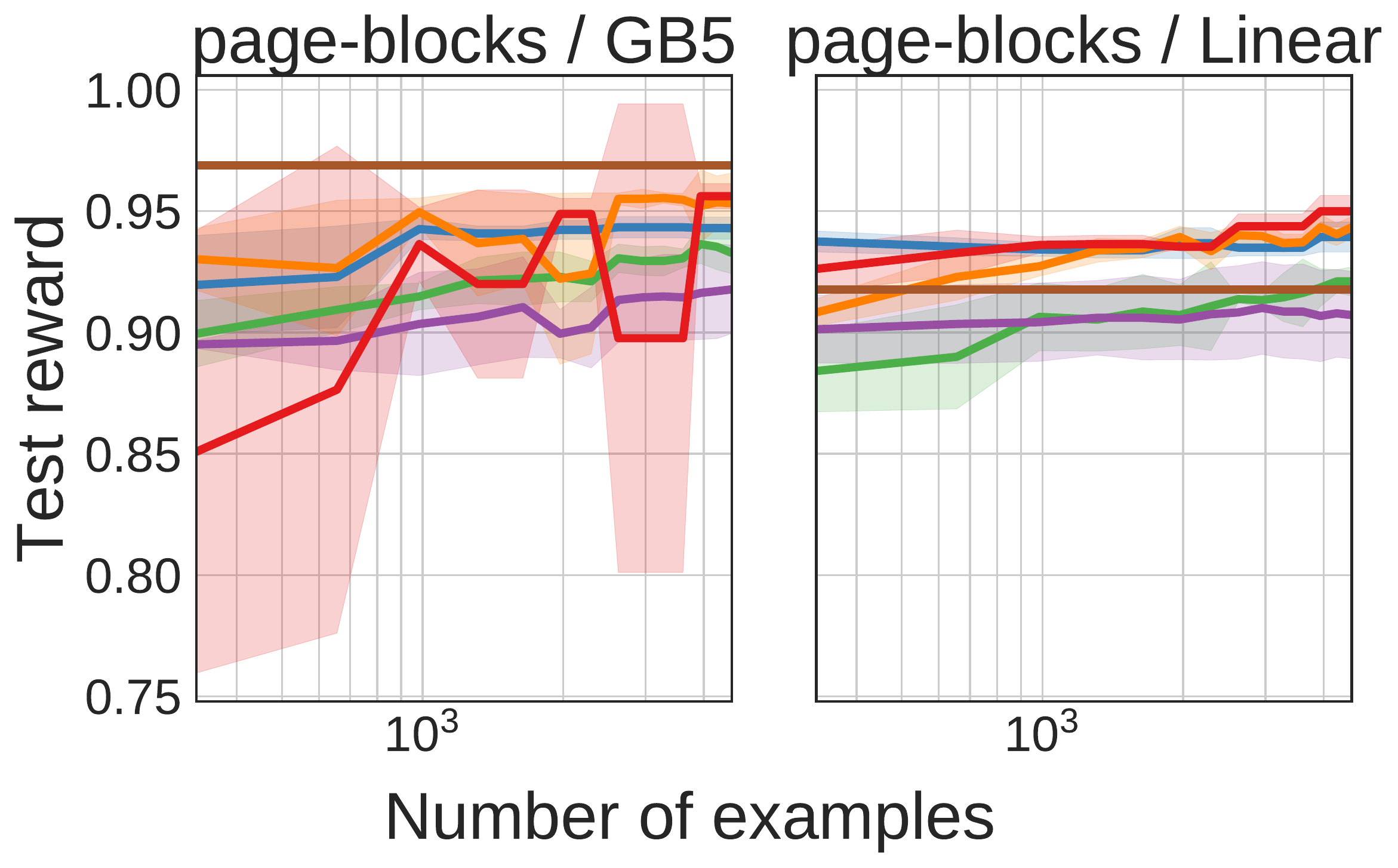}\hfill
\includegraphics[width=0.33\textwidth]{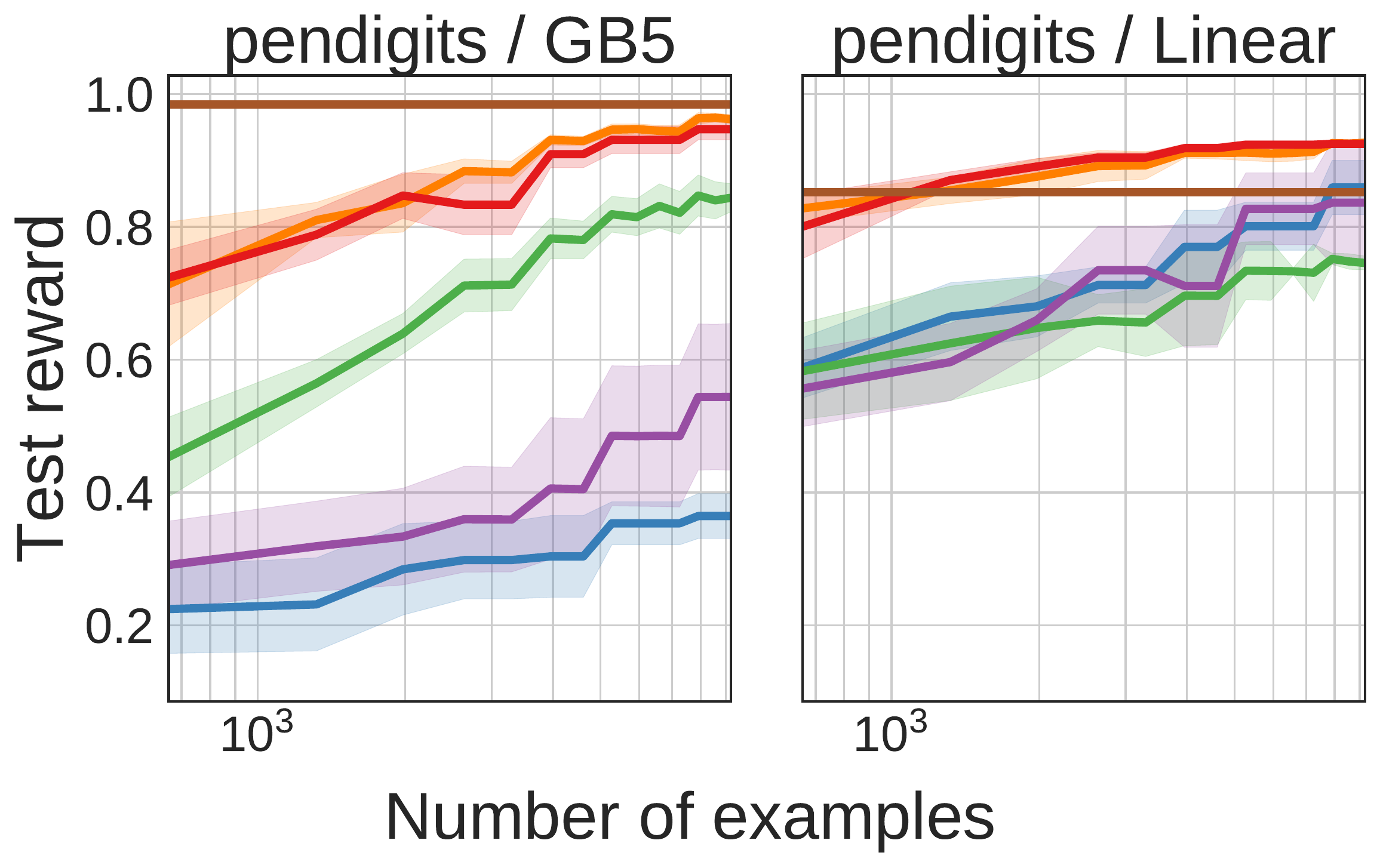}\hfill
\includegraphics[width=0.33\textwidth]{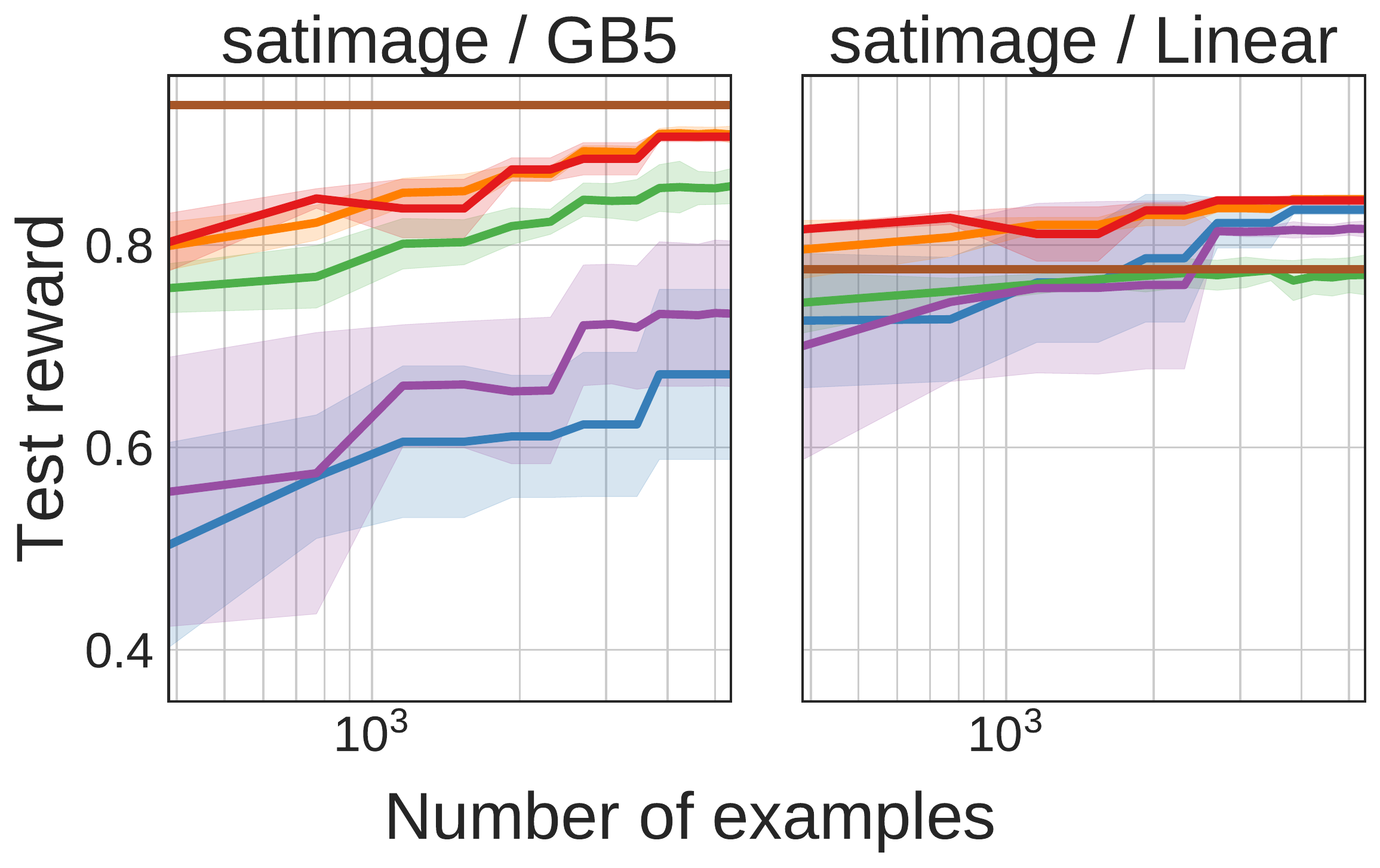}

\includegraphics[width=0.33\textwidth]{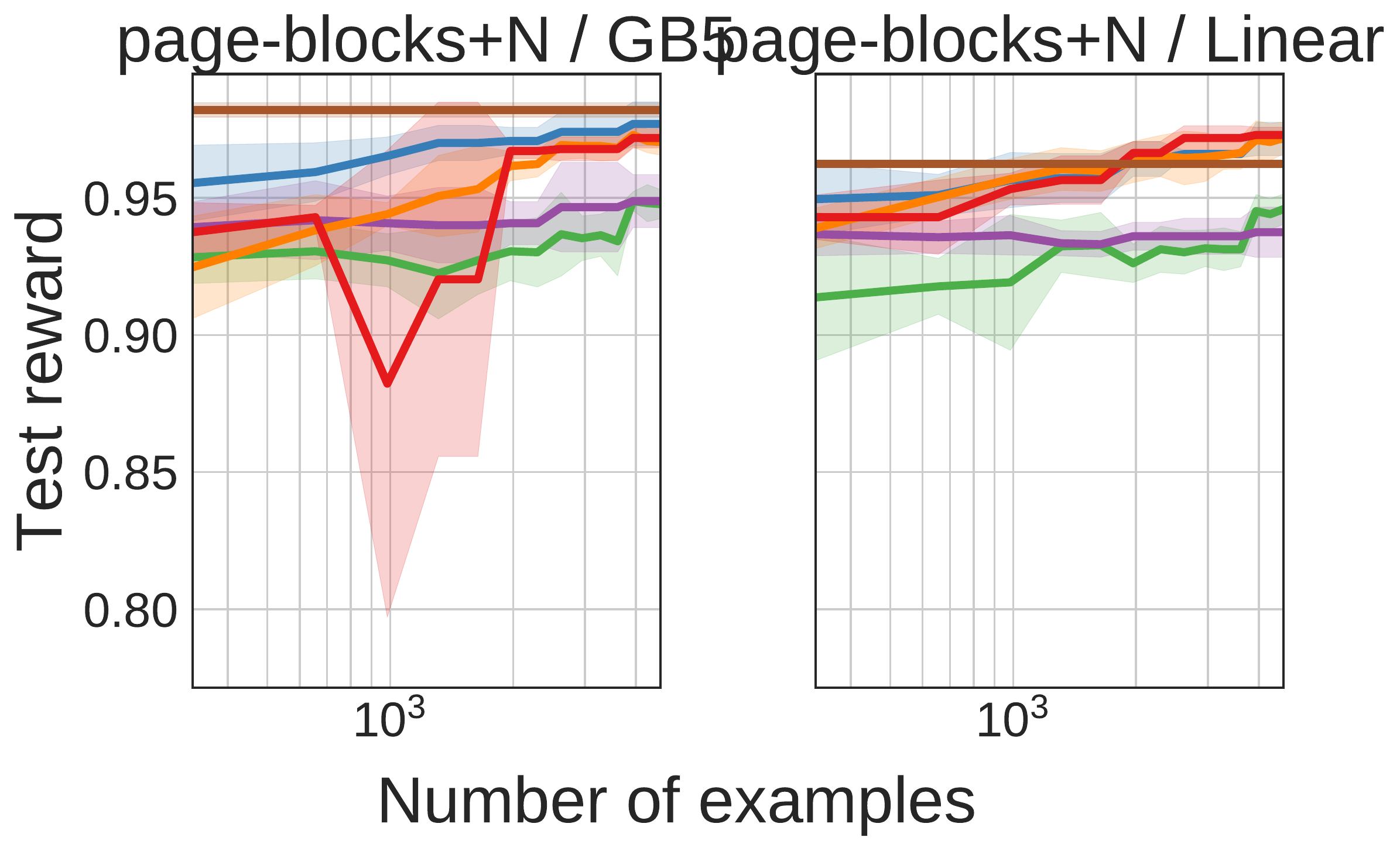}\hfill
\includegraphics[width=0.33\textwidth]{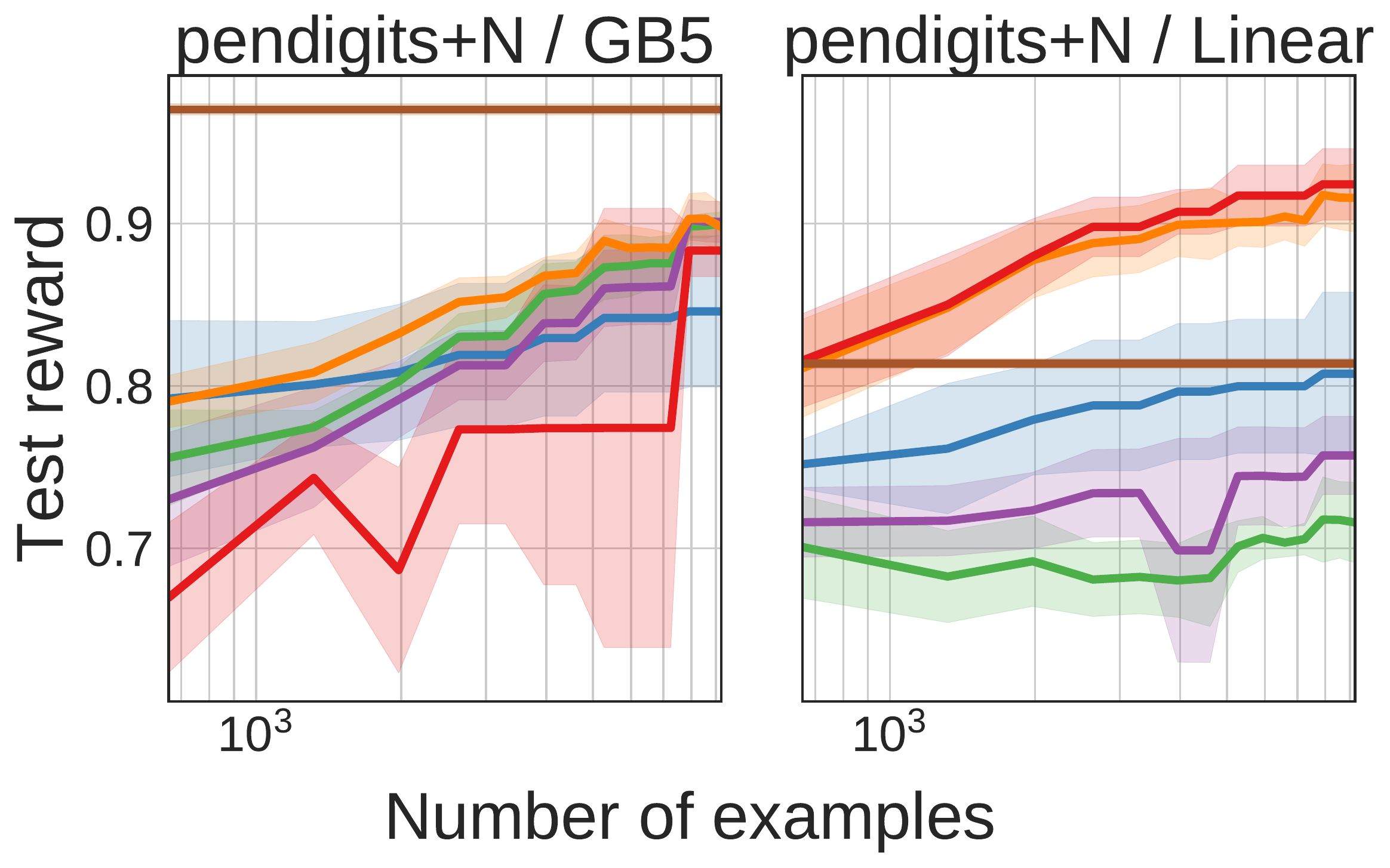}\hfill
\includegraphics[width=0.33\textwidth]{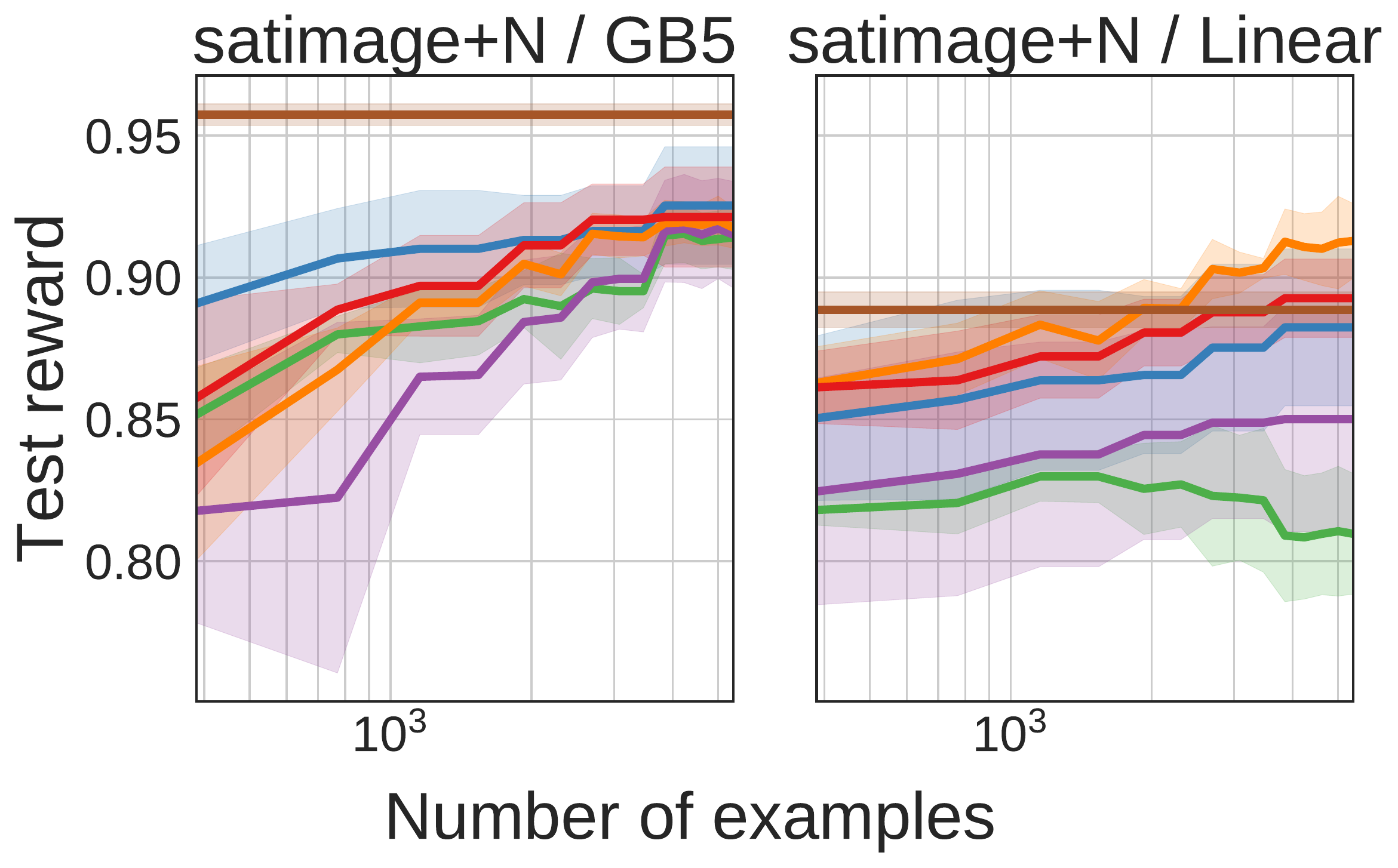}

\includegraphics[width=0.33\textwidth]{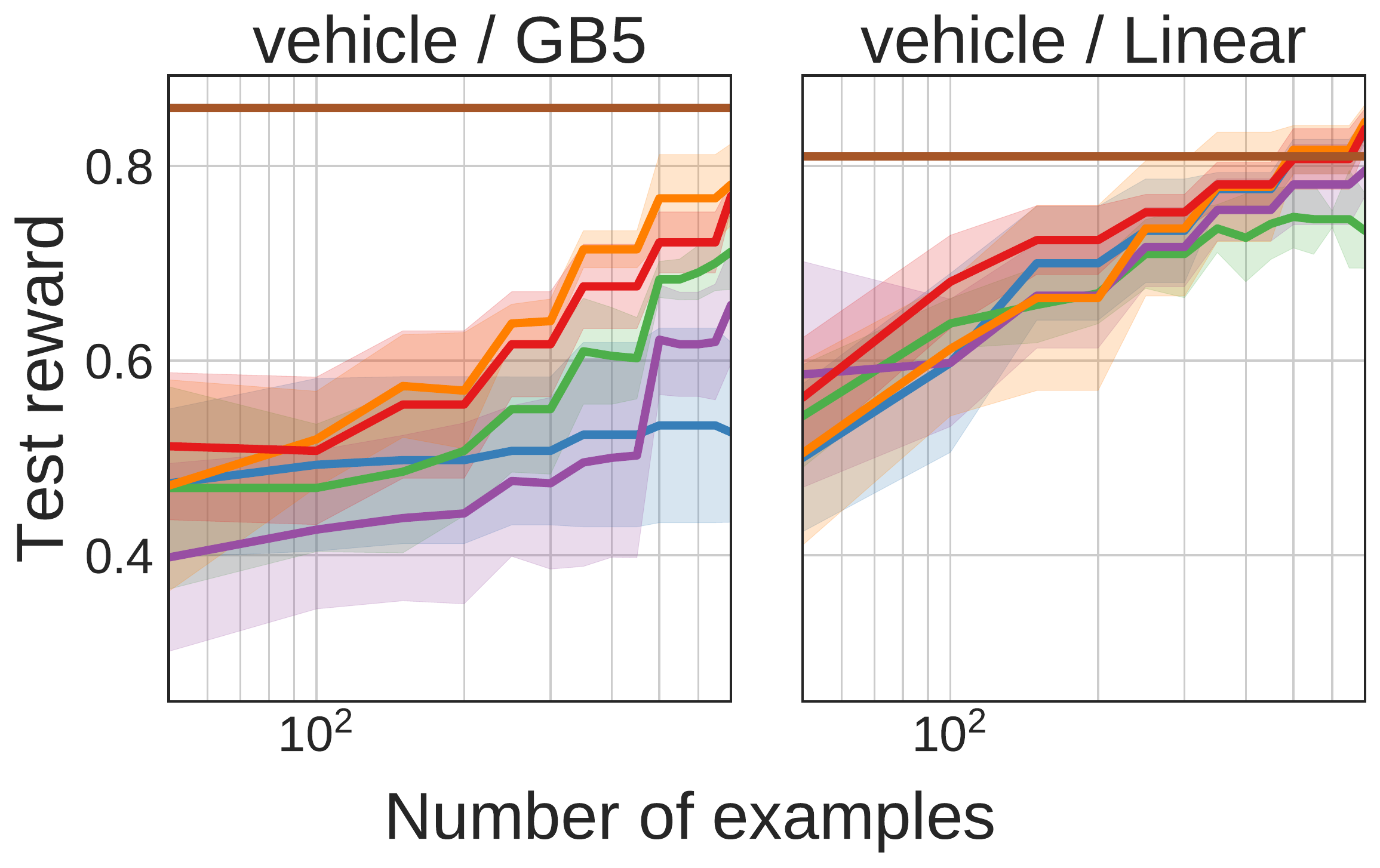}\hfill
\includegraphics[width=0.33\textwidth]{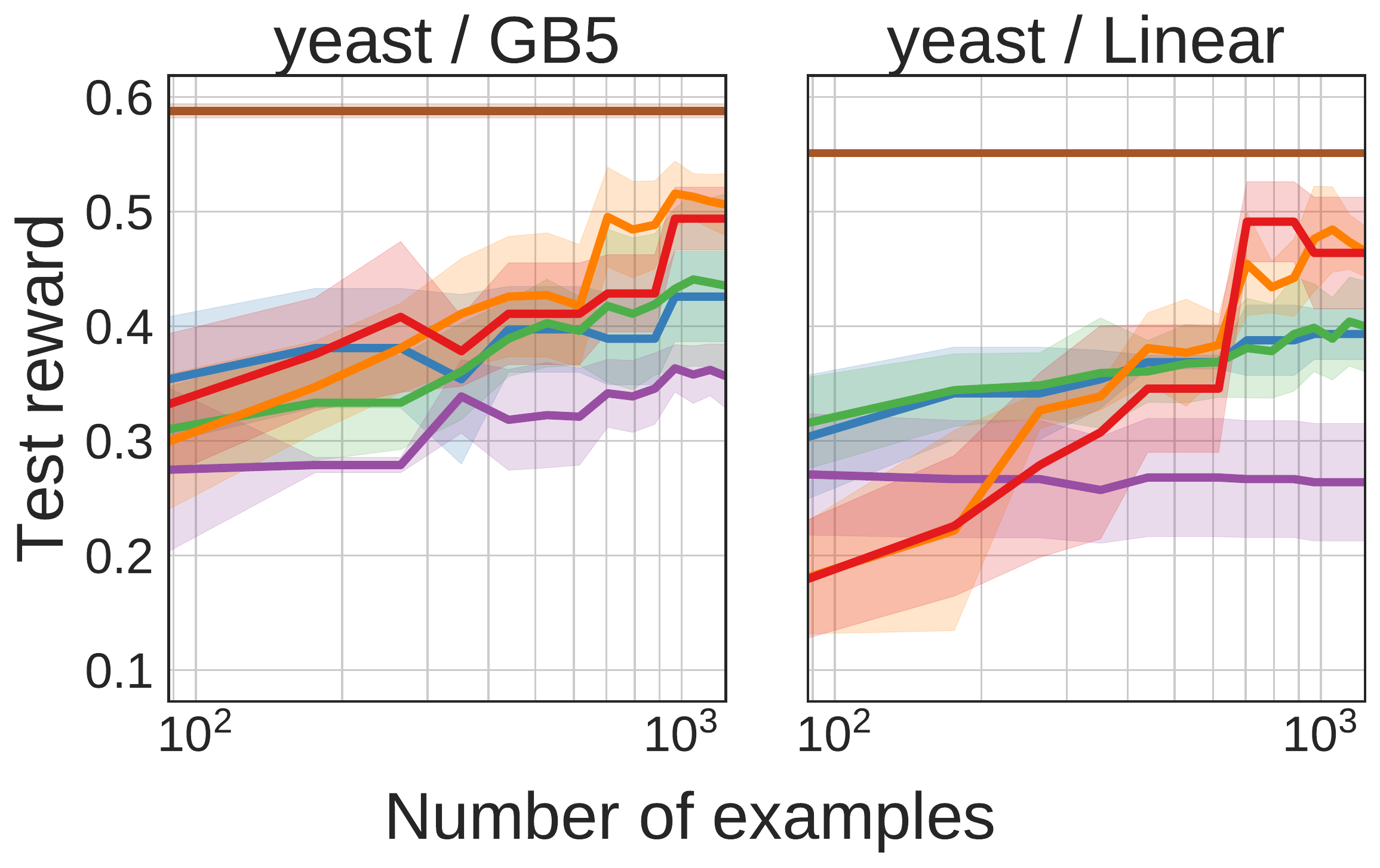}\hfill
\includegraphics[width=0.33\textwidth]{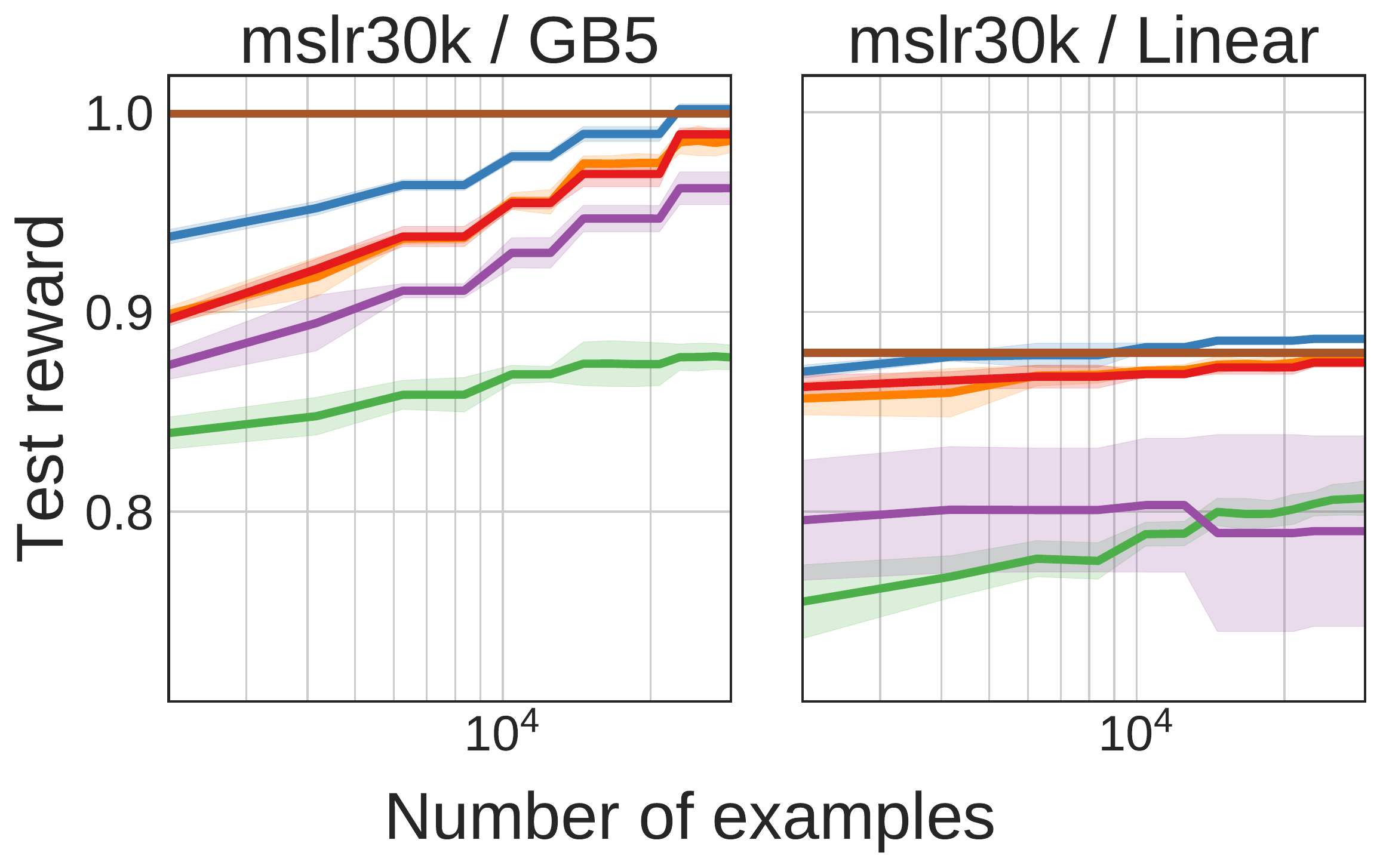}

\includegraphics[width=0.33\textwidth]{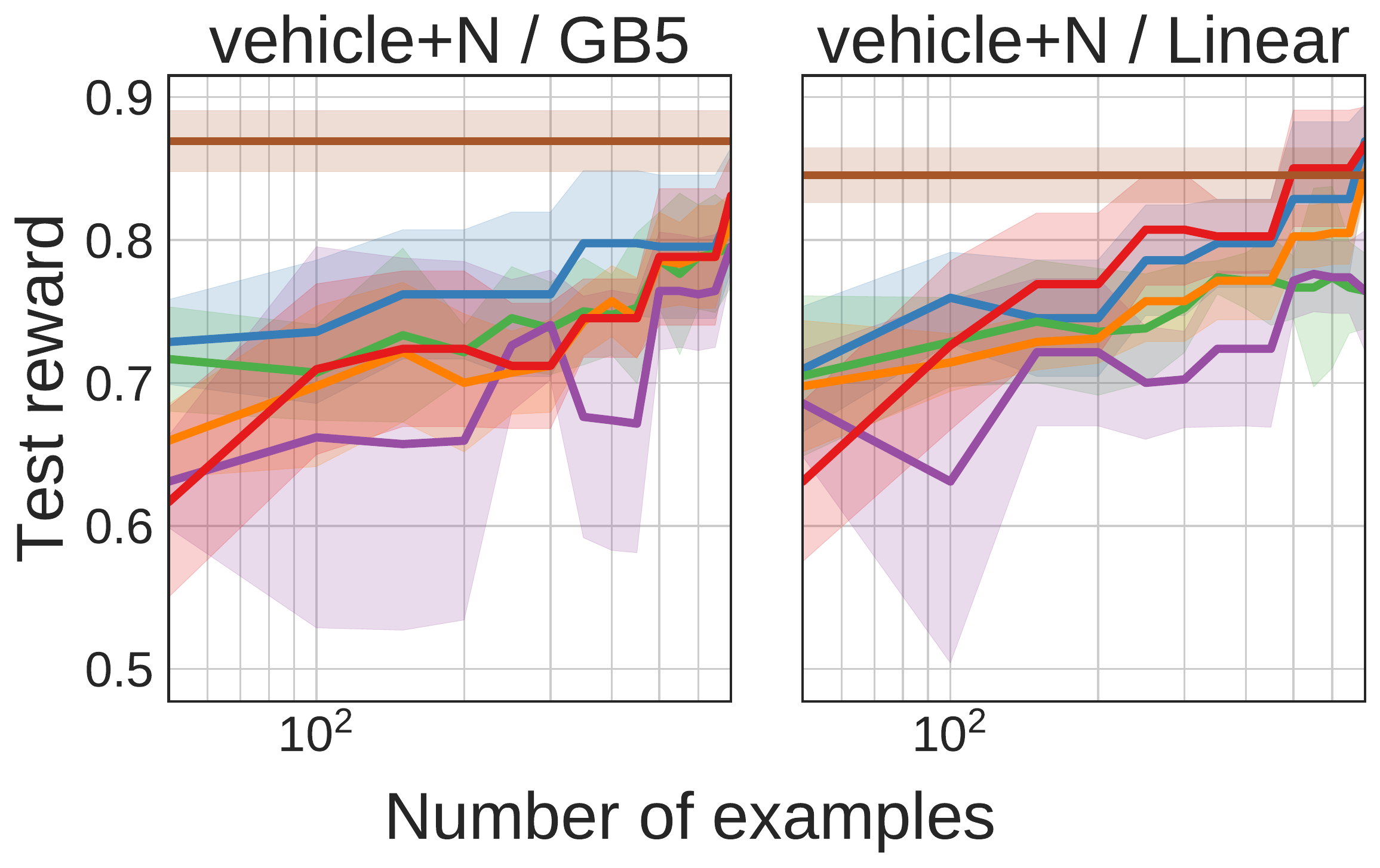}\hfill
\includegraphics[width=0.33\textwidth]{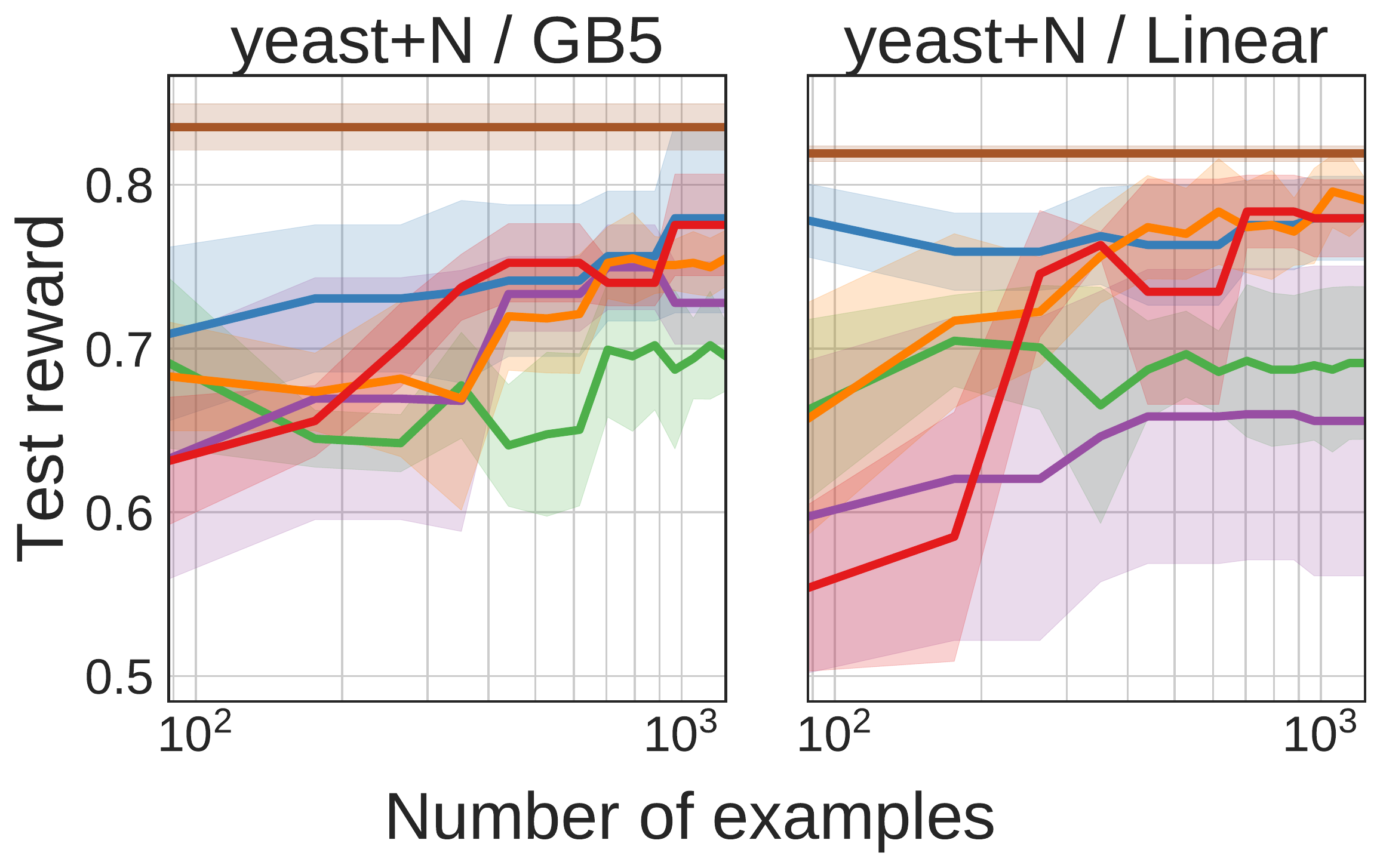}\hfill
\includegraphics[width=0.33\textwidth]{paper_plots/performance/plot_best_yahoo_validation.pdf}

\end{centering}
\caption{Performance on individual datasets.}
\end{figure*}

\begin{figure*}
  \begin{centering}
    ~\hfill\includegraphics[width=0.26\textwidth]{legend_width.pdf}\hfill~\\
    \medskip
\includegraphics[width=0.33\textwidth]{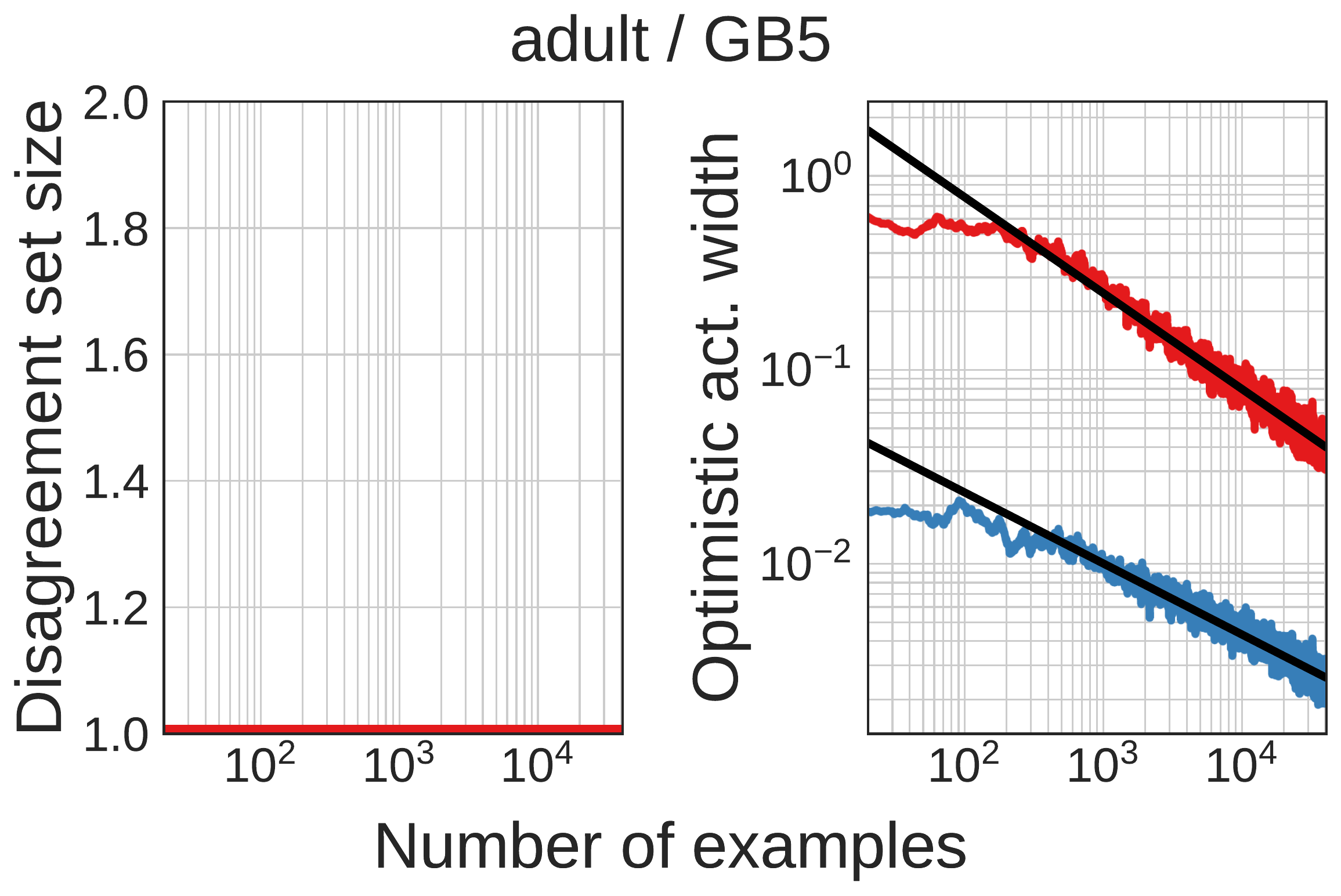}\hfill
\includegraphics[width=0.33\textwidth]{paper_plots/width/plot_disagreement_uci-letter.pdf}\hfill
\includegraphics[width=0.33\textwidth]{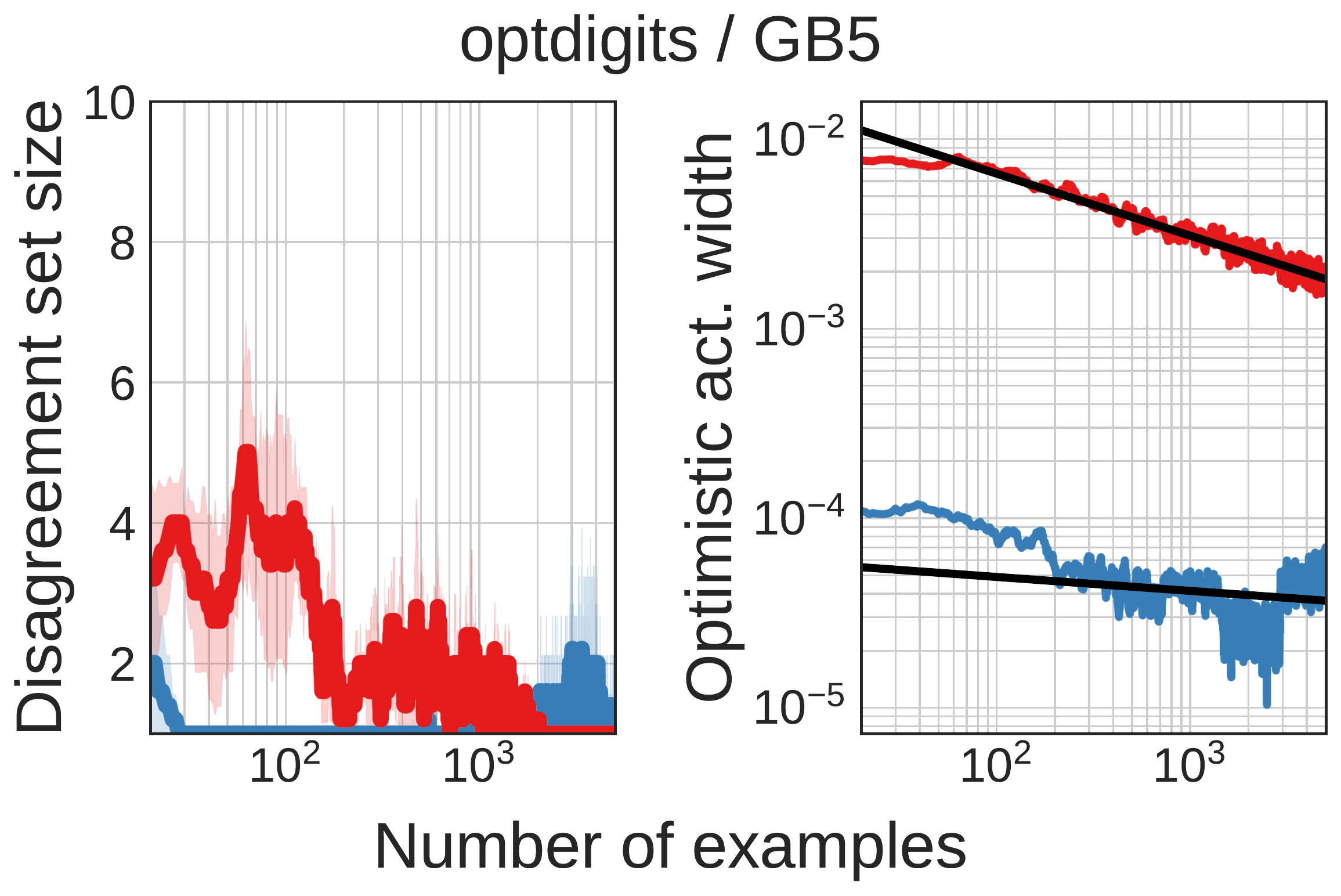}

\includegraphics[width=0.33\textwidth]{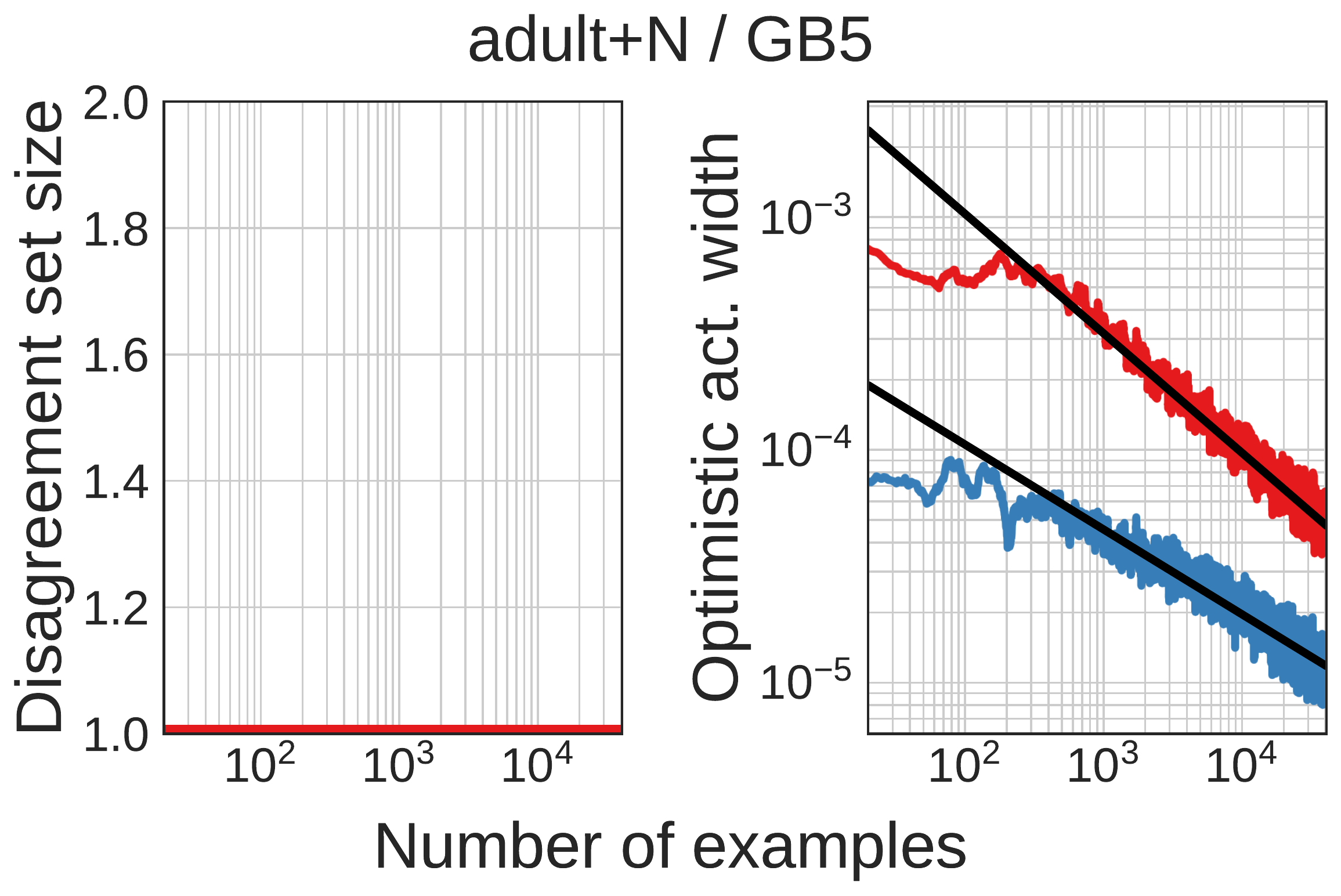}\hfill
\includegraphics[width=0.33\textwidth]{paper_plots/width/plot_disagreement_uci-letter-noise.pdf}\hfill
\includegraphics[width=0.33\textwidth]{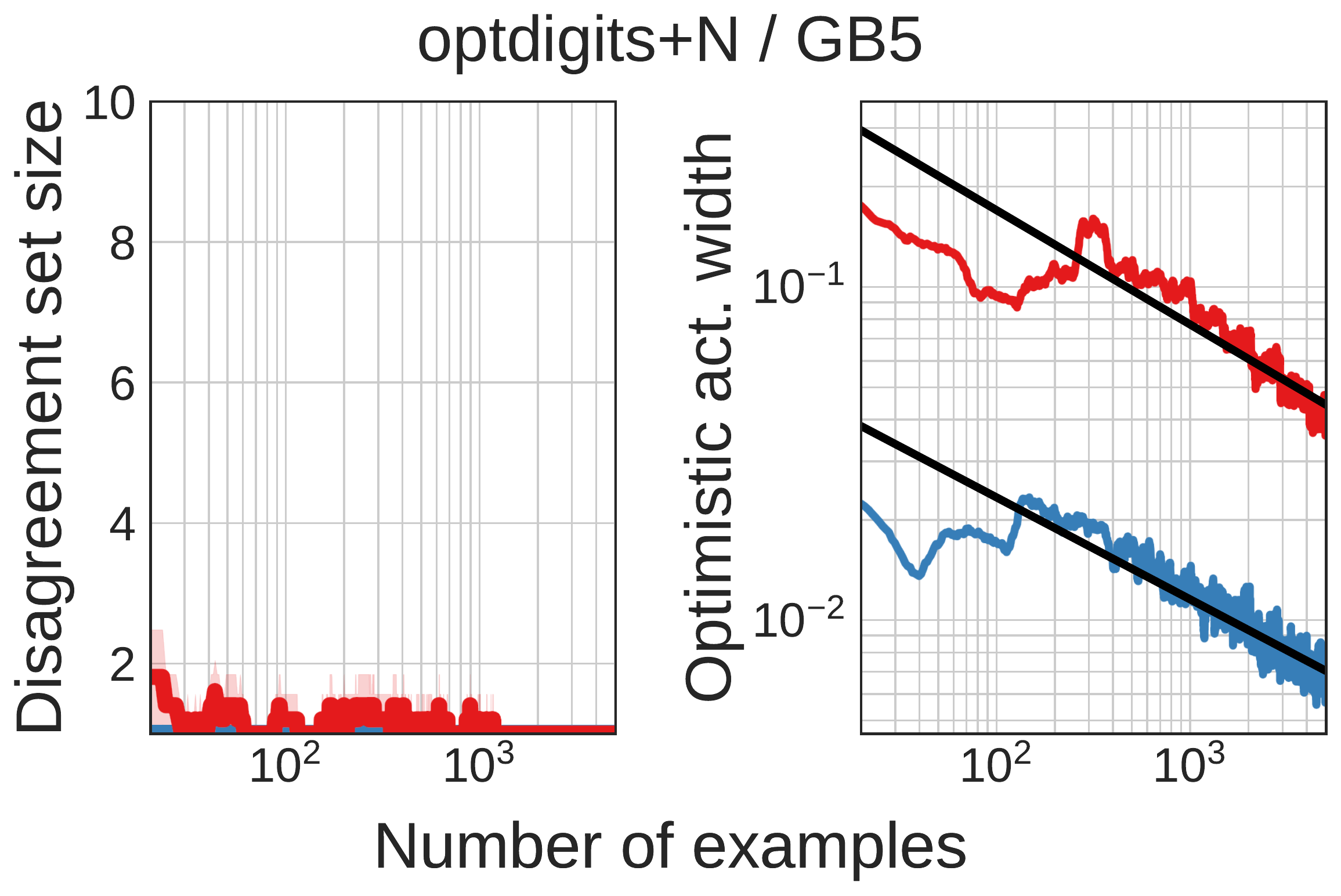}

\includegraphics[width=0.33\textwidth]{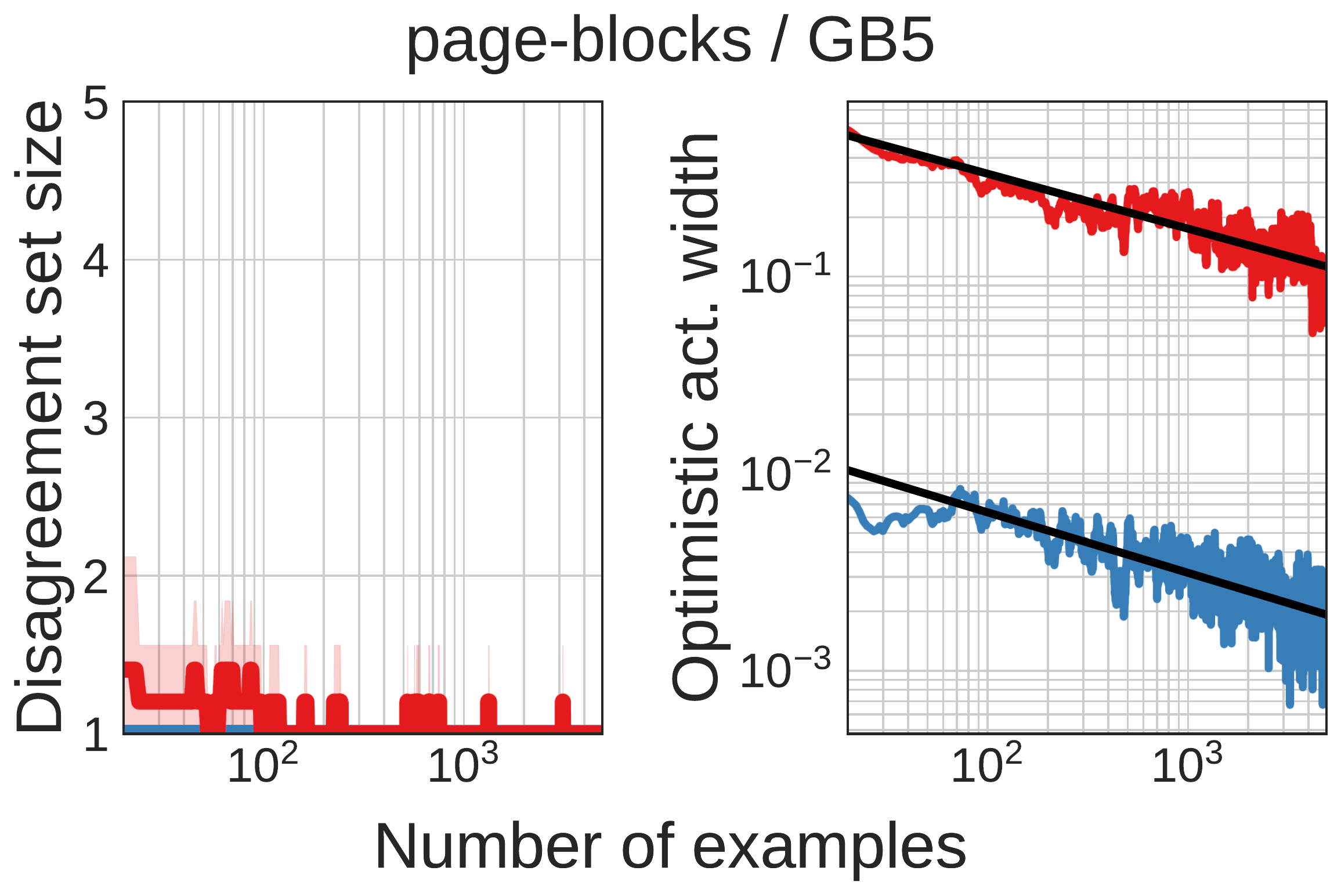}\hfill
\includegraphics[width=0.33\textwidth]{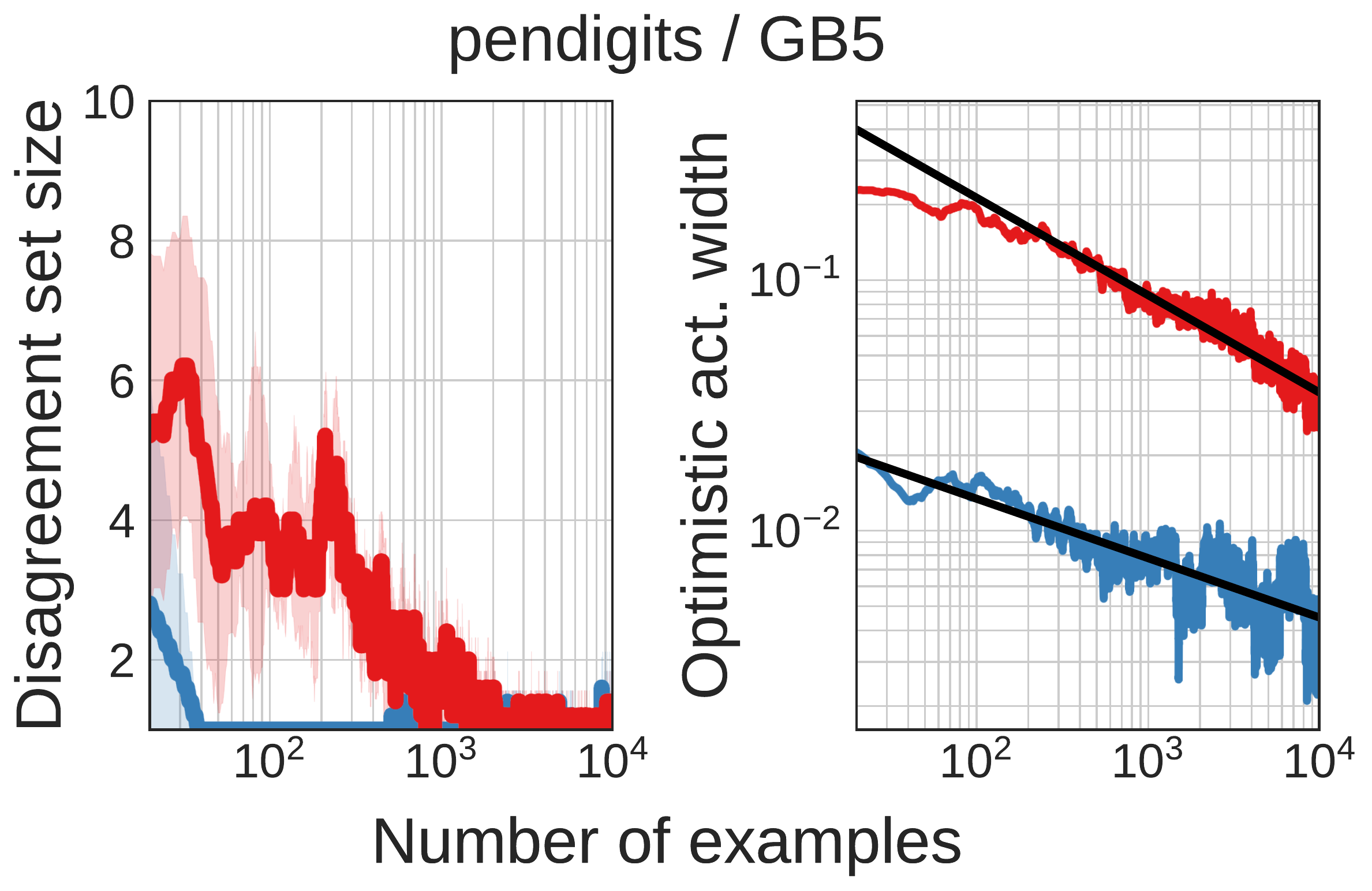}\hfill
\includegraphics[width=0.33\textwidth]{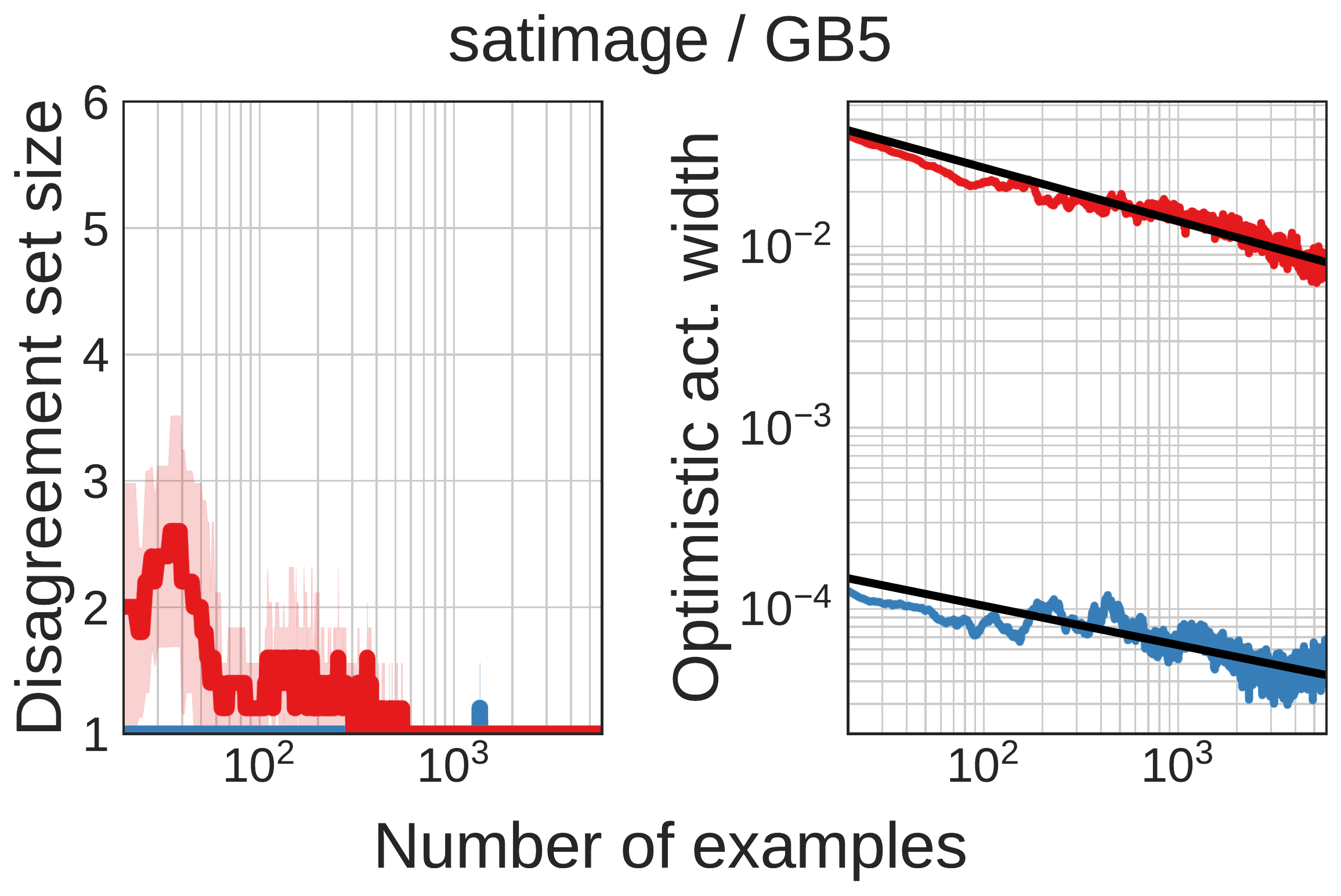}

\includegraphics[width=0.33\textwidth]{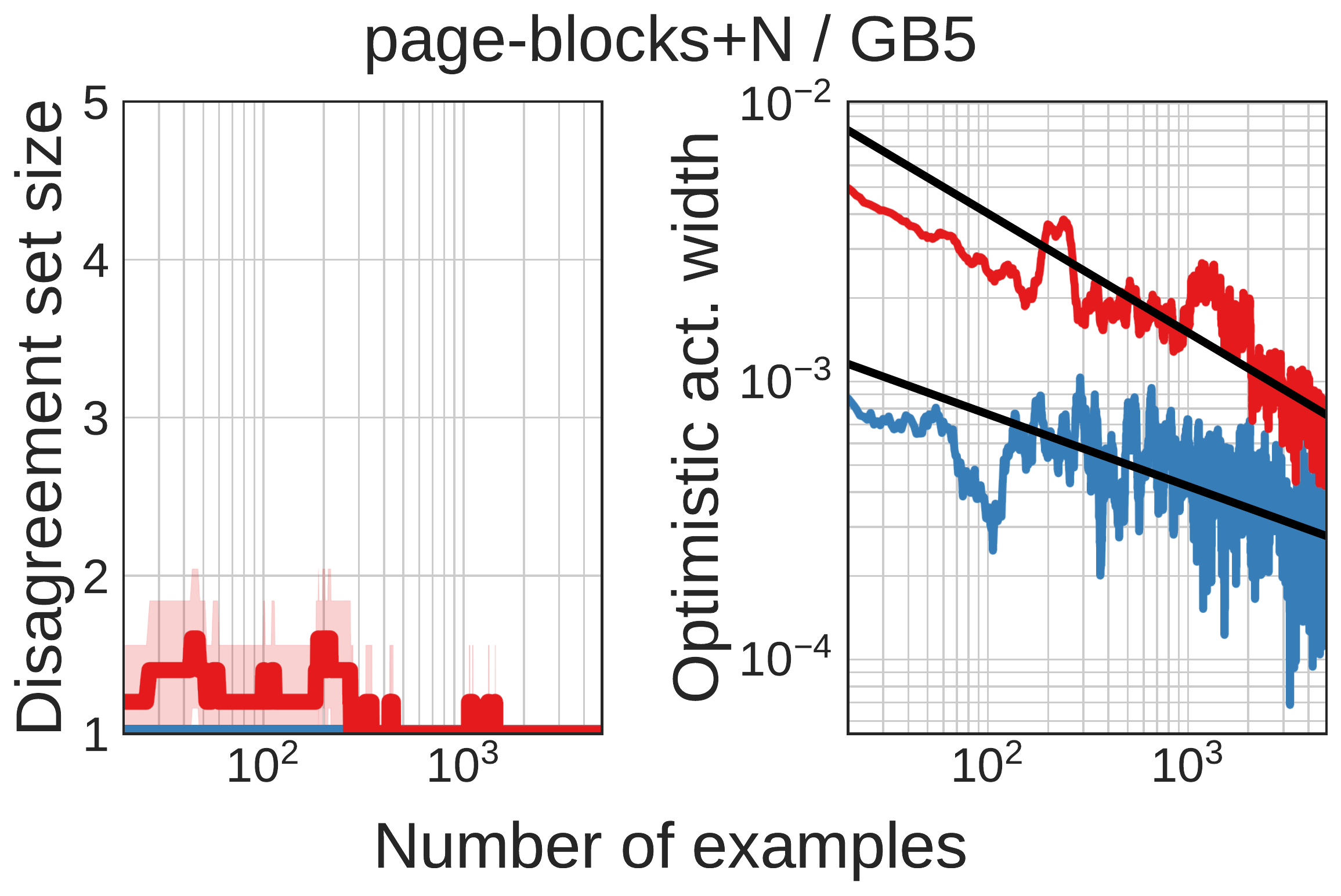}\hfill
\includegraphics[width=0.33\textwidth]{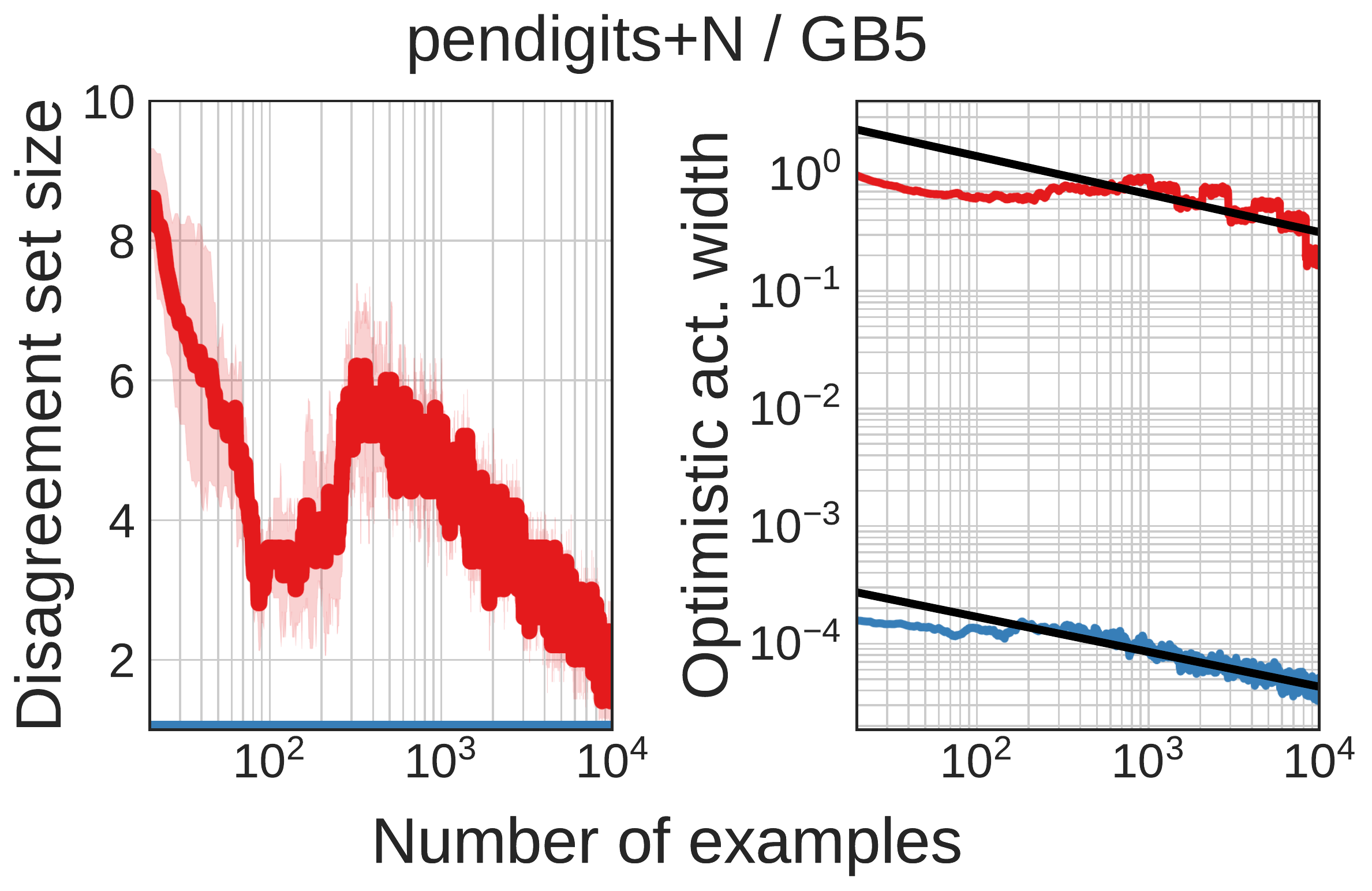}\hfill
\includegraphics[width=0.33\textwidth]{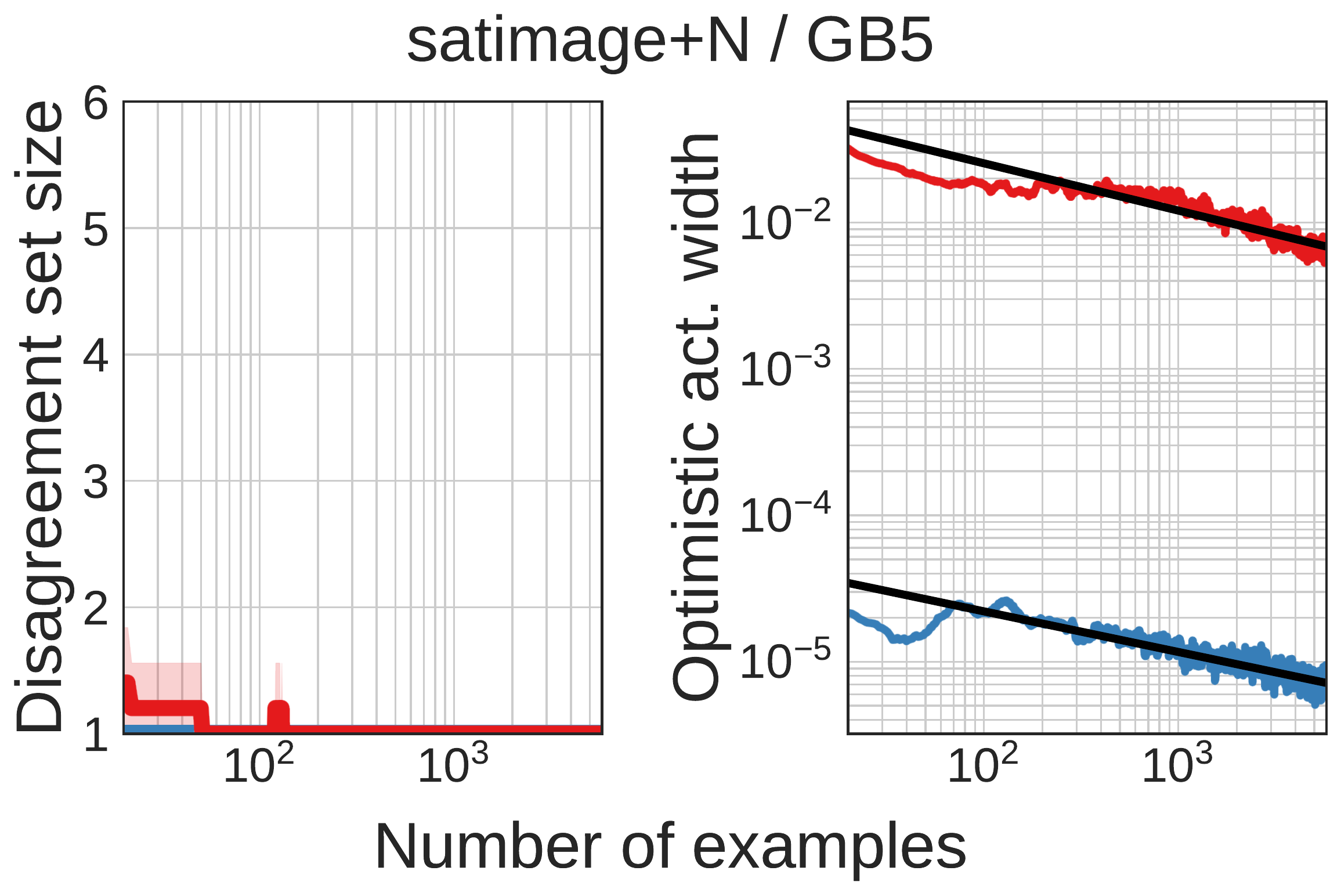}

\includegraphics[width=0.33\textwidth]{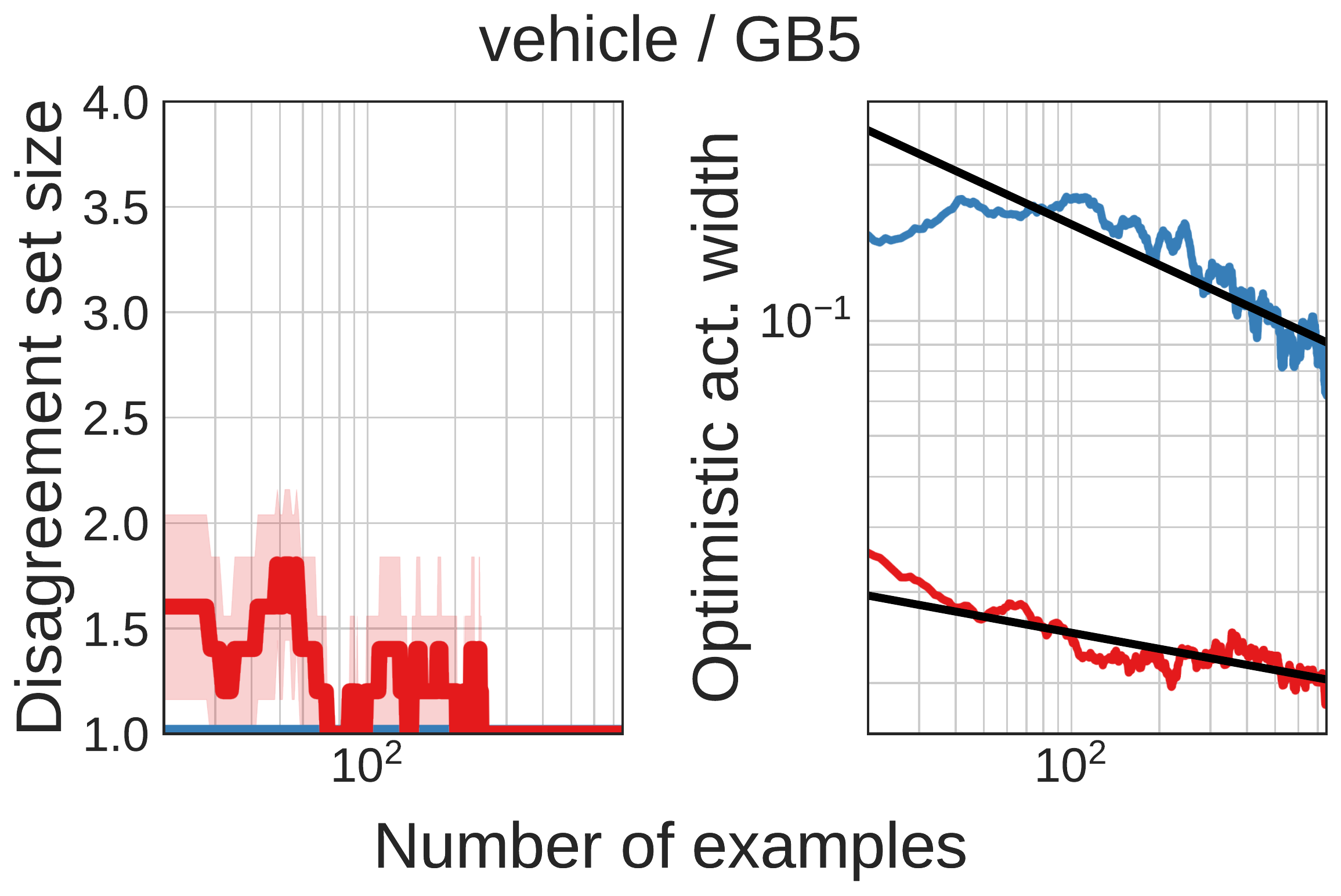}\hfill
\includegraphics[width=0.33\textwidth]{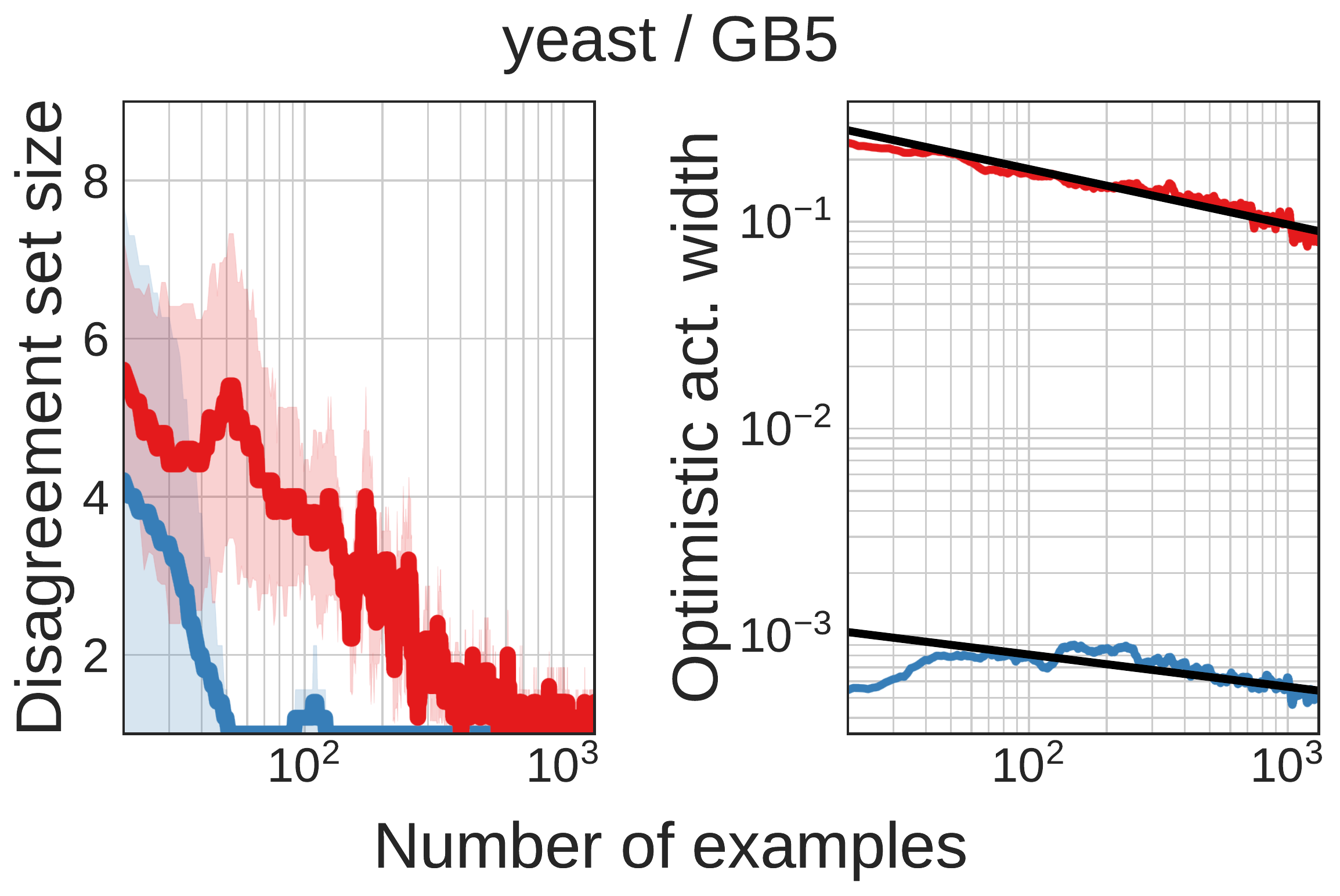}\hfill
\includegraphics[width=0.33\textwidth]{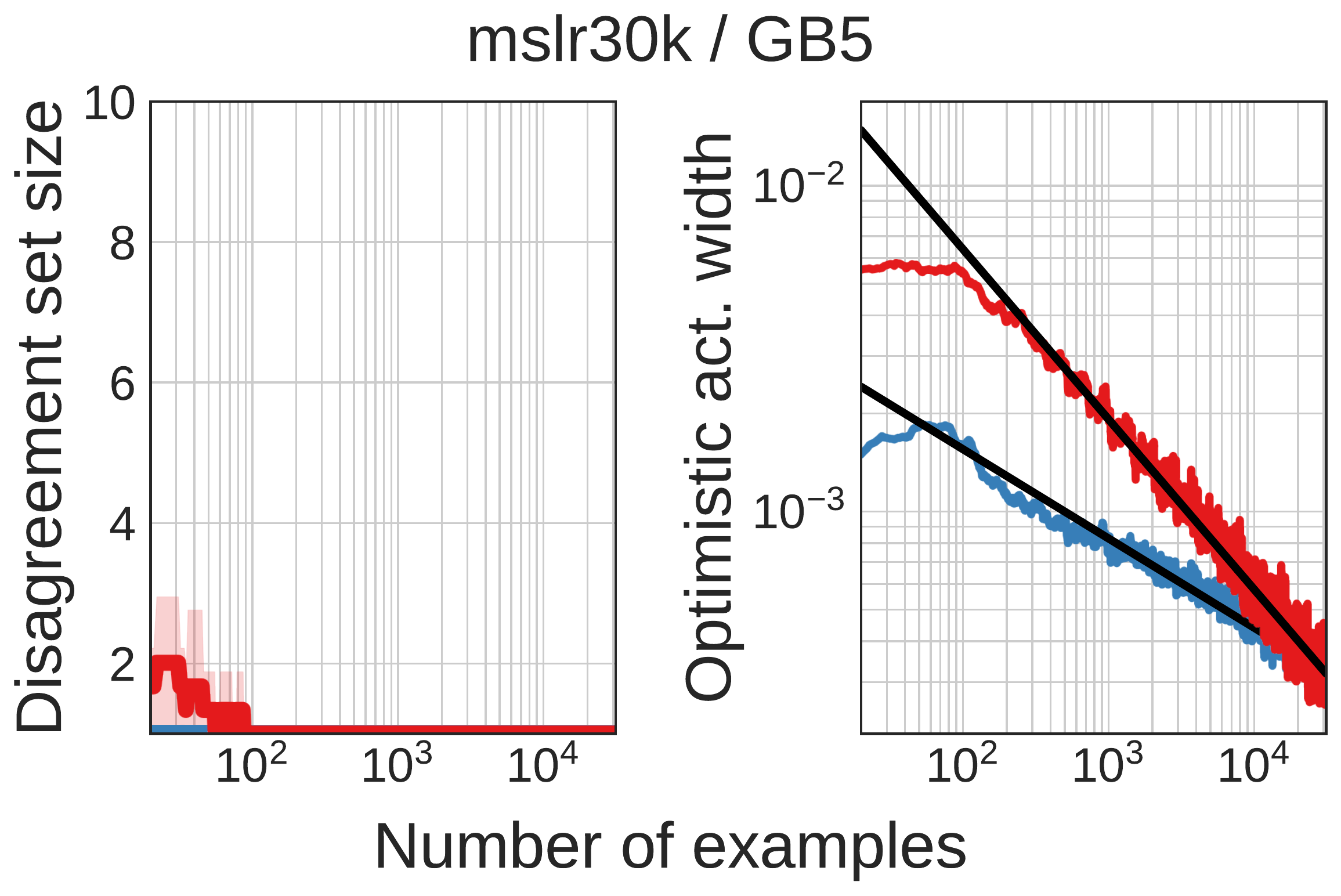}

\includegraphics[width=0.33\textwidth]{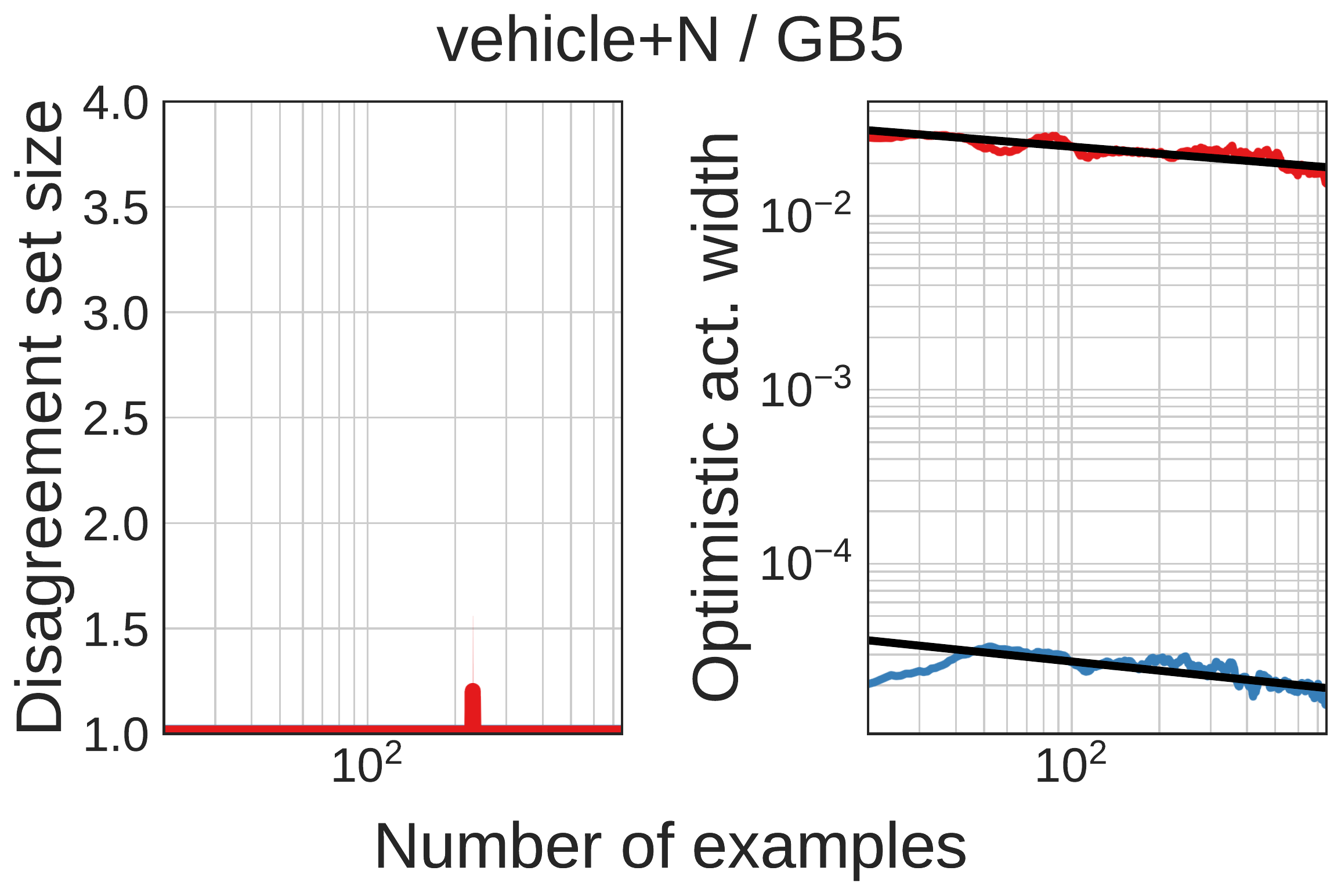}\hfill
\includegraphics[width=0.33\textwidth]{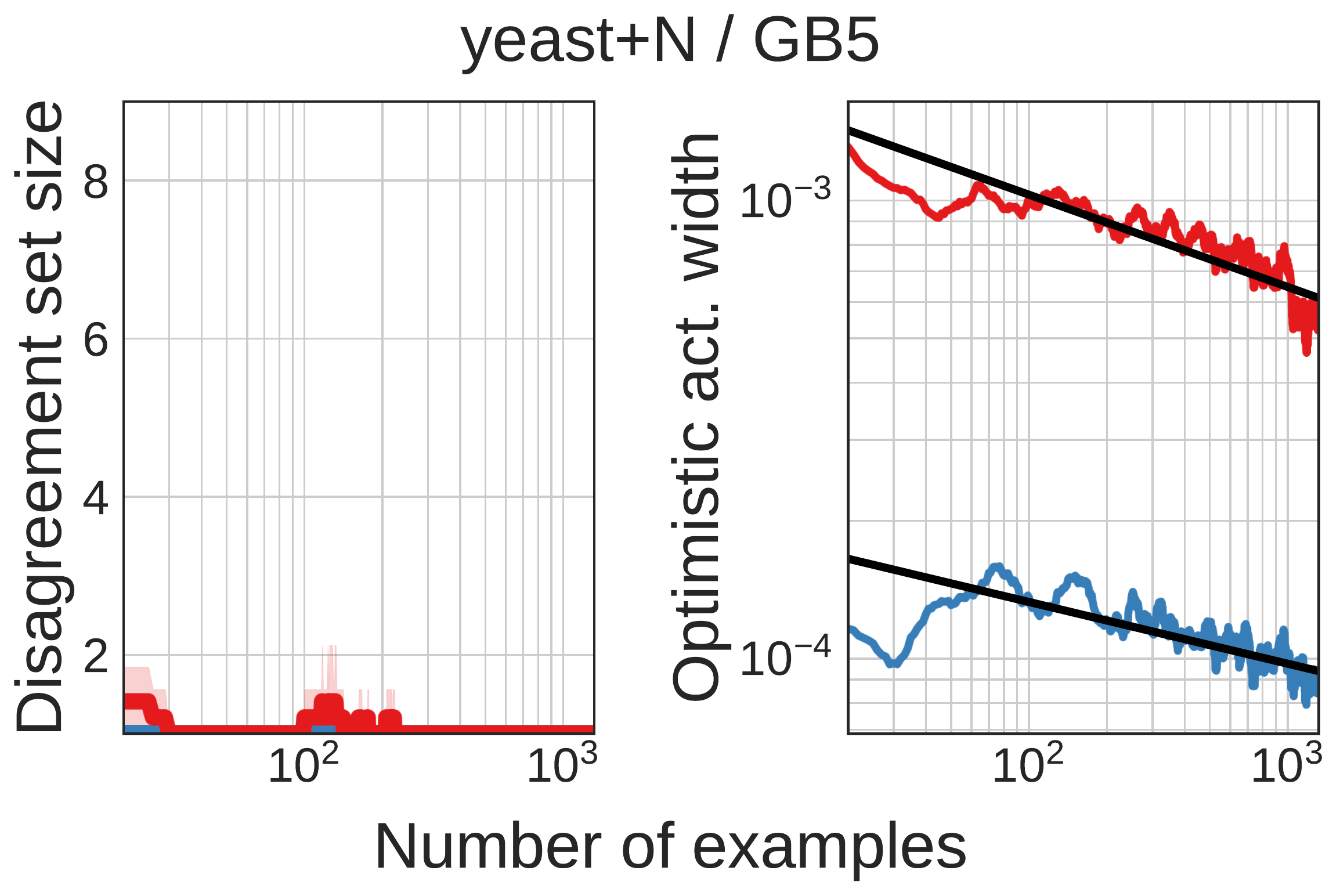}\hfill
\includegraphics[width=0.33\textwidth]{paper_plots/width/plot_disagreement_yahoo.pdf}

\end{centering}
\caption{Size of disagreement set and confidence width.}
\end{figure*}

\newpage
\subsection{UCI Reward Matrices}
\label{sec:reward}

{\footnotesize
\begin{table}[ht!]\begin{center}\caption{Mean reward matrix: yeast}\scalebox{0.7}{\begin{tabular}{ | c | c | c | c | c | c | c | c | c | }
 \hline       1.00 &       0.02  &       0.26  &       0.89  &       0.20  &       0.31  &       0.97  &       0.34  &       0.39 \\ 
\hline       0.25 &       1.00  &       0.65  &       0.29  &       0.03  &       0.52  &       0.30  &       0.10  &       0.09 \\ 
\hline       0.49 &       0.92  &       1.00  &       0.61  &       0.33  &       0.84  &       0.22  &       0.37  &       0.62 \\ 
\hline       0.21 &       0.67  &       0.13  &       1.00  &       0.27  &       0.69  &       0.39  &       0.97  &       0.15 \\ 
\hline       0.26 &       0.63  &       0.11  &       0.39  &       1.00  &       0.25  &       0.34  &       0.17  &       0.69 \\ 
\hline       0.78 &       0.11  &       0.22  &       0.16  &       0.22  &       1.00  &       0.68  &       0.39  &       0.55 \\ 
\hline       0.40 &       0.71  &       0.40  &       0.45  &       0.23  &       0.48  &       1.00  &       0.99  &       0.43 \\ 
\hline       0.41 &       0.48  &       0.85  &       0.87  &       0.22  &       0.46  &       0.33  &       1.00  &       0.96 \\ 
\hline       0.07 &       0.76  &       0.49  &       0.74  &       0.44  &       0.85  &       0.09  &       0.23  &       1.00 \\ 
\hline \end{tabular}}\end{center}\end{table}\begin{table}[ht!]\begin{center}\caption{Mean reward matrix: letter}\scalebox{0.7}{\begin{tabular}{ | c | c | c | c | c | c | c | c | c | c | c | c | c | c | c | c | c | c | c | c | c | c | c | c | c | c | }
 \hline       1.00 &       0.33  &       0.82  &       0.04  &       0.11  &       0.60  &       0.53  &       0.42  &       0.34  &       0.62  &       0.44  &       0.74  &       0.52  &       0.58  &       0.65  &       0.99  &       0.82  &       0.41  &       0.88  &       0.82  &       0.05  &       0.72  &       0.80  &       0.74  &       0.71  &       0.54 \\ 
\hline       0.12 &       1.00  &       0.40  &       0.22  &       0.72  &       0.99  &       0.26  &       0.67  &       0.60  &       0.72  &       0.94  &       0.35  &       0.25  &       0.40  &       0.75  &       0.72  &       0.41  &       0.99  &       0.45  &       0.37  &       0.71  &       0.08  &       0.40  &       0.77  &       0.76  &       0.28 \\ 
\hline       0.19 &       0.47  &       1.00  &       0.73  &       0.19  &       0.33  &       0.84  &       0.62  &       0.89  &       0.98  &       0.84  &       0.18  &       0.62  &       0.48  &       0.40  &       0.74  &       0.83  &       0.68  &       0.14  &       0.70  &       0.06  &       0.19  &       0.92  &       0.41  &       0.15  &       0.68 \\ 
\hline       0.16 &       0.65  &       0.25  &       1.00  &       0.96  &       0.07  &       0.51  &       0.34  &       0.66  &       0.84  &       0.60  &       0.59  &       0.12  &       0.71  &       0.20  &       0.49  &       0.04  &       0.32  &       0.86  &       0.56  &       0.55  &       0.37  &       0.83  &       0.28  &       0.13  &       0.56 \\ 
\hline       0.27 &       0.78  &       0.18  &       0.78  &       1.00  &       0.04  &       0.56  &       0.67  &       0.94  &       0.79  &       0.75  &       0.50  &       0.04  &       0.82  &       0.01  &       0.55  &       0.57  &       0.11  &       0.06  &       0.57  &       0.49  &       0.30  &       0.04  &       0.63  &       0.12  &       0.01 \\ 
\hline       0.28 &       0.30  &       0.18  &       0.07  &       0.78  &       1.00  &       0.25  &       0.52  &       0.25  &       0.85  &       0.48  &       0.62  &       0.97  &       0.35  &       0.22  &       0.98  &       0.59  &       0.98  &       0.97  &       0.71  &       0.02  &       0.61  &       0.25  &       0.13  &       0.37  &       0.20 \\ 
\hline       0.77 &       0.93  &       0.03  &       0.26  &       0.27  &       0.14  &       1.00  &       0.25  &       0.36  &       0.05  &       0.24  &       0.88  &       0.96  &       0.66  &       0.30  &       0.06  &       0.86  &       0.16  &       0.27  &       0.55  &       0.25  &       0.84  &       0.50  &       0.48  &       0.91  &       0.92 \\ 
\hline       0.24 &       0.02  &       0.67  &       0.27  &       0.01  &       0.10  &       0.42  &       1.00  &       0.21  &       0.75  &       0.46  &       0.11  &       0.22  &       0.93  &       0.01  &       0.64  &       0.64  &       0.68  &       0.58  &       0.78  &       0.82  &       0.65  &       0.18  &       0.73  &       0.28  &       0.84 \\ 
\hline       0.57 &       0.09  &       0.91  &       0.46  &       0.94  &       0.04  &       0.11  &       0.76  &       1.00  &       0.45  &       0.82  &       0.42  &       0.19  &       0.84  &       0.11  &       0.29  &       0.22  &       0.46  &       0.32  &       0.91  &       0.79  &       0.71  &       0.14  &       0.61  &       0.85  &       0.92 \\ 
\hline       0.66 &       0.26  &       0.28  &       0.64  &       0.72  &       0.31  &       0.68  &       0.51  &       0.83  &       1.00  &       0.91  &       0.12  &       0.84  &       0.95  &       0.57  &       0.00  &       0.03  &       0.41  &       0.46  &       0.48  &       0.68  &       0.75  &       0.82  &       0.35  &       0.61  &       0.39 \\ 
\hline       0.73 &       0.56  &       0.59  &       0.39  &       0.63  &       0.87  &       0.65  &       0.13  &       0.09  &       0.68  &       1.00  &       0.31  &       0.89  &       0.86  &       0.81  &       0.36  &       0.64  &       0.60  &       0.24  &       0.59  &       1.00  &       0.05  &       0.24  &       0.33  &       0.80  &       0.44 \\ 
\hline       0.06 &       0.32  &       0.83  &       0.74  &       0.28  &       0.73  &       0.32  &       0.15  &       0.98  &       0.26  &       0.61  &       1.00  &       0.64  &       0.43  &       0.40  &       0.05  &       0.08  &       0.45  &       0.92  &       0.23  &       0.87  &       0.81  &       0.17  &       0.31  &       0.43  &       0.86 \\ 
\hline       0.63 &       0.82  &       0.50  &       0.58  &       0.45  &       0.26  &       0.62  &       0.58  &       0.87  &       0.92  &       0.57  &       0.69  &       1.00  &       0.68  &       1.00  &       0.94  &       0.14  &       0.94  &       0.04  &       0.03  &       0.18  &       0.31  &       0.98  &       0.94  &       0.76  &       0.62 \\ 
\hline       0.97 &       0.57  &       0.21  &       0.13  &       0.76  &       0.53  &       0.82  &       0.79  &       0.67  &       0.78  &       0.69  &       0.43  &       0.83  &       1.00  &       0.78  &       0.09  &       0.95  &       0.48  &       0.89  &       0.08  &       0.94  &       0.31  &       0.42  &       0.69  &       0.09  &       0.21 \\ 
\hline       0.58 &       0.39  &       0.11  &       0.01  &       0.90  &       0.67  &       0.32  &       0.89  &       0.97  &       0.08  &       0.26  &       0.53  &       0.92  &       0.23  &       1.00  &       0.90  &       0.34  &       0.23  &       0.18  &       0.05  &       0.96  &       0.15  &       0.96  &       0.34  &       0.06  &       0.82 \\ 
\hline       0.80 &       0.46  &       0.77  &       0.75  &       0.45  &       0.28  &       0.14  &       0.91  &       0.08  &       0.73  &       0.08  &       0.67  &       0.06  &       0.11  &       0.48  &       1.00  &       0.03  &       0.64  &       0.90  &       0.48  &       0.84  &       0.71  &       0.93  &       0.97  &       0.59  &       0.95 \\ 
\hline       0.71 &       0.46  &       0.92  &       0.58  &       0.24  &       0.39  &       0.42  &       0.16  &       0.02  &       0.05  &       0.68  &       0.25  &       0.15  &       0.20  &       0.82  &       0.89  &       1.00  &       0.74  &       0.58  &       0.49  &       0.64  &       0.95  &       0.80  &       0.41  &       0.25  &       0.00 \\ 
\hline       0.29 &       0.98  &       0.42  &       0.54  &       0.06  &       0.14  &       0.99  &       0.54  &       0.22  &       0.64  &       0.73  &       0.50  &       0.33  &       0.72  &       0.13  &       0.72  &       0.45  &       1.00  &       0.63  &       0.86  &       0.32  &       0.70  &       0.12  &       0.44  &       0.72  &       0.89 \\ 
\hline       0.56 &       0.63  &       0.53  &       0.35  &       0.85  &       0.57  &       0.26  &       0.80  &       0.83  &       0.45  &       0.68  &       0.09  &       0.72  &       0.34  &       0.02  &       0.71  &       0.55  &       0.83  &       1.00  &       0.99  &       0.33  &       0.13  &       0.04  &       0.32  &       0.21  &       0.57 \\ 
\hline       0.96 &       0.22  &       0.33  &       0.27  &       0.27  &       0.69  &       0.89  &       0.58  &       0.40  &       0.43  &       0.55  &       0.31  &       0.26  &       0.91  &       0.51  &       0.12  &       0.57  &       0.25  &       0.01  &       1.00  &       0.36  &       0.68  &       0.61  &       0.17  &       0.30  &       0.72 \\ 
\hline       0.43 &       0.13  &       0.17  &       0.73  &       0.62  &       0.56  &       0.06  &       0.39  &       0.45  &       0.58  &       0.70  &       0.72  &       0.59  &       0.27  &       0.41  &       0.78  &       0.47  &       0.40  &       0.85  &       1.00  &       1.00  &       0.63  &       0.91  &       0.15  &       0.29  &       0.65 \\ 
\hline       0.18 &       0.28  &       0.94  &       0.31  &       0.10  &       0.50  &       0.08  &       0.25  &       0.96  &       0.84  &       0.15  &       0.25  &       0.05  &       0.20  &       0.81  &       0.91  &       0.62  &       0.09  &       0.50  &       0.67  &       0.11  &       1.00  &       0.76  &       0.39  &       0.83  &       0.17 \\ 
\hline       0.26 &       0.80  &       0.68  &       0.78  &       0.18  &       0.95  &       0.18  &       0.70  &       0.31  &       0.51  &       0.91  &       0.78  &       0.75  &       0.11  &       0.91  &       0.90  &       0.98  &       0.11  &       0.38  &       0.27  &       0.85  &       0.90  &       1.00  &       0.22  &       0.05  &       0.88 \\ 
\hline       0.95 &       0.75  &       0.82  &       0.31  &       0.13  &       0.10  &       0.67  &       0.14  &       0.92  &       0.24  &       0.75  &       0.61  &       0.34  &       0.63  &       0.02  &       0.76  &       0.17  &       0.61  &       0.12  &       0.57  &       0.73  &       0.80  &       0.14  &       1.00  &       0.41  &       0.40 \\ 
\hline       0.83 &       0.19  &       0.76  &       0.74  &       0.42  &       0.14  &       0.70  &       0.88  &       0.18  &       0.12  &       0.21  &       0.44  &       0.46  &       0.76  &       0.16  &       0.90  &       0.52  &       0.28  &       0.02  &       0.59  &       0.20  &       0.44  &       0.96  &       0.20  &       1.00  &       0.84 \\ 
\hline       0.03 &       0.67  &       0.47  &       0.34  &       0.50  &       0.43  &       0.56  &       0.11  &       0.36  &       0.93  &       0.50  &       0.64  &       0.47  &       0.97  &       0.12  &       0.35  &       0.68  &       0.79  &       0.40  &       0.74  &       0.37  &       0.10  &       0.02  &       0.14  &       0.99  &       1.00 \\ 
\hline \end{tabular}}\end{center}\end{table}\begin{table}[ht!]\begin{center}\caption{Mean reward matrix: optdigits}\scalebox{0.7}{\begin{tabular}{ | c | c | c | c | c | c | c | c | c | c | }
 \hline       1.00 &       0.56  &       0.12  &       0.40  &       0.78  &       0.51  &       0.18  &       0.85  &       0.96  &       0.98 \\ 
\hline       0.19 &       1.00  &       0.23  &       0.03  &       0.95  &       0.92  &       0.29  &       0.17  &       0.40  &       0.51 \\ 
\hline       0.31 &       0.43  &       1.00  &       0.56  &       0.83  &       1.00  &       0.33  &       0.09  &       0.77  &       0.15 \\ 
\hline       0.73 &       0.96  &       0.07  &       1.00  &       0.84  &       0.15  &       0.77  &       0.78  &       0.68  &       0.13 \\ 
\hline       0.04 &       0.66  &       0.25  &       0.99  &       1.00  &       0.06  &       0.70  &       0.63  &       0.90  &       0.16 \\ 
\hline       0.61 &       0.32  &       0.76  &       0.16  &       0.93  &       1.00  &       0.83  &       0.23  &       0.11  &       0.67 \\ 
\hline       0.58 &       0.88  &       1.00  &       0.28  &       0.74  &       0.28  &       1.00  &       0.49  &       0.87  &       0.16 \\ 
\hline       0.97 &       0.05  &       0.70  &       0.65  &       0.05  &       0.20  &       0.33  &       1.00  &       0.37  &       0.53 \\ 
\hline       0.35 &       0.51  &       0.26  &       0.85  &       0.62  &       0.30  &       0.78  &       0.90  &       1.00  &       0.86 \\ 
\hline       0.82 &       0.87  &       0.38  &       0.61  &       0.42  &       0.24  &       0.06  &       0.82  &       0.38  &       1.00 \\ 
\hline \end{tabular}}\end{center}\end{table}\begin{table}[ht!]\begin{center}\caption{Mean reward matrix: page-blocks}\scalebox{0.7}{\begin{tabular}{ | c | c | c | c | c | }
 \hline       1.00 &       0.38  &       0.66  &       0.16  &       0.96 \\ 
\hline       0.35 &       1.00  &       0.24  &       0.59  &       0.41 \\ 
\hline       0.14 &       0.54  &       1.00  &       0.77  &       0.93 \\ 
\hline       0.09 &       0.20  &       0.99  &       1.00  &       0.24 \\ 
\hline       0.63 &       0.73  &       0.69  &       0.03  &       1.00 \\ 
\hline \end{tabular}}\end{center}\end{table}\begin{table}[ht!]\begin{center}\caption{Mean reward matrix: pendigits}\scalebox{0.7}{\begin{tabular}{ | c | c | c | c | c | c | c | c | c | c | }
 \hline       1.00 &       0.37  &       0.56  &       0.96  &       0.74  &       0.82  &       0.10  &       0.93  &       0.61  &       0.60 \\ 
\hline       0.09 &       1.00  &       0.66  &       0.44  &       0.55  &       0.70  &       0.59  &       0.05  &       0.56  &       0.77 \\ 
\hline       0.91 &       0.09  &       1.00  &       0.46  &       0.45  &       1.00  &       0.16  &       0.71  &       0.16  &       0.81 \\ 
\hline       0.04 &       0.53  &       0.17  &       1.00  &       0.05  &       0.24  &       0.67  &       0.78  &       0.70  &       0.33 \\ 
\hline       0.49 &       0.52  &       0.30  &       0.46  &       1.00  &       0.50  &       0.40  &       0.73  &       0.86  &       0.03 \\ 
\hline       0.29 &       0.79  &       0.46  &       0.01  &       0.42  &       1.00  &       0.60  &       0.32  &       0.98  &       0.59 \\ 
\hline       0.13 &       0.52  &       0.36  &       0.01  &       0.10  &       0.78  &       1.00  &       0.20  &       0.62  &       0.64 \\ 
\hline       0.27 &       0.13  &       0.47  &       0.39  &       0.41  &       0.38  &       0.29  &       1.00  &       0.43  &       0.78 \\ 
\hline       0.70 &       0.78  &       0.29  &       0.21  &       0.50  &       0.13  &       0.17  &       0.25  &       1.00  &       0.23 \\ 
\hline       0.63 &       0.63  &       0.53  &       0.74  &       0.82  &       0.37  &       0.80  &       0.88  &       0.59  &       1.00 \\ 
\hline \end{tabular}}\end{center}\end{table}\begin{table}[ht!]\begin{center}\caption{Mean reward matrix: satimage}\scalebox{0.7}{\begin{tabular}{ | c | c | c | c | c | c | }
 \hline       1.00 &       0.06  &       0.12  &       0.79  &       0.98  &       0.27 \\ 
\hline       0.87 &       1.00  &       0.64  &       0.78  &       0.63  &       0.13 \\ 
\hline       1.00 &       0.63  &       1.00  &       0.62  &       0.34  &       0.76 \\ 
\hline       0.11 &       0.52  &       0.63  &       1.00  &       0.11  &       0.29 \\ 
\hline       0.07 &       0.67  &       0.23  &       0.52  &       1.00  &       0.45 \\ 
\hline       0.73 &       0.97  &       0.20  &       0.72  &       0.79  &       1.00 \\ 
\hline \end{tabular}}\end{center}\end{table}\begin{table}[ht!]\begin{center}\caption{Mean reward matrix: vehicle}\scalebox{0.7}{\begin{tabular}{ | c | c | c | c | }
 \hline       1.00 &       0.36  &       0.18  &       0.52 \\ 
\hline       0.01 &       1.00  &       0.80  &       0.76 \\ 
\hline       0.67 &       0.03  &       1.00  &       0.40 \\ 
\hline       0.19 &       0.77  &       0.62  &       1.00 \\ 
\hline \end{tabular}}\end{center}\end{table}\begin{table}[ht!]\begin{center}\caption{Mean reward matrix: adult}\scalebox{0.7}{\begin{tabular}{ | c | c | }
 \hline       1.00 &       0.61 \\ 
\hline       0.66 &       1.00 \\ 
\hline \end{tabular}}\end{center}\end{table}
}

\end{document}